\newcommand\cincludegraphics[2][]{\raisebox{-0.4\height}{\includegraphics[#1]{#2}}}
\newcommand{\cE}{\mathcal{E}}
\newcommand{\cF}{\mathcal{F}}
\newcommand{\cN}{\mathcal{N}}
\newcommand{\cV}{\mathcal{V}}
\newcommand{\SO}{\mathrm{SO}}
\DeclareMathOperator{\concat}{\mathrel\Vert}
\newtheorem{prop}{Proposition}
\def\@fnsymbol#1{\ensuremath{\ifcase#1\or \dagger\or \ddagger\or
\mathsection\or \mathparagraph\or \|\or **\or \dagger\dagger
\or \ddagger\ddagger \else\@ctrerr\fi}}
\title{Modeling Dynamics over Meshes with \\ Gauge Equivariant Nonlinear Message Passing}
\author{%
  Jung Yeon Park, Lawson L.S. Wong$^\dagger$, Robin Walters\thanks{Equal advising.} \\
  Khoury College of Computer Sciences\\
  Northeastern University\\
  Boston, MA 02115 \\
  \{\texttt{park.jungy@northeastern.edu}, \texttt{lsw@ccs.neu.edu}, \\
  \texttt{r.walters@northeastern.edu}\}\\
}
\begin{document}

\maketitle

\setcounter{footnote}{0} 

\begin{abstract}
    
    
    Data over non-Euclidean manifolds, often discretized as surface meshes, naturally arise in computer graphics and biological and physical systems. In particular, solutions to partial differential equations (PDEs) over manifolds depend critically on the underlying geometry. While graph neural networks have been successfully applied to PDEs, they do not incorporate surface geometry and do not consider local gauge symmetries of the manifold. Alternatively, recent works on gauge equivariant convolutional and attentional architectures on meshes leverage the underlying geometry but underperform in modeling surface PDEs with complex nonlinear dynamics. To address these issues, we introduce a new gauge equivariant architecture using nonlinear message passing. Our novel architecture achieves higher performance than either convolutional or attentional networks on domains with highly complex and nonlinear dynamics. 
    However, similar to the non-mesh case, design trade-offs favor convolutional, attentional, or message passing networks for different tasks; we investigate in which circumstances our message passing method provides the most benefit \footnote{Project website with code: \url{https://jypark0.github.io/hermes}}.
\end{abstract}

\section{Introduction}

Surfaces embedded in 3D space appear in many domains such as computer graphics \citep{botsch2010polygon}, structural biology \citep{gainza2020deciphering, sverrisson2021fast}, neuroscience \cite{dassi2015mesh}, climate pattern segmentation \citep{mudigonda2017segmenting, cohen2019gauge}, and fluid dynamics \citep{powell1993adaptive}. Such objects are naturally Riemannian manifolds but are often discretized into meshes for computational tractability. Unlike other 3D representations such as point clouds or voxels, meshes encode both the geometry and topology of the surface.

\begin{figure}[t]
\vspace{-4mm}
\begin{subfigure}[t]{0.28\textwidth}
\centering
\includegraphics[width=\textwidth, trim=0 0 0 0, clip]{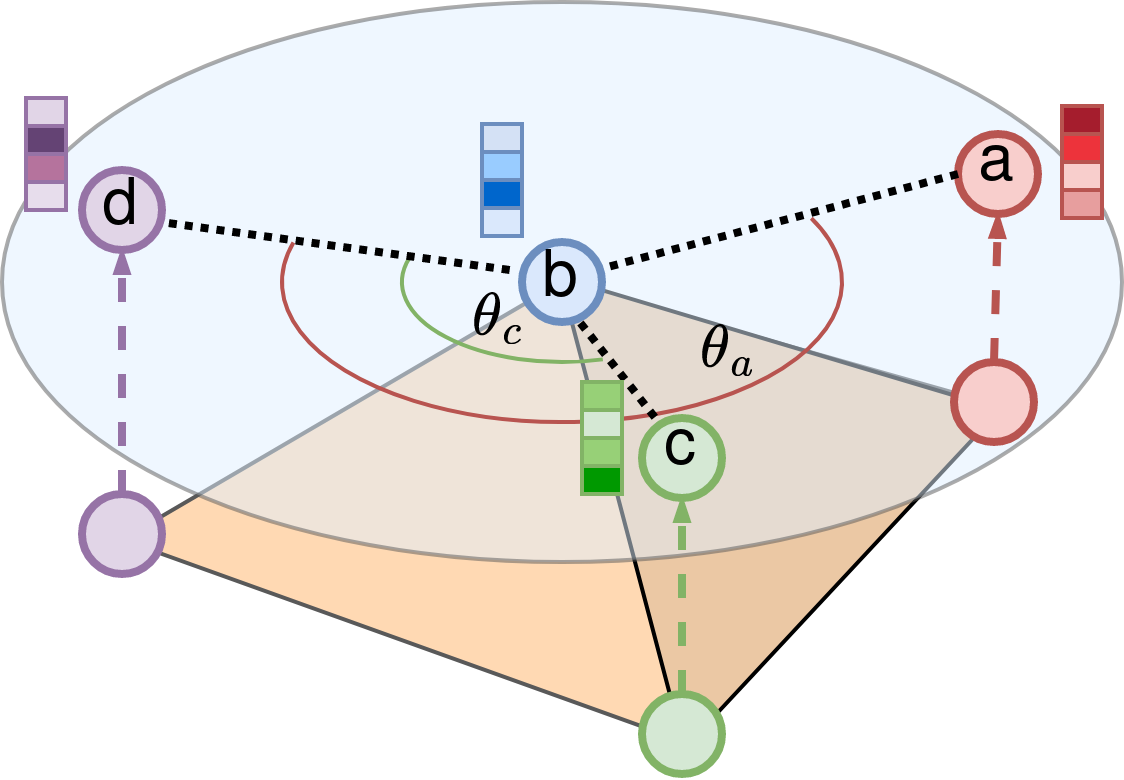}
\caption{Mesh}
\label{fig:mesh}

\end{subfigure}
\hfill%
\begin{subfigure}[t]{0.2\textwidth}
\centering
\includegraphics[width=\textwidth, trim=0 0 0 0, clip]{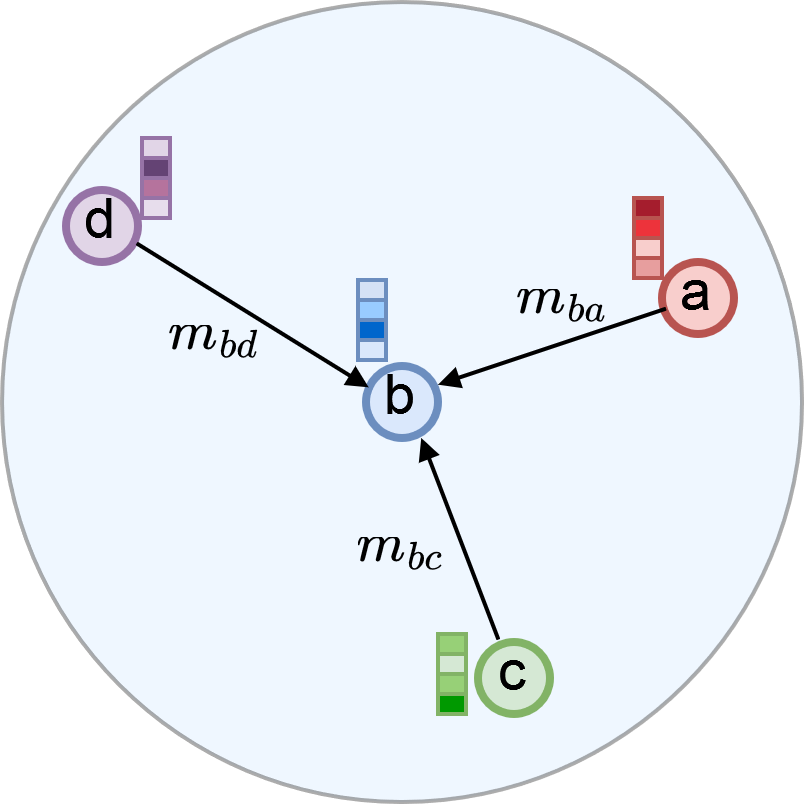}
\caption*{GemCNN}
\end{subfigure}
\hfill%
\begin{subfigure}[t]{0.2\textwidth}
\centering
\includegraphics[width=\textwidth, trim=0 0 0 0, clip]{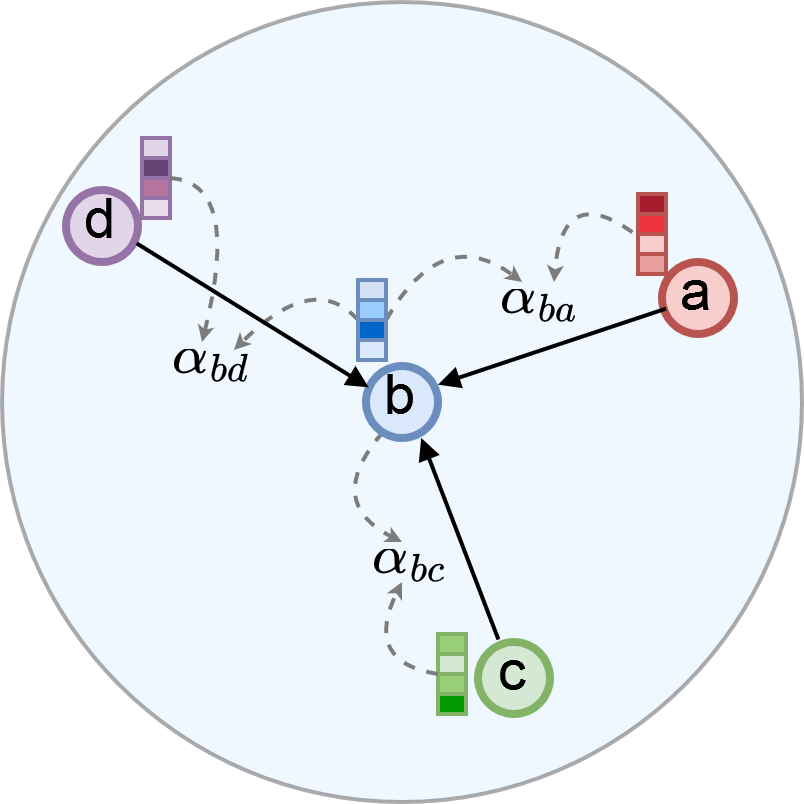}
\caption*{EMAN}
\end{subfigure}
\hfill%
\begin{subfigure}[t]{0.2\textwidth}
\centering
\includegraphics[width=\textwidth, trim=0 0 0 0, clip]{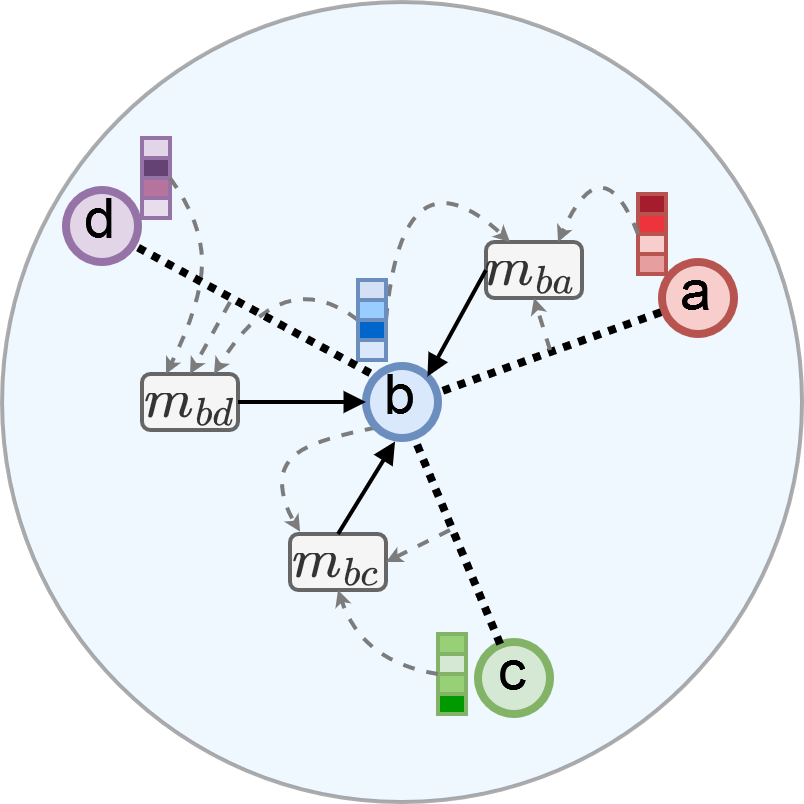}
\caption*{Hermes}
\end{subfigure}
\caption{Mesh example. The neighbors of vertex $b$ are projected onto the tangent plane to compute their local orientation ($\theta_{a}, \theta_{c}$) after choosing a reference neighbor (gauge). There are 3 flavors of gauge equivariant message passing, cf. Fig. 17 in \citet{bronstein2021geometric}. GemCNN \citep{haan2021gauge} computes linear convolutions using the neighbor node features, EMAN \citep{basu2022equivariant} uses linear messages with attention, while Hermes performs nonlinear message passing. Self-interactions are omitted for clarity.}
\label{fig:mesh_and_networks}
\vspace{-5mm}
\end{figure}

One important task that naturally arises over meshes is solving partial differential equations (PDEs) over surfaces and non-Euclidean manifolds \cite{bertalmio2001variational, economon2016su2, marburg2008computational, NOVICKCOHEN2008201}. A classic approach to solving surface PDEs is to model the domain as a mesh and use the finite element method and the variational formulation \cite{reuter2006laplace, dziuk2013finite}. Recently, there has been increasing success in applying deep learning methods to accelerate solving PDEs and building differentiable surrogates for dynamics models \citep{lifourier, brandstetter2022message, wang2020incorporating, pfafflearning}. While there are many effective approaches for gridded data or point clouds, fewer methods exist for leveraging the geometric structure of meshes.  Several approaches simply treat the mesh as a graph by approximating patches of the manifold as Euclidean \citep{masci2015geodesic, boscaini2016learning, monti2017geometric}. However, these approaches do not exploit the geometric properties of meshes, and \citet{verma2018feastnet, haan2021gauge} have shown that not retaining this information leads to poor performance. One way to express such information is to specify the local gauge symmetries and explicitly model operations that are equivariant to these transformations \citep{cohen2019gauge, haan2021gauge}, by using parallel transport and applying a gauge equivariant filter. 

Although previous work on gauge equivariance for meshes including convolutional (GemCNN \citep{haan2021gauge}) and attentional (EMAN \citep{basu2022equivariant}) methods have shown good performance on shape correspondence and mesh classification tasks, we find they are inadequate for modeling complex dynamics on meshes, such as solving surface PDEs, which can be highly nonlinear. In this work, we introduce a new architecture, Hermes, that combines gauge equivariance with nonlinear message passing. As Hermes can express linear convolutions or attention, Hermes is strictly more expressive than GemCNN and EMAN. Hermes completes the 3 flavors of gauge equivariant message passing, analogous to graph neural networks \citep{bronstein2021geometric}. See Figure~\ref{fig:mesh_and_networks} for a comparison.

We evaluate Hermes on several linear and nonlinear partial differential equations, and also on shape correspondence, object interaction system, and cloth dynamics. Our experiments show that Hermes outperforms convolutional or attentional counterparts in most domains, particularly on nonlinear surface PDEs. We find that Hermes is more robust to both the fineness of the mesh and surface roughness compared to both GemCNN and EMAN. We further investigate when the added model complexity of nonlinear message passing outweighs the cost. With this work, we hope to help guide practitioners on when to use nonlinear message passing schemes over linear schemes.

Our contributions are summarized as follows: 1) we propose a novel gauge equivariant, nonlinear message passing architecture, Hermes, for learning on meshes, 2) when evaluated on complex and nonlinear dynamics such as surface PDEs, Hermes outperforms both convolutional and attentional architectures and can generate realistic prediction rollouts, and 3) we investigate in which situations nonlinear message passing should be preferred over convolutional or attentional counterparts.

\section{Background}

In this work, we focus on learning signals over meshes. Similar to \citet{haan2021gauge}, our goal is to define a model in coordinates that are intrinsic to the 2D mesh instead of using the extrinsic 3D coordinates of the embedding space.  This avoids dependence on the 3D embedding, which is irrelevant.  However, in order to encode data over the mesh, it is still necessary to make a choice of local coordinate frame at each vertex.  Since this coordinate frame is arbitrary, it follows that the model should be invariant to change of coordinates frame at each vertex, i.e. gauge equivariant.

\subsection{Data over Meshes}

\textbf{The Mesh Datum} \quad
A mesh $M$ consists of $(\cV, \cE, \cF)$, where $\cV$ is the set of vertices, $\cE = \{(i,j)\}$ is the set of ordered vertex indices $i,j$ connected by an edge, and $\cF=\{(i,j,k)\}$ is the set of ordered vertex indices $i,j,k$ connected by a triangular face. Let $\cN_v$ denote the set of vertices connected to a vertex $v$. We assume that the mesh represents a discrete 2-dimensional manifold embedded in $\mathbb{R}^3$ (i.e. a manifold mesh). Let $x_b \in \mathbb{R}^3$ be the coordinates of vertex $b$. We assign a normal vector $n_b$ to each vertex $b$ equal to the normalized average of the surface normals of the faces that contain the vertex. The associated tangent plane $T_b M$ at vertex $b$ is the 2-dimensional affine space through $x_b$ orthogonal to the normal $n_b$.

\textbf{Local Coordinate Frame} \quad
We adopt the strategy of \citep{haan2021gauge} by defining the local coordinate frame at each vertex in terms of a reference neighboring vertex $d \in \cN_b$. See Figure~\ref{fig:mesh} for a visualization. The reference neighbor defines a positively oriented orthonormal basis $\{ e^b_1, e^b_2 \}$ for the tangent space $T_b M$. The orientation of every neighbor $v \in \cN_b$ can be represented with polar angles, where $\theta_v$ is the angle between $v$ and the reference neighbor after projection to the tangent plane. In Figure~\ref{fig:mesh}, $\theta_{d} = 0$ as $d$ is itself the reference neighbor and we show the $\theta_{v}$ for the other neighbors with respect to the reference neighbor $d$. See Appendix~\ref{app:local_frame} for more information.




\textbf{Change of Gauge} \quad
A different choice of reference neighbor will result in a different positively oriented orthonormal basis. Any two such bases will be related by an element $g \in \SO(2)$. Here $\SO(2)$ is called the gauge group and $g$ is the gauge transformation. Note that such a change is local, and a different change $g^p$ can be performed at each vertex $p \in \cV$. For example, if we switch from reference neighbor $d$ to $c$, then we induce a gauge transformation $\mathrm{Rot}(\phi) \in \SO(2)$ where $\phi = \theta_c - \theta_d$ is the angle between $d$ and $c$. The orientations are then updated $\theta_q \mapsto \theta_q - \phi, \forall q \in \cN_p$. 

\textbf{Feature Fields on the Mesh} \quad
Input, output, and hidden features are encoded as data over the vertices $\lbrace f_p \rbrace_{p \in \cV}$ or edges of the mesh $\{ e_{pq} \}_{(p,q) \in \cE}$.  We consider directed graphs and so for the edge feature $e_{pq}$, $p$ is the source and $q$ is the target node. We situate $e_{pq}$ to be at $T_p M$ for simplicity as they are often chosen to be vectors relative to $p$ or the edge distance \citep{satorras2021n, brandstetter2022message}.
A scalar field, such as temperature, is invariant to change of gauge, and thus $f_p \in \mathbb{R}$ transforms according to the trivial representation $\rho_0$ of $\SO(2)$.  A vector field over the mesh, representing e.g. a flow across the manifold would have $f_p \in \mathbb{R}^2$ and transform according to the standard rotation representation $\rho_1$ of $\SO(2)$ by $2\times2$ matrices. In general, feature fields may be any representation $\rho_n$ of the gauge group $\SO(2)$.

\textbf{Parallel Transport} \quad
Let $f_p$ and $f_q$ be feature vectors at nodes $p,q$, respectively, where $q$ is adjacent to $p$. As features at different nodes live in different tangent spaces and have different bases, in order for the anisotropic kernel to be applied to $f_q$ for each $q  \in \cN_p$, each $f_q$ must be parallel transported to $T_pM$ and be in the same gauge.
The parallel transporter $g_{q \rightarrow p}$ first aligns the tangent plane at $q$ to the tangent plane at $p$ by a 3D rotation and then transforms the gauge at $q$ to the gauge at $p$ by a planar rotation. Acting on $f_q$ with $g_{q \rightarrow p}$ transports $f_q$ to the gauge at $p$. For the edge features $e_{pq}$, no parallel transport is required as the features live in $T_p M$. For more details, see Appendix~\ref{app:parallel_transport} and Appendix A.2 of \cite{haan2021gauge}.
 
\subsection{Flavors of Message Passing}
Meshes are graphs with additional geometric information. Neural networks designed for meshes may be understood by analogy to graph neural networks (GNNs).
As outlined in \citet{bronstein2021geometric}, there are largely three flavors of GNNs: convolutional, attentional, and nonlinear message-passing.

The three images on the right of  Figure~\ref{fig:mesh_and_networks} show the three different GNN flavors in the context of gauge equivariant networks over meshes.  To date, both convolutional and attentional GNNs have gauge equivariant analogues for learning over meshes.  Here, we introduce a gauge equivariant mesh network in the nonlinear message-passing flavor.  Let $p$ be the source node ($b$ in Figure~\ref{fig:mesh}), $q \in \cN_p$ be the target nodes (one-hop neighborhood), and $f$ the signals over nodes. 

\textbf{Convolutional} \quad
GemCNN \citep{haan2021gauge} uses convolutions where the kernel $K_\text{neigh}$ is applied (anisotropically) to features at the target nodes. It is comparable to graph convolutional networks \citep{kipfsemi} but with anisotropy and gauge equivariance.  As convolutions are linear, the messages passed to the source node are linear, and these messages are aggregated in a permutation invariant way. GemCNN also has a kernel $K_\text{self}$ to model self-interactions before updating the feature at the source node. The gauge equivariant convolution (GemCNN layer) is defined as
\begin{align}
f_p' = K_\textrm{self}f_p + \sum\nolimits_{q \in \cN_p} K_\textrm{neigh}(\theta_q)\rho_{\mathrm{in}}(g_{q \rightarrow p})f_q,
\label{eq:gem_cnn}
\end{align}
where $K_\text{neigh}(\theta_q)$ is the kernel for $q$ and  $\rho_{\mathrm{in}}(g_{q \rightarrow p})f_q$ is the feature vector at $q$ parallel transported to the gauge at $p$. Input features $f_q$ transform according to $\rho_{\mathrm{in}}$ and output features $f'_q$ according to $\rho_{\mathrm{out}}$.

The filter $K_\textrm{neigh}$ is anisotropic, i.e., it depends on the orientation $\theta_q$.  This is strictly more expressive than an isotropic filter (as in a GNN) which would simply use the same weights for each neighboring vertex.  However, the dependence on orientation means that without constraints, the convolution depends on the choice of local coordinate frame. To fix this and impose gauge equivariance, $K_\textrm{neigh}$ must satisfy
$K = \rho_\text{out}(g)^{-1} K \rho_\text{in} (g)$. We also define the kernel to only depend on the orientation and not on the radial distance of neighboring nodes, as including the radius in the parameterization was not beneficial \citep{haan2021gauge} and verified in our experiments.


\textbf{Attentional} \quad EMAN \citep{basu2022equivariant} extends GemCNN by adding an attention mechanism to the messages before the aggregation step, analogous to graph attention networks \citep{velivckovicgraph}. Gauge equivariant attention with the self-interaction term is defined as
\begin{align}
f'_p = \alpha_{pp} K^\textrm{self}_\textrm{value} f_p + \sum\nolimits_{q \in \cN_p} \alpha_{pq} K^\textrm{neigh}_\textrm{value} (\theta_{q})\rho(g_{q \rightarrow p}) f_q,
\label{eq:eman_main}
\end{align}
where $\alpha_{pp}, \alpha_{pq}$ are the attention weights for the self-interaction and neighbor interaction terms (see Equation~\eqref{eq:eman_attention} in Appendix~\ref{app:eman}). Multi-headed attention can be used by concatenating the results of the individual heads. The attention weights are combined linearly with the neighboring node features and can be seen as weighted local averaging. Note that due to the separate kernels for the keys, queries, and values, an EMAN layer uses many more parameters than an equivalent GemCNN layer with the same dimensions.

\section{Gauge Equivariant Nonlinear Message Passing}

We propose gauge equivariant nonlinear message passing for meshes, complementing the convolutional and attentional gauge equivariant methods. As is the case for nonlinear message passing GNNs, our network is designed to be better suited for tasks with complex local interactions.
Our network, named Hermes, consists of a sequence of message passing blocks, each of which contains an edge network $\phi_e$, an aggregation (we use sum throughout), and a node network $\phi_n$. 

The edge network models neighboring interactions and takes as inputs the source node features $f_p$, target node features $f_q$ where $q \in \cN_p$, and edge features $e_{pq}$ where $(p,q) \in \cE$. The target node features are parallel transported to the gauge at $p$ as $\rho(g_{q \rightarrow p})$. The network $\phi_e$ consists of $N_e$ gauge equivariant convolutions followed by regular nonlinearities \citep{haan2021gauge} which also preserve gauge equivariance. The nonlinear messages $m_{pq}$ are then aggregated to $m_{p}$. The node network models self-interactions and takes as input $m_p \oplus f_p$, where $\oplus$ represents the direct sum, and then applies $N_n$ gauge equivariant convolutions with $K_\textrm{self}$ kernels and regular nonlinearities. Equations \eqref{eq:hermes-first}-\eqref{eq:hermes} define gauge equivariant nonlinear message passing,
\begin{align}
h_{pq} &= f_p \oplus \rho(g_{q \rightarrow p}) f_q \oplus e_{pq}, && \forall (p,q) \in \cE \label{eq:hermes-first} \\
m_{pq} &= \phi_e (h_{pq}) = \sigma \circ K^{N_e}_\textrm{neigh}(\theta_{q}) \circ \cdots \circ \sigma \circ K^{1}_\textrm{neigh}(\theta_{q})(h_{pq}), \\ 
m_p &= \sum_{q \in \cN{p}} m_{pq}, && \forall p \in \cV \label{eq:hermes-agg} \\
h_{p} &= m_{p} \oplus f_{p} \\
f'_p &=  \phi_{n}(h_{p}) = \sigma \circ K^{N_n}_\textrm{self} \circ \cdots \circ \sigma \circ K^{1}_\textrm{self}(h_{p}),
\label{eq:hermes}
\end{align}
where $h_{pq}$ and $h_p$ are inputs to $\phi_e$ and $\phi_n$, $\sigma$ is the regular nonlinearity, and $\circ$ denotes composition. 

\begin{figure}[!t]
\centering
\begin{subfigure}[t]{0.7\textwidth}
\includegraphics[width=\textwidth, trim=60 30 55 30, clip]{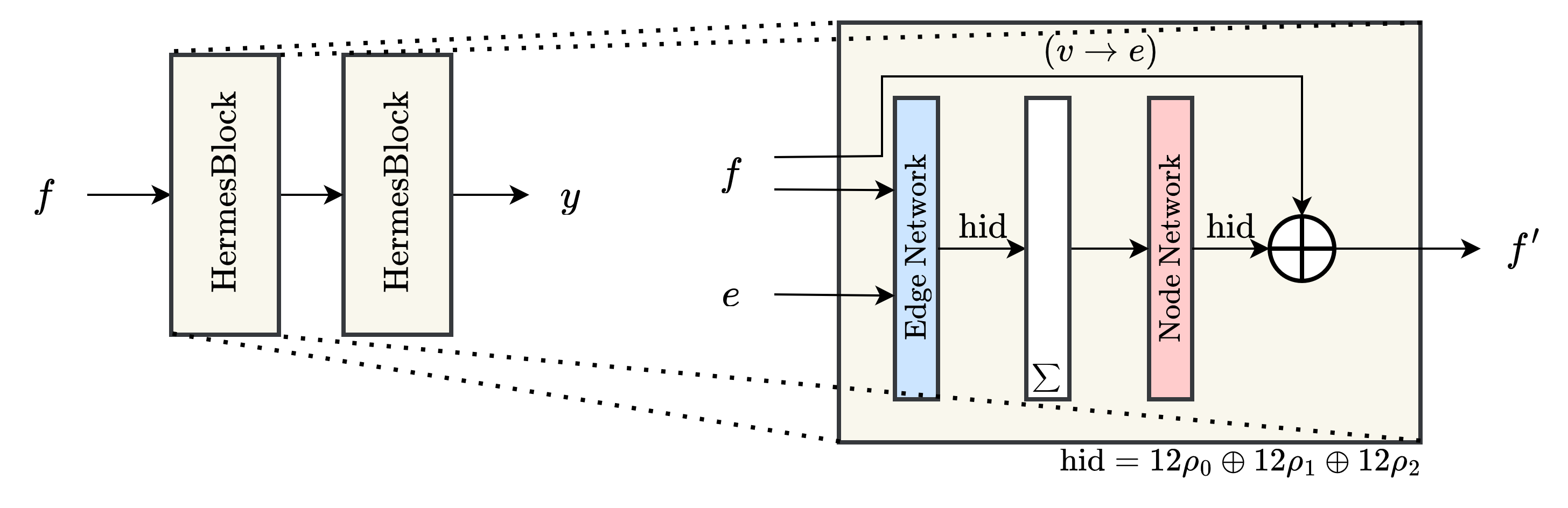}
\end{subfigure}
\begin{subfigure}[t]{0.7\textwidth}
\centering
\includegraphics[width=\textwidth, trim=25 5 20 35, clip]{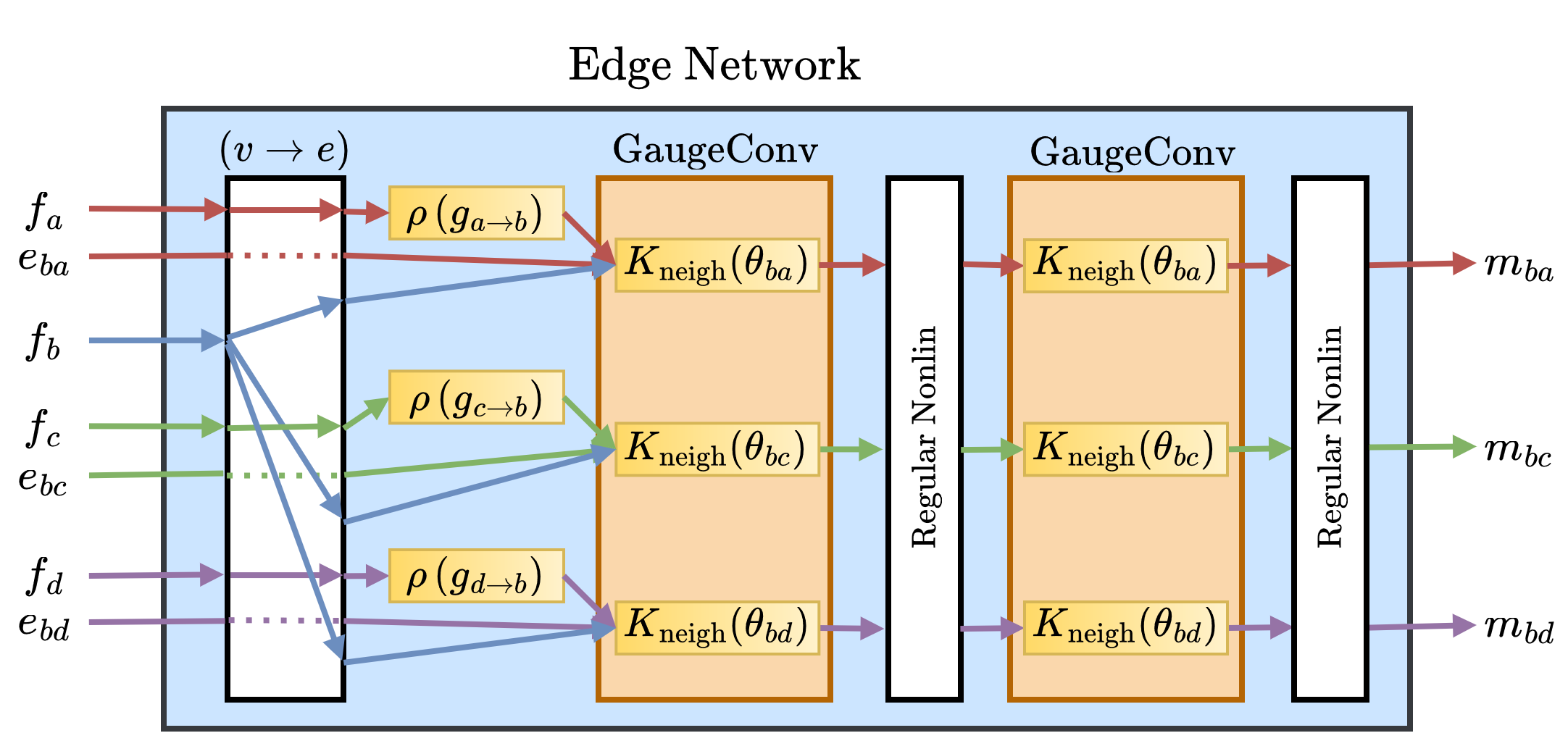}
\end{subfigure}
\hfill%
\begin{subfigure}[t]{0.25\textwidth}
\centering
\includegraphics[width=\textwidth, trim=20 0 20 60, clip]{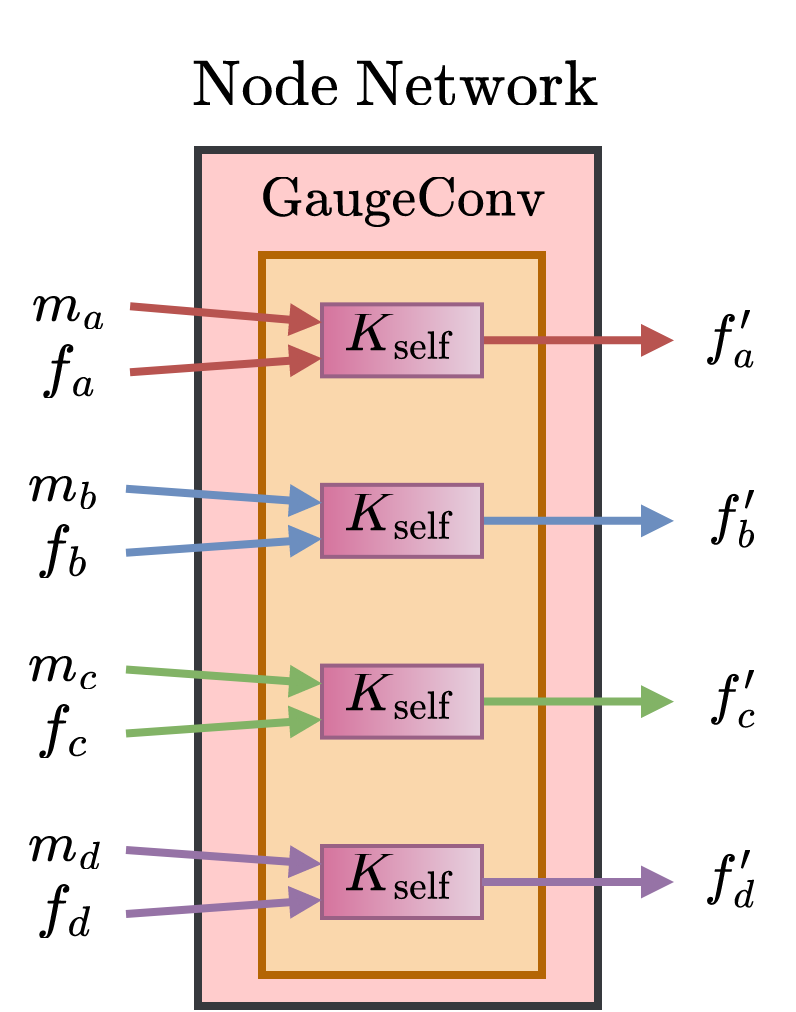}
\end{subfigure}
\caption{Hermes network architecture for the Wave PDE dataset. There are two message blocks, each with $2$ layers for the edge network $\phi_e$ and $1$ layer for the node network $\phi_n$. To illustrate the computations within the edge and node networks, we use the example mesh from Figure~\ref{fig:mesh} as input. In the edge network, we show only the computations for node $b$ for clarity.}
\label{fig:architecture}
\vspace{-6mm}
\end{figure}

Figure~\ref{fig:architecture} shows the Hermes architecture for one of the datasets used in experiments (wave PDE, see Section~\ref{sec:setup_pde}). For comparison, we also include the GemCNN and EMAN architectures in Appendix~\ref{app:architectures}. A residual connection is included at the end of each HermesBlock for more expressivity, see Section~\ref{sec:ablation_residual} for ablation results.

\paragraph{Proof of Gauge Equivariance} We explicitly show that Hermes is equivariant to local gauge transformations in Proposition~\ref{eq:prop}. 
\begin{prop}
\label{eq:prop}
Hermes is equivariant to local gauge transformations $g^p$ for any $p \in \cV$ of a mesh.
\end{prop}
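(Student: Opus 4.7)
The plan is to track how every symbolic object in Equations~\eqref{eq:hermes-first}--\eqref{eq:hermes} transforms under an arbitrary collection of local gauge transformations $\{g^p\}_{p \in \cV}$ and verify that the final output $f'_p$ transforms by $\rho_\mathrm{out}(g^p)$. The key inputs I would use are the transformation rules already established earlier in the paper: the node features change as $f_p \mapsto \rho_\mathrm{in}(g^p)f_p$; the edge features, living in $T_pM$, transform as $e_{pq} \mapsto \rho_\mathrm{edge}(g^p)e_{pq}$; the parallel transporter composes as $g_{q\to p} \mapsto g^p \, g_{q\to p}\, (g^q)^{-1}$; and the polar angle transforms as $\theta_q \mapsto \theta_q - \phi^p$, where $\phi^p$ is the rotation angle of $g^p$.

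Next I would walk through the five equations in order. For Equation~\eqref{eq:hermes-first}, direct substitution gives $\rho(g_{q\to p})f_q \mapsto \rho(g^p)\rho(g_{q\to p})\rho(g^q)^{-1}\rho(g^q)f_q = \rho(g^p)\rho(g_{q\to p})f_q$, so the middle summand transforms under $g^p$ alone; combined with the analogous transformations for $f_p$ and $e_{pq}$, the concatenation $h_{pq}$ transforms by the direct-sum representation at $p$. For Equation~\eqref{eq:hermes-first}'s successor, the edge network is a composition of gauge equivariant convolutions and regular nonlinearities $\sigma$: each $K^i_\mathrm{neigh}(\theta_q)$ satisfies the intertwining identity $K(\theta_q - \phi^p)\rho_\mathrm{in}(g^p) = \rho_\mathrm{out}(g^p)K(\theta_q)$ (this is the gauge equivariance constraint on convolutional kernels from \citet{haan2021gauge} recalled in Section 2.2), and regular nonlinearities commute with $\rho(g^p)$ by construction. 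Hence $m_{pq} \mapsto \rho_\mathrm{mid}(g^p)m_{pq}$.

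For the aggregation in Equation~\eqref{eq:hermes-agg}, every summand carries the same $\rho_\mathrm{mid}(g^p)$ action, so $m_p$ inherits it. For the next equation, $h_p = m_p \oplus f_p$ involves only quantities attached to the vertex $p$, so it transforms by a direct sum of representations of $g^p$. For Equation~\eqref{eq:hermes}, the node network uses only self-interaction kernels $K^i_\mathrm{self}$, which satisfy the same intertwining identity (with no $\theta_q$ argument), and regular nonlinearities; composition preserves equivariance, yielding $f'_p \mapsto \rho_\mathrm{out}(g^p)f'_p$. The residual connection in each HermesBlock adds no difficulty because the shortcut carries features that already transform by $\rho(g^p)$.

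The main subtlety I expect is in the edge network step: one must correctly bookkeep the two simultaneous effects of the gauge transformation at $q$, namely the rotation of $f_q$ by $\rho(g^q)$ and the composition of the parallel transporter with $(g^q)^{-1}$, and verify that they cancel so that $\phi_e$ sees only an action of $g^p$ on its input. Once this cancellation is made explicit, the remaining steps reduce to repeated application of the equivariance of gauge convolutions and regular nonlinearities, and closure of equivariant maps under composition, direct sum, and permutation-invariant summation.
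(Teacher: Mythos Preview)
Your proposal is correct and follows essentially the same route as the paper: track the gauge action through Equations~\eqref{eq:hermes-first}--\eqref{eq:hermes}, invoke the kernel intertwining constraint for $K_\mathrm{neigh}$ and $K_\mathrm{self}$, use linearity of the aggregation, and close under composition; your explicit $g^q$-cancellation for the parallel-transported feature is in fact slightly more careful than the paper's version. The one caveat the paper adds that you omit is that the regular nonlinearity $\sigma$ commutes with $\rho(g^p)$ only exactly in the limit $N\to\infty$ of the discrete Fourier sampling (approximately for finite $N$), so the equivariance statement should be qualified accordingly.
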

The proof, provided in Appendix~\ref{app:prop}, uses the fact that all of the kernels used in Hermes satisfy the constraints for gauge equivariance. The nonlinearities are exactly gauge equivariant as the number of intermediate samples $N$ used in the discrete Fourier transform approaches infinity and approximately gauge equivariant for finite $N$ (see Section 4 in \citep{haan2021gauge} for more details). Thus Hermes is also gauge equivariant (in the limit).

Hermes combines both nonlinear message passing and gauge equivariance, generalizing GemCNN and EMAN, and completes the full picture with the $3$ flavors of message passing (cf. Figure~\ref{fig:mesh_and_networks}). An important point to note is that nonlinear message passing decouples the effect of the number of layers in the network (depth) with the receptive field of the graph. For graph convolutions using 1-hop neighbors, the receptive field of messages is exactly equal to the network depth. In contrast, nonlinear message passing computes messages using features from neighbors at an arbitrary hop distance away from the source node before the message is passed to the source node. We hypothesize that this decoupling is beneficial for tasks with complicated interaction dynamics between node neighbors, and previous works with graph message passing networks \citep{battaglia2016interaction, kipf2018neural} have shown good performance for abstract graphs (non-spatial graphs) involving objects. Although meshes are inherently spatial, we show that nonlinear message passing is an important tool for solving PDEs on surfaces and can outperform either convolutional or attentional mechanisms.

\section{Experiment Design}

To evaluate gauge equivariant nonlinear message passing, we consider several different tasks. Our primary domains are partial differential equations on meshes, but we also consider shape correspondence, object interactions, and cloth simulation. Throughout, we consider triangular meshes as they are the most common. Further details and visualizations of the domains are provided in Appendix~\ref{app:domains}.

\subsection{Domains}

\textbf{Partial Differential Equations on meshes} \quad
\label{sec:setup_pde}
We consider three linear and nonlinear surface partial differential equations (PDEs), where the dynamics occur on the surface of an object, represented as a two dimensional mesh embedded in 3D space. Since solving PDEs on meshes, e.g. heat diffusing over a surface, naturally depends on the intrinsic mesh geometry, gauge-equivariant nonlinear message passing is a promising solution. For all three equations, we use example meshes in the PyVista library \citep{sullivan2019pyvista} and generate $5$ trajectories with different initial conditions, see Appendix~\ref{app:domains} for more details.

\underline{Heat/Wave equation}: The heat and wave equations are second-order linear partial differential equations (see Appendix~\ref{app:heat}). Solving the heat equation on a surface mesh can correspond to modeling the dynamics when an external hot object makes contact at certain points on a thin hollow object. The wave equation describes how acoustic waves propagate on a thin hollow surface. The dynamics of both equations are highly dependent on the local mesh geometry.

\underline{Cahn-Hilliard equation}: The Cahn-Hilliard equation \citep{cahn1958free} describes phase separation in a binary fluid mixture and is often used to model spinodal decomposition. It is a fourth-order, nonlinear, time-dependent PDE and is often factored into two coupled second-order equations. The surface Cahn-Hilliard equation can model real-world applications such as cell proliferation \citep{khain2008generalized} and two-component vesicles \citep{ratz2016benchmark}. As a nonlinear PDE, we expect our nonlinear message passing method to exhibit greater performance than other linear message passing flavors.

\textbf{Shape correspondence} \quad
As a standard mesh benchmark, we use the same FAUST dataset used in previous work \citep{haan2021gauge, basu2022equivariant}. The dataset consists of 80 train and 20 test high resolution scans of 10 humans in 10 different poses. The task is to determine shape correspondence between different meshes. As the vertices are all registered and represent the same position on the human body, this task is equivalent to classifying the correct label for each vertex.

\textbf{Object interactions} \quad
Inspired by interaction systems \citep{battaglia2016interaction, kipf2018neural, kipfcontrastive}, we consider complex dynamics of interacting objects on a mesh. On a coarse triangular mesh with random hills, each object occupies a vertex and is oriented towards a neighboring vertex. An action can either turn the object left (changing its orientation), move the object forward, or turn right. Objects cannot move forward if there is another object at the destination vertex, giving rise to complex interacting dynamics. Furthermore, we consider the geometry of the mesh such that an object cannot move forward if the height difference between the current vertex and destination vertex is too high, or if the angle between the vertex normals is too large. If an object is able to move forward, we parallel transport its orientation and then choose the nearest neighboring node as its new orientation.

\textbf{FlagSimple} \quad We also include the \texttt{FlagSimple} dataset from \cite{pfafflearning}, which simulates cloth dynamics of a flag with self-collisions. Unlike the other datasets, the mesh is dynamic where the node positions change over time. The dataset was created using ArcSim \citep{narain2012adaptive} with regular meshing over $400$ timesteps. See Appendix A.1 of \citep{pfafflearning} for more information.

\subsection{Training Details}
For the PDE datasets, we report test root mean squared error (RMSE) of the prediction at the next timestep given the previous $5$ timesteps. We use three separate test datasets and evaluate generalization to future timesteps (\texttt{test time}), unseen initial conditions (\texttt{test init}), and unseen meshes (\texttt{test mesh}). For \texttt{test time}, we train on timesteps $T=0,\dots,149$ and test on $T=150,\dots,200$. For \texttt{test init}, we test on trajectories with new initial conditions. For \texttt{test mesh}, we evaluate on completely unseen meshes to see whether a method overfits to specific mesh geometries.

For baselines, our main comparison is with GemCNN and EMAN to gauge how beneficial nonlinear message passing is over convolutional or attentional flavors. We also compare against 1) a SOTA messsage passing, mesh-aware method MeshGraphNet \citep{pfafflearning}, 2) an E(3)-equivariant, non mesh-aware message passing network EGNN \citep{satorras2021n}, 3) a non-equivariant, mesh-aware baseline SpiralNet++ \citep{gong2019spiralnet++}, and 4) standard non-equivariant GNNs: graph convolutional networks (GCN) \citep{kipfsemi} and message passing networks (MPNN) \citep{gilmer2017neural, battaglia2018relational}. We tune each architecture to use a similar number of trainable parameters for a fair comparison, see Table~\ref{tab:num_params} in Appendix~\ref{app:training}. A more detailed feature comparison of each method is provided in Table~\ref{tab:model_comparison} and all other training details are relegated to Appendix~\ref{app:training}.

\section{Results}

Table~\ref{tab:results} shows the results on the PDE datasets and other results are given in Table~\ref{tab:other_datasets} in Appendix~\ref{app:results}. On Heat, we see that EMAN outperforms GemCNN and Hermes outperforms EMAN significantly. This coincides with our expectation as attention mechanisms can express convolutions using constant weights, and Hermes generalizes both linear convolution and attention mechanisms. However, this does not hold for the Wave dataset, where EMAN is noticeably worse than GemCNN while Hermes still performs well. On the nonlinear Cahn-Hilliard dataset, we see that EMAN cannot generalize well to unseen meshes. Overall Hermes achieves an RMSE approximately $3$ times lower than that of GemCNN, and between $2$ to $8$ times lower than that of EMAN.

Compared to other non gauge equivariant baselines, Hermes outperforms all methods, except for certain cases with MeshGraphNet. Hermes is worse than MeshGraphNet on Heat, outperforms on Wave, and performs similarly on Cahn-Hilliard. It performs substantially better on all test mesh datasets, which may indicate that Hermes can generalize to the true dynamics function, rather than the specific dynamics seen in the training trajectories. In other words, Hermes is better adapted to use the underlying geometry while MeshGraphNet overfits to specific geometries.

On the FlagSimple dataset (Table~\ref{tab:other_datasets}), Hermes outperforms MeshGraphNet considerably, suggesting that Hermes is not limited to static meshes and can handle temporally changing meshes well. Note that MeshGraphNet was modified to have a similar number of parameters as Hermes and so the results are different than reported in \citet{pfafflearning}.

\begin{table}[!t]
\centering
\caption{RMSE on PDE domains, using $5$ runs. All values are expressed in $\times 10^{-3}$ and gray denotes $95\%$ confidence intervals. Hermes generally outperforms baselines, except for some cases with MeshGraphNet (MGN). Hermes outperforms MeshGraphNet on all test mesh datasets.}
\resizebox{\columnwidth}{!}{%
\begin{tabular}{llrrr|rrrrr}
\toprule
 & \multicolumn{1}{c}{} & \multicolumn{1}{c}{\textbf{Hermes}} & \multicolumn{1}{c}{GemCNN} & \multicolumn{1}{c|}{EMAN} & \multicolumn{1}{c}{GCN} & \multicolumn{1}{c}{MPNN} & \multicolumn{1}{c}{MGN} & \multicolumn{1}{c}{EGNN} & \multicolumn{1}{c}{SpiralNet++} \\
\midrule
\multirow{3}{*}{\sc{Heat}} & Test time & 
$1.18 {\color{gray}\scriptstyle \pm 0.3}$ & 
$3.88 {\color{gray}\scriptstyle \pm 0.8}$ &
$2.93 {\color{gray}\scriptstyle \pm 0.9}$ &
$152 {\color{gray}\scriptstyle \pm 1.2}$ & 
$2.66 {\color{gray}\scriptstyle \pm 0.8}$ & 
$\textbf{0.93} {\color{gray}\scriptstyle \pm 0.2}$ &  
$3.09 {\color{gray}\scriptstyle \pm 1.2}$ &  
$2.82 {\color{gray}\scriptstyle \pm 0.2}$ 
\\
& Test init & 
$1.16 {\color{gray}\scriptstyle \pm 0.3}$ & 
$3.85 {\color{gray}\scriptstyle \pm 0.8}$ &
$2.90 {\color{gray}\scriptstyle \pm 0.9}$ &
$152 {\color{gray}\scriptstyle \pm 0.9}$ & 
$2.63 {\color{gray}\scriptstyle \pm 0.8}$ & 
$\textbf{0.93} {\color{gray}\scriptstyle \pm 0.2}$ &  
$3.07 {\color{gray}\scriptstyle \pm 1.2}$ &  
$6.44 {\color{gray}\scriptstyle \pm 0.1}$ 
\\
& Test mesh & 
$\textbf{1.01} {\color{gray}\scriptstyle \pm 0.3}$ & 
$3.50 {\color{gray}\scriptstyle \pm 0.6}$ &
$2.47 {\color{gray}\scriptstyle \pm 0.7}$ &
$127 {\color{gray}\scriptstyle \pm 2.2}$ & 
$2.36 {\color{gray}\scriptstyle \pm 0.7}$ & 
$2.41 {\color{gray}\scriptstyle \pm 1.1}$ &  
$8.96 {\color{gray}\scriptstyle \pm 5.0}$ &  
$22.0 {\color{gray}\scriptstyle \pm 0.3}$ 
\\
\midrule
\multirow{3}{*}{\sc{Wave}} & Test time & 
$\textbf{5.43} {\color{gray}\scriptstyle \pm 0.8}$ & 
$12.2 {\color{gray}\scriptstyle \pm 1.5}$ &
$19.0 {\color{gray}\scriptstyle \pm 3.0}$ &
$162 {\color{gray}\scriptstyle \pm 5.0}$ & 
$9.07 {\color{gray}\scriptstyle \pm 1.2}$ & 
$6.26 {\color{gray}\scriptstyle \pm 0.9}$ &  
$45.9 {\color{gray}\scriptstyle \pm 6.1}$ &  
$8.88 {\color{gray}\scriptstyle \pm 1.2}$ 
\\
& Test init & 
$\textbf{3.72} {\color{gray}\scriptstyle \pm 1.3}$ & 
$7.28 {\color{gray}\scriptstyle \pm 0.7}$ &
$15.3 {\color{gray}\scriptstyle \pm 3.6}$ &
$158 {\color{gray}\scriptstyle \pm 5.9}$ & 
$5.24 {\color{gray}\scriptstyle \pm 1.1}$ & 
$4.24 {\color{gray}\scriptstyle \pm 0.6}$ &  
$12.1 {\color{gray}\scriptstyle \pm 3.5}$ &  
$8.47 {\color{gray}\scriptstyle \pm 0.6}$ 
\\
& Test mesh & 
$\textbf{3.79} {\color{gray}\scriptstyle \pm 1.3}$ & 
$8.23 {\color{gray}\scriptstyle \pm 0.6}$ &
$15.8 {\color{gray}\scriptstyle \pm 3.7}$ &
$164 {\color{gray}\scriptstyle \pm 5.1}$ & 
$6.29 {\color{gray}\scriptstyle \pm 1.3}$ & 
$7.01 {\color{gray}\scriptstyle \pm 1.9}$ &  
$54.5 {\color{gray}\scriptstyle \pm 18}$ &  
$10.8 {\color{gray}\scriptstyle \pm 0.8}$ 
\\
\midrule
\multirow{3}{*}{\begin{tabular}[c]{@{}l@{}}\textsc{Cahn-} \\ \textsc{Hilliard}\end{tabular}} & Test time & 
$\textbf{3.48} {\color{gray}\scriptstyle \pm 0.9}$ & 
$13.8 {\color{gray}\scriptstyle \pm 12}$ &
$8.41 {\color{gray}\scriptstyle \pm 4.0}$ &
$250 {\color{gray}\scriptstyle \pm 7.6}$ & 
$7.25 {\color{gray}\scriptstyle \pm 3.1}$ & 
$\textbf{4.49} {\color{gray}\scriptstyle \pm 0.6}$ &  
$8.36 {\color{gray}\scriptstyle \pm 1.4}$ &  
$11.6 {\color{gray}\scriptstyle \pm 3.3}$ 
\\
& Test init & 
$4.23 {\color{gray}\scriptstyle \pm 1.5}$ & 
$14.9 {\color{gray}\scriptstyle \pm 12}$ &
$8.75 {\color{gray}\scriptstyle \pm 4.0}$ &
$383 {\color{gray}\scriptstyle \pm 6.0}$ & 
$7.52 {\color{gray}\scriptstyle \pm 3.0}$ & 
$\textbf{4.64} {\color{gray}\scriptstyle \pm 0.6}$ &  
$10.6 {\color{gray}\scriptstyle \pm 1.1}$ &  
$12.9 {\color{gray}\scriptstyle \pm 2.8}$ 
\\
& Test mesh & 
$\textbf{5.41} {\color{gray}\scriptstyle \pm 0.8}$ & 
$14.1 {\color{gray}\scriptstyle \pm 12}$ &
$43.8 {\color{gray}\scriptstyle \pm 27}$ &
$391 {\color{gray}\scriptstyle \pm 8.6}$ & 
$7.63 {\color{gray}\scriptstyle \pm 3.0}$ & 
$18.7 {\color{gray}\scriptstyle \pm 7.2}$ &  
$9.38 {\color{gray}\scriptstyle \pm 1.7}$ &  
$13.4 {\color{gray}\scriptstyle \pm 2.5}$ \\

\bottomrule
\end{tabular}
}
\label{tab:results}

\end{table}

\subsection{Long-horizon prediction rollouts}

\begin{figure}[!t]
\vspace{-2mm}
    \centering
    \includegraphics[width=\textwidth]{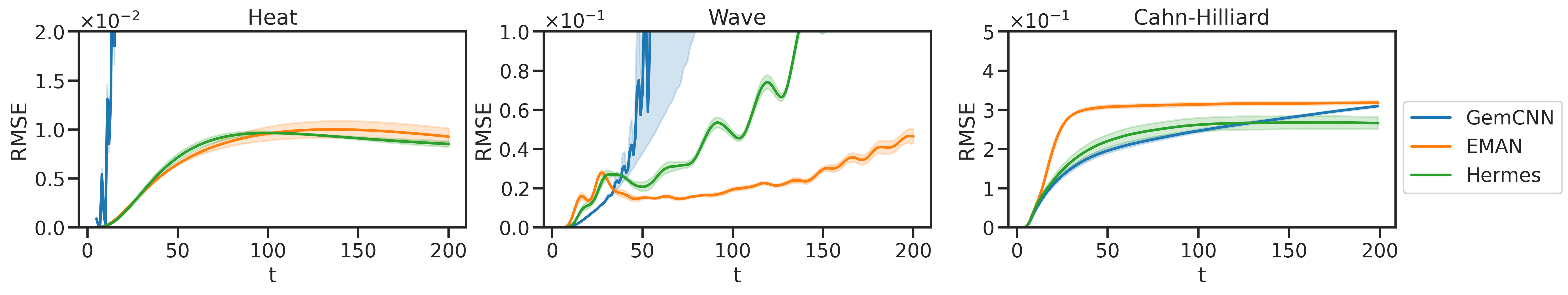} %
\caption{Errors from long-horizon prediction rollouts on unseen meshes, given only initial conditions. Error bars denote standard error over $5$ runs averaged over unseen meshes (2 meshes for heat/wave, 1 for Cahn-Hilliard). GemCNN has exploding errors with increasing $t$ on Heat and Wave. Hermes generally outperforms GemCNN and EMAN, with the exception of EMAN on Wave.}
\label{fig:rollouts}
\vspace{-5mm}
\end{figure}

Using a representative random run, we generate predictions from each model autoregressively given only the initial conditions. We generate predictions on the unseen meshes (test mesh) and roll out the entire trajectory of the PDE ($T_\textrm{max}=200$). Figure~\ref{fig:rollouts} shows how the errors change with the increasing rollout horizon. For GemCNN, the errors accumulate quickly and the predictions diverge for Heat and Wave. EMAN performs similarly to Hermes on Heat but underperforms on Cahn-Hilliard. All methods eventually diverge on Wave; this may be due to the fact that the wave amplitude oscillates multiple times over the longer horizon and so it may be more difficult to predict the periodic nature.

Table~\ref{tab:rollouts} shows prediction samples at $T+50$ generated autoregressively given only initial conditions. GemCNN fails on the Heat dataset and does not produce the correct spatial patterns for Wave and Cahn-Hilliard. EMAN performs well on Wave, but fails on Cahn-Hilliard. Hermes gives fairly realistic predictions on all datasets. See Appendix~\ref{app:rollout} for more samples with varying $T$.

\begin{table}[!t]
\vspace{-5mm}
\centering
    \caption{Qualitative prediction samples rolled out to the full path using only initial conditions. GemCNN completely fails on Heat, while EMAN fails on Wave. Hermes predicts the spatial patterns accurately and outperforms GemCNN and EMAN on all datasets.}
    \begin{tabular}{@{}cccccc@{}}
    \toprule
    Dataset & GemCNN & EMAN & Hermes & Ground Truth \\
    \midrule
    \begin{tabular}{@{}c@{}} Heat \\ (T+50) \end{tabular} & \cincludegraphics[width=0.15\textwidth, trim=290 140 290 120, clip]{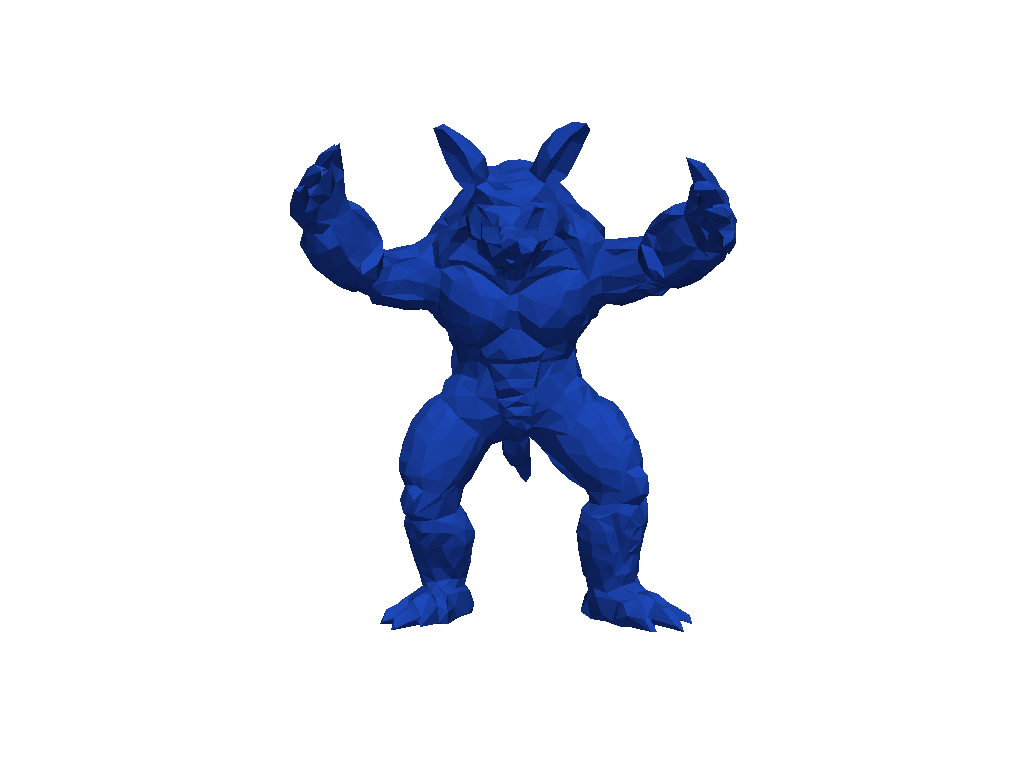} & \cincludegraphics[width=0.15\textwidth, trim=290 140 290 120, clip]{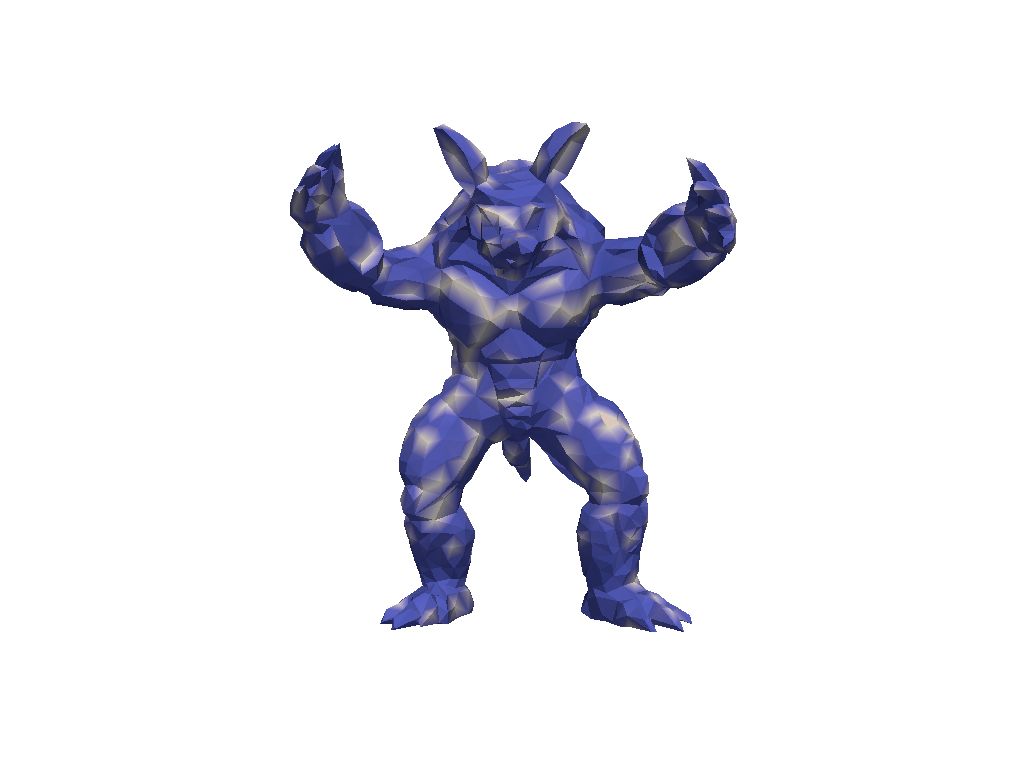}&\cincludegraphics[width=0.15\textwidth, trim=290 140 290 120, clip]{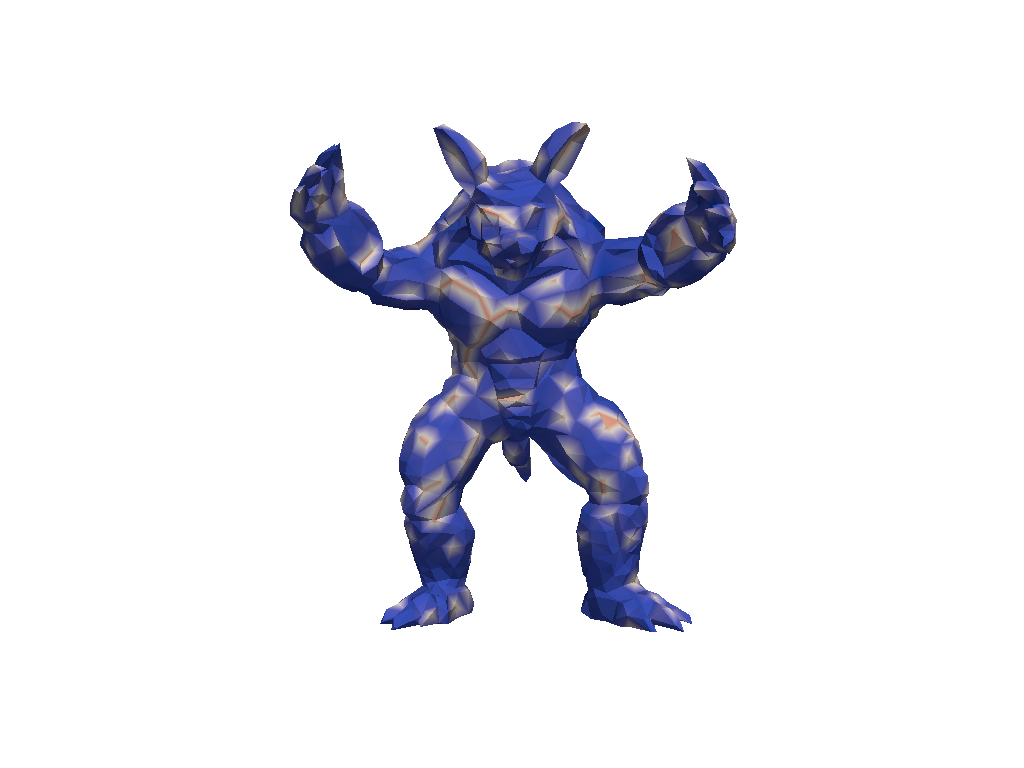} & \cincludegraphics[width=0.15\textwidth, trim=290 140 290 120, clip]{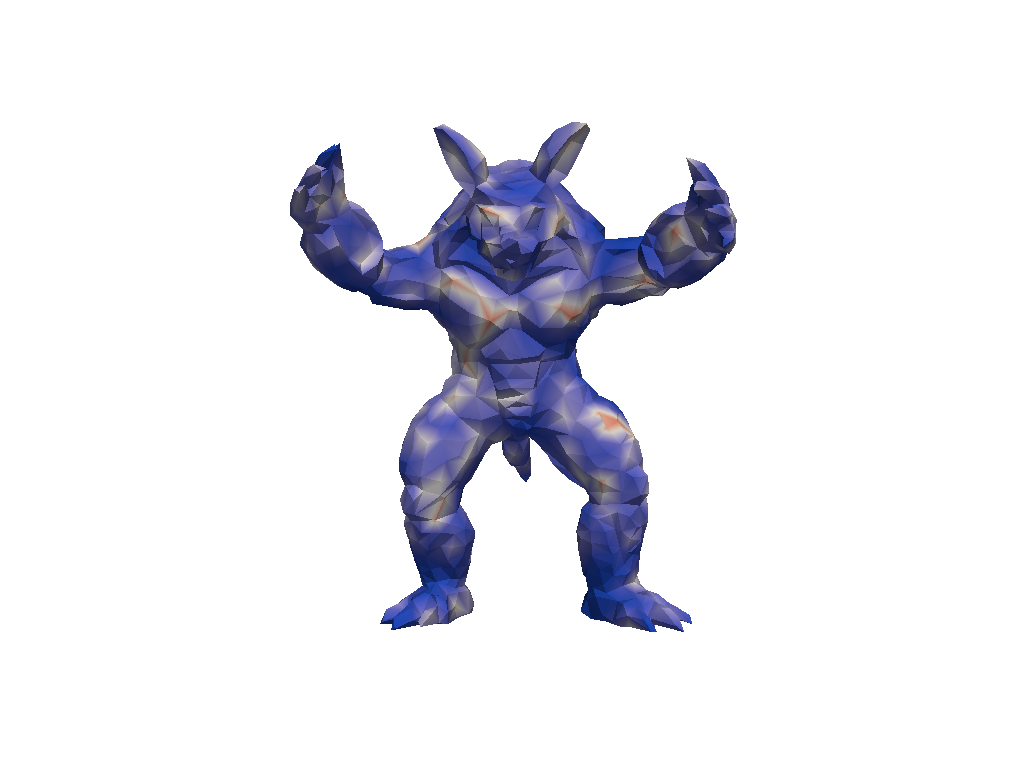}\\[0cm]
    
    \begin{tabular}{@{}c@{}} Wave \\ (T+50) \end{tabular} & \cincludegraphics[width=0.15\textwidth,trim=280 85 230 90, clip]{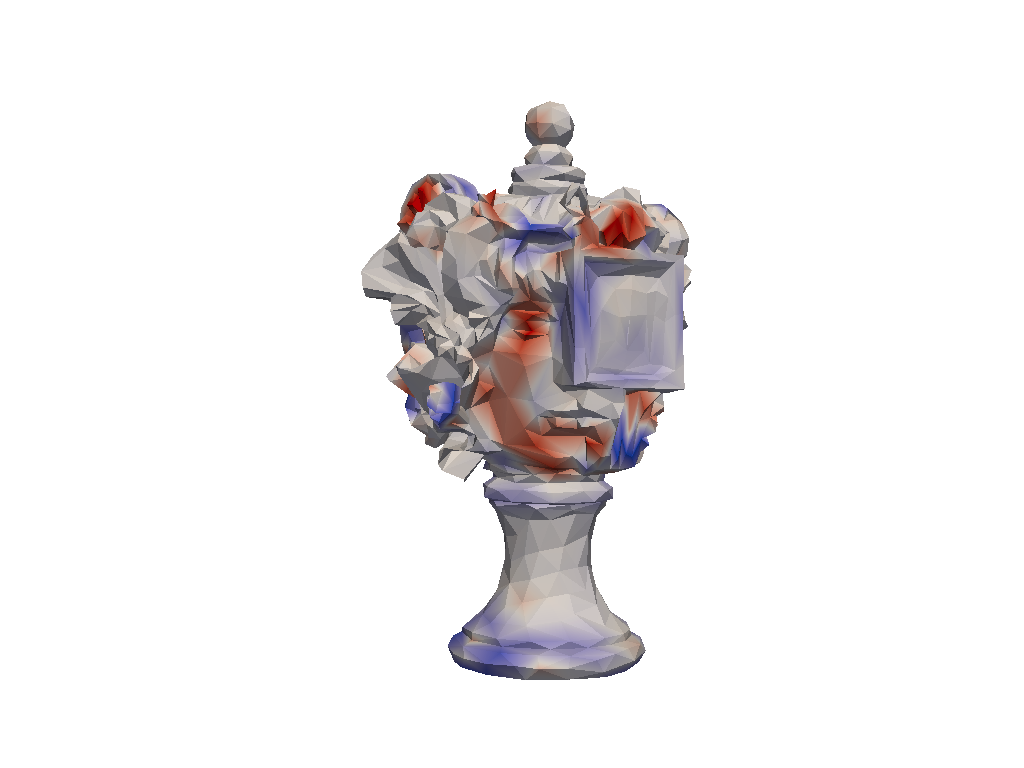} & \cincludegraphics[width=0.15\textwidth,trim=280 85 230 90, clip]{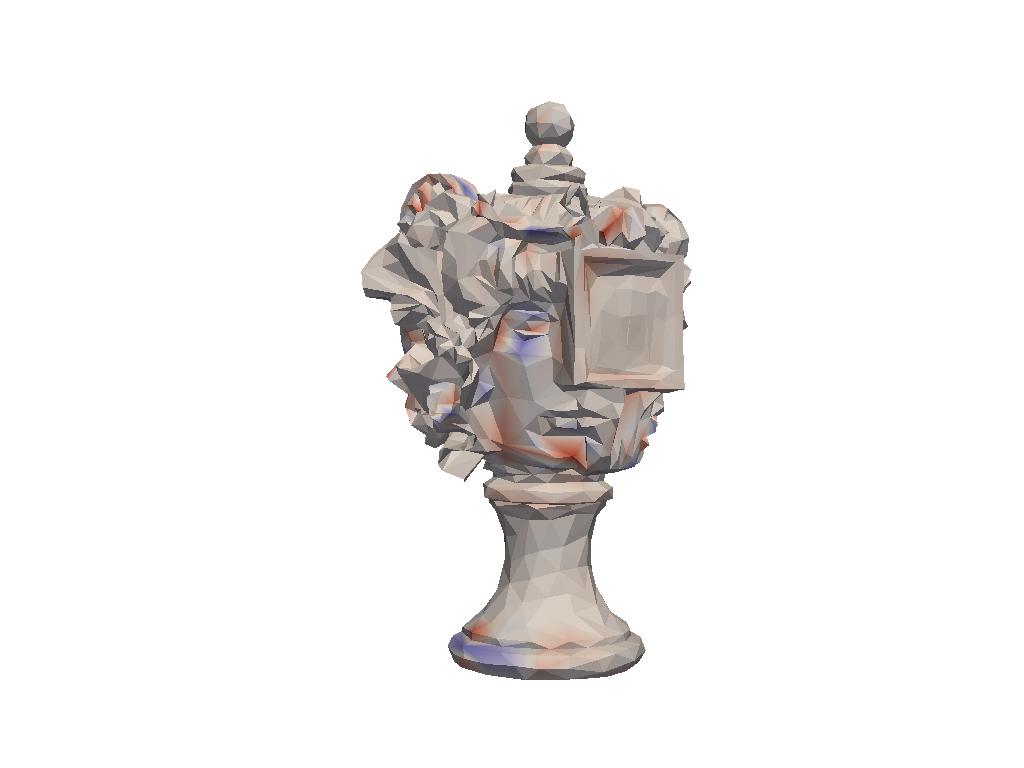}&\cincludegraphics[width=0.15\textwidth,trim=280 85 230 90, clip]{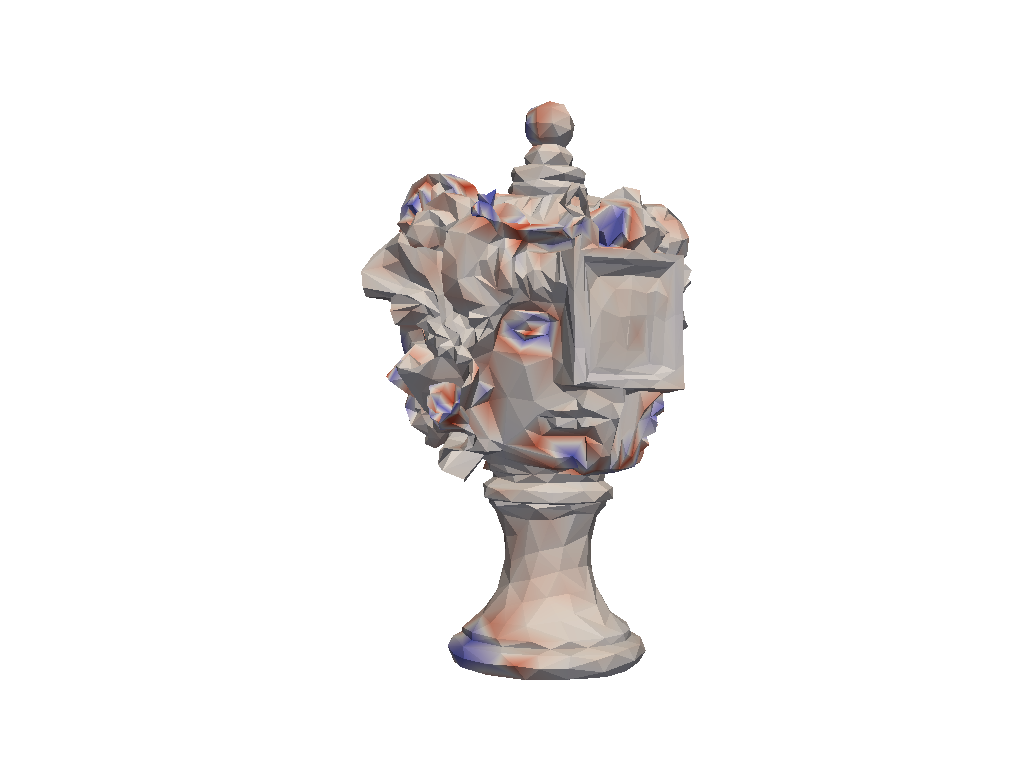} & \cincludegraphics[width=0.15\textwidth,trim=280 85 230 90, clip]{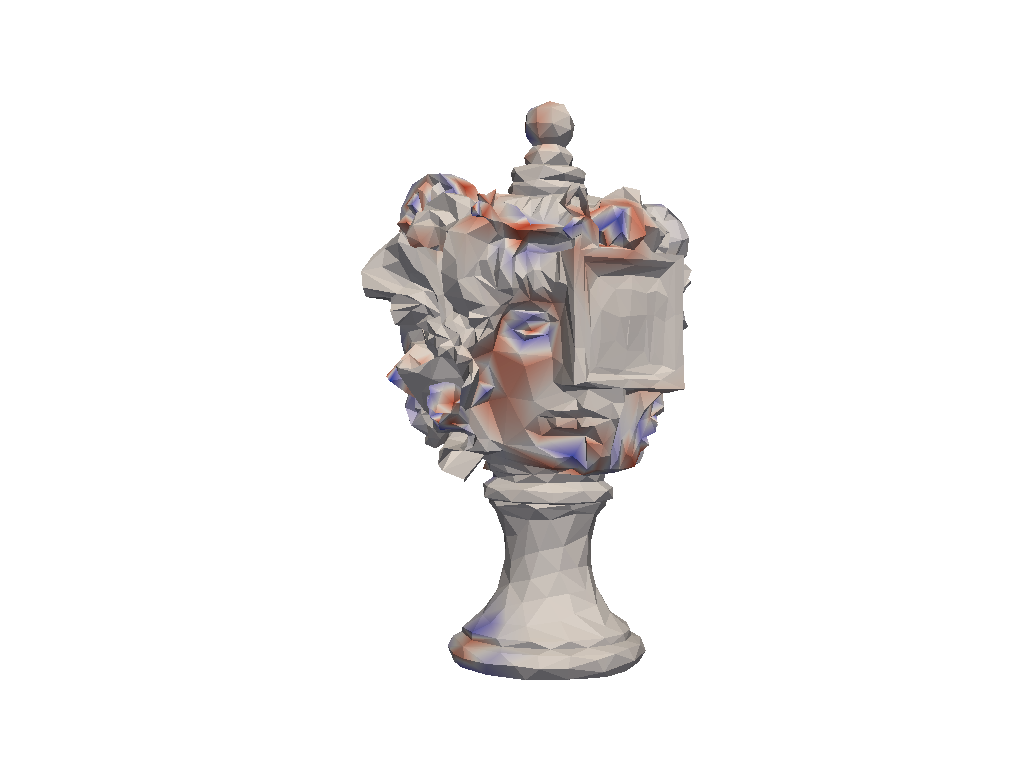}\\ [0cm]
    
    \begin{tabular}{@{}c@{}} Cahn-Hilliard \\ (T+50)\end{tabular} & \cincludegraphics[width=0.15\textwidth,trim=250 110 250 110, clip]{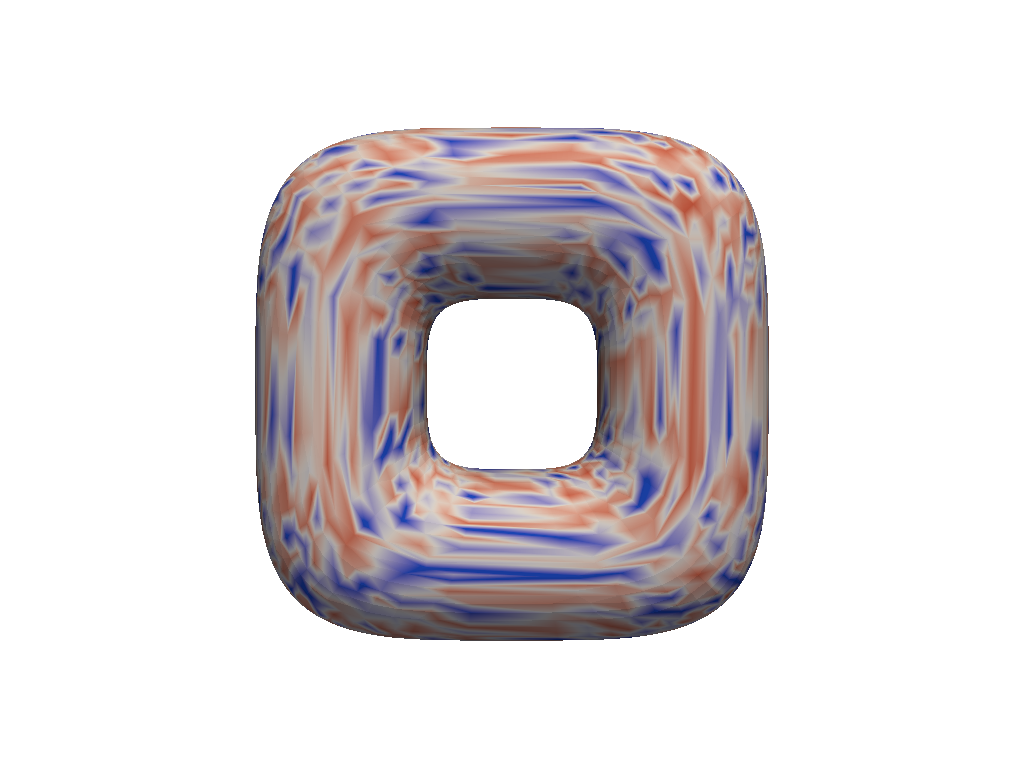} & \cincludegraphics[width=0.15\textwidth,trim=250 110 250 110, clip]{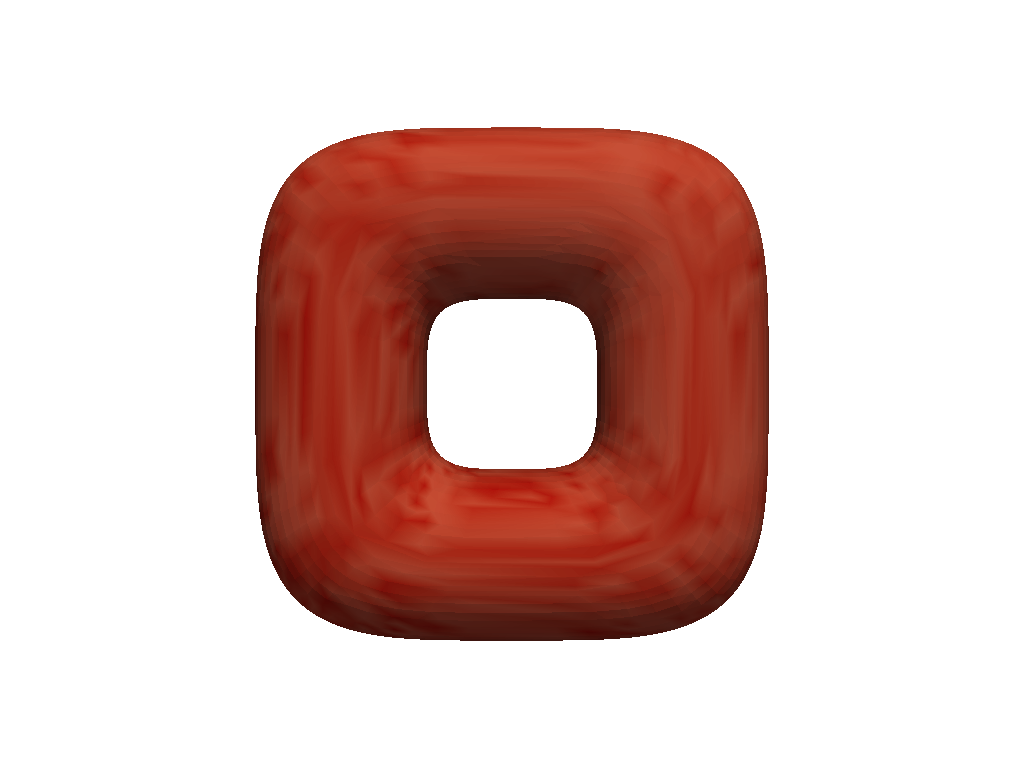}&\cincludegraphics[width=0.15\textwidth,trim=250 110 250 110, clip]{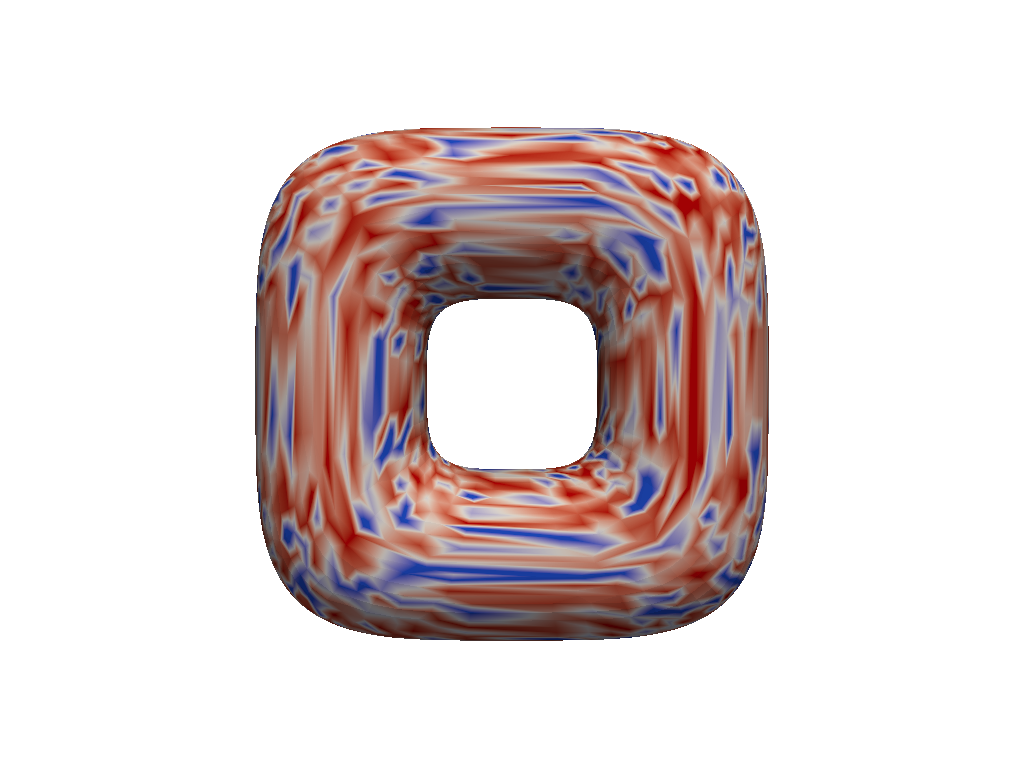} & \cincludegraphics[width=0.15\textwidth,trim=250 110 250 110, clip]{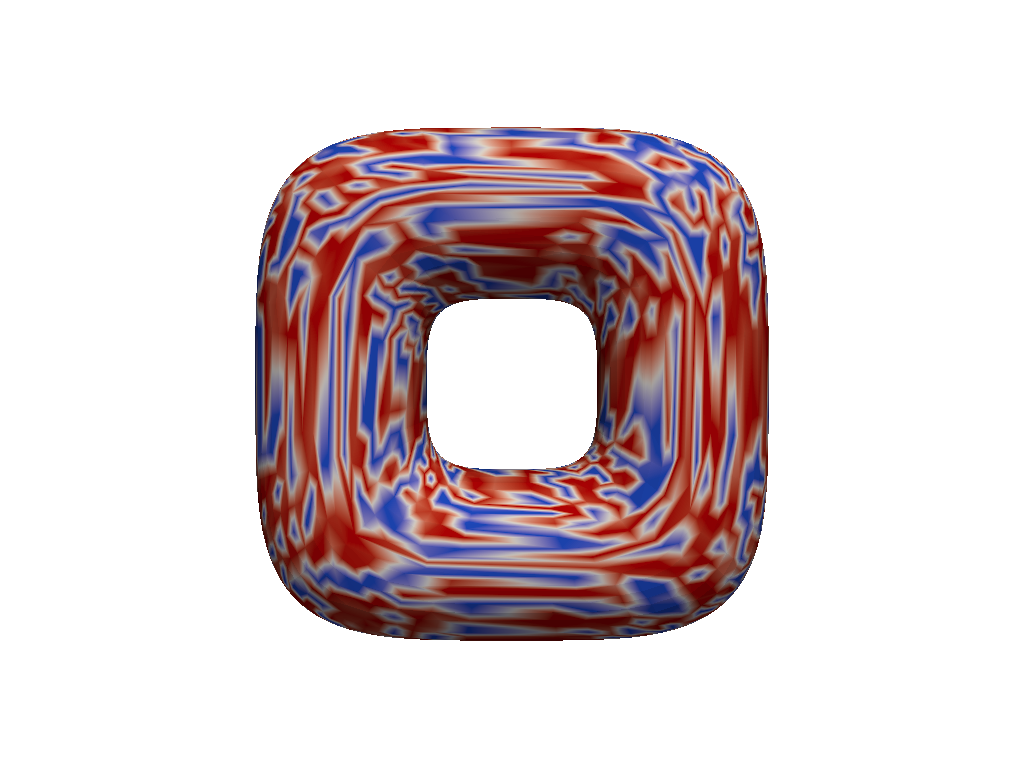} \\[0cm]
    \bottomrule
    \end{tabular}
\label{tab:rollouts}
\vspace{-3mm}
\end{table}

\subsection{Mesh Fineness}
\label{sec:fineness}
Here we investigate how mesh fineness impacts performance with respect to the three flavors of message passing. Our intuition is that as meshes become finer, the features at each node become more similar. Thus the dynamics between nodes would become more linear and convolutional approaches may perform comparably with nonlinear message passing.

For this experiment, we solve the heat and wave equations on a single mesh (``armadillo''). To generate different mesh resolutions, we simplify the original mesh ($\#$ vertices = $172,974$) to $\{348, 867, 1731, 3461, 8650\}$ vertices using quadric decimation \citep{garland1997surface} ($1731$ vertices were used for the main results in Table~\ref{tab:results}). See Figure~\ref{fig:fineness_data} in Appendix~\ref{app:fineness} for data visualizations. We generate $15$ trajectories with $T_{\text{max}}=100$ and use $5$ of the $15$ trajectories as \texttt{test init} and use $T=81,\dots,100$ of the remaining $10$ trajectories for the \texttt{test time}. As we use a single mesh, we do not test  generalization to unseen meshes. We use $3$ random seeds for each method.

\begin{figure}[t]
    \centering
    \includegraphics[width=\textwidth]{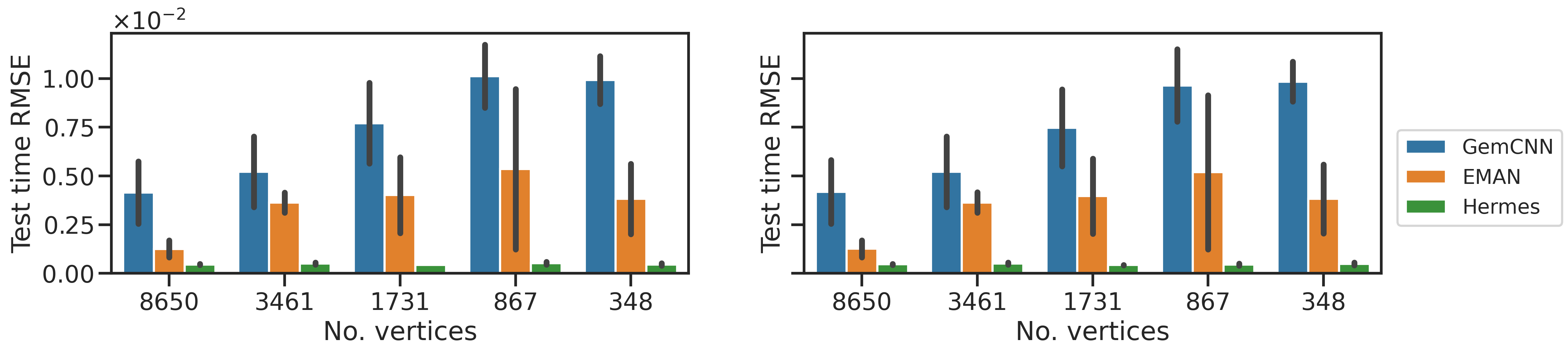} %
\caption{Performance for varying mesh fineness for heat on the test time (left) and test init (right) datasets. Error bars denote standard error over $3$ runs. The errors for GemCNN and EMAN increase as the mesh becomes coarser, while Hermes performs similarly throughout.}
\label{fig:fineness_heat_plot}
\vspace{-5mm}
\end{figure}

Figure~\ref{fig:fineness_heat_plot} shows the results for the heat dataset. RMSE on both the test time and test init datasets increase for GemCNN and EMAN as the mesh becomes coarser (decreasing number of vertices). This coincides with the intuition that coarser meshes have more nonlinear dynamics between nodes and thus contributing to the increase in errors. On the other hand, Hermes still quantitatively outperform GemCNN and EMAN across all mesh resolutions. It is also robust to varying mesh fineness and has similar error values throughout.

On the wave dataset, we find that Hermes still outperforms GemCNN and EMAN at each resolution though the gap is not as large, see Figure~\ref{fig:fineness_wave_plot} (Appendix~\ref{app:fineness}). There is a surprising decreasing trend in errors for all methods as the mesh becomes finer. This indicates that mesh fineness may not be the only factor at play: it is possible that the type of dynamics (PDE) used and the specific architecture (e.g. graph receptive field) can affect the results. Another possible explanation is that as we use a single simplified mesh, there may be some artifacts that affect the wave PDE simulations. 

\subsection{Surface Roughness}
\label{sec:roughness}
We investigate whether surface roughness affects the performance gap between GemCNN and Hermes. We use the same armadillo mesh with $1{,}731$ vertices from before and extract the vertex normals. The vertex normals are then randomly scaled using a Gaussian distribution $\cN(0, s^2)$, where $s \in \{0.1, 0.5, 1, 1.5, 3\}$. The vertex coordinates are then modified by adding these scaled normals to the original coordinates, resulting in different surface roughnesses. See Figure~\ref{fig:roughness_data} in Appendix~\ref{app:roughness} for visualizations. We use the same settings as in the fineness experiments in Section~\ref{sec:fineness}.

\begin{figure}
    \centering
    \includegraphics[width=\textwidth]{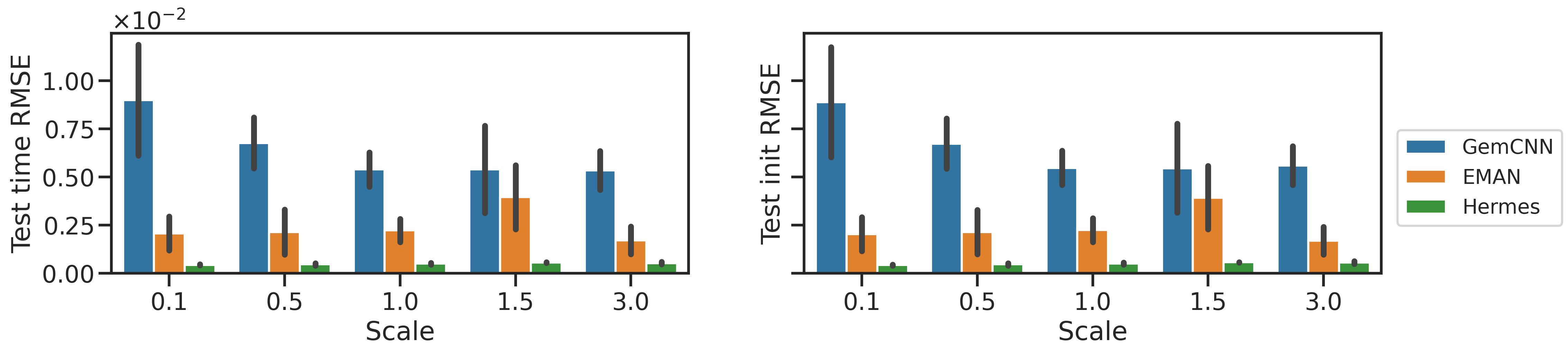}
\caption{Performance for varying surface roughness for heat on the test time (left) and test init (right) datasets. Error bars denote standard error over $3$ runs. Increasing scale increases the surface roughness of the mesh. Hermes outperforms GemCNN and EMAN across different roughness scales and is much more robust.}
\label{fig:roughness_heat_plot}
\vspace{-5mm}
\end{figure}

Similarly to the results on mesh fineness, we find that Hermes retains the performance advantage over GemCNN and EMAN across all roughness scales. Its RMSE errors are also nearly constant throughout indicating robustness to surface roughness. Surprisingly, GemCNN seems to perform increasingly better as the surface becomes rougher while EMAN seems to perform roughly equal. On the wave dataset, Hermes still outperforms other baselines at most roughness scales, see Figure~\ref{fig:roughness_wave_plot} in Appendix~\ref{app:roughness}. There does not seem to be any correlation in errors with surface roughness.

\subsection{Computation Time}
As the edge and node networks $\phi_n, \phi_e$ consist of multi-layer perceptrons, one might think Hermes requires more computation time than its simpler convolutional or attentional counterparts. Table~\ref{tab:computation} shows that this is not the case; Hermes has a lower computation time than both GemCNN and EMAN. As we control for a similar number of parameters in each architecture, there are fewer aggregations in Hermes than in GemCNN or EMAN resulting in a lower average computation time in the forward pass. For EMAN in particular, due to its attention mechanism, we find that it uses many more parameters than either GemCNN or Hermes. Thus constraining EMAN to have a similar number of parameters as GemCNN restricts its expressivity. On the other hand, Hermes is very flexible as it can use a different number of layers for the edge and node networks. Even though Hermes uses a slightly lower hidden dimension than GemCNN in all domains, the results show that this does not hamper model expressivity and it still achieves higher performance. As expected, all three gauge equivariant methods are significantly more computationally expensive than standard graph neural networks.

\begin{table}[t]
\centering
\caption{Mean forward computation time in seconds (ms) during inference on test time PDE datasets.}
\resizebox{\columnwidth}{!}{%
\begin{tabular}{lrrrrrrrr}
\toprule
 & \multicolumn{1}{c}{GemCNN} & \multicolumn{1}{c}{EMAN} &\multicolumn{1}{c}{Hermes} & \multicolumn{1}{c}{GCN} & \multicolumn{1}{c}{MPNN} & \multicolumn{1}{c}{MeshGraphNet} & \multicolumn{1}{c}{EGNN} & \multicolumn{1}{c}{SpiralNet++}\\
\midrule
\multirow{1}{*}{\sc{Heat}} (ms) & $12.4$ & $19.5$ & $10.8$ & $1.5$ & $1.2$ & $2.2$ & $1.4$ & $0.9$ \\
\multirow{1}{*}{\sc{Wave}} (ms) & $12.5$ & $17.7$ & $10.5$ & $1.4$ & $1.3$ & $2.2$ & $1.3$ & $0.9$ \\
\multirow{1}{*}{\sc{Cahn-Hilliard}} (ms) & $6.9$ & $9.2$ & $6.2$ & $1.9$ & $1.5$ & $2.1$ & $1.6$ & $1.3$ \\
\bottomrule
\end{tabular}
}
\label{tab:computation}
\vspace{-4mm}
\end{table}

\subsection{Residual connection ablation}
\label{sec:ablation_residual}
As Hermes models the self-influence of nodes, we also test whether a residual connection is necessary. Table~\ref{tab:ablation_residual} in Appendix~\ref{app:residual} shows the ablation of the residual connection. Having a residual connection improves performance on Heat, but decreases performance slightly on Wave and Cahn-Hilliard. Thus the residual connection seems to be task-dependent and should be considered a hyperparameter. In our experiments, a residual connection was used for each message passing block.

\section{Related Work}

\textbf{Learning over meshes} \quad
Several different mesh operators have been studied within computer graphics, in the context of shape classification \citep{kostrikov2018surface, feng2019meshnet, lahav2020meshwalker}, dense shape correspondence \citep{lim2018simple, gong2019spiralnet++}, mesh segmentation \citep{tatarchenko2018tangent, hanocka2019meshcnn, huang2019texturenet}. Due to the success of graph neural networks (GNNs) \citep{kipfsemi, velivckovicgraph, battaglia2018relational}, several approaches use GNNs to process meshes \citep{masci2015geodesic, boscaini2016learning, milano2020primal, gong2019spiralnet++} and incorporate geometric information via geodesic convolutions, anisotropic kernels, dual graphs, or spiral operators. \citet{pfafflearning} introduce a state of the art method for learning simulations with meshes by representing a graph in mesh space and in world space. 
In this work, we introduce a novel architecture that combines nonlinear message passing with explicit equivariance to local gauge transformations.

\textbf{Gauge Symmetry} \quad
Most works on non-Euclidean manifolds have used convolutions on discretized patches of the manifold, approximating them to be Euclidean. \citet{masci2015geodesic} define the patch operator using local geodesic coordinate systems.
\citet{monti2017geometric} use a mixture of parametric Gaussian kernels. All of these works use (linear) convolutions, unlike our work. \citet{boscaini2016learning} uses anisotropic diffusion kernels, but uses the principal curvature direction as the preferred gauge which may be ill-defined for some shapes.
On the other hand, several works have considered equivariance to gauge symmetry. \citet{cohen2019gauge} proposes gauge equivariant convolutions on the icosahedron. Most similar to our work is that of \cite{haan2021gauge, he2021gauge, basu2022equivariant}, which explicitly incorporate gauge equivariant kernels for meshes. \citet{haan2021gauge} use convolutions while \citet{he2021gauge} and \citet{basu2022equivariant} use attention to discrete and continuous gauge transformations, respectively. In this work, we complete the picture and propose general nonlinear message passing with gauge equivariance for meshes. For more in-depth theory and discussion of local gauge equivariance on manifolds, see \cite{weiler2021coordinate}.

\textbf{Solving PDEs on surfaces} \quad
With the rise of physics-informed neural networks \citep{raissi2019physics, raissi2018deep, long2018pde} and their increased sample efficiency and performance over classical methods, several recent works have focused on solving complex partial differential equations with deep learning, e.g. fluid dynamics \citep{guo2016convolutional, bhatnagar2019prediction, wang2020towards}, thermodynamics \citep{cai2021physics}, structural mechanics \citep{rao2021physics, maurizi2022predicting}, and material science \citep{cang2018improving, liu2019multi}. Some approaches learn the finite-dimensional \citep{raissi2018deep, guo2016convolutional, bhatnagar2019prediction} or infinite-dimensional \citep{li2020neural, lifourier, bhattacharya2021model} solution operators. However, there has been less work on applying deep learning to solving PDEs on surfaces embedded in 3D space using meshes, with some exceptions \citep{fang2019physics, tang2022extrinsic, li2022fourier}. \citet{fang2019physics} solve the Laplace-Beltrami equation over a 3D surface with neural networks while \citet{tang2022extrinsic} propose an extrinsic approach. However, both methods are mesh-free and only use points and their normals, discarding any connectivity information. \citet{li2022fourier} extend Fourier neural operator \citep{li2020fourier} and learns a diffeomorphic deformation between the input space and the computational mesh. While Geo-FNO depends on the embedding space of the mesh (e.g. embedding a rough 2D mesh in 3D), our method works directly on the intrinsic mesh surface. Perhaps most relevant to this paper is \cite{suk2022mesh}, which extends GemCNN \citep{haan2021gauge} in several ways to predict hemodynamics on artery walls. In this work, we propose a nonlinear message passing architecture for meshes that retains the underlying geometry and demonstrate their effectiveness in predicting a variety of surface dynamics.

\section{Discussion}

We introduce a novel architecture, Hermes, that performs gauge equivariant, nonlinear message passing for meshes.
Hermes complements convolutional GemCNN and attentional EMAN and completes development of the 3 flavors of message passing in the gauge equivariant setting. In the context of meshes, similar to GNNs, there seems to be a tradeoff between simple linear operations such as convolutions versus nonlinear message passing. Convolutions are computationally efficient and can perform well on simpler tasks such as shape correspondence, see Table~\ref{tab:other_datasets} in Appendix~\ref{app:results}. However, when the interactions are more complicated such as in surface PDEs, nonlinear message passing surpass linear schemes. By decoupling the degree of nonlinearity from the depth of the network and receptive field, Hermes outperforms GemCNN and EMAN significantly on the PDE datasets and produces realistic predictions given only initial conditions.

A limitation is that Hermes may use more parameters depending on the architectures of the edge and node networks. In particular, both EMAN and Hermes cannot scale well to meshes with a large number of vertices naively, and one may need to consider more sophisticated approaches such as multi-scale or graph expander approaches. Furthermore, the architecture search space for Hermes is larger than that of GemCNN or EMAN, as one needs to consider different combinations of numbers of layers in the edge and node networks, along with the number of message passing blocks.

For future work, one direction is to consider different dynamics such as non-stationary or chaotic dynamics, and other PDEs important in real world applications such as Navier-Stokes for climate or blood flow. Another direction is to analyze the design space of gauge equivariant networks. While GNNs have been extensively studied, far less work exists for mesh methods. GNNs are often highly task-specific and there are many design dimensions (e.g. residual connections, message passing iterations, etc.) to consider \citep{you2020design}. It would be particularly helpful for practitioners to have guidelines on when to use gauge equivariance or message passing over simpler approaches. This work aims to be a first step in this direction by demonstrating Hermes as a good fit for predicting nonlinear dynamics on meshes.

\begin{ack}
We sincerely thank Vedanshi Shah for the initial data generation code and Pim de Haan for clarifying parts of GemCNN. This work is supported in part by NSF 2107256 and 2134178. This work was completed in part using the Discovery cluster, supported by Northeastern University’s Research Computing team.

\end{ack}

\nocite{Tange2011a}
\bibliographystyle{unsrtnat}
\bibliography{references}

\begin{thebibliography}{75}
\providecommand{\natexlab}[1]{#1}
\providecommand{\url}[1]{\texttt{#1}}
\expandafter\ifx\csname urlstyle\endcsname\relax
  \providecommand{\doi}[1]{doi: #1}\else
  \providecommand{\doi}{doi: \begingroup \urlstyle{rm}\Url}\fi

\bibitem[Botsch et~al.(2010)Botsch, Kobbelt, Pauly, Alliez, and
  L{\'e}vy]{botsch2010polygon}
Mario Botsch, Leif Kobbelt, Mark Pauly, Pierre Alliez, and Bruno L{\'e}vy.
\newblock \emph{Polygon mesh processing}.
\newblock CRC press, 2010.

\bibitem[Gainza et~al.(2020)Gainza, Sverrisson, Monti, Rodola, Boscaini,
  Bronstein, and Correia]{gainza2020deciphering}
Pablo Gainza, Freyr Sverrisson, Frederico Monti, Emanuele Rodola, D~Boscaini,
  MM~Bronstein, and BE~Correia.
\newblock Deciphering interaction fingerprints from protein molecular surfaces
  using geometric deep learning.
\newblock \emph{Nature Methods}, 17\penalty0 (2):\penalty0 184--192, 2020.

\bibitem[Sverrisson et~al.(2021)Sverrisson, Feydy, Correia, and
  Bronstein]{sverrisson2021fast}
Freyr Sverrisson, Jean Feydy, Bruno~E Correia, and Michael~M Bronstein.
\newblock Fast end-to-end learning on protein surfaces.
\newblock In \emph{Proceedings of the IEEE/CVF Conference on Computer Vision
  and Pattern Recognition}, pages 15272--15281, 2021.

\bibitem[Dassi et~al.(2015)Dassi, Ettinger, Perotto, and
  Sangalli]{dassi2015mesh}
Franco Dassi, Bree Ettinger, Simona Perotto, and Laura~M Sangalli.
\newblock A mesh simplification strategy for a spatial regression analysis over
  the cortical surface of the brain.
\newblock \emph{Applied Numerical Mathematics}, 90:\penalty0 111--131, 2015.

\bibitem[Mudigonda et~al.(2017)Mudigonda, Kim, Mahesh, Kahou, Kashinath,
  Williams, Michalski, O’Brien, and Prabhat]{mudigonda2017segmenting}
Mayur Mudigonda, Sookyung Kim, Ankur Mahesh, Samira Kahou, Karthik Kashinath,
  Dean Williams, Vincen Michalski, Travis O’Brien, and Mr~Prabhat.
\newblock Segmenting and tracking extreme climate events using neural networks.
\newblock In \emph{Deep Learning for Physical Sciences (DLPS) Workshop, held
  with NIPS Conference}, 2017.

\bibitem[Cohen et~al.(2019)Cohen, Weiler, Kicanaoglu, and
  Welling]{cohen2019gauge}
Taco Cohen, Maurice Weiler, Berkay Kicanaoglu, and Max Welling.
\newblock Gauge equivariant convolutional networks and the icosahedral cnn.
\newblock In \emph{International conference on Machine learning}, pages
  1321--1330. PMLR, 2019.

\bibitem[Powell et~al.(1993)Powell, Roe, and Quirk]{powell1993adaptive}
Kenneth~G Powell, Philip~L Roe, and James Quirk.
\newblock Adaptive-mesh algorithms for computational fluid dynamics.
\newblock In \emph{Algorithmic trends in computational fluid dynamics}, pages
  303--337. Springer, 1993.

\bibitem[Bronstein et~al.(2021)Bronstein, Bruna, Cohen, and
  Veli{\v{c}}kovi{\'c}]{bronstein2021geometric}
Michael~M Bronstein, Joan Bruna, Taco Cohen, and Petar Veli{\v{c}}kovi{\'c}.
\newblock Geometric deep learning: Grids, groups, graphs, geodesics, and
  gauges.
\newblock \emph{arXiv preprint arXiv:2104.13478}, 2021.

\bibitem[{de Haan} et~al.(2021){de Haan}, Weiler, Cohen, and
  Welling]{haan2021gauge}
Pim {de Haan}, Maurice Weiler, Taco Cohen, and Max Welling.
\newblock Gauge equivariant mesh {CNN}s: Anisotropic convolutions on geometric
  graphs.
\newblock In \emph{International Conference on Learning Representations}, 2021.
\newblock URL \url{https://openreview.net/forum?id=Jnspzp-oIZE}.

\bibitem[Basu et~al.(2022)Basu, Gallego-Posada, Vigan{\`o}, Rowbottom, and
  Cohen]{basu2022equivariant}
Sourya Basu, Jose Gallego-Posada, Francesco Vigan{\`o}, James Rowbottom, and
  Taco Cohen.
\newblock Equivariant mesh attention networks.
\newblock \emph{Transactions on Machine Learning Research}, 2022.
\newblock ISSN 2835-8856.
\newblock URL \url{https://openreview.net/forum?id=3IqqJh2Ycy}.

\bibitem[Bertalm{\i}o et~al.(2001)Bertalm{\i}o, Cheng, Osher, and
  Sapiro]{bertalmio2001variational}
Marcelo Bertalm{\i}o, Li-Tien Cheng, Stanley Osher, and Guillermo Sapiro.
\newblock Variational problems and partial differential equations on implicit
  surfaces.
\newblock \emph{Journal of Computational Physics}, 174\penalty0 (2):\penalty0
  759--780, 2001.

\bibitem[Economon et~al.(2016)Economon, Palacios, Copeland, Lukaczyk, and
  Alonso]{economon2016su2}
Thomas~D Economon, Francisco Palacios, Sean~R Copeland, Trent~W Lukaczyk, and
  Juan~J Alonso.
\newblock Su2: An open-source suite for multiphysics simulation and design.
\newblock \emph{Aiaa Journal}, 54\penalty0 (3):\penalty0 828--846, 2016.

\bibitem[Marburg and Nolte(2008)]{marburg2008computational}
Steffen Marburg and Bodo Nolte.
\newblock \emph{Computational acoustics of noise propagation in fluids: finite
  and boundary element methods}, volume 578.
\newblock Springer, 2008.

\bibitem[Novick-Cohen(2008)]{NOVICKCOHEN2008201}
Amy Novick-Cohen.
\newblock Chapter 4 the cahn-hilliard equation.
\newblock volume~4 of \emph{Handbook of Differential Equations: Evolutionary
  Equations}, pages 201--228. North-Holland, 2008.
\newblock \doi{https://doi.org/10.1016/S1874-5717(08)00004-2}.
\newblock URL
  \url{https://www.sciencedirect.com/science/article/pii/S1874571708000042}.

\bibitem[Reuter et~al.(2006)Reuter, Wolter, and Peinecke]{reuter2006laplace}
Martin Reuter, Franz-Erich Wolter, and Niklas Peinecke.
\newblock Laplace--beltrami spectra as ‘shape-dna’of surfaces and solids.
\newblock \emph{Computer-Aided Design}, 38\penalty0 (4):\penalty0 342--366,
  2006.

\bibitem[Dziuk and Elliott(2013)]{dziuk2013finite}
Gerhard Dziuk and Charles~M Elliott.
\newblock Finite element methods for surface pdes.
\newblock \emph{Acta Numerica}, 22:\penalty0 289--396, 2013.

\bibitem[Li et~al.(2021)Li, Kovachki, Azizzadenesheli, Bhattacharya, Stuart,
  Anandkumar, et~al.]{lifourier}
Zongyi Li, Nikola~Borislavov Kovachki, Kamyar Azizzadenesheli, Kaushik
  Bhattacharya, Andrew Stuart, Anima Anandkumar, et~al.
\newblock Fourier neural operator for parametric partial differential
  equations.
\newblock In \emph{International Conference on Learning Representations}, 2021.

\bibitem[Brandstetter et~al.(2022)Brandstetter, Worrall, and
  Welling]{brandstetter2022message}
Johannes Brandstetter, Daniel Worrall, and Max Welling.
\newblock Message passing neural pde solvers.
\newblock \emph{arXiv preprint arXiv:2202.03376}, 2022.

\bibitem[Wang et~al.(2020{\natexlab{a}})Wang, Walters, and
  Yu]{wang2020incorporating}
Rui Wang, Robin Walters, and Rose Yu.
\newblock Incorporating symmetry into deep dynamics models for improved
  generalization.
\newblock \emph{arXiv preprint arXiv:2002.03061}, 2020{\natexlab{a}}.

\bibitem[Pfaff et~al.(2021)Pfaff, Fortunato, Sanchez-Gonzalez, and
  Battaglia]{pfafflearning}
Tobias Pfaff, Meire Fortunato, Alvaro Sanchez-Gonzalez, and Peter Battaglia.
\newblock Learning mesh-based simulation with graph networks.
\newblock In \emph{International Conference on Learning Representations}, 2021.

\bibitem[Masci et~al.(2015)Masci, Boscaini, Bronstein, and
  Vandergheynst]{masci2015geodesic}
Jonathan Masci, Davide Boscaini, Michael Bronstein, and Pierre Vandergheynst.
\newblock Geodesic convolutional neural networks on riemannian manifolds.
\newblock In \emph{Proceedings of the IEEE international conference on computer
  vision workshops}, pages 37--45, 2015.

\bibitem[Boscaini et~al.(2016)Boscaini, Masci, Rodol{\`a}, and
  Bronstein]{boscaini2016learning}
Davide Boscaini, Jonathan Masci, Emanuele Rodol{\`a}, and Michael Bronstein.
\newblock Learning shape correspondence with anisotropic convolutional neural
  networks.
\newblock \emph{Advances in neural information processing systems}, 29, 2016.

\bibitem[Monti et~al.(2017)Monti, Boscaini, Masci, Rodola, Svoboda, and
  Bronstein]{monti2017geometric}
Federico Monti, Davide Boscaini, Jonathan Masci, Emanuele Rodola, Jan Svoboda,
  and Michael~M Bronstein.
\newblock Geometric deep learning on graphs and manifolds using mixture model
  cnns.
\newblock In \emph{Proceedings of the IEEE conference on computer vision and
  pattern recognition}, pages 5115--5124, 2017.

\bibitem[Verma et~al.(2018)Verma, Boyer, and Verbeek]{verma2018feastnet}
Nitika Verma, Edmond Boyer, and Jakob Verbeek.
\newblock Feastnet: Feature-steered graph convolutions for 3d shape analysis.
\newblock In \emph{Proceedings of the IEEE conference on computer vision and
  pattern recognition}, pages 2598--2606, 2018.

\bibitem[Satorras et~al.(2021)Satorras, Hoogeboom, and Welling]{satorras2021n}
V{\i}ctor~Garcia Satorras, Emiel Hoogeboom, and Max Welling.
\newblock E (n) equivariant graph neural networks.
\newblock In \emph{International conference on machine learning}, pages
  9323--9332. PMLR, 2021.

\bibitem[Kipf and Welling(2017)]{kipfsemi}
Thomas~N Kipf and Max Welling.
\newblock Semi-supervised classification with graph convolutional networks.
\newblock In \emph{International Conference on Learning Representations}, 2017.

\bibitem[Veli{\v{c}}kovi{\'c} et~al.(2018)Veli{\v{c}}kovi{\'c}, Cucurull,
  Casanova, Romero, Li{\`o}, and Bengio]{velivckovicgraph}
Petar Veli{\v{c}}kovi{\'c}, Guillem Cucurull, Arantxa Casanova, Adriana Romero,
  Pietro Li{\`o}, and Yoshua Bengio.
\newblock Graph attention networks.
\newblock In \emph{International Conference on Learning Representations}, 2018.

\bibitem[Battaglia et~al.(2016)Battaglia, Pascanu, Lai, Jimenez~Rezende,
  et~al.]{battaglia2016interaction}
Peter Battaglia, Razvan Pascanu, Matthew Lai, Danilo Jimenez~Rezende, et~al.
\newblock Interaction networks for learning about objects, relations and
  physics.
\newblock \emph{Advances in neural information processing systems}, 29, 2016.

\bibitem[Kipf et~al.(2018)Kipf, Fetaya, Wang, Welling, and
  Zemel]{kipf2018neural}
Thomas Kipf, Ethan Fetaya, Kuan-Chieh Wang, Max Welling, and Richard Zemel.
\newblock Neural relational inference for interacting systems.
\newblock In \emph{International conference on machine learning}, pages
  2688--2697. PMLR, 2018.

\bibitem[Sullivan and Kaszynski(2019)]{sullivan2019pyvista}
Bane Sullivan and Alexander Kaszynski.
\newblock {PyVista}: {3D} plotting and mesh analysis through a streamlined
  interface for the {Visualization Toolkit} ({VTK}).
\newblock \emph{Journal of Open Source Software}, 4\penalty0 (37):\penalty0
  1450, May 2019.
\newblock \doi{10.21105/joss.01450}.
\newblock URL \url{https://doi.org/10.21105/joss.01450}.

\bibitem[Cahn and Hilliard(1958)]{cahn1958free}
John~W Cahn and John~E Hilliard.
\newblock Free energy of a nonuniform system. i. interfacial free energy.
\newblock \emph{The Journal of chemical physics}, 28\penalty0 (2):\penalty0
  258--267, 1958.

\bibitem[Khain and Sander(2008)]{khain2008generalized}
Evgeniy Khain and Leonard~M Sander.
\newblock Generalized cahn-hilliard equation for biological applications.
\newblock \emph{Physical review E}, 77\penalty0 (5):\penalty0 051129, 2008.

\bibitem[R{\"a}tz(2016)]{ratz2016benchmark}
Andreas R{\"a}tz.
\newblock A benchmark for the surface cahn--hilliard equation.
\newblock \emph{Applied Mathematics Letters}, 56:\penalty0 65--71, 2016.

\bibitem[Kipf et~al.(2020)Kipf, van~der Pol, and Welling]{kipfcontrastive}
Thomas Kipf, Elise van~der Pol, and Max Welling.
\newblock Contrastive learning of structured world models.
\newblock In \emph{International Conference on Learning Representations}, 2020.

\bibitem[Narain et~al.(2012)Narain, Samii, and O'brien]{narain2012adaptive}
Rahul Narain, Armin Samii, and James~F O'brien.
\newblock Adaptive anisotropic remeshing for cloth simulation.
\newblock \emph{ACM transactions on graphics (TOG)}, 31\penalty0 (6):\penalty0
  1--10, 2012.

\bibitem[Gong et~al.(2019)Gong, Chen, Bronstein, and
  Zafeiriou]{gong2019spiralnet++}
Shunwang Gong, Lei Chen, Michael Bronstein, and Stefanos Zafeiriou.
\newblock Spiralnet++: A fast and highly efficient mesh convolution operator.
\newblock In \emph{Proceedings of the IEEE/CVF international conference on
  computer vision workshops}, pages 0--0, 2019.

\bibitem[Gilmer et~al.(2017)Gilmer, Schoenholz, Riley, Vinyals, and
  Dahl]{gilmer2017neural}
Justin Gilmer, Samuel~S Schoenholz, Patrick~F Riley, Oriol Vinyals, and
  George~E Dahl.
\newblock Neural message passing for quantum chemistry.
\newblock In \emph{International conference on machine learning}, pages
  1263--1272. PMLR, 2017.

\bibitem[Battaglia et~al.(2018)Battaglia, Hamrick, Bapst, Sanchez-Gonzalez,
  Zambaldi, Malinowski, Tacchetti, Raposo, Santoro, Faulkner,
  et~al.]{battaglia2018relational}
Peter~W Battaglia, Jessica~B Hamrick, Victor Bapst, Alvaro Sanchez-Gonzalez,
  Vinicius Zambaldi, Mateusz Malinowski, Andrea Tacchetti, David Raposo, Adam
  Santoro, Ryan Faulkner, et~al.
\newblock Relational inductive biases, deep learning, and graph networks.
\newblock \emph{arXiv preprint arXiv:1806.01261}, 2018.

\bibitem[Garland and Heckbert(1997)]{garland1997surface}
Michael Garland and Paul~S Heckbert.
\newblock Surface simplification using quadric error metrics.
\newblock In \emph{Proceedings of the 24th annual conference on Computer
  graphics and interactive techniques}, pages 209--216, 1997.

\bibitem[Kostrikov et~al.(2018)Kostrikov, Jiang, Panozzo, Zorin, and
  Bruna]{kostrikov2018surface}
Ilya Kostrikov, Zhongshi Jiang, Daniele Panozzo, Denis Zorin, and Joan Bruna.
\newblock Surface networks.
\newblock In \emph{Proceedings of the IEEE conference on computer vision and
  pattern recognition}, pages 2540--2548, 2018.

\bibitem[Feng et~al.(2019)Feng, Feng, You, Zhao, and Gao]{feng2019meshnet}
Yutong Feng, Yifan Feng, Haoxuan You, Xibin Zhao, and Yue Gao.
\newblock Meshnet: Mesh neural network for 3d shape representation.
\newblock In \emph{Proceedings of the AAAI conference on artificial
  intelligence}, volume~33, pages 8279--8286, 2019.

\bibitem[Lahav and Tal(2020)]{lahav2020meshwalker}
Alon Lahav and Ayellet Tal.
\newblock Meshwalker: Deep mesh understanding by random walks.
\newblock \emph{ACM Transactions on Graphics (TOG)}, 39\penalty0 (6):\penalty0
  1--13, 2020.

\bibitem[Lim et~al.(2018)Lim, Dielen, Campen, and Kobbelt]{lim2018simple}
Isaak Lim, Alexander Dielen, Marcel Campen, and Leif Kobbelt.
\newblock A simple approach to intrinsic correspondence learning on
  unstructured 3d meshes.
\newblock In \emph{Proceedings of the European Conference on Computer Vision
  (ECCV) Workshops}, pages 0--0, 2018.

\bibitem[Tatarchenko et~al.(2018)Tatarchenko, Park, Koltun, and
  Zhou]{tatarchenko2018tangent}
Maxim Tatarchenko, Jaesik Park, Vladlen Koltun, and Qian-Yi Zhou.
\newblock Tangent convolutions for dense prediction in 3d.
\newblock In \emph{Proceedings of the IEEE conference on computer vision and
  pattern recognition}, pages 3887--3896, 2018.

\bibitem[Hanocka et~al.(2019)Hanocka, Hertz, Fish, Giryes, Fleishman, and
  Cohen-Or]{hanocka2019meshcnn}
Rana Hanocka, Amir Hertz, Noa Fish, Raja Giryes, Shachar Fleishman, and Daniel
  Cohen-Or.
\newblock Meshcnn: a network with an edge.
\newblock \emph{ACM Transactions on Graphics (TOG)}, 38\penalty0 (4):\penalty0
  1--12, 2019.

\bibitem[Huang et~al.(2019)Huang, Zhang, Yi, Funkhouser, Nie{\ss}ner, and
  Guibas]{huang2019texturenet}
Jingwei Huang, Haotian Zhang, Li~Yi, Thomas Funkhouser, Matthias Nie{\ss}ner,
  and Leonidas~J Guibas.
\newblock Texturenet: Consistent local parametrizations for learning from
  high-resolution signals on meshes.
\newblock In \emph{Proceedings of the IEEE/CVF Conference on Computer Vision
  and Pattern Recognition}, pages 4440--4449, 2019.

\bibitem[Milano et~al.(2020)Milano, Loquercio, Rosinol, Scaramuzza, and
  Carlone]{milano2020primal}
Francesco Milano, Antonio Loquercio, Antoni Rosinol, Davide Scaramuzza, and
  Luca Carlone.
\newblock Primal-dual mesh convolutional neural networks.
\newblock \emph{Advances in Neural Information Processing Systems},
  33:\penalty0 952--963, 2020.

\bibitem[He et~al.(2021)He, Dong, Wang, Tao, and Lin]{he2021gauge}
Lingshen He, Yiming Dong, Yisen Wang, Dacheng Tao, and Zhouchen Lin.
\newblock Gauge equivariant transformer.
\newblock \emph{Advances in Neural Information Processing Systems},
  34:\penalty0 27331--27343, 2021.

\bibitem[Weiler et~al.(2021)Weiler, Forr{\'e}, Verlinde, and
  Welling]{weiler2021coordinate}
Maurice Weiler, Patrick Forr{\'e}, Erik Verlinde, and Max Welling.
\newblock Coordinate independent convolutional networks--isometry and gauge
  equivariant convolutions on riemannian manifolds.
\newblock \emph{arXiv preprint arXiv:2106.06020}, 2021.

\bibitem[Raissi et~al.(2019)Raissi, Perdikaris, and
  Karniadakis]{raissi2019physics}
Maziar Raissi, Paris Perdikaris, and George~E Karniadakis.
\newblock Physics-informed neural networks: A deep learning framework for
  solving forward and inverse problems involving nonlinear partial differential
  equations.
\newblock \emph{Journal of Computational physics}, 378:\penalty0 686--707,
  2019.

\bibitem[Raissi(2018)]{raissi2018deep}
Maziar Raissi.
\newblock Deep hidden physics models: Deep learning of nonlinear partial
  differential equations.
\newblock \emph{The Journal of Machine Learning Research}, 19\penalty0
  (1):\penalty0 932--955, 2018.

\bibitem[Long et~al.(2018)Long, Lu, Ma, and Dong]{long2018pde}
Zichao Long, Yiping Lu, Xianzhong Ma, and Bin Dong.
\newblock Pde-net: Learning pdes from data.
\newblock In \emph{International conference on machine learning}, pages
  3208--3216. PMLR, 2018.

\bibitem[Guo et~al.(2016)Guo, Li, and Iorio]{guo2016convolutional}
Xiaoxiao Guo, Wei Li, and Francesco Iorio.
\newblock Convolutional neural networks for steady flow approximation.
\newblock In \emph{Proceedings of the 22nd ACM SIGKDD international conference
  on knowledge discovery and data mining}, pages 481--490, 2016.

\bibitem[Bhatnagar et~al.(2019)Bhatnagar, Afshar, Pan, Duraisamy, and
  Kaushik]{bhatnagar2019prediction}
Saakaar Bhatnagar, Yaser Afshar, Shaowu Pan, Karthik Duraisamy, and Shailendra
  Kaushik.
\newblock Prediction of aerodynamic flow fields using convolutional neural
  networks.
\newblock \emph{Computational Mechanics}, 64:\penalty0 525--545, 2019.

\bibitem[Wang et~al.(2020{\natexlab{b}})Wang, Kashinath, Mustafa, Albert, and
  Yu]{wang2020towards}
Rui Wang, Karthik Kashinath, Mustafa Mustafa, Adrian Albert, and Rose Yu.
\newblock Towards physics-informed deep learning for turbulent flow prediction.
\newblock In \emph{Proceedings of the 26th ACM SIGKDD International Conference
  on Knowledge Discovery \& Data Mining}, pages 1457--1466, 2020{\natexlab{b}}.

\bibitem[Cai et~al.(2021)Cai, Wang, Wang, Perdikaris, and
  Karniadakis]{cai2021physics}
Shengze Cai, Zhicheng Wang, Sifan Wang, Paris Perdikaris, and George~Em
  Karniadakis.
\newblock Physics-informed neural networks for heat transfer problems.
\newblock \emph{Journal of Heat Transfer}, 143\penalty0 (6), 2021.

\bibitem[Rao et~al.(2021)Rao, Sun, and Liu]{rao2021physics}
Chengping Rao, Hao Sun, and Yang Liu.
\newblock Physics-informed deep learning for computational elastodynamics
  without labeled data.
\newblock \emph{Journal of Engineering Mechanics}, 147\penalty0 (8):\penalty0
  04021043, 2021.

\bibitem[Maurizi et~al.(2022)Maurizi, Gao, and Berto]{maurizi2022predicting}
Marco Maurizi, Chao Gao, and Filippo Berto.
\newblock Predicting stress, strain and deformation fields in materials and
  structures with graph neural networks.
\newblock \emph{Scientific Reports}, 12\penalty0 (1):\penalty0 21834, 2022.

\bibitem[Cang et~al.(2018)Cang, Li, Yao, Jiao, and Ren]{cang2018improving}
Ruijin Cang, Hechao Li, Hope Yao, Yang Jiao, and Yi~Ren.
\newblock Improving direct physical properties prediction of heterogeneous
  materials from imaging data via convolutional neural network and a
  morphology-aware generative model.
\newblock \emph{Computational Materials Science}, 150:\penalty0 212--221, 2018.

\bibitem[Liu and Wang(2019)]{liu2019multi}
Dehao Liu and Yan Wang.
\newblock Multi-fidelity physics-constrained neural network and its application
  in materials modeling.
\newblock \emph{Journal of Mechanical Design}, 141\penalty0 (12), 2019.

\bibitem[Li et~al.(2020{\natexlab{a}})Li, Kovachki, Azizzadenesheli, Liu,
  Bhattacharya, Stuart, and Anandkumar]{li2020neural}
Zongyi Li, Nikola Kovachki, Kamyar Azizzadenesheli, Burigede Liu, Kaushik
  Bhattacharya, Andrew Stuart, and Anima Anandkumar.
\newblock Neural operator: Graph kernel network for partial differential
  equations.
\newblock \emph{arXiv preprint arXiv:2003.03485}, 2020{\natexlab{a}}.

\bibitem[Bhattacharya et~al.(2021)Bhattacharya, Hosseini, Kovachki, and
  Stuart]{bhattacharya2021model}
Kaushik Bhattacharya, Bamdad Hosseini, Nikola~B Kovachki, and Andrew~M Stuart.
\newblock Model reduction and neural networks for parametric pdes.
\newblock \emph{The SMAI journal of computational mathematics}, 7:\penalty0
  121--157, 2021.

\bibitem[Fang and Zhan(2019)]{fang2019physics}
Zhiwei Fang and Justin Zhan.
\newblock A physics-informed neural network framework for pdes on 3d surfaces:
  Time independent problems.
\newblock \emph{IEEE Access}, 8:\penalty0 26328--26335, 2019.

\bibitem[Tang et~al.(2022)Tang, Fu, and Reutskiy]{tang2022extrinsic}
Zhuochao Tang, Zhuojia Fu, and Sergiy Reutskiy.
\newblock An extrinsic approach based on physics-informed neural networks for
  pdes on surfaces.
\newblock \emph{Mathematics}, 10\penalty0 (16):\penalty0 2861, 2022.

\bibitem[Li et~al.(2022)Li, Huang, Liu, and Anandkumar]{li2022fourier}
Zongyi Li, Daniel~Zhengyu Huang, Burigede Liu, and Anima Anandkumar.
\newblock Fourier neural operator with learned deformations for pdes on general
  geometries.
\newblock \emph{arXiv preprint arXiv:2207.05209}, 2022.

\bibitem[Li et~al.(2020{\natexlab{b}})Li, Kovachki, Azizzadenesheli, Liu,
  Bhattacharya, Stuart, and Anandkumar]{li2020fourier}
Zongyi Li, Nikola Kovachki, Kamyar Azizzadenesheli, Burigede Liu, Kaushik
  Bhattacharya, Andrew Stuart, and Anima Anandkumar.
\newblock Fourier neural operator for parametric partial differential
  equations.
\newblock \emph{arXiv preprint arXiv:2010.08895}, 2020{\natexlab{b}}.

\bibitem[Suk et~al.(2022)Suk, de~Haan, Lippe, Brune, and
  Wolterink]{suk2022mesh}
Julian Suk, Pim de~Haan, Phillip Lippe, Christoph Brune, and Jelmer~M
  Wolterink.
\newblock Mesh neural networks for se (3)-equivariant hemodynamics estimation
  on the artery wall.
\newblock \emph{arXiv preprint arXiv:2212.05023}, 2022.

\bibitem[You et~al.(2020)You, Ying, and Leskovec]{you2020design}
Jiaxuan You, Zhitao Ying, and Jure Leskovec.
\newblock Design space for graph neural networks.
\newblock \emph{Advances in Neural Information Processing Systems},
  33:\penalty0 17009--17021, 2020.

\bibitem[Tange(2011)]{Tange2011a}
O.~Tange.
\newblock Gnu parallel - the command-line power tool.
\newblock \emph{;login: The USENIX Magazine}, 36\penalty0 (1):\penalty0 42--47,
  Feb 2011.

\bibitem[Reuter et~al.(2009)Reuter, Biasotti, Giorgi, Patan{\`e}, and
  Spagnuolo]{reuter2009discrete}
Martin Reuter, Silvia Biasotti, Daniela Giorgi, Giuseppe Patan{\`e}, and
  Michela Spagnuolo.
\newblock Discrete laplace--beltrami operators for shape analysis and
  segmentation.
\newblock \emph{Computers \& Graphics}, 33\penalty0 (3):\penalty0 381--390,
  2009.

\bibitem[Schroeder et~al.(1998)Schroeder, Martin, and
  Lorensen]{schroeder1998visualization}
Will Schroeder, Kenneth~M Martin, and William~E Lorensen.
\newblock \emph{The visualization toolkit an object-oriented approach to 3D
  graphics}.
\newblock Prentice-Hall, Inc., 1998.

\bibitem[Logg and Wells(2010)]{LoggWells2010}
A.~Logg and G.~N. Wells.
\newblock {DOLFIN:} automated finite element computing.
\newblock \emph{{ACM} Transactions on Mathematical Software}, 37, 2010.
\newblock \doi{10.1145/1731022.1731030}.

\bibitem[Kelley(2003)]{kelley2003solving}
Carl~T Kelley.
\newblock \emph{Solving nonlinear equations with Newton's method}.
\newblock SIAM, 2003.

\bibitem[Bogo et~al.(2014)Bogo, Romero, Loper, and Black]{Bogo:CVPR:2014}
Federica Bogo, Javier Romero, Matthew Loper, and Michael~J. Black.
\newblock {FAUST}: Dataset and evaluation for {3D} mesh registration.
\newblock In \emph{Proceedings IEEE Conf. on Computer Vision and Pattern
  Recognition (CVPR)}, Piscataway, NJ, USA, June 2014. IEEE.

\bibitem[Williams and Zipser(1995)]{williams1995gradient}
Ronald~J Williams and David Zipser.
\newblock Gradient-based learning algorithms for recurrent.
\newblock \emph{Backpropagation: Theory, architectures, and applications},
  433:\penalty0 17, 1995.

\end{thebibliography}


\appendix

\section{Background}
\subsection{Local Coordinate Frame}
\label{app:local_frame}
As discussed in the main text, the reference neighbor $d$ defines the basis $\{ e^b_1, e^b_2 \}$ for the tangent space $T_b M$ (see Figure~\ref{fig:mesh}). Denote by $\log_b(x_d) = \mathrm{Proj}_{T_bM}(x_d) - x_b$ the vector from $x_b$ (the coordinates of $b$) to the orthogonal projection of $x_d \in \mathbb{R}^3$ (the coordinates of $d$) onto the tangent plane $T_b M$.  Let $e_1^b = \log_b(x_d) / \|\log_b(x_d) \|$ and $e_2^b = n_b \times e_1^b$. Then the orientation of every neighbor $v \in \cN_b$ can be represented with polar angles, where 
\begin{align}
\theta_{v} = \textrm{atan2}\!\left({e^b_2}^\top \log_b(v), {e^b_1}^\top \log_b(v) \right)
\end{align}
is the angle between $v$ and the reference neighbor after projection to the tangent plane. 

\subsection{Parallel Transport}
\label{app:parallel_transport}

Let $f_p$ and $f_q$ be feature vectors at nodes $p,q$, respectively, where $q$ is adjacent to $p$. As features at different nodes live in different tangent spaces and have different bases, in order for the anisotropic kernel to be applied to $f_q$ for each $q  \in \cN_p$, each $f_q$ must be parallel transported to $T_pM$ and be in the same gauge.

Let $f_p$ and $f_q$ be feature vectors at nodes $p,q$, respectively, where $q$ is adjacent to $p$. As mentioned in the main text, the parallel transporter $g_{q \rightarrow p}$ first aligns the tangent plane $T_p M$ to the tangent plane $T_q M$ by a 3D rotation and then transforms the gauge at $q$ to the gauge at $p$ by a planar rotation.

Let $n_p, n_q$ be the normal at $p$ and $q$, and $R_{qp} \in SO(3)$ be the 3D rotation required to align $T_q M$ and $T_p M$. Then $g_{q \rightarrow p} \in [0, 2\pi)$ is given by
\begin{align}
g_{q \rightarrow p} &= \textrm{atan2}\left( (R_{qp}e^q_2)^\top e^p_1, (R_{qp} e^q_1)^\top e^p_1\right),
\end{align}
where $(e^q_1, e^q_2)$ and $(e^p_1, e^p_2)$ are the frames at node $q$ and $p$, respectively. 

\section{Gauge-Equivariant Message Passing Methods}
\label{app:architectures}
In this section, we provide the architectures of GemCNN and EMAN for the wave dataset to compare with Hermes (see Figure~\ref{fig:architecture}), and prove that Hermes is equivariant to local gauge transformations.

\subsection{GemCNN}

The architecture for GemCNN is given in Figure~\ref{fig:gem_cnn_architecture}. Note that while the equations in GemCNN (see Equations~\ref{eq:gem_cnn}) allow for self-interactions, the implemented architecture does not actually use self-interaction kernels, possibly due to residual connections.

\begin{figure}[!htpb]
\centering
\begin{subfigure}[t]{0.6\textwidth}
\includegraphics[width=\textwidth, trim=60 55 55 30, clip]{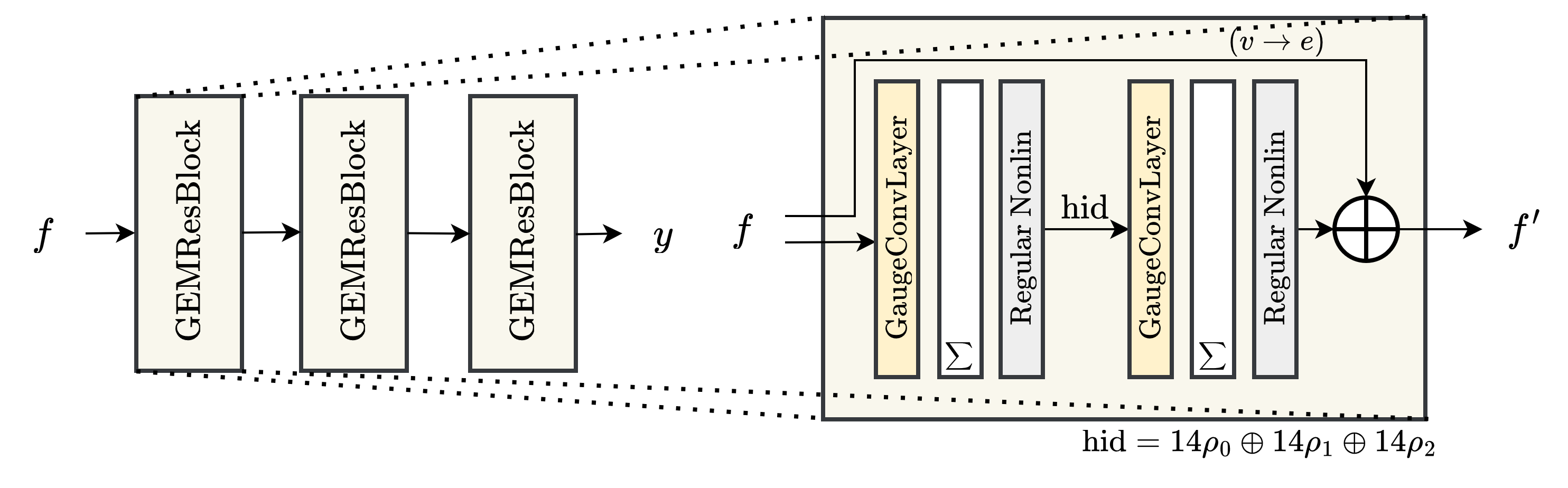}
\end{subfigure}
\hfill%
\begin{subfigure}[t]{0.39\textwidth}
\centering
\includegraphics[width=\textwidth, trim=35 0 15 50, clip]{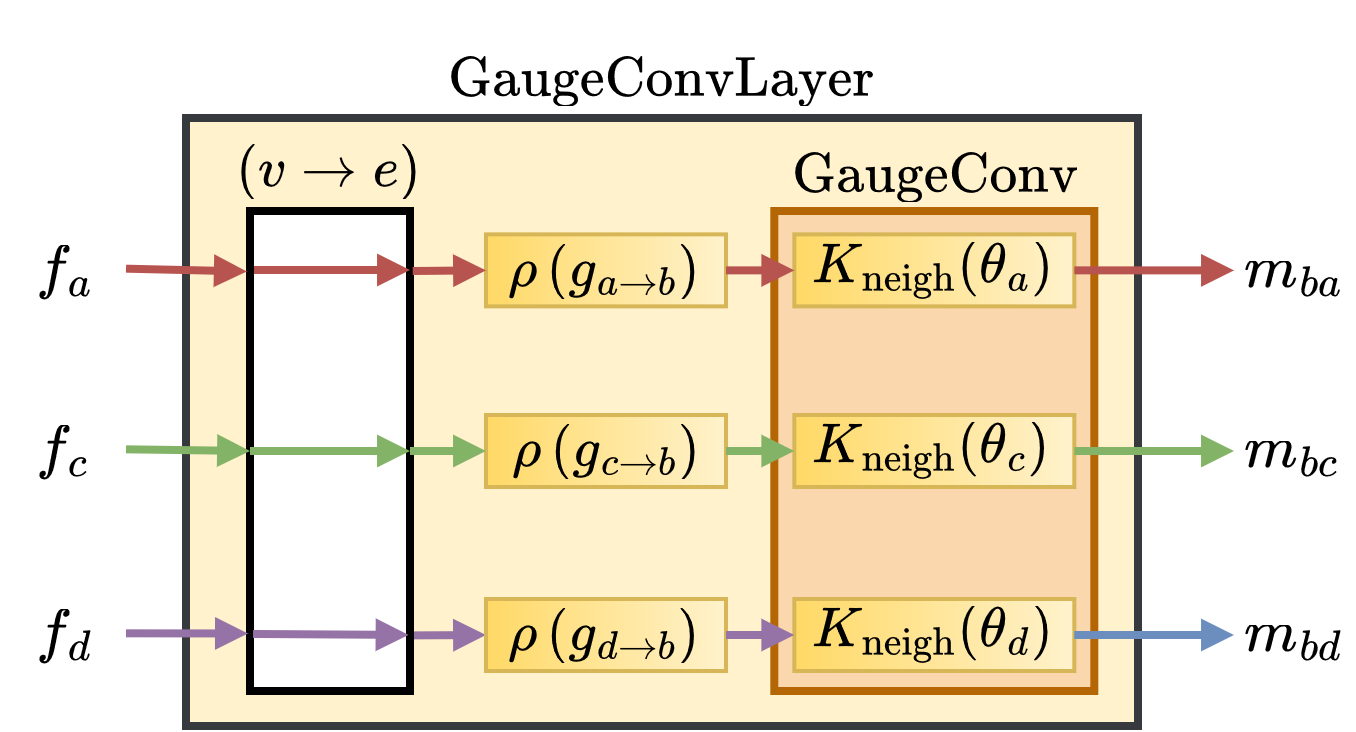}
\end{subfigure}
\caption{GemCNN network architecture for the Wave PDE dataset. There are three message blocks, each with $2$ gauge-equivariant convolutional layers. To illustrate the computations within the convolutional layer, we use the example mesh from Figure~\ref{fig:mesh} and show only the computations for node $b$ for clarity. }
\label{fig:gem_cnn_architecture}
\end{figure}

\subsection{EMAN}
\label{app:eman}

For completeness, the full equations for EMAN are given in Equations~\eqref{eq:eman_attention_first}-\eqref{eq:eman_attention} (see also Appendix C.5 in \cite{basu2022equivariant}).

\begin{align}
f'_p &= \alpha_{pp} K^\textrm{self}_\textrm{value} f_p + \sum_{q \in \cN_p} \alpha_{pq} K^\textrm{neigh}_\textrm{value} (\theta_{q})\rho(g_{q \rightarrow p}) f_q, \label{eq:eman_attention_first}\\
\alpha_{pp} &= \textrm{softmax} \left(\frac{\left({K^\textrm{self}_\textrm{key} f_p }\right)^\top \left(K_\textrm{query} f_p \right)}{\sqrt{C_\textrm{att}}}\right), \\
\alpha_{pq} &= \textrm{softmax} \left(\frac{\left(\concat_{q \in \cN_p} {K^\textrm{neigh}_\textrm{key} (\theta_{pq}) \rho(g_{q \rightarrow p}) f_q }\right)^\top \left(K_\textrm{query} f_p\right)}{\sqrt{C_\textrm{att}}}\right), \label{eq:eman_attention}
\end{align}
where $K^{i}_\textrm{key}, K^{i}_\textrm{value}, i \in \{\textrm{self}, \textrm{neigh}\}$ are the usual key and value matrices for the self-interaction and neighbor interaction terms, and $K_\textrm{query}$ is the query matrix. The $\concat_{q \in \cN_p}$ denotes a concatenation of the vectors for all $q \in \cN_p$.

Figure~\ref{fig:eman_architecture} shows the architecture for EMAN. There is an EMANAttention layer to compute the attention weights $\alpha_{pq}$, which are passed to the aggregation function. Similarly to GemCNN, the implemented architecture does not use self-interaction kernels.

\begin{figure}[!t]
\centering
\begin{subfigure}[t]{1.0\textwidth}
\includegraphics[width=\textwidth, trim=60 90 55 0, clip]{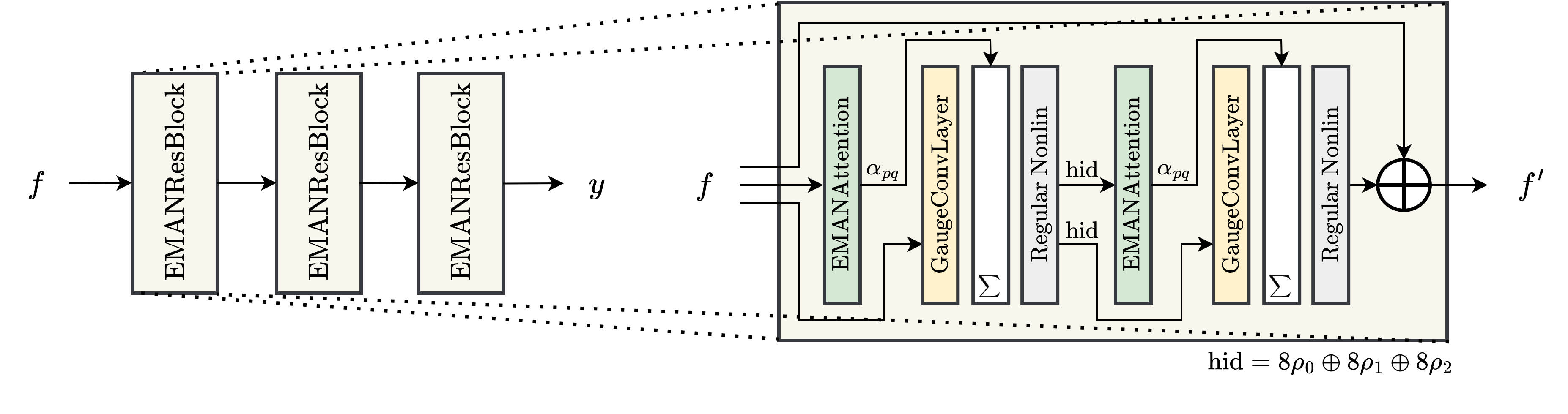}
\end{subfigure}
\begin{subfigure}[t]{0.5\textwidth}
\centering
\includegraphics[width=\textwidth, trim=30 0 20 35, clip]{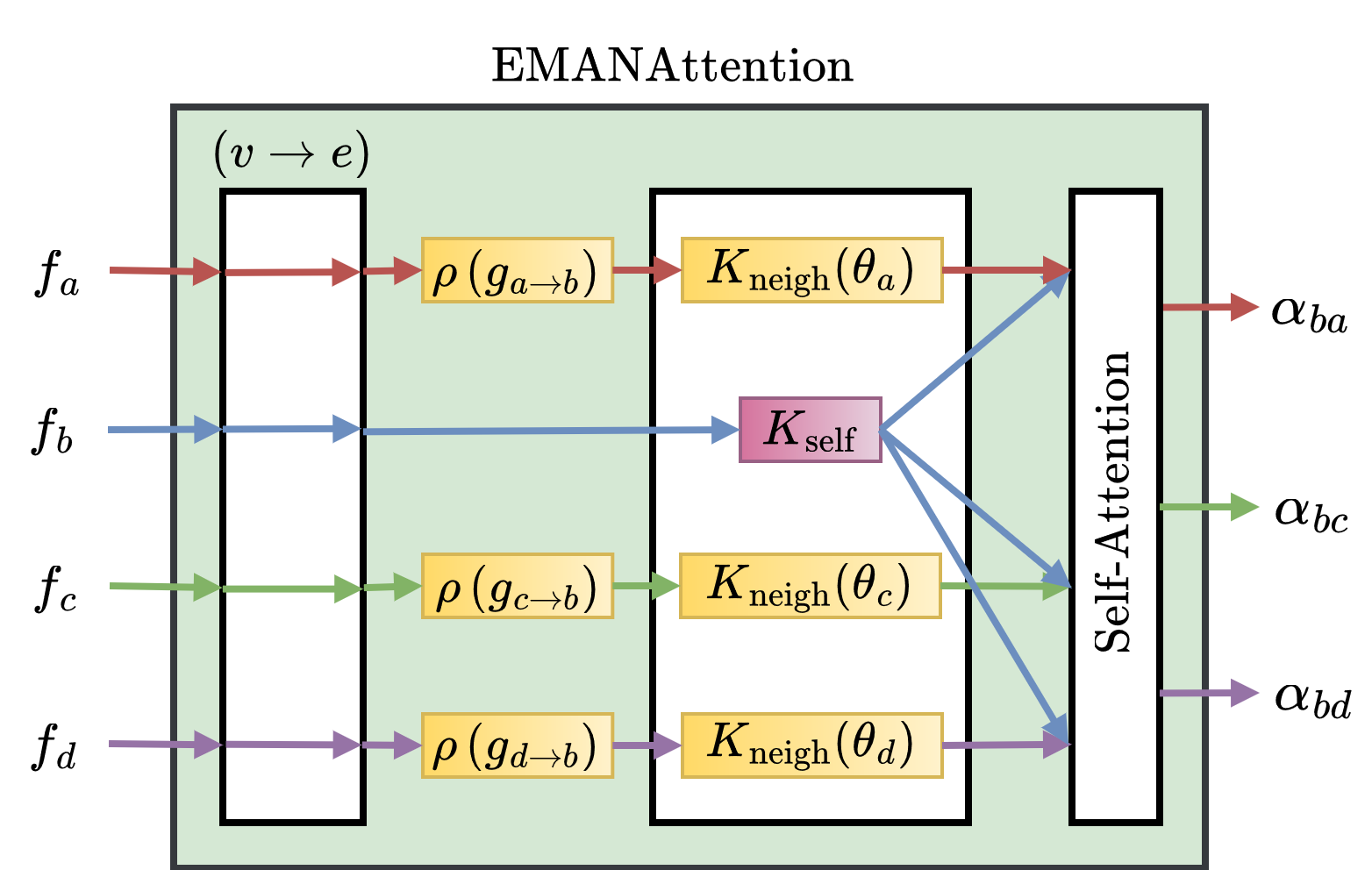}
\end{subfigure}
\hfill%
\begin{subfigure}[t]{0.49\textwidth}
\centering
\includegraphics[width=\textwidth, trim=35 0 15 50, clip]{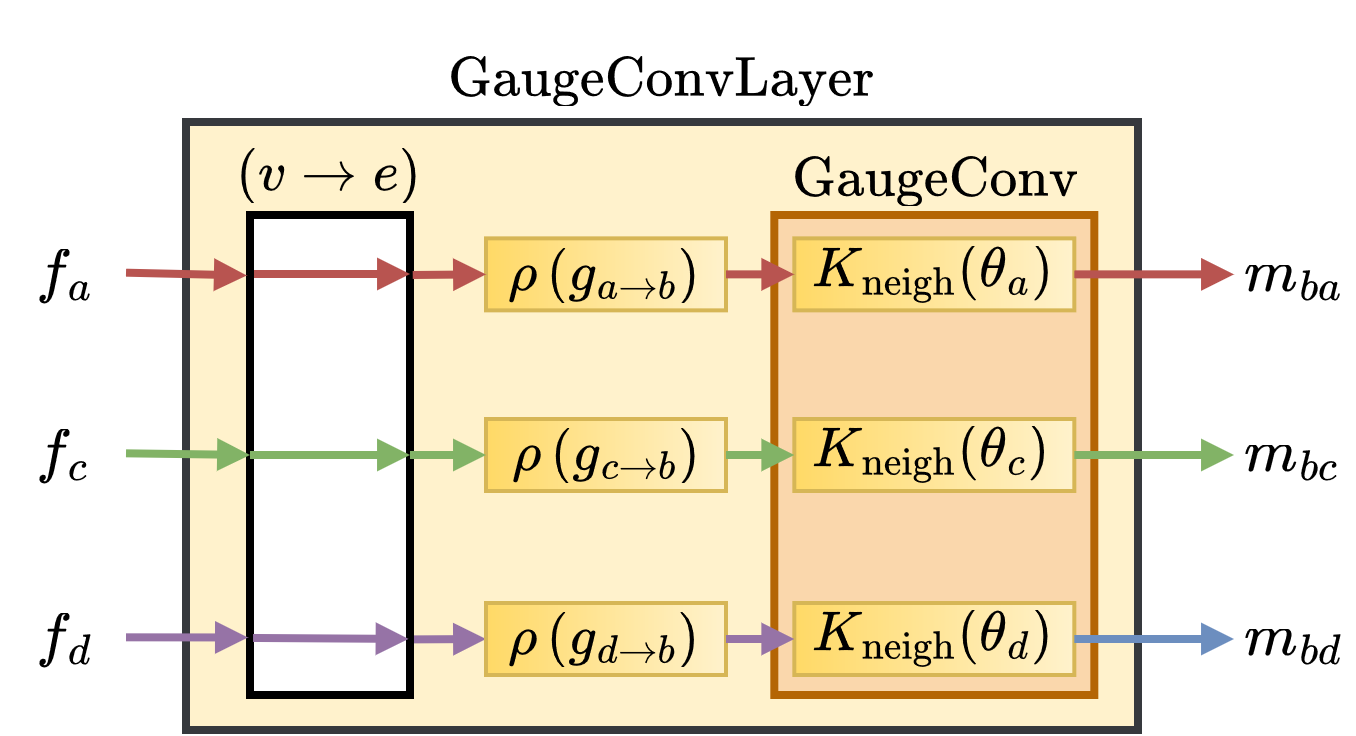}
\end{subfigure}
\caption{EMAN network architecture for the Wave PDE dataset. There are three message blocks, each with $2$ attention and gauge-equivariant convolutional layers. To illustrate the computations within the edge and node networks, we use the example mesh from Figure~\ref{fig:mesh} and show only the computations for node $b$ for clarity.}
\label{fig:eman_architecture}
\end{figure}

\subsection{Hermes}
\label{app:prop}
We provide the proof of Proposition~\ref{eq:prop} (stated here again for convenience). Part of the proof uses the fact that anisotropic convolutions over the tangent space are equivariant under the appropriate kernel constraint as shown in \citet{haan2021gauge}.  

\begin{prop}
Hermes is equivariant to local gauge transformations $g^p$ for any $p \in \cV$ of a mesh.
\end{prop}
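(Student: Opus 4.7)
The plan is to trace a local gauge transformation through each of the five update rules in Equations~\eqref{eq:hermes-first}--\eqref{eq:hermes} and verify that the output transforms according to the output representation $\rho_{\mathrm{out}}$ at every vertex. Concretely, fix an assignment $\{g^p\}_{p \in \cV}$ with $g^p \in \SO(2)$. Under this change of gauge we have $f_p \mapsto \rho_{\mathrm{in}}(g^p) f_p$, $e_{pq} \mapsto \rho(g^p) e_{pq}$ (since $e_{pq}$ lives in $T_p M$), $g_{q \to p} \mapsto g^p\, g_{q \to p}\, (g^q)^{-1}$ as parallel transporters compose with the local frame changes, and the neighbor angles transform as $\theta_q \mapsto \theta_q - \phi^p$, where $\phi^p$ is the rotation angle of $g^p$. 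The goal is to conclude $f'_p \mapsto \rho_{\mathrm{out}}(g^p) f'_p$.

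First I would check Equation~\eqref{eq:hermes-first}: the parallel-transport term becomes $\rho\bigl(g^p g_{q\to p}(g^q)^{-1}\bigr)\rho_{\mathrm{in}}(g^q) f_q = \rho(g^p)\rho(g_{q\to p})f_q$, so the $(g^q)^{-1}$ and $\rho(g^q)$ cancel. Combined with the transformations of $f_p$ and $e_{pq}$ and the fact that $\oplus$ is a direct sum of representations, we get $h_{pq} \mapsto \rho(g^p) h_{pq}$ with the representation acting block-diagonally on the three summands. Next, for $\phi_e$ I would apply the kernel constraint $K = \rho_{\mathrm{out}}(g)^{-1} K(\cdot) \rho_{\mathrm{in}}(g)$ stated in the GemCNN paragraph: this gives $K^i_{\mathrm{neigh}}(\theta_q - \phi^p)\rho(g^p) = \rho(g^p) K^i_{\mathrm{neigh}}(\theta_q)$, so each convolutional layer in $\phi_e$ commutes with the gauge action, and the regular nonlinearities $\sigma$ commute exactly with $\rho$ of $\SO(2)$ in the $N \to \infty$ limit (approximately for finite $N$, as noted in the paragraph after the proposition). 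Hence $m_{pq} \mapsto \rho(g^p) m_{pq}$.

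The aggregation in Equation~\eqref{eq:hermes-agg} is just a sum over $q \in \cN_p$; since every $m_{pq}$ has already been transported into the gauge at $p$, all terms pick up the same $\rho(g^p)$ and linearity of the sum yields $m_p \mapsto \rho(g^p) m_p$. The direct sum $h_p = m_p \oplus f_p$ then transforms block-diagonally by $\rho(g^p)$. Finally, $\phi_n$ is a stack of $K^i_{\mathrm{self}}$ kernels satisfying the same gauge constraint (and involving no anisotropic $\theta$ dependence, which actually simplifies this step relative to the edge network) interleaved with the regular nonlinearities, so the same commutation argument delivers $f'_p \mapsto \rho_{\mathrm{out}}(g^p) f'_p$, completing the proof.

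The only delicate step is the edge-network argument, where one must be careful that the angle shift $\theta_q \mapsto \theta_q - \phi^p$ interacts correctly with the kernel constraint; everything else is bookkeeping on direct sums, sums over neighbors, and invocation of the already-established equivariance of individual kernels and of the regular nonlinearity. I would state up front that the conclusion is exact in the $N \to \infty$ limit and approximate otherwise, mirroring the caveat inherited from \cite{haan2021gauge}.
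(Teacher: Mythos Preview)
Your proposal is correct and follows essentially the same route as the paper's proof: trace the gauge action through Equations~\eqref{eq:hermes-first}--\eqref{eq:hermes}, invoke the kernel constraint for each $K_{\mathrm{neigh}}$ and $K_{\mathrm{self}}$, use the (limit-$N$) equivariance of the regular nonlinearity $\sigma$, and note that the aggregation is linear so a common $\rho(g^p)$ factors out. The paper organizes the argument by first treating the base case $N_e=N_n=1$ and then appealing to composition of equivariant maps, whereas you handle all layers at once; and you are slightly more explicit than the paper in spelling out the transporter transformation $g_{q\to p}\mapsto g^p g_{q\to p}(g^q)^{-1}$ and the resulting cancellation of $g^q$, but these are presentational differences, not substantive ones.
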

\begin{proof}
Using Equations~\eqref{eq:hermes-first}-\eqref{eq:hermes}, we first consider the simpler case where the edge and node networks have one gauge-equivariant layer each, i.e. $N_e = N_n = 1$. Let $G=SO(2)$ be the gauge group. Let $p \in \cV$ be some vertex and $q \in \cN_p$ be some vertex adjacent to $p$. The inputs to the edge network are $f_p, f_q, e_{pq}$ (the source node, target node, and edge features, respectively), which are all representations of $G$. With respect to the gauge transformation $g^p \in G$, they transform as
\begin{gather*}
f_p \mapsto \rho_\textrm{in} (-g^p) f_p, \quad \rho(g_{q \rightarrow p}) f_q \mapsto \rho_\textrm{in}(-g^p) \rho(g_{q \rightarrow p}) f_q, \quad e_{pq} \mapsto \rho_\textrm{in}(-g^p) e_{pq},
\end{gather*}
where $f_q$ is first parallel transported to the gauge at $p$ before applying the group action ($e_{pq}$ is assumed to already be in this gauge). As $h_{pq}$ is the direct product of $f_p, \rho(g_{q \rightarrow p}) f_q, e_{pq}$ (Equation~\ref{eq:hermes-first}), it also transforms as $h_{pq} \mapsto \rho_\textrm{in} (-g^p) h_{pq}$. For the edge network to be equivariant with respect to $g^p$, we need to show that $\sigma \left(K^1_\textrm{neigh} (\theta_q - g^p) (\rho_\textrm{in}(-g^p) h_{pq} )\right) = \rho_\textrm{out}(-g^p) \left(\sigma\left( K^1_\textrm{neigh}(\theta_{q}) h_{pq} \right) \right)$. We compute the left-hand side
\begin{align*}
\sigma \left(K^1_\textrm{neigh} (\theta_q - g^p) (\rho_\textrm{in}(-g^p) h_{pq} )\right)
&= \sigma \left(\rho_\textrm{out} (-g^p) K^1_\textrm{neigh}(\theta_{q}) \rho_\textrm{in} (g^p) \rho_\textrm{in} (-g^p) h_{pq} \right) \\
&= \rho_\textrm{out}(-g^p) \sigma \left(K^1_\textrm{neigh}(\theta_{q}) h_{pq} \right) 
\end{align*}
as desired, 
where we have used the kernel constraint $K^1_\textrm{neigh} (\theta_{q} - g^p) = \rho_\textrm{out}(-g^p) K^1_\textrm{neigh} (\theta_{q}) \rho_\textrm{in}(g_{q \rightarrow p})$ and the fact that $\sigma$ is equivariant with respect to the $\rho_{\mathrm{out}}$ representation of $G$ and thus commutes with $\rho_\textrm{out}(-g^p)$.

For the node network, $K^1_\textrm{self}$ does not depend on $\theta_{q}$ and so the kernel constraint is simpler: $K^1_\textrm{self} = \rho_\textrm{out}(-g^p) K^1_\textrm{self} \rho_\textrm{in}(g^p)$. It is easy to see that $\sigma \left(K^1_\textrm{self} \rho_\textrm{in} (-g^p) h_p\right) = \rho_\textrm{out}(-g^p) \left(\sigma\left(K^1_\textrm{self} h_p \right)\right)$ and we omit the proof as it is nearly identical to that of the edge network.

The remaining part is the aggregation (sum) (Equation~\ref{eq:hermes-agg}). As $\rho(-g^p)$ is a linear group action (representation) of $G$, $\rho(-g^p)m_p = \sum_{q \in \cN_p} \rho(-g^p) m_{pq}$.

We have so far proved that Hermes is equivariant for $N_e = N_n = 1$. This suffices to prove for all $N_e \geq 1, N_n \geq 1$. The edge network $\phi_e$ is a composition of the gauge-equivariant convolutions and activations interleaved with each other. The regular nonlinearity is gauge-equivariant when the number of intermediate samples used in the discrete Fourier transform goes to $\infty$ and is approximately gauge-equivariant with finite $N$.
As the composition of equivariant functions is also equivariant, $\phi_e$ is equivariant to any $g^p$ for any $N_e \geq 1$, and similarly, $\phi_n$ is equivariant to any $g^p$ for $N_n \geq 1$. Thus Hermes is equivariant to any local gauge transformation $g^p \in G$, in the limit $N \to \infty$.
\end{proof}

\section{Experiment Domains}
\label{app:domains}

\subsection{Heat/Wave Equation}
The heat and wave partial differential equations are given by
\label{app:heat}
\begin{align}
    \frac{\partial{u}}{\partial{t}} = \alpha \tilde{L}u, \quad \textrm{(Heat)} && \frac{\partial^2{u}}{\partial{t^2}} = c^2 \tilde{L}u, \quad \textrm{(Wave)}
\end{align}
where $u$ is the temperature or the displacement for the heat and wave equations respectively, $\alpha$ is thermal diffusivity, $c$ is a constant, and $\tilde{L}$ is the symmetric cotangent Laplacian \citep{reuter2009discrete}. Note that $\tilde{L}$ is the discrete version of the Laplace-Beltrami operator over a Riemannian manifold. For a scalar function $u$, $\tilde{L}(u)$ is given as,
\begin{align}
    (\tilde{L}(u))_i = \frac{1}{2A_i} \sum_{j \in \cN(i)} (\cot \alpha_{ij} + \cot \beta_{ij})(u_j - u_i),
\end{align}
where $\cN(i)$ denotes the adjacent vertices of $i$, $\alpha_{ij}$ and $\beta_{ij}$ are the angles opposite edge $(i,j)$, and $A_i$ is the vertex area of $i$, where we use the barycentric cell area. 

We use $8$ example meshes from the PyVista software \citep{sullivan2019pyvista} and use $6$ for training and $2$ for testing generalization (test mesh) to unseen meshes. We further generate test sets to evaluate generalization on future timesteps (test time) and random initialization (test init) conditions. We simulate $5$ samples for $T_{\text{max}}=200$ timesteps, using $3$ samples from $T=0,\dots,150$ for training and the remaining $T=151,\dots,200$ as test time samples. The unseen $2$ samples from $T=0,\dots,200$ are used for test init samples. Some meshes contained too many nodes to be computationally feasible so we simplify these meshes to be less than $5{,}000$ nodes using quadric decimation \citep{schroeder1998visualization}. We use $\alpha=c=1$. See Tables~\ref{tab:heat_dataset},~\ref{tab:wave_dataset} for visualizations of the meshes and how the features evolve over time. 

The heat and wave equations were simulated using the finite difference method using the forward Euler scheme. As each mesh has a different Laplacian, we tune the time discretization $dt$ for each mesh. To initialize, we randomly generate Gaussian bumps over the mesh ($20\%$ of all nodes for heat, $2.5\%$ for wave). While such initialization is not necessarily realistic, we chose to incorporate more superposition and interference to generate more diverse dynamics. The task is to predict the temperature (heat) or displacement (wave) at each vertex at the next time step, given a short history of values. Both the inputs and outputs are trivial representations.

\begin{table}[!htpb]
\centering
    \caption{Heat equation dataset of all meshes used with $T_{\text{max}}=200$. We simulate $N=5$ samples for each mesh. Of the training meshes, we use $T=150,\dots,200$ as the test time dataset and $2$ unseen samples as the test init dataset. The armadillo and urn meshes were used as the test mesh dataset.}
    \begin{tabular}{@{}cccccc@{}}
    \toprule
    Object & Split & No. vertices & t=0 & t=50 & t=200 \\
    \midrule
    Armadillo & Test & 1{,}731 &\cincludegraphics[width=0.18\textwidth,trim=260 120 260 110, clip]{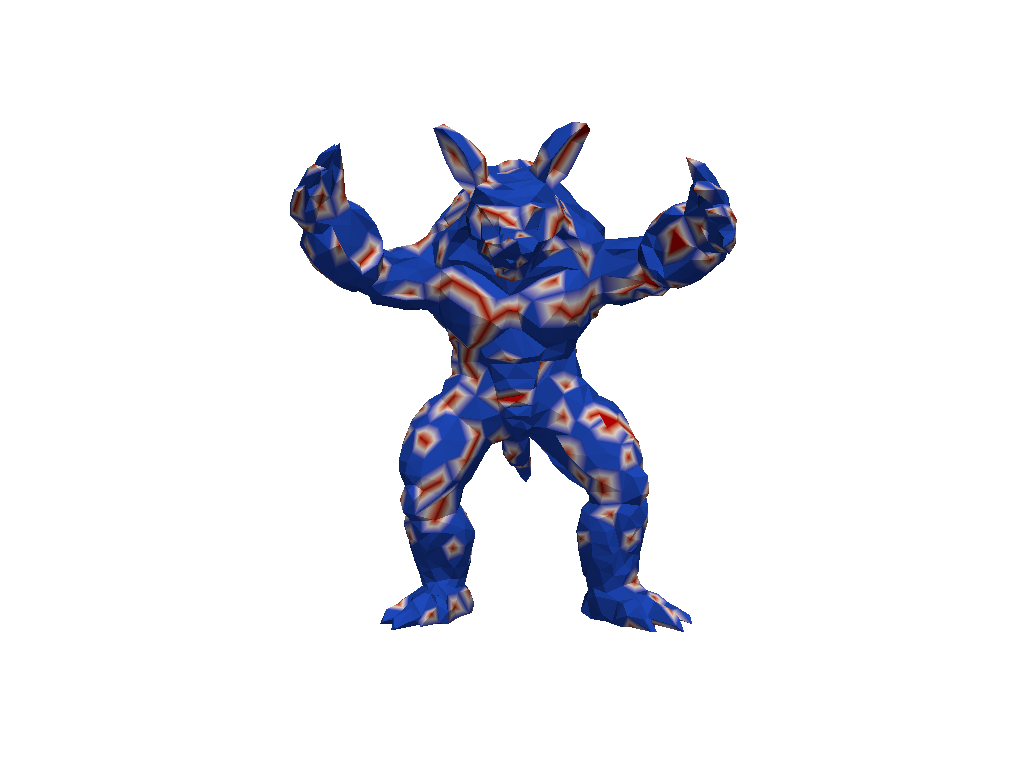} & \cincludegraphics[width=0.18\textwidth, trim=260 120 260 110, clip]{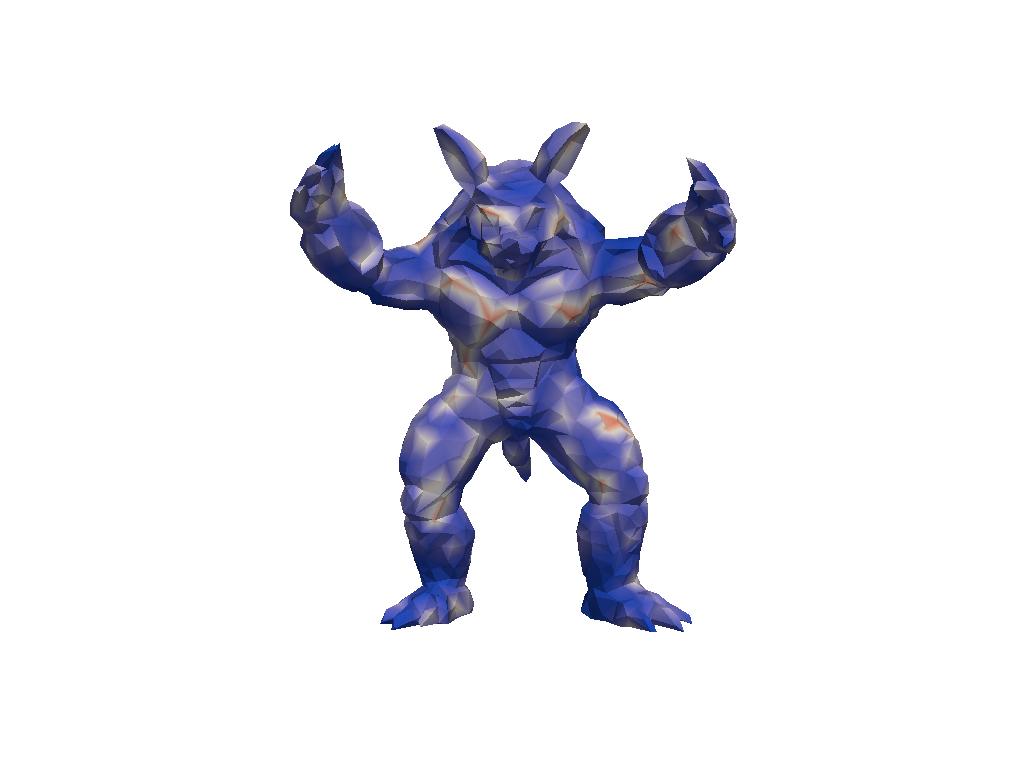} & \cincludegraphics[width=0.18\textwidth, trim=260 120 260 110, clip]{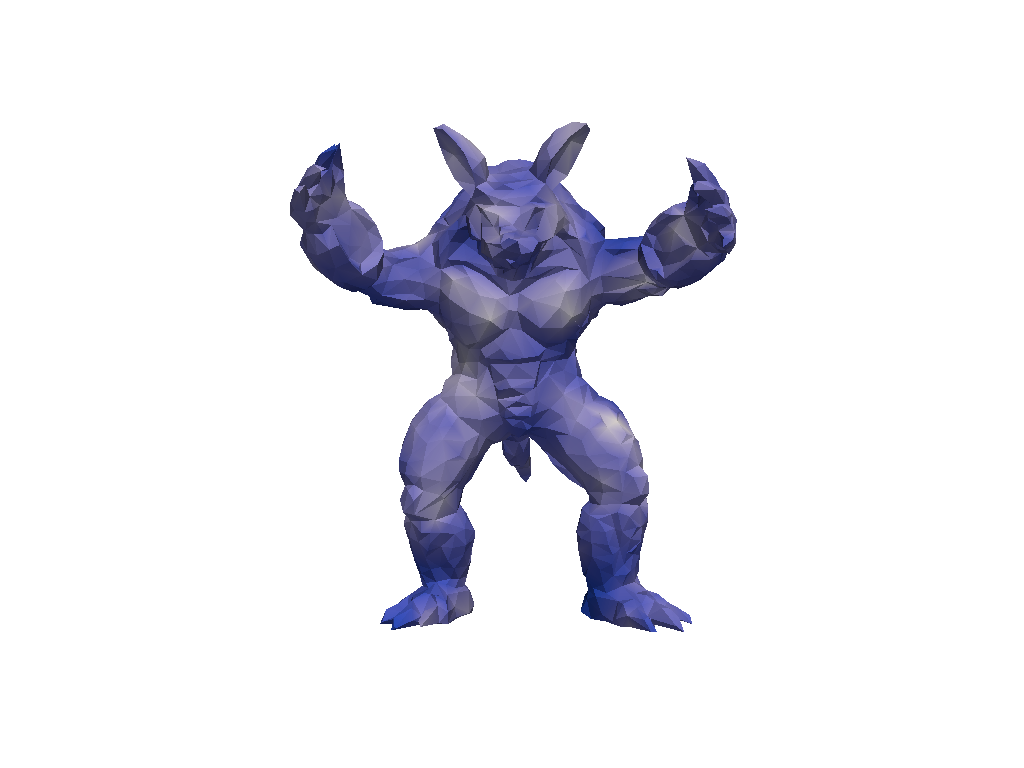} \\[1.1cm]
    Bunny & Train & 502 &\cincludegraphics[width=0.18\textwidth,trim=230 120 240 180, clip]{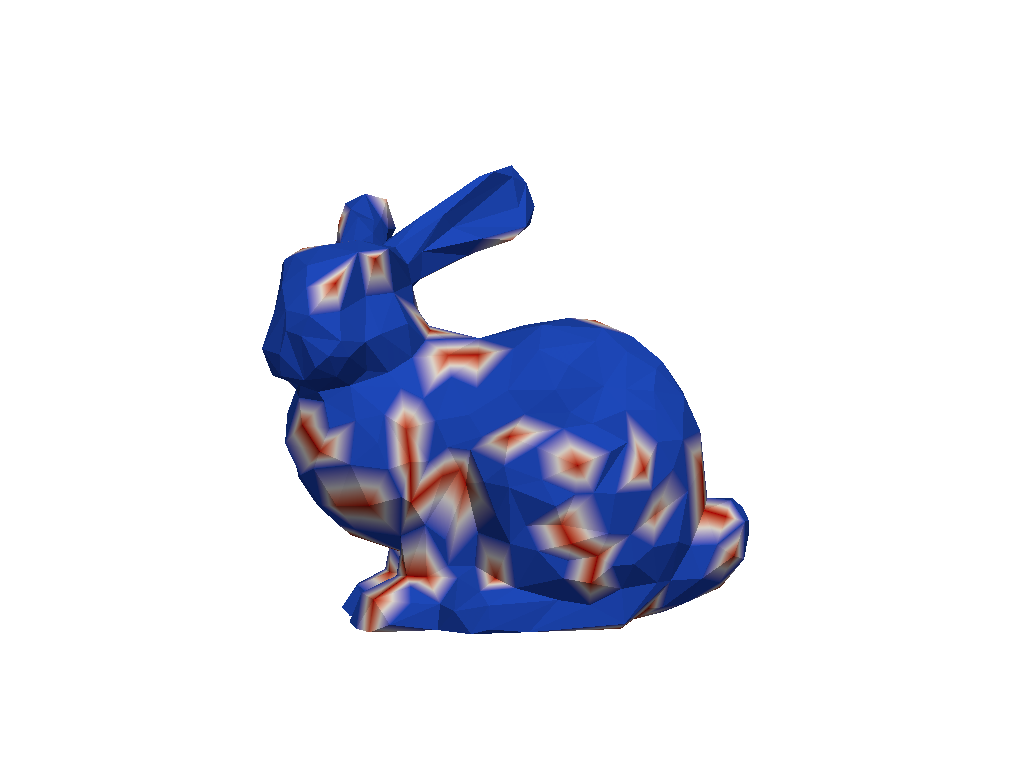} & \cincludegraphics[width=0.18\textwidth, trim=230 120 240 180, clip]{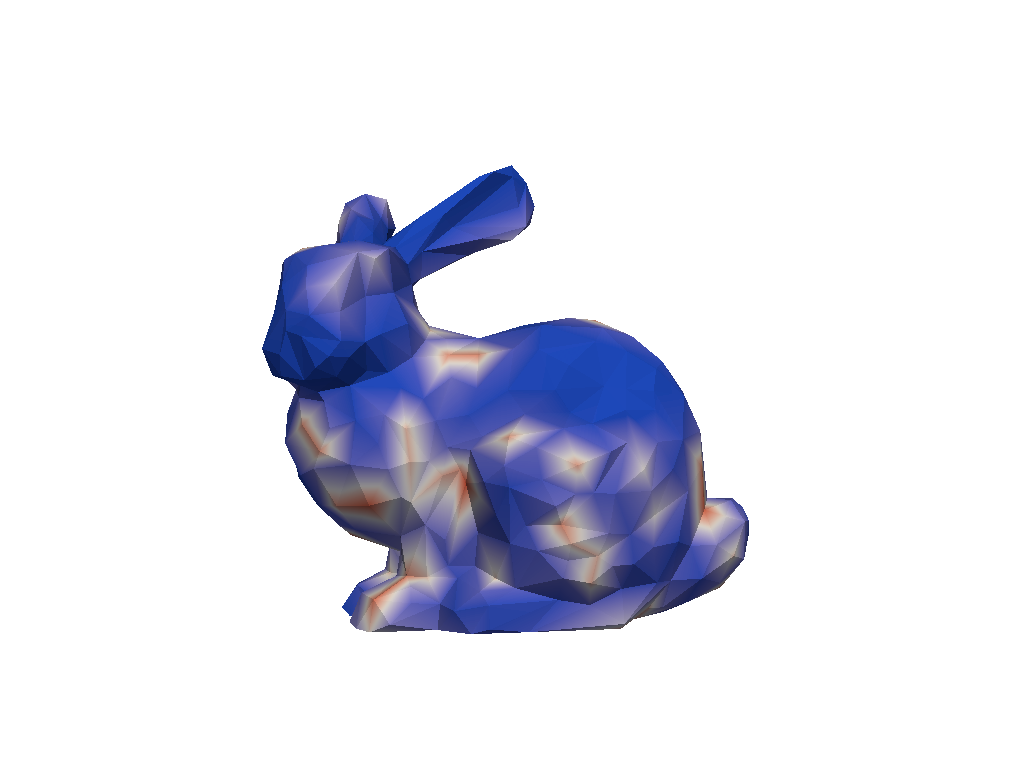} & \cincludegraphics[width=0.18\textwidth, trim=230 120 240 180, clip]{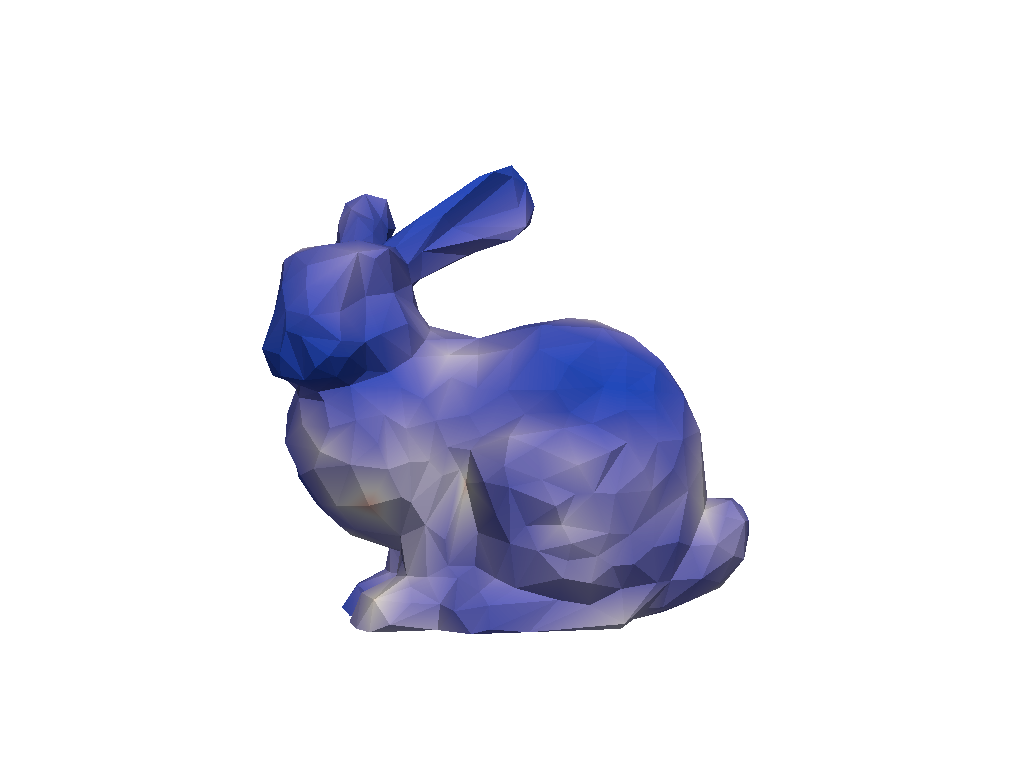} \\[0.9cm]
    Lucy & Train & 2{,}501 &\cincludegraphics[width=0.18\textwidth,trim=200 35 210 40, clip]{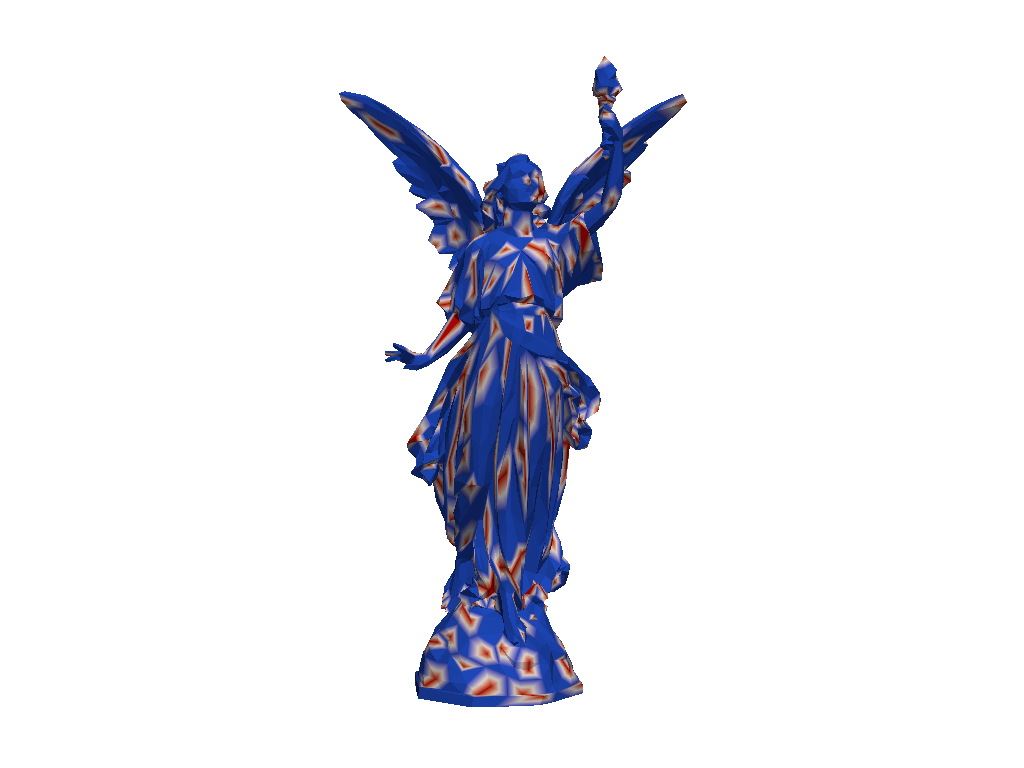} & \cincludegraphics[width=0.18\textwidth, trim=200 35 210 40, clip]{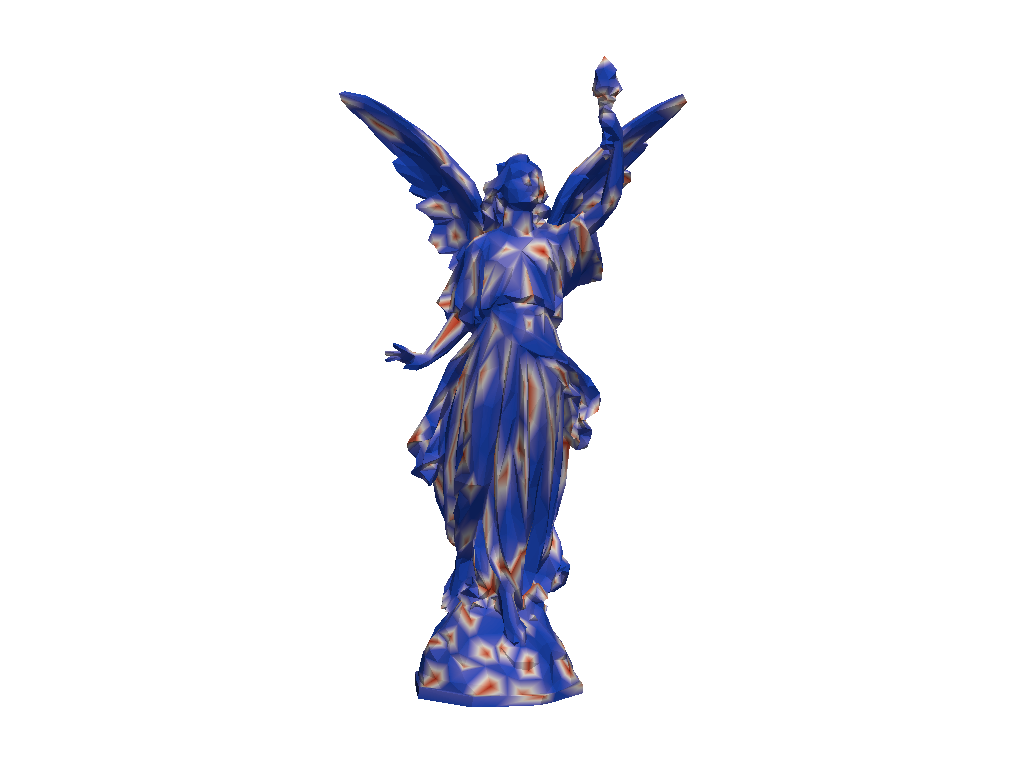} & \cincludegraphics[width=0.18\textwidth, trim=200 35 210 40, clip]{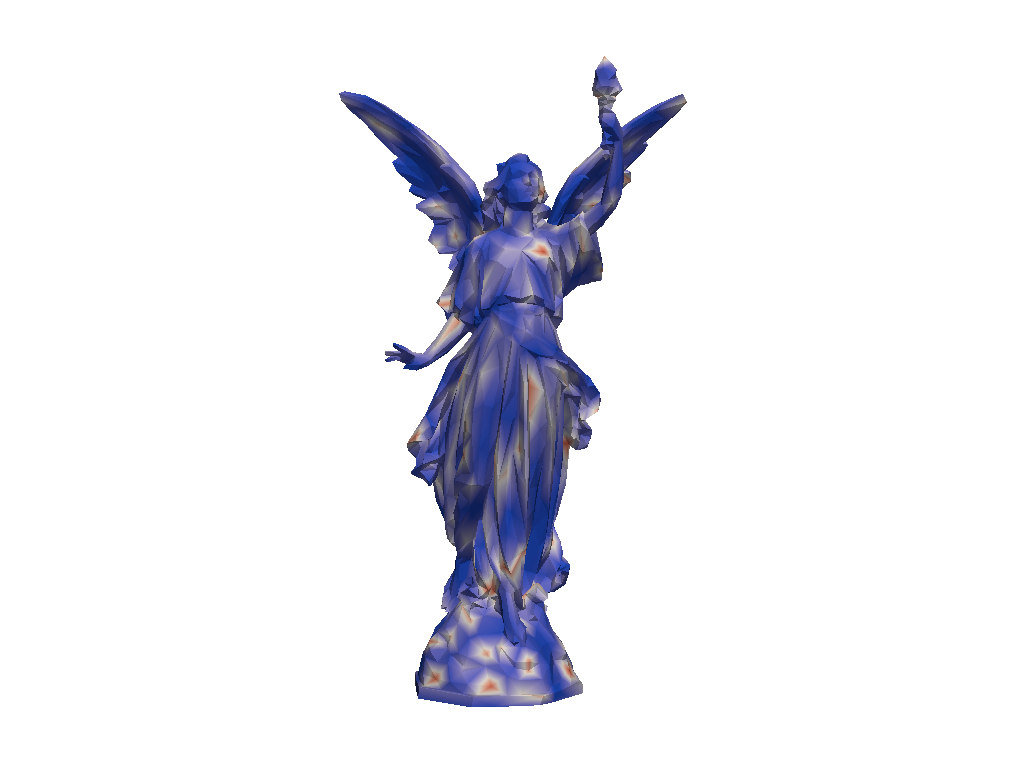} \\[1.1cm]
    Sphere & Train & 842 &\cincludegraphics[width=0.18\textwidth,trim=250 130 250 130, clip]{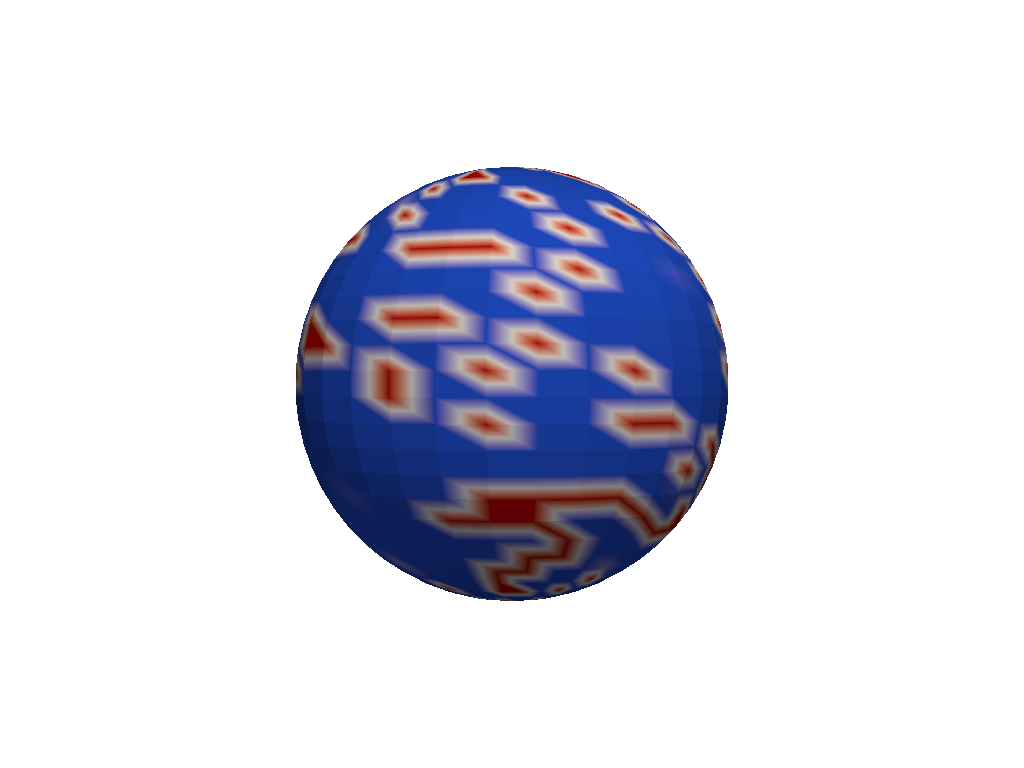} & \cincludegraphics[width=0.18\textwidth, trim=250 130 250 130, clip]{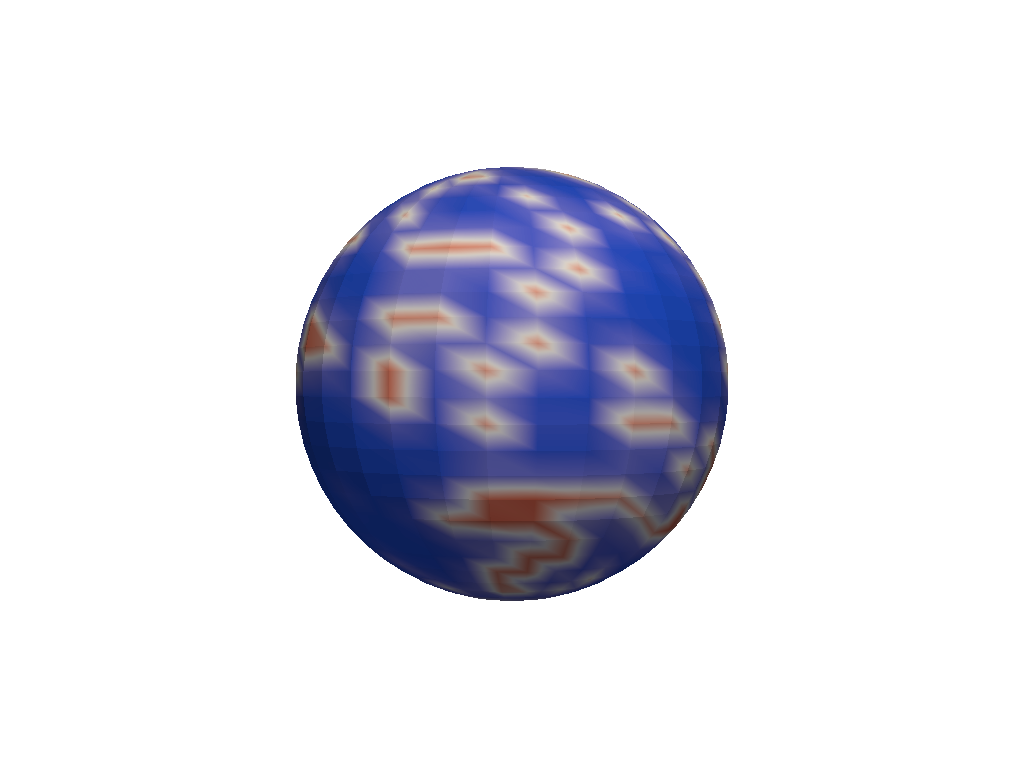} & \cincludegraphics[width=0.18\textwidth, trim=250 130 250 130, clip]{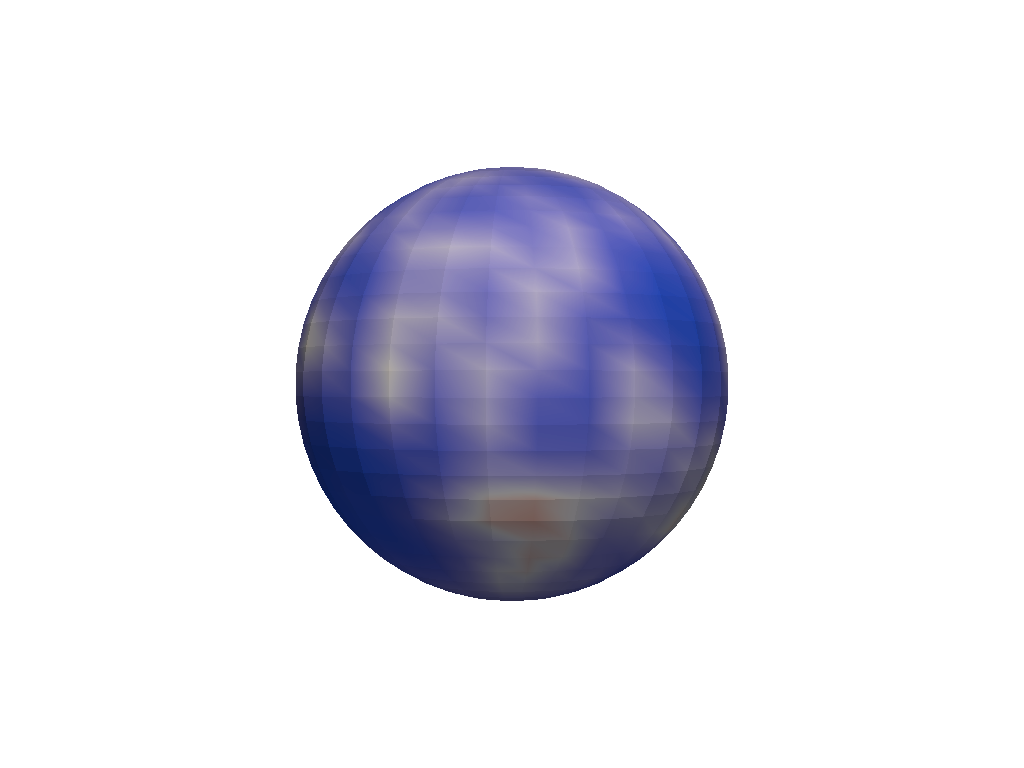} \\[1.0cm]
    Spider & Train & 4{,}670 &\cincludegraphics[width=0.18\textwidth,trim=250 170 250 150, clip]{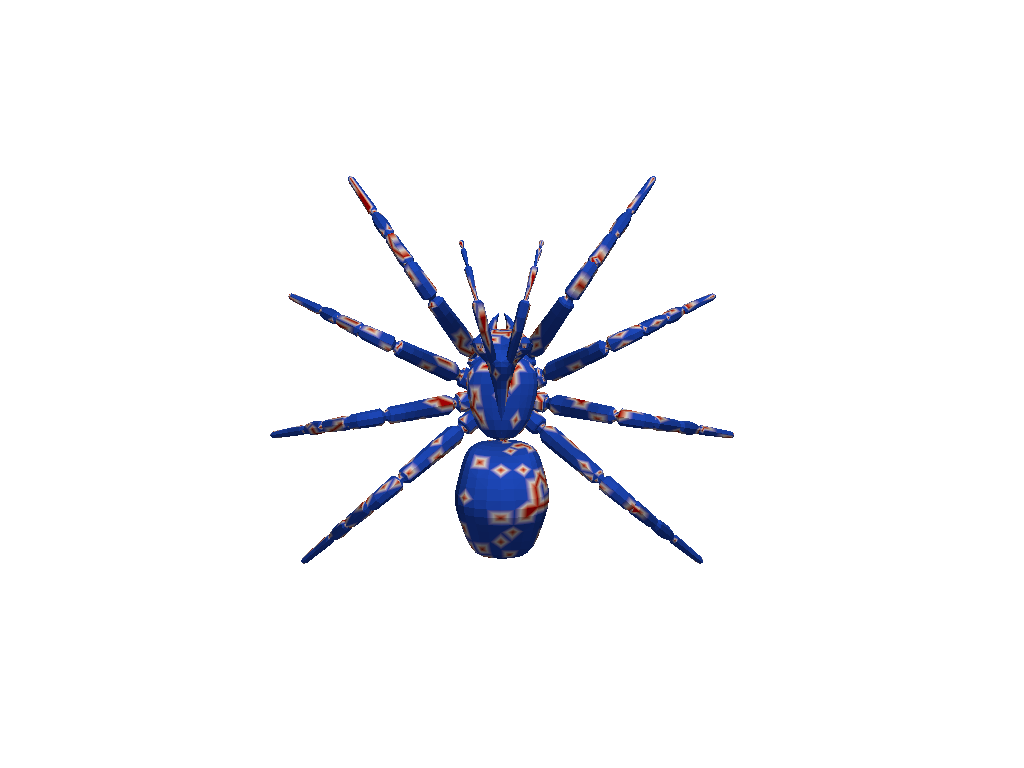} & \cincludegraphics[width=0.18\textwidth, trim=250 170 250 150, clip]{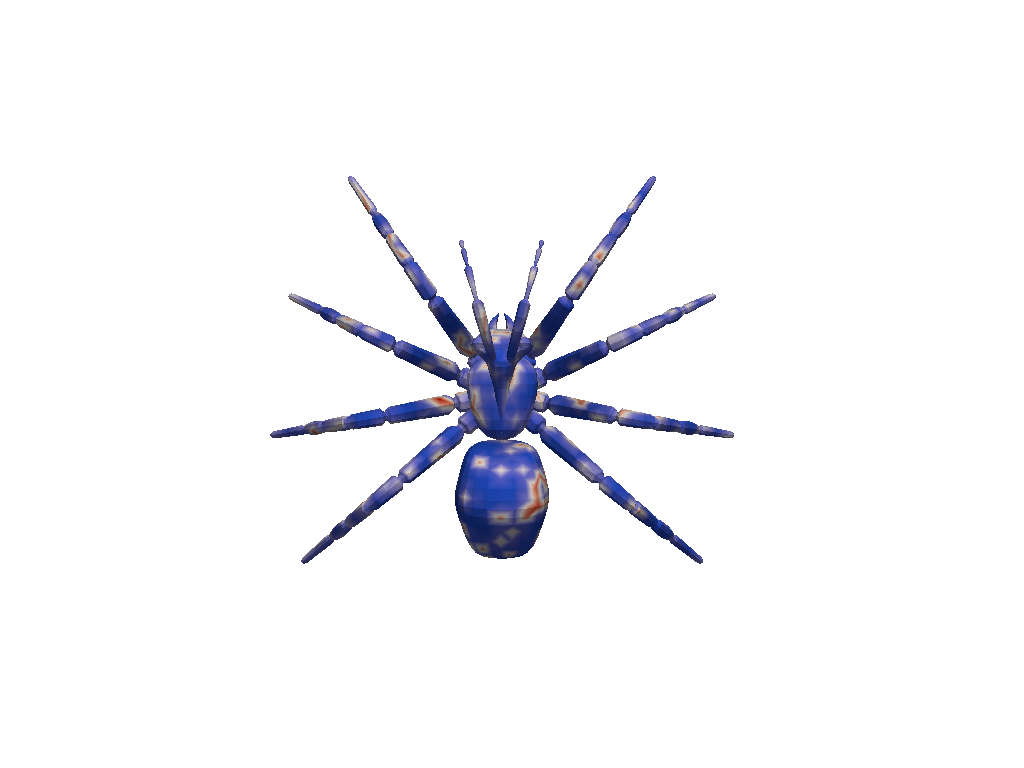} & \cincludegraphics[width=0.18\textwidth, trim=250 170 250 150, clip]{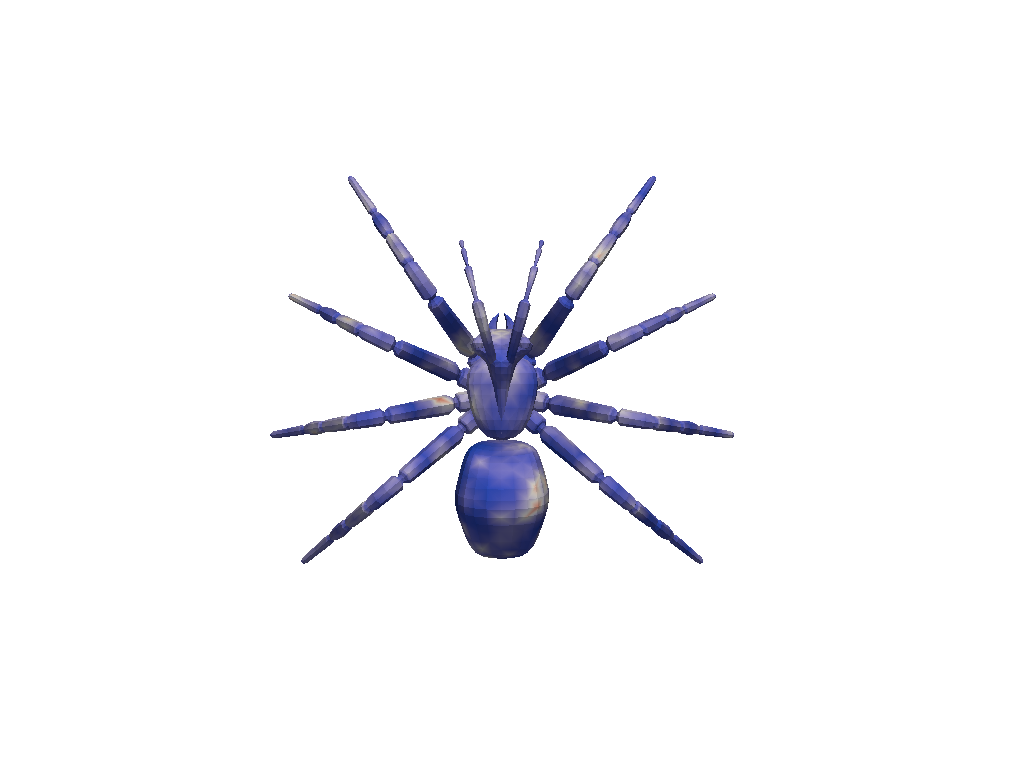} \\[0.9cm]
    Urn & Train & 2{,}454 &\cincludegraphics[width=0.18\textwidth,trim=300 70 250 80, clip]{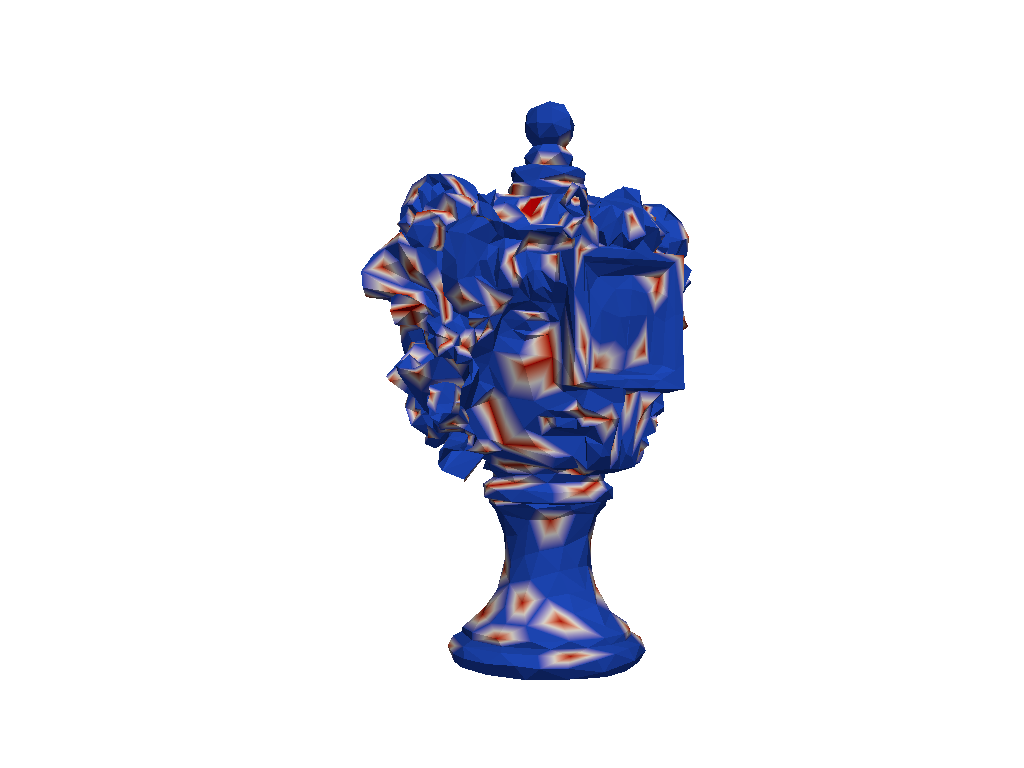} & \cincludegraphics[width=0.18\textwidth, trim=300 70 250 80, clip]{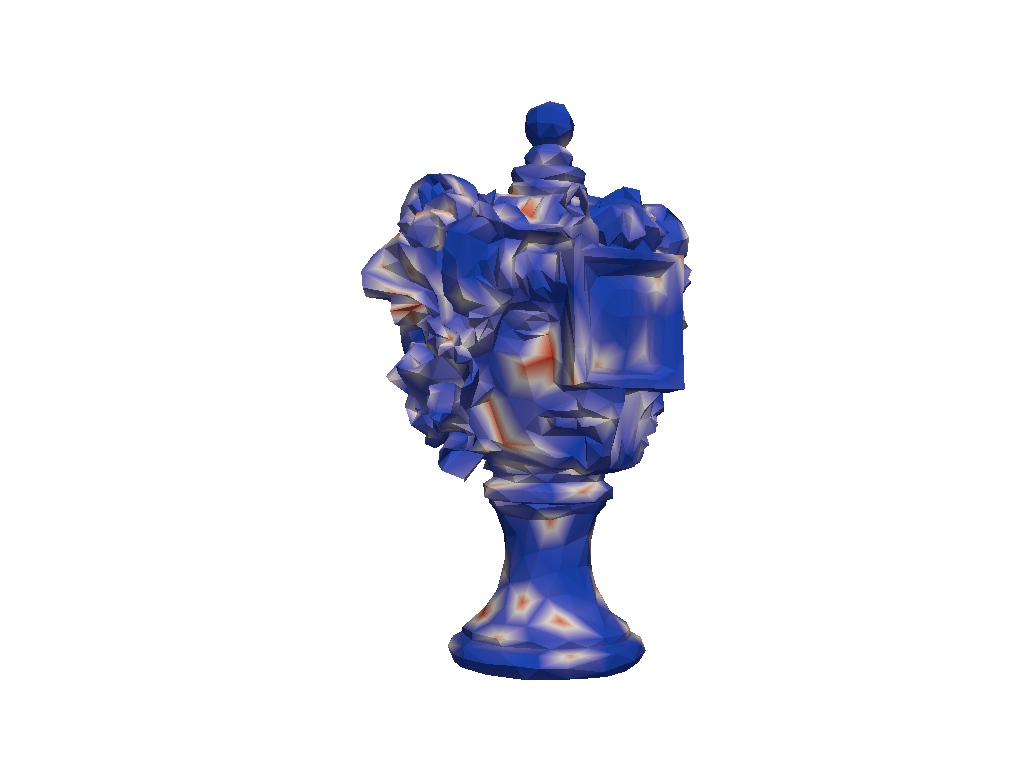} & \cincludegraphics[width=0.18\textwidth, trim=300 70 250 80, clip]{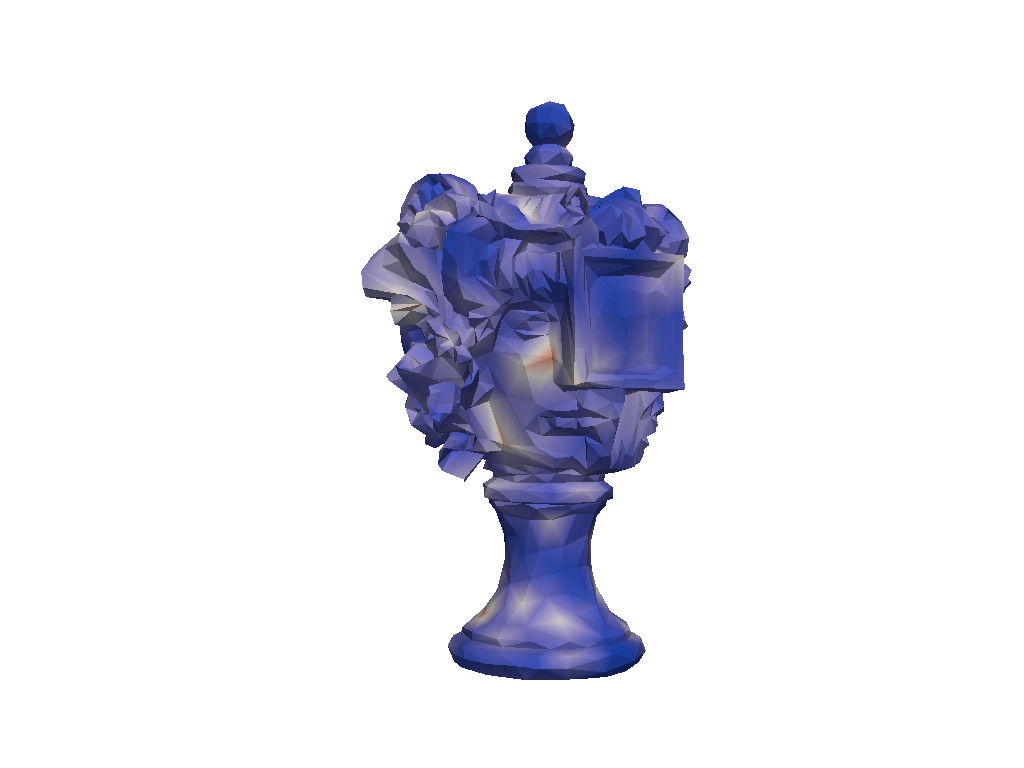} \\[1.35cm]
    Woman & Train & 2{,}998 &\cincludegraphics[width=0.18\textwidth,trim=240 0 250 20, clip]{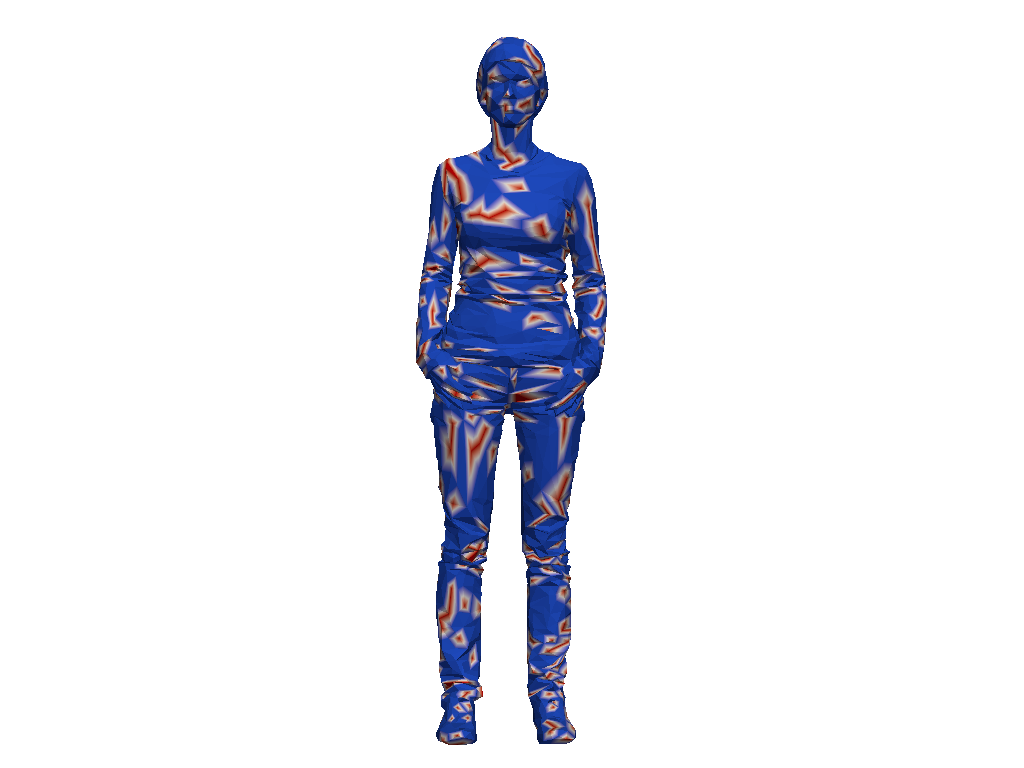} & \cincludegraphics[width=0.18\textwidth, trim=240 0 250 20, clip]{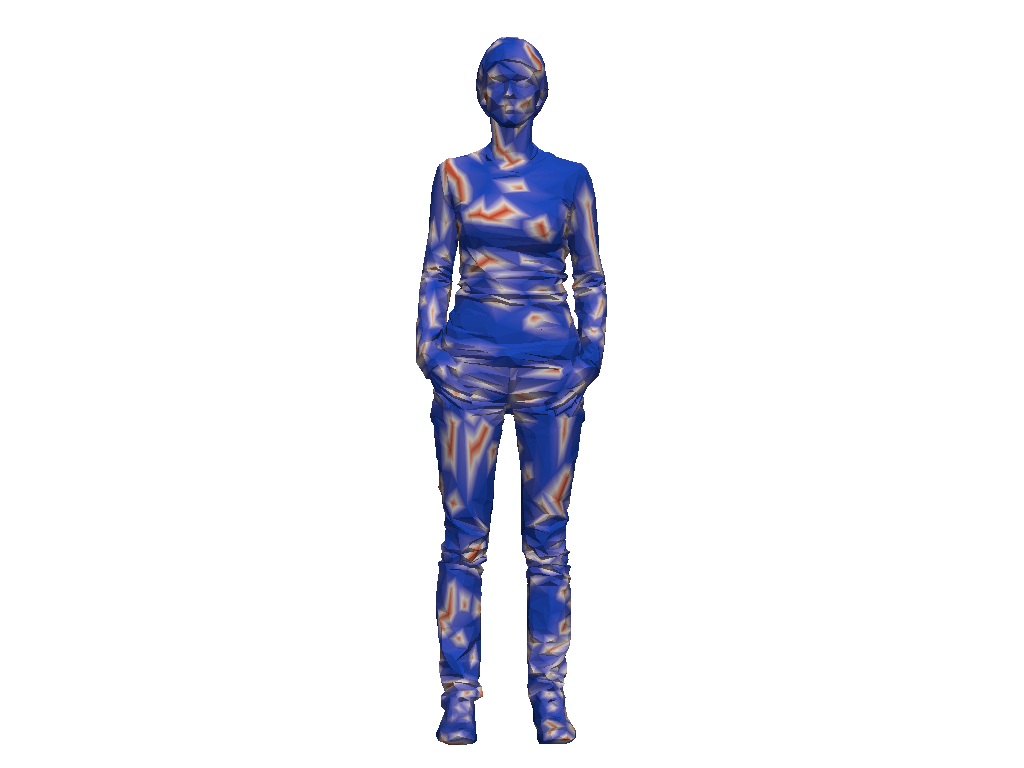} & \cincludegraphics[width=0.18\textwidth, trim=240 0 250 20, clip]{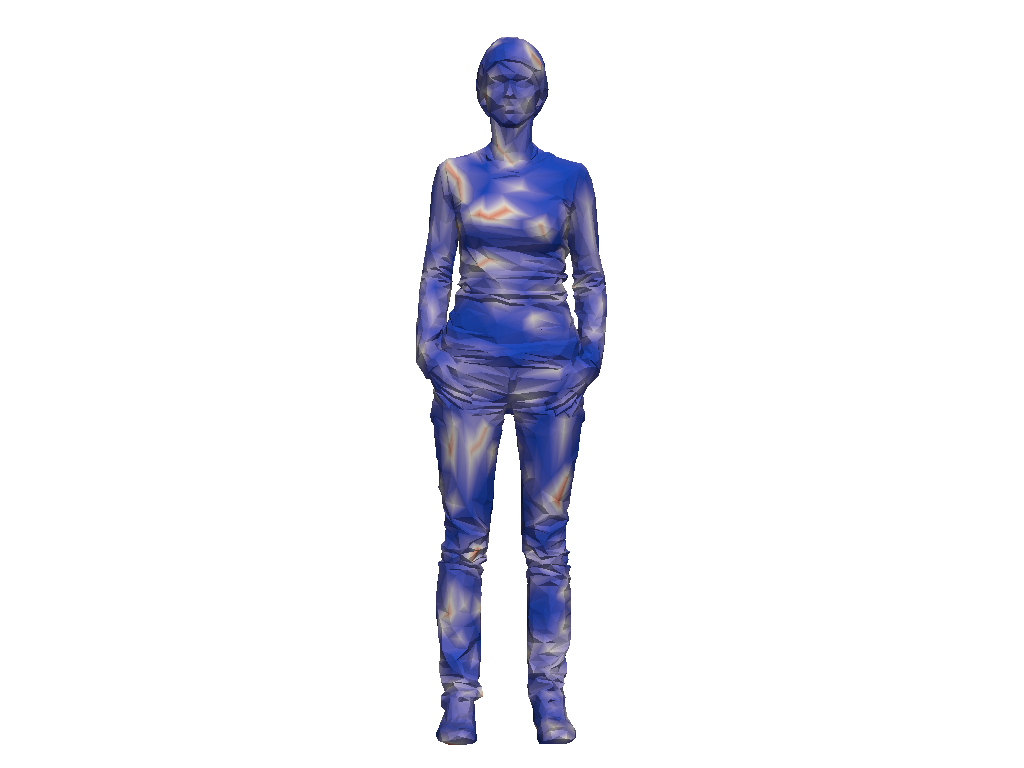} \\
    \bottomrule
    \end{tabular}
\label{tab:heat_dataset}
\end{table}

\begin{table}[!htpb]
\centering
    \caption{Wave equation dataset of all meshes used with $T_{\text{max}}=200$. We simulate $N=5$ samples for each mesh. Of the training meshes, we use $T=150,\dots,200$ as the test time dataset and $2$ samples as the test init dataset. The armadillo and urn meshes were used as the test mesh dataset.}
    \begin{tabular}{@{}cccccc@{}}
    \toprule
    Object & Split & No. vertices & t=0 & t=50 & t=200 \\
    \midrule
    Armadillo & Test & 1{,}731 &\cincludegraphics[width=0.18\textwidth,trim=260 120 260 110, clip]{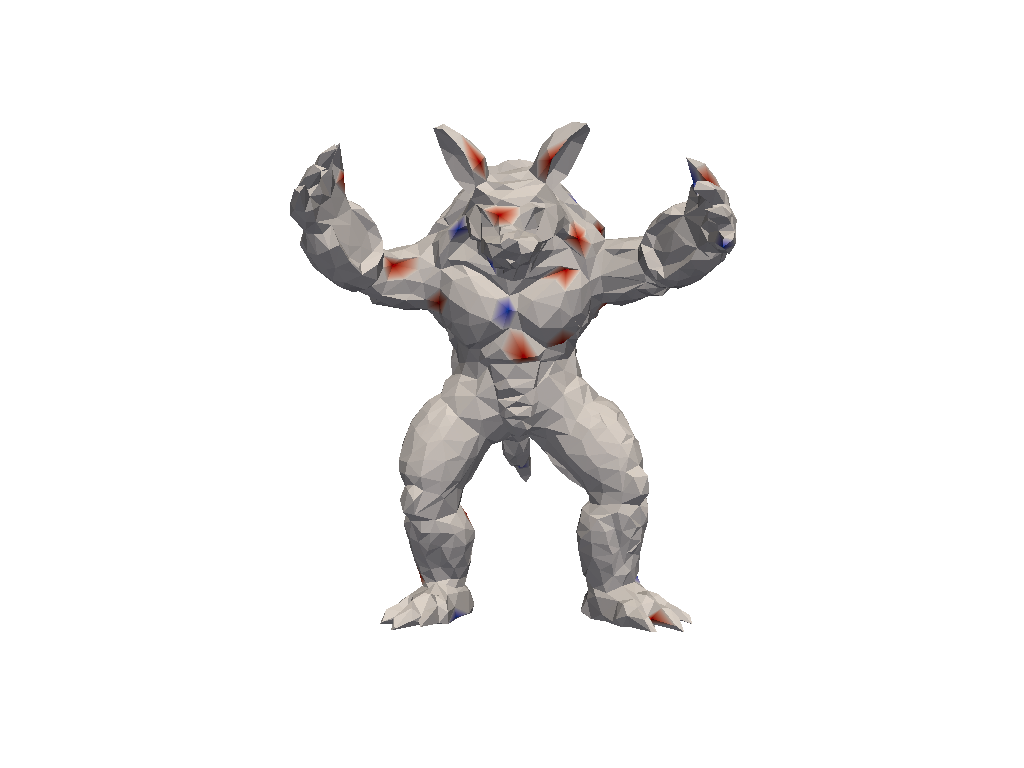} & \cincludegraphics[width=0.18\textwidth, trim=260 120 260 110, clip]{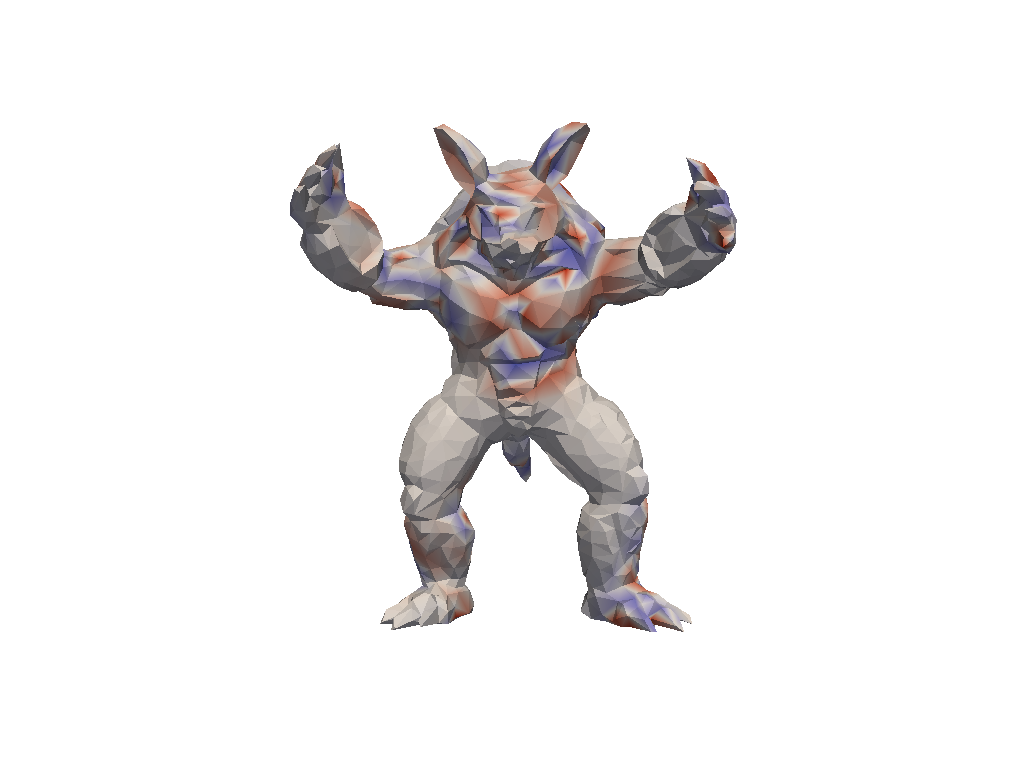} & \cincludegraphics[width=0.18\textwidth, trim=260 120 260 110, clip]{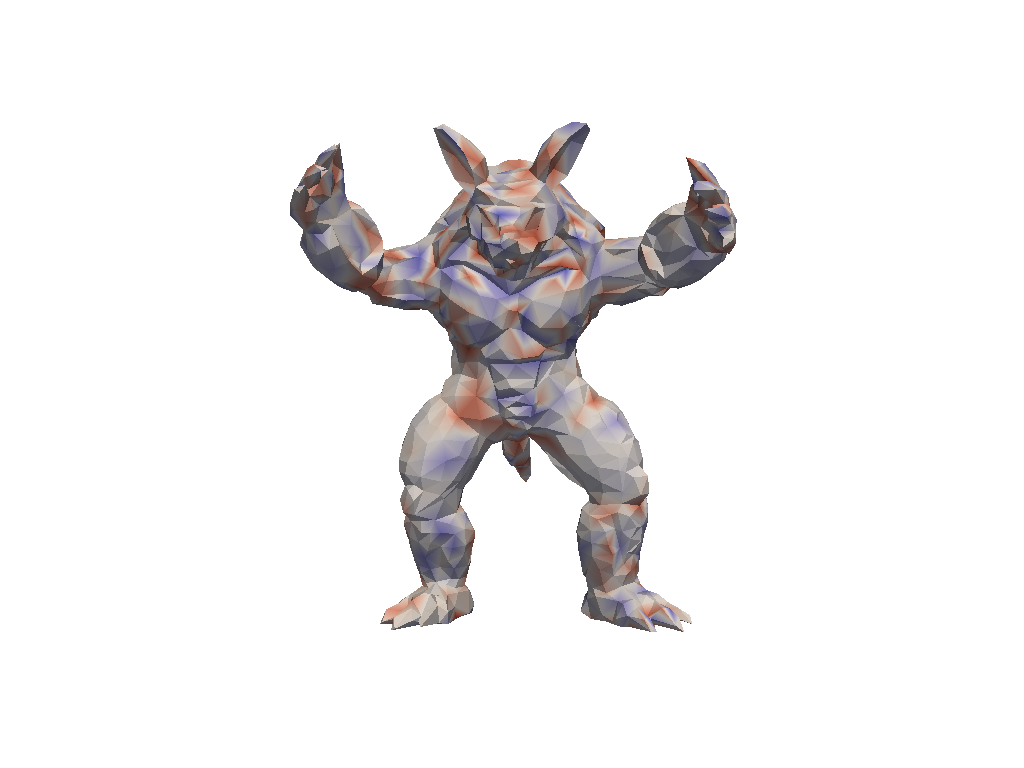} \\[1.1cm]
    Bunny & Train & 502 &\cincludegraphics[width=0.18\textwidth,trim=230 120 240 180, clip]{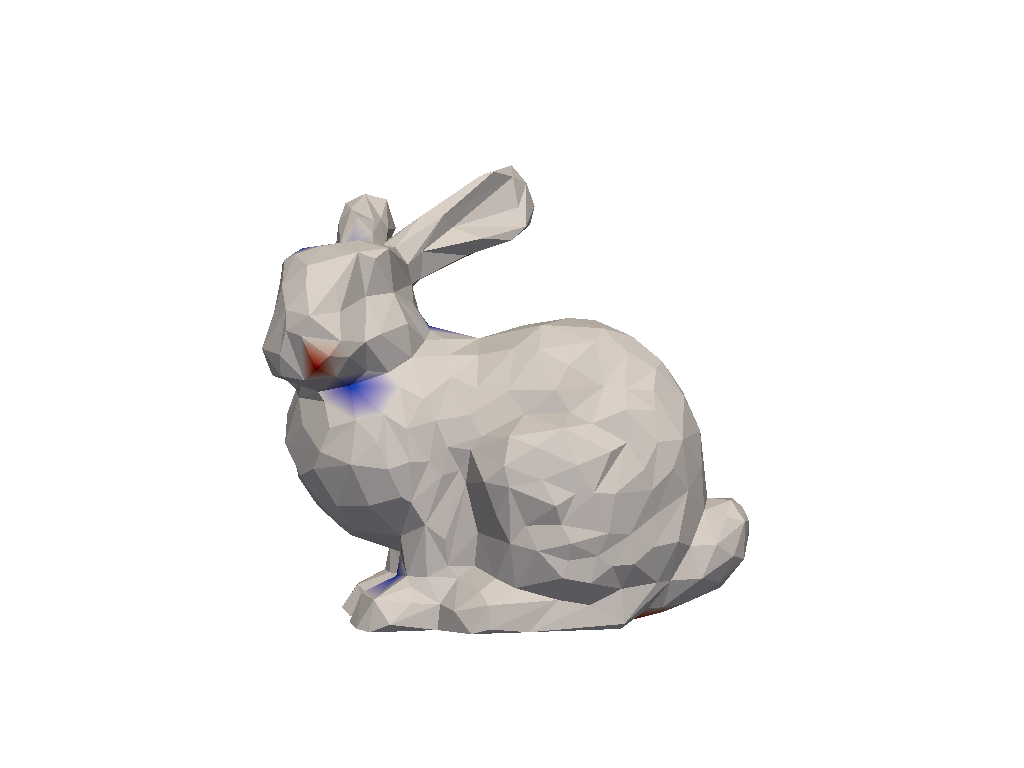} & \cincludegraphics[width=0.18\textwidth, trim=230 120 240 180, clip]{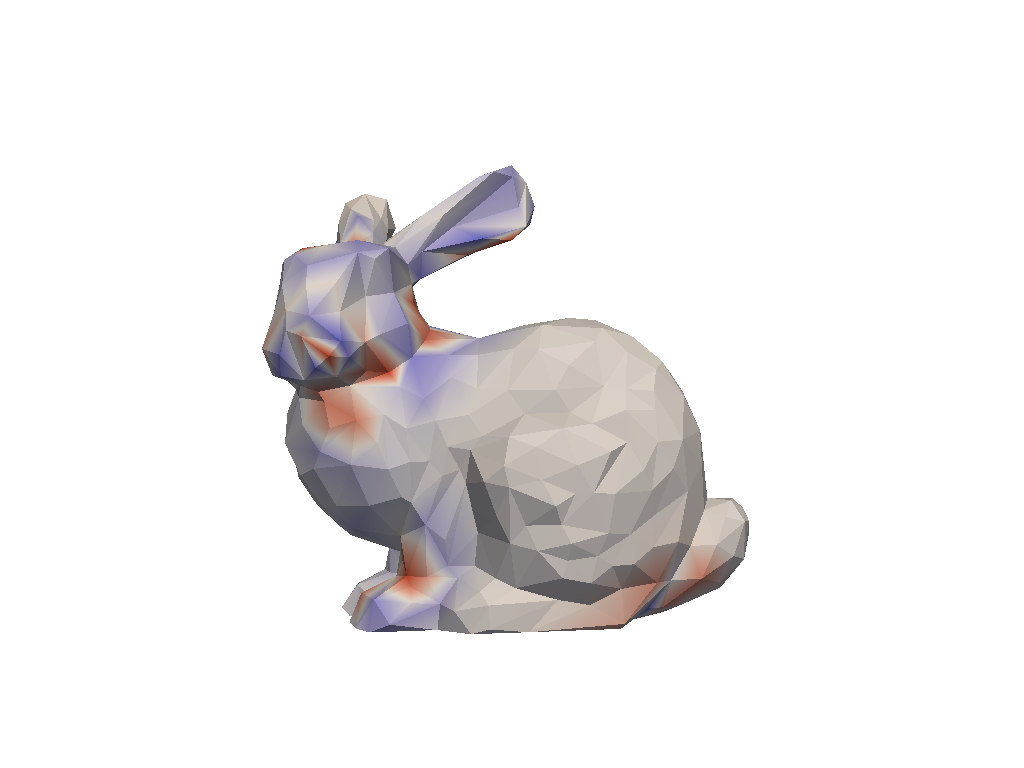} & \cincludegraphics[width=0.18\textwidth, trim=230 120 240 180, clip]{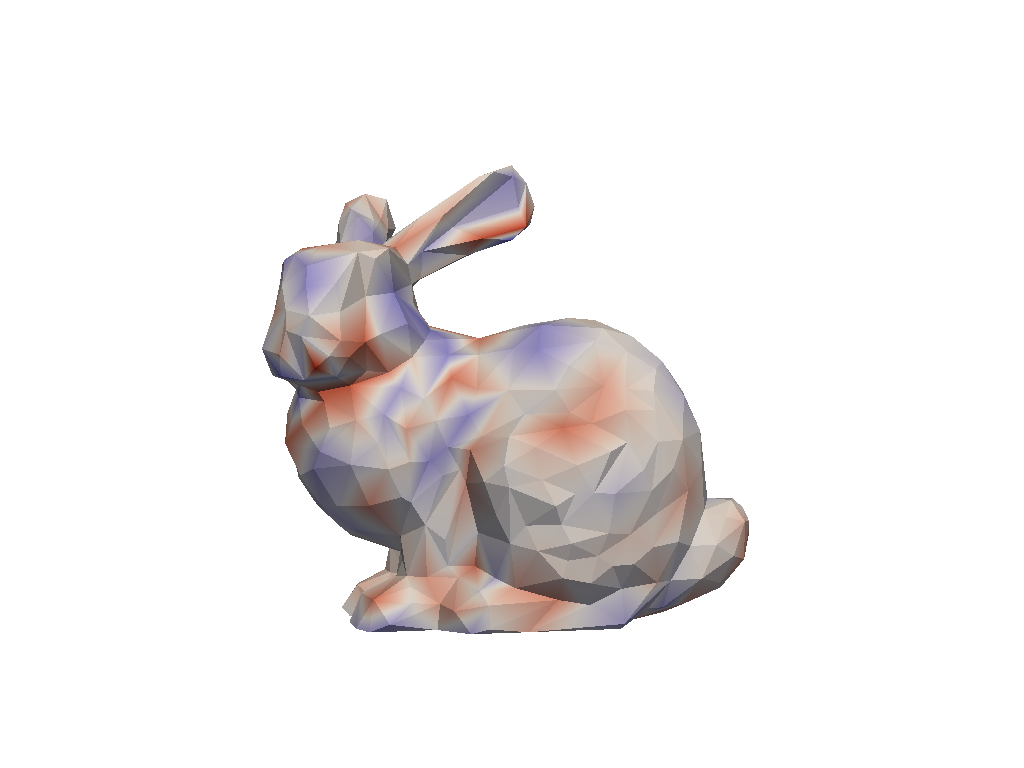} \\[0.9cm]
    Lucy & Train & 2{,}501 &\cincludegraphics[width=0.18\textwidth,trim=200 35 210 40, clip]{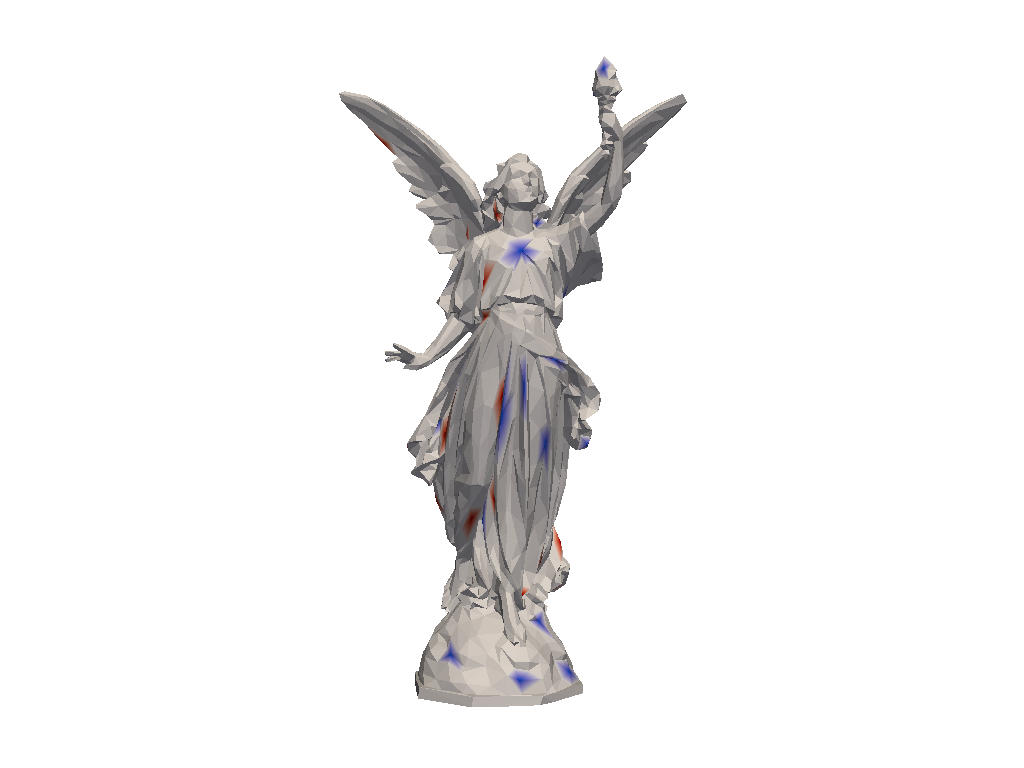} & \cincludegraphics[width=0.18\textwidth, trim=200 35 210 40, clip]{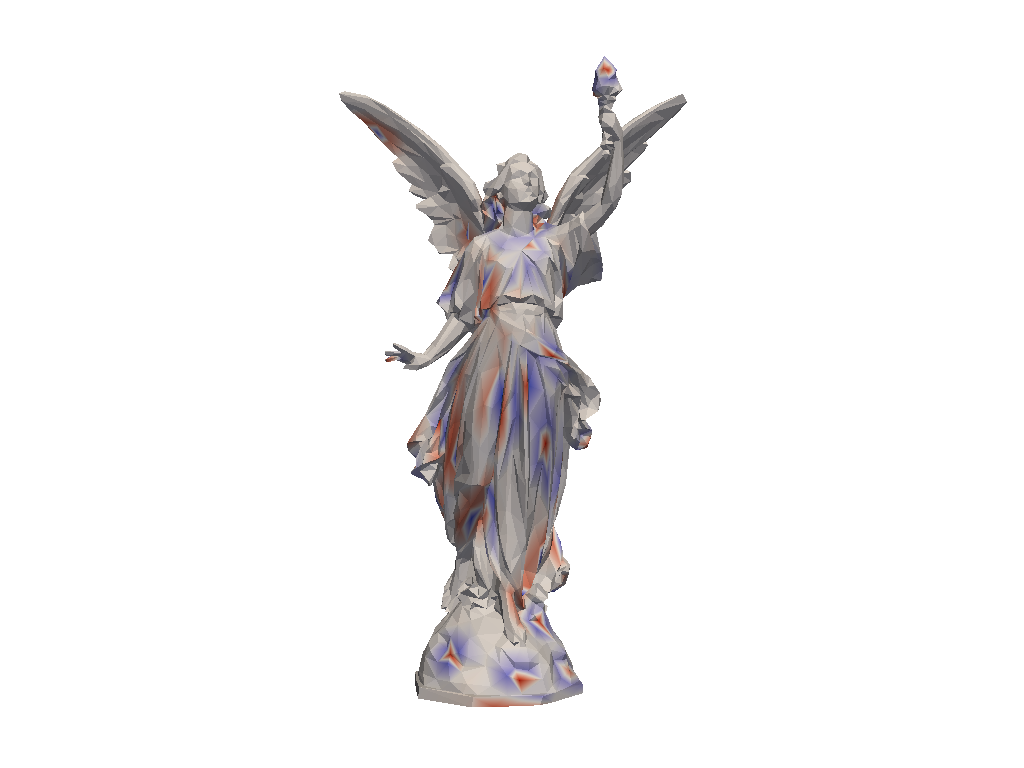} & \cincludegraphics[width=0.18\textwidth, trim=200 35 210 40, clip]{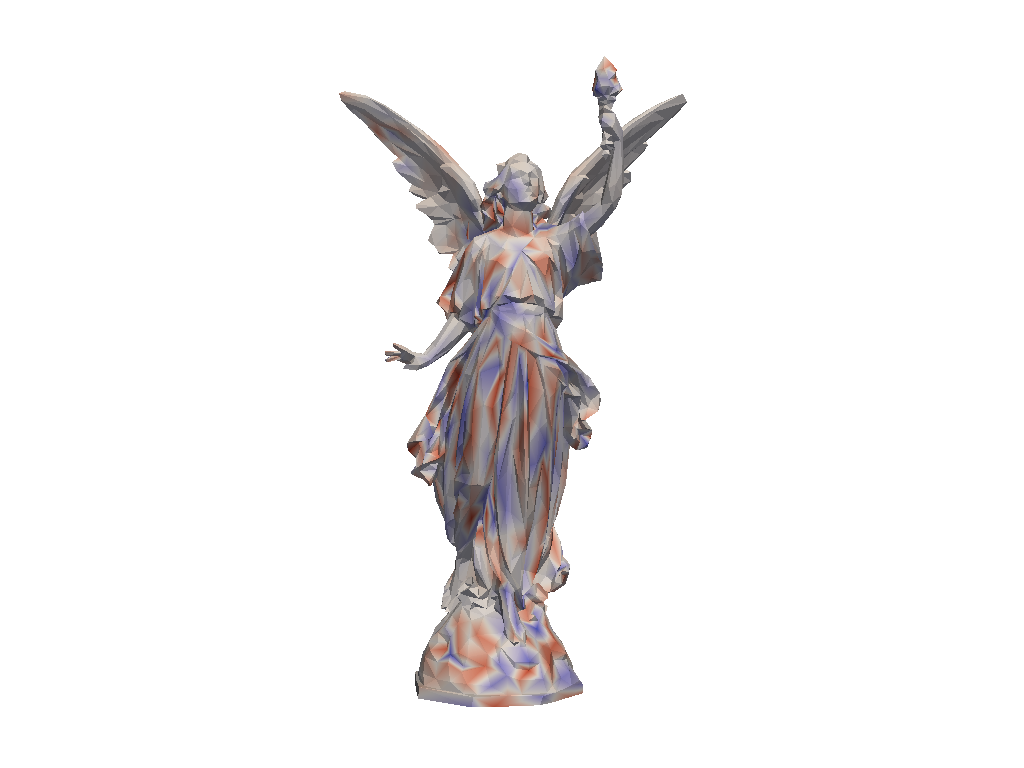} \\[1.1cm]
    Sphere & Train & 842 &\cincludegraphics[width=0.18\textwidth,trim=250 130 250 130, clip]{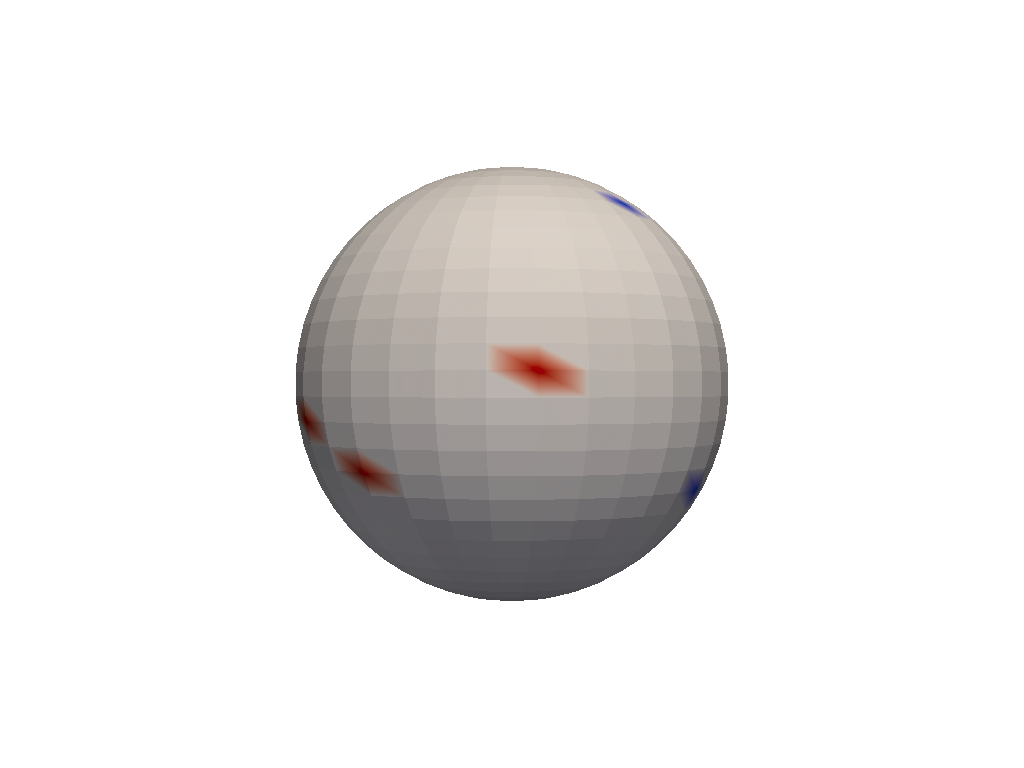} & \cincludegraphics[width=0.18\textwidth, trim=250 130 250 130, clip]{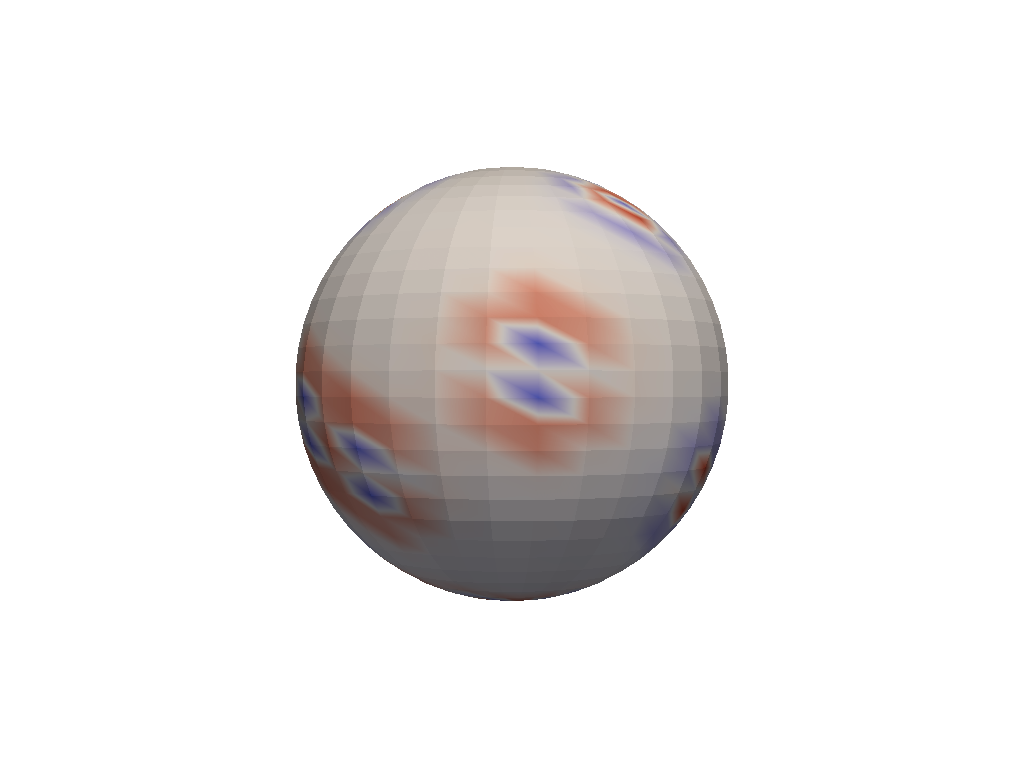} & \cincludegraphics[width=0.18\textwidth, trim=250 130 250 130, clip]{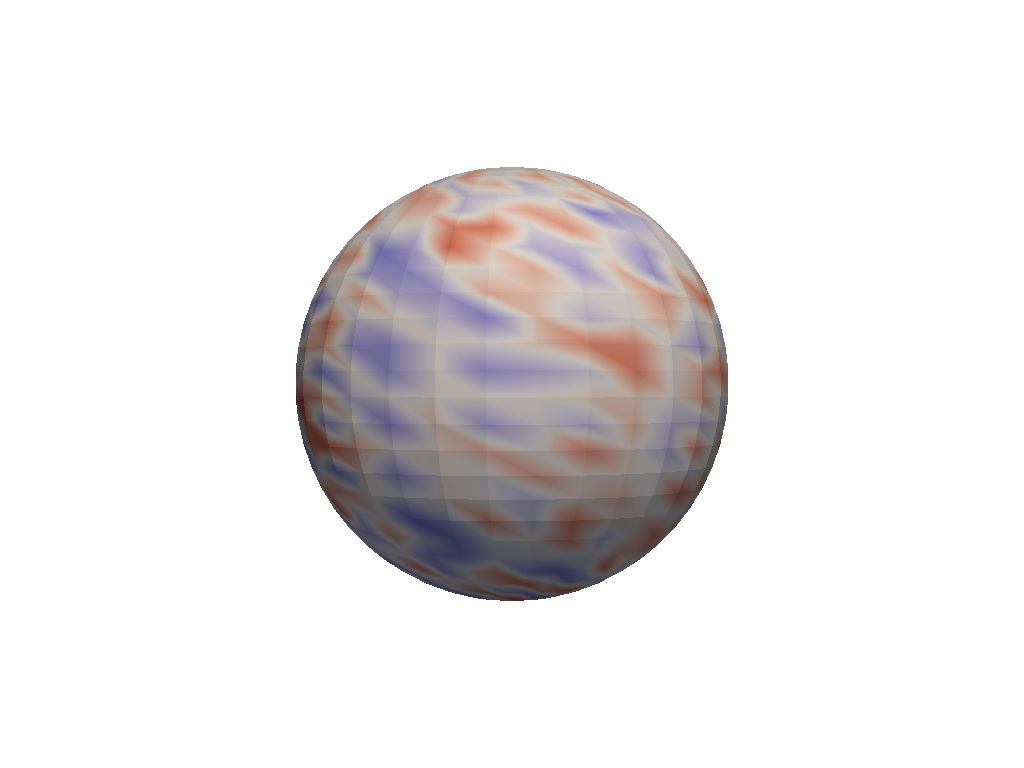} \\[1.0cm]
    Spider & Train & 4{,}670 &\cincludegraphics[width=0.18\textwidth,trim=250 170 250 150, clip]{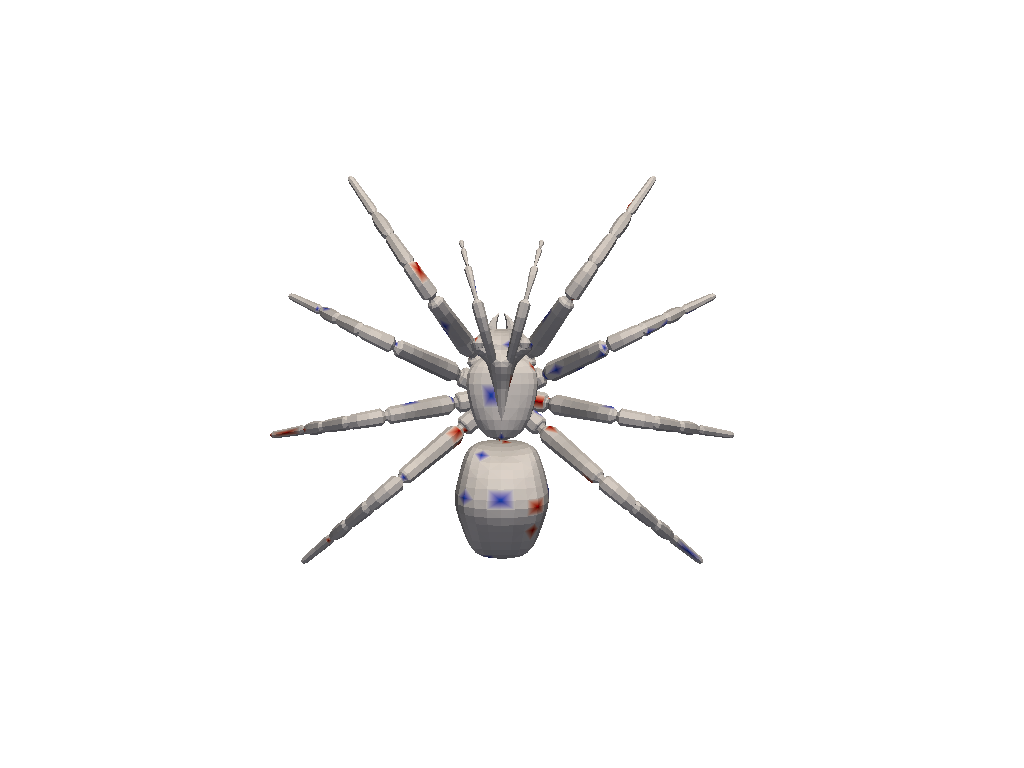} & \cincludegraphics[width=0.18\textwidth, trim=250 170 250 150, clip]{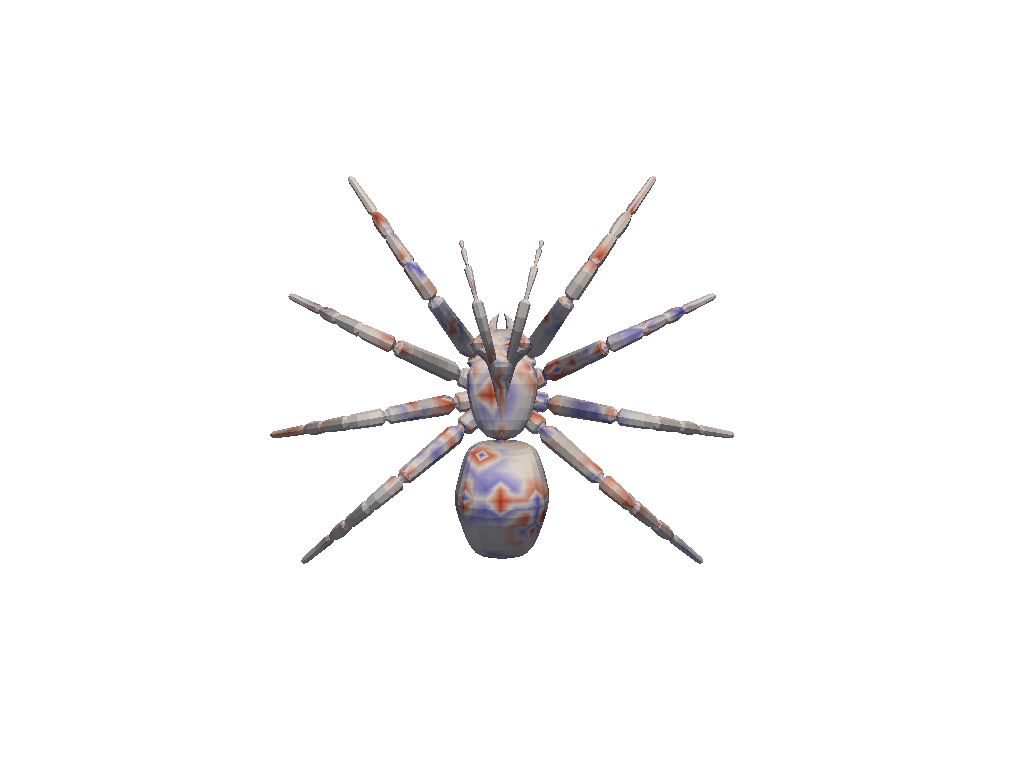} & \cincludegraphics[width=0.18\textwidth, trim=250 170 250 150, clip]{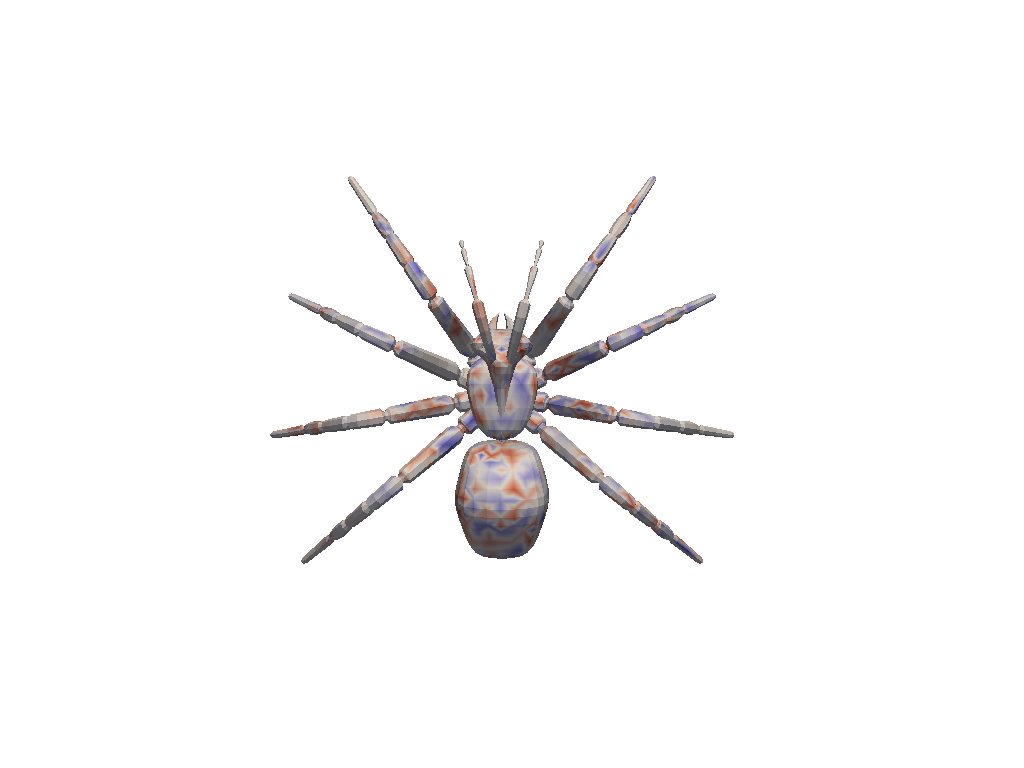} \\[0.9cm]
    Urn & Train & 2{,}454 &\cincludegraphics[width=0.18\textwidth,trim=300 70 250 80, clip]{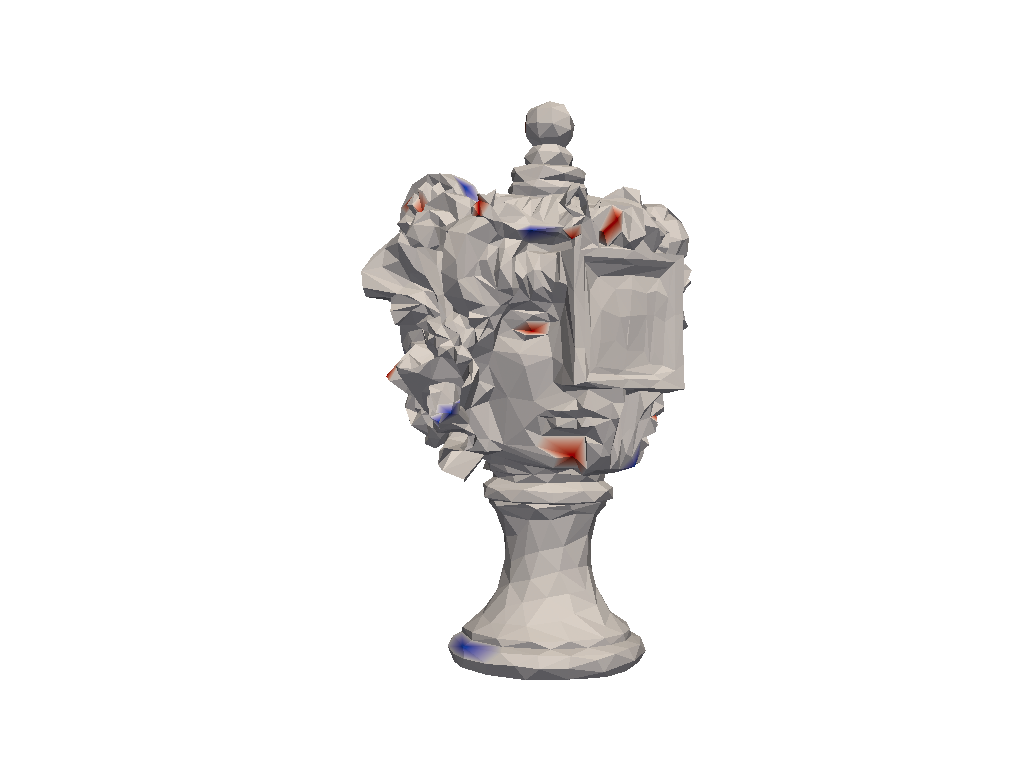} & \cincludegraphics[width=0.18\textwidth, trim=300 70 250 80, clip]{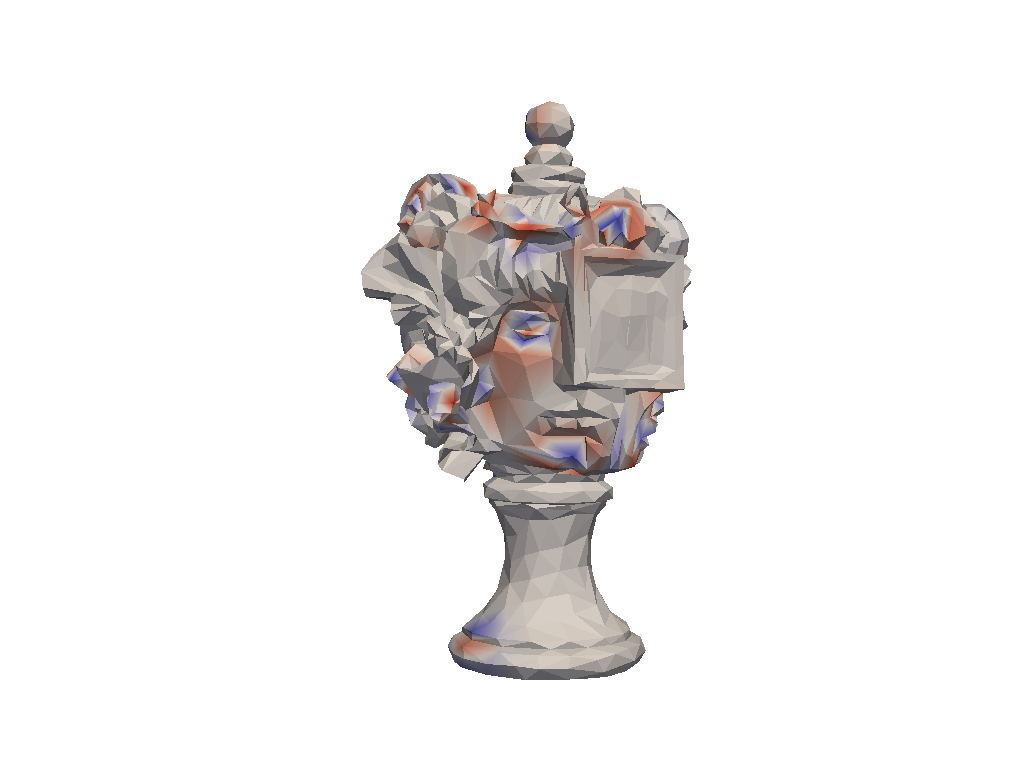} & \cincludegraphics[width=0.18\textwidth, trim=300 70 250 80, clip]{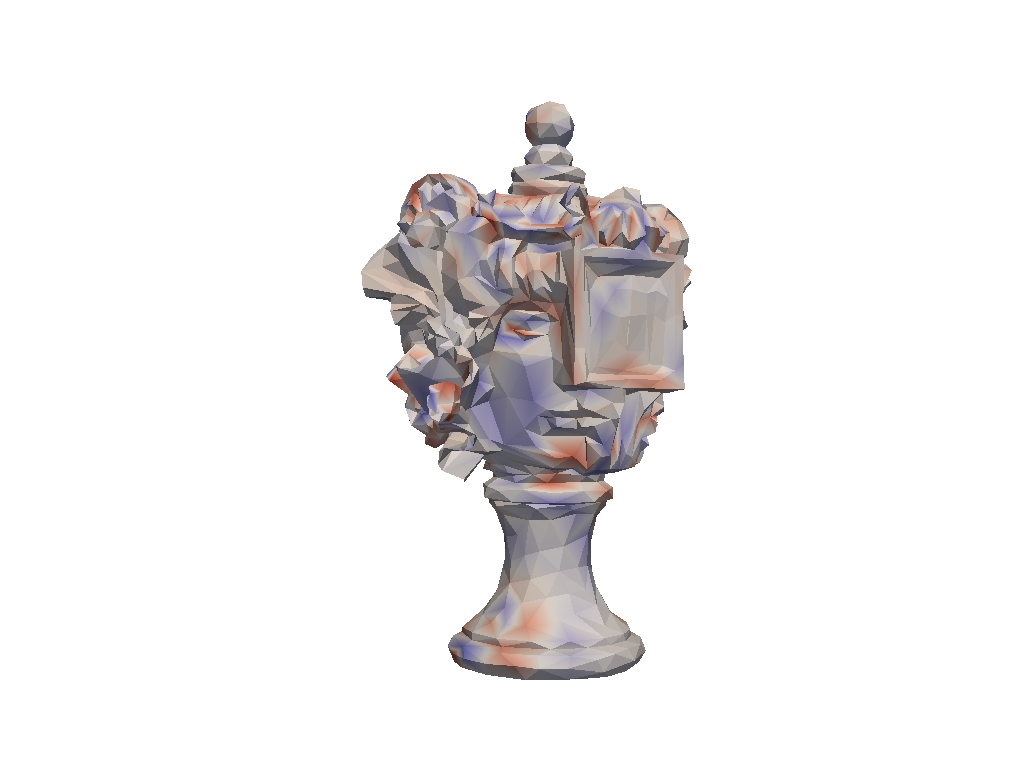} \\[1.35cm]
    Woman & Train & 2{,}998 &\cincludegraphics[width=0.18\textwidth,trim=240 0 250 20, clip]{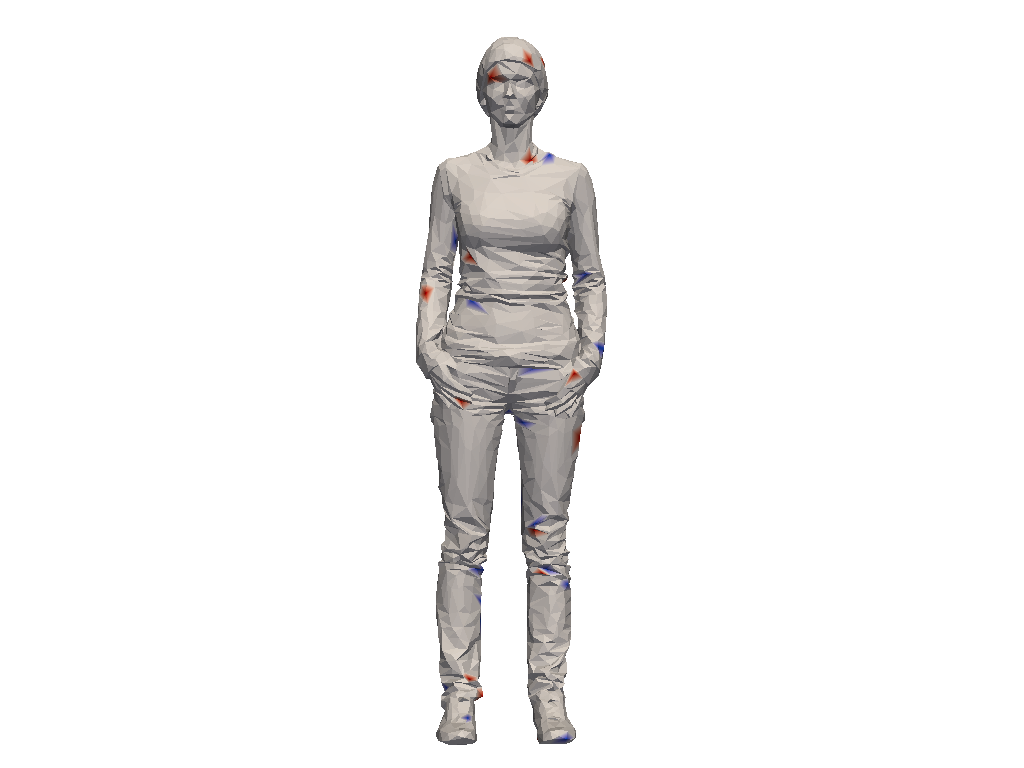} & \cincludegraphics[width=0.18\textwidth, trim=240 0 250 20, clip]{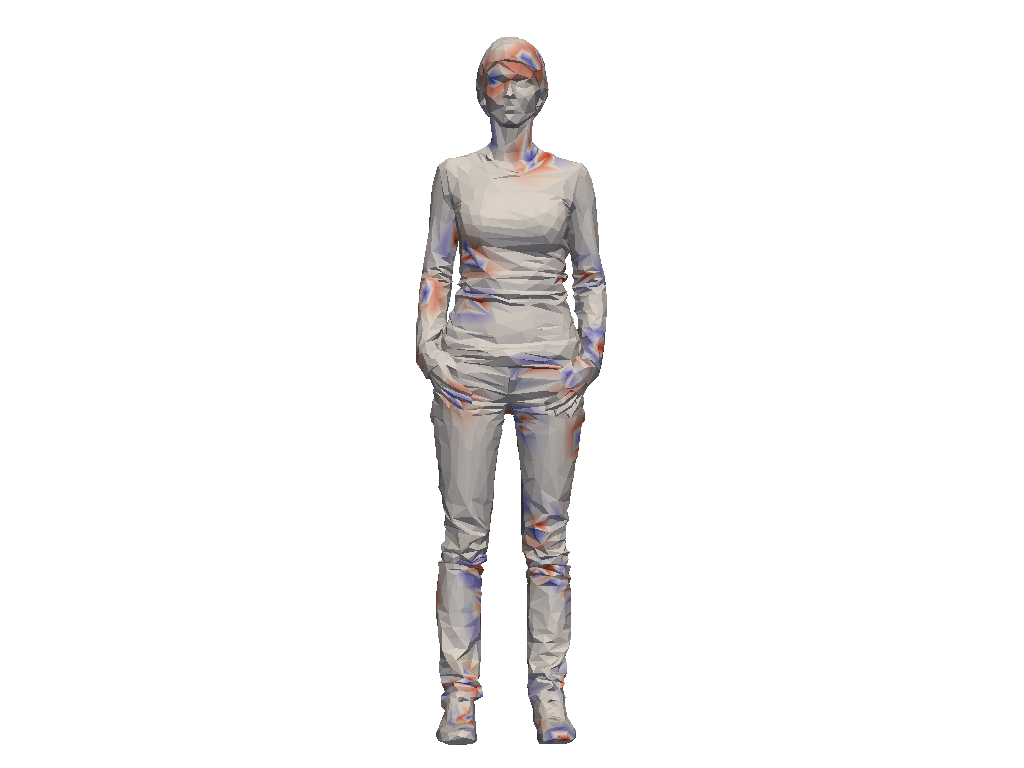} & \cincludegraphics[width=0.18\textwidth, trim=240 0 250 20, clip]{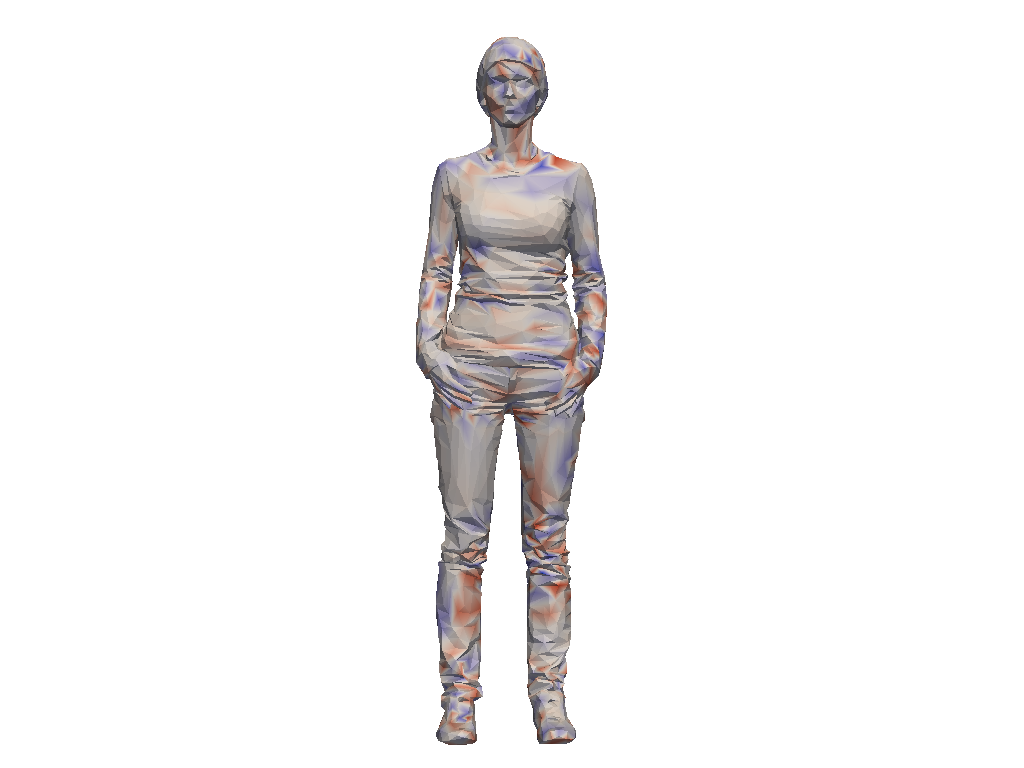} \\
    \bottomrule
    \end{tabular}
\label{tab:wave_dataset}
\end{table}

\subsection{Cahn-Hilliard Equation}
\label{app:cahn_hilliard}

\begin{table}[!htpb]
\centering
\caption{Cahn-Hilliard equation dataset of all meshes used with $T_{\text{max}}=200$. We simulate $N=5$ samples for each mesh. Of the training meshes, we use $T=150,\dots,200$ as the test time dataset and $2$ samples as the test init dataset. The armadillo and urn meshes were used as the test mesh dataset.}
    \begin{tabular}{@{}cccccc@{}}
    \toprule
    Object & Split & No. vertices & T=0 & T=50 & T=200 \\
    \midrule
    Bunny & Train & 502 &\cincludegraphics[width=0.18\textwidth,trim=230 100 240 140, clip]{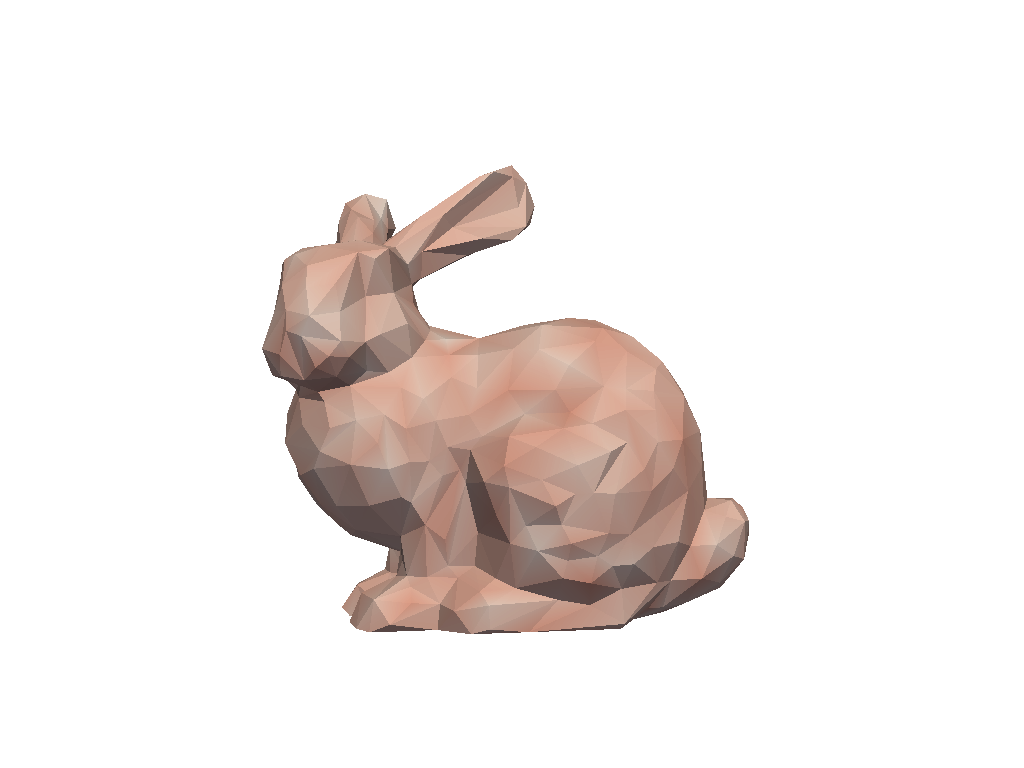} & \cincludegraphics[width=0.18\textwidth, trim=230 100 240 140, clip]{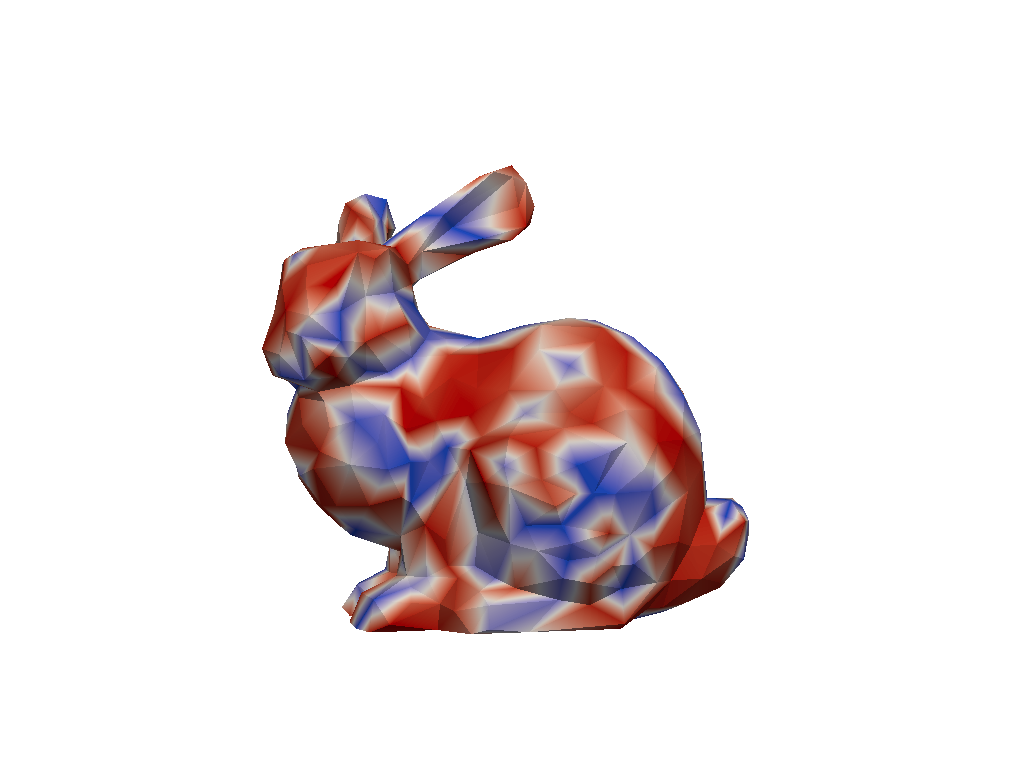} & \cincludegraphics[width=0.18\textwidth, trim=230 100 240 140, clip]{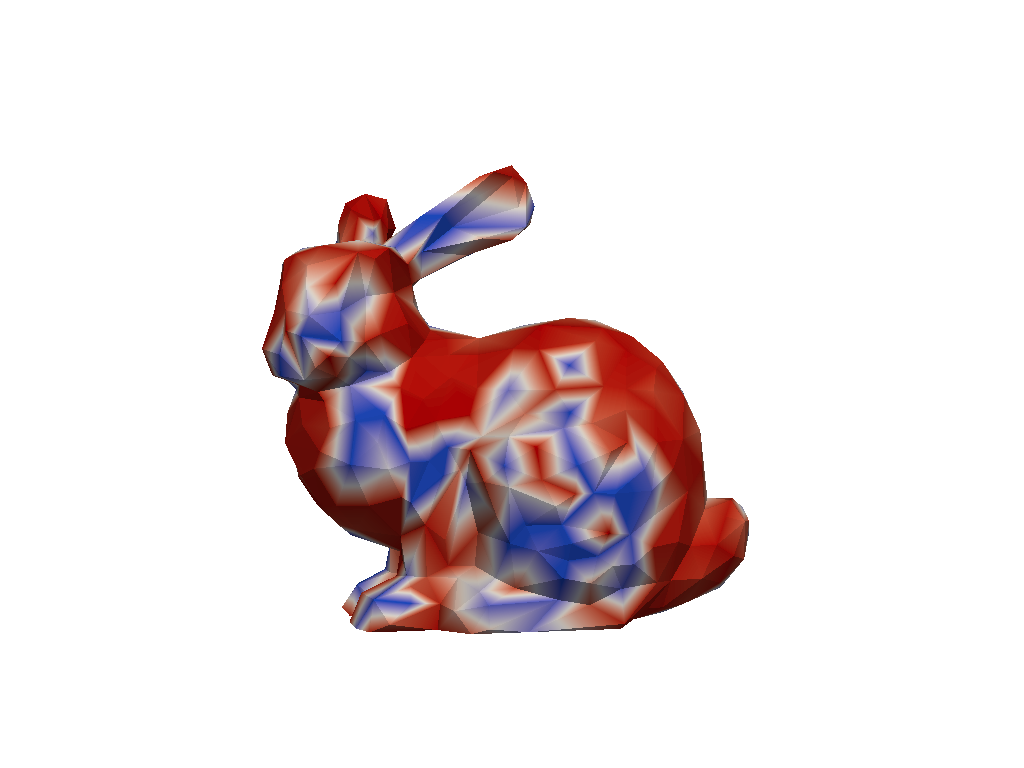} \\[0.9cm]
    Ellipsoid & Train & 1{,}942 &\cincludegraphics[width=0.18\textwidth,trim=150 160 150 160, clip]{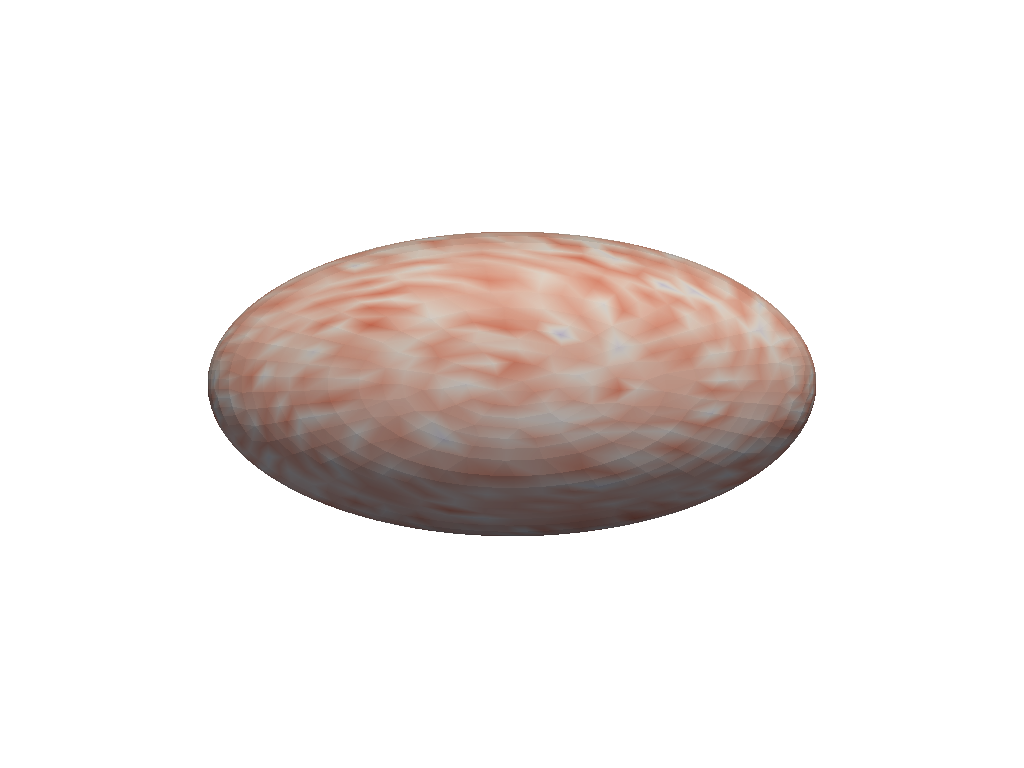} & \cincludegraphics[width=0.18\textwidth, trim=150 160 150 160, clip]{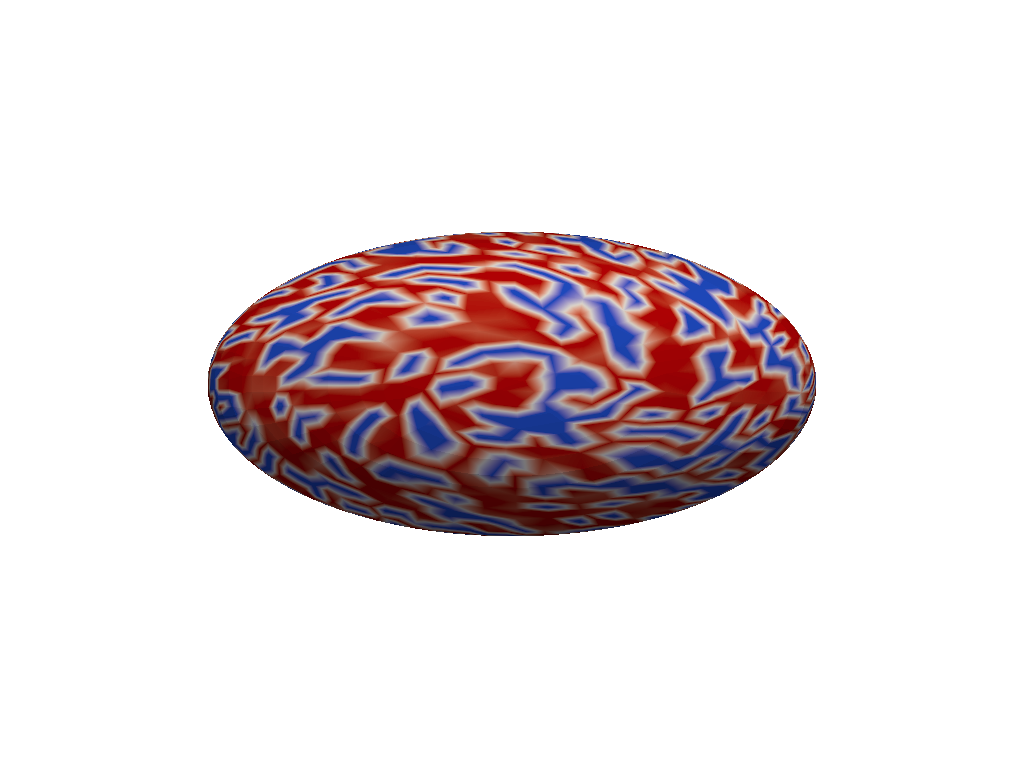} & \cincludegraphics[width=0.18\textwidth, trim=150 160 150 160, clip]{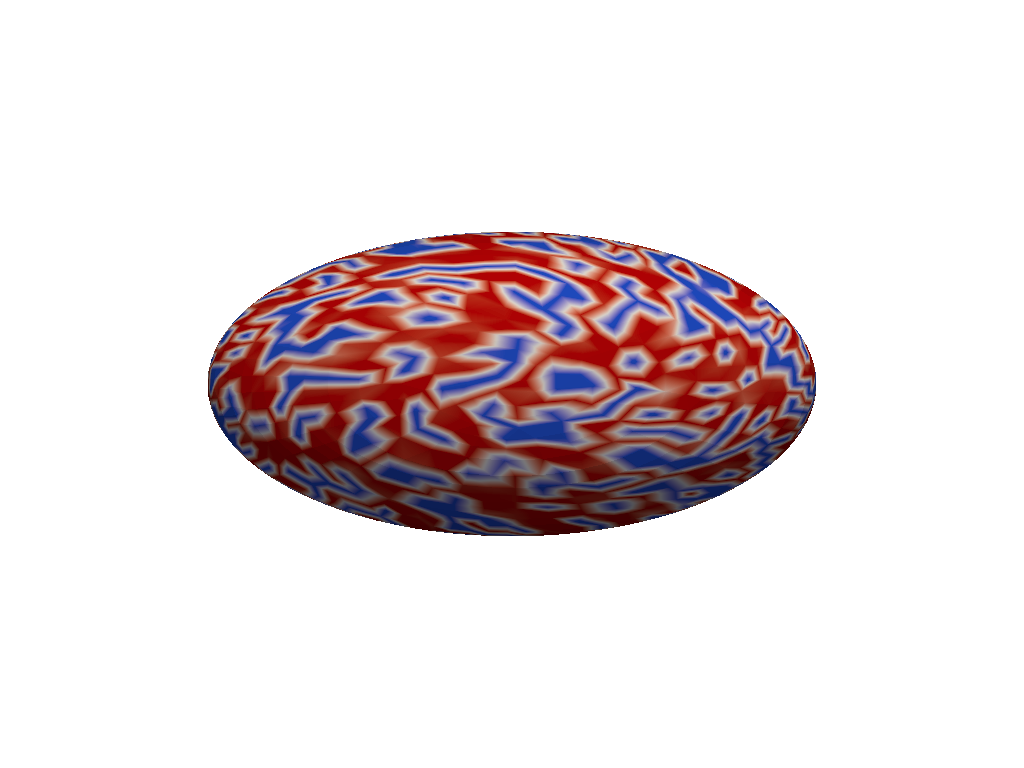} \\[0.6cm]
    Sphere & Train & 842 &\cincludegraphics[width=0.18\textwidth,trim=250 130 250 130, clip]{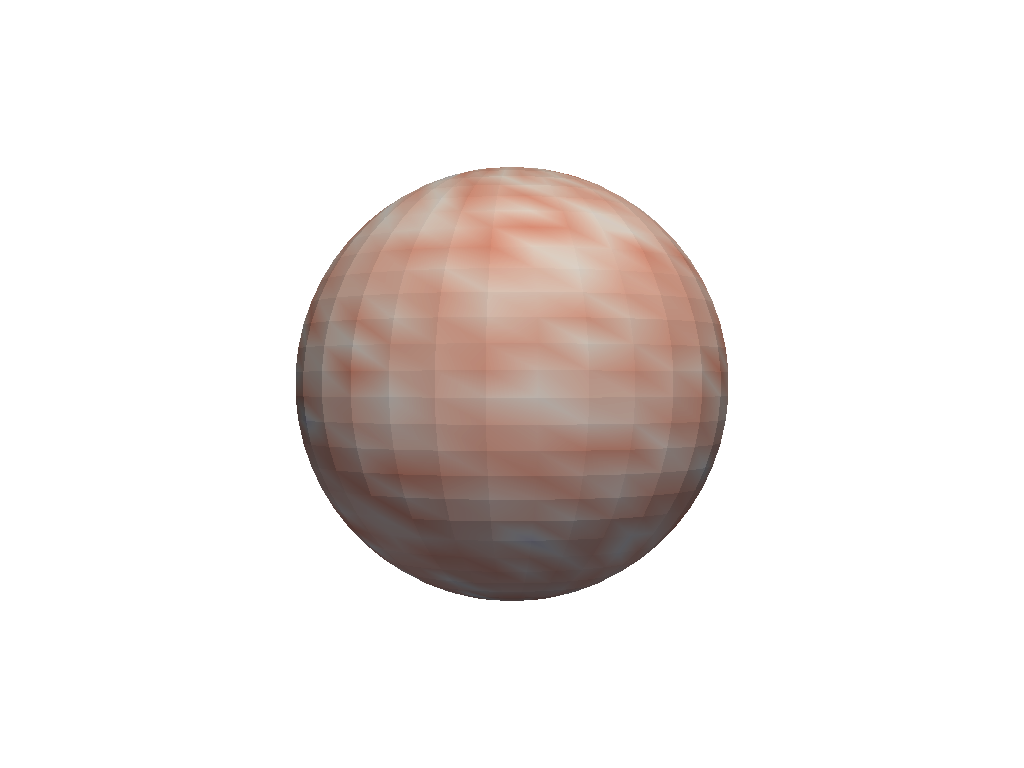} & \cincludegraphics[width=0.18\textwidth, trim=250 130 250 130, clip]{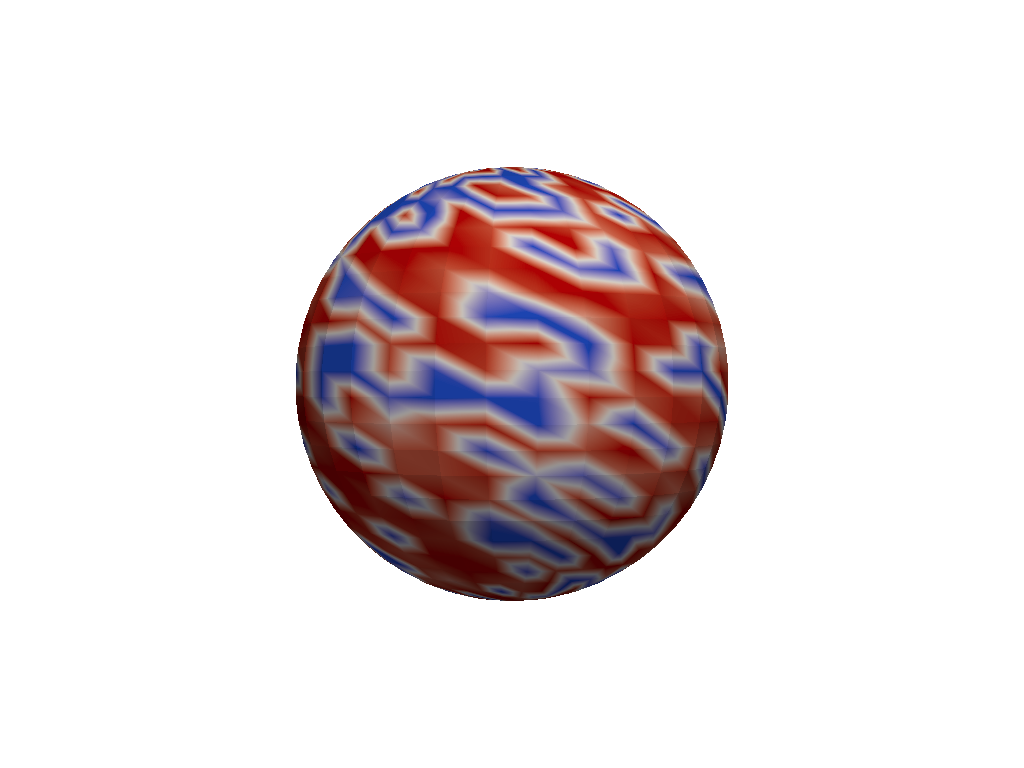} & \cincludegraphics[width=0.18\textwidth, trim=250 130 250 130, clip]{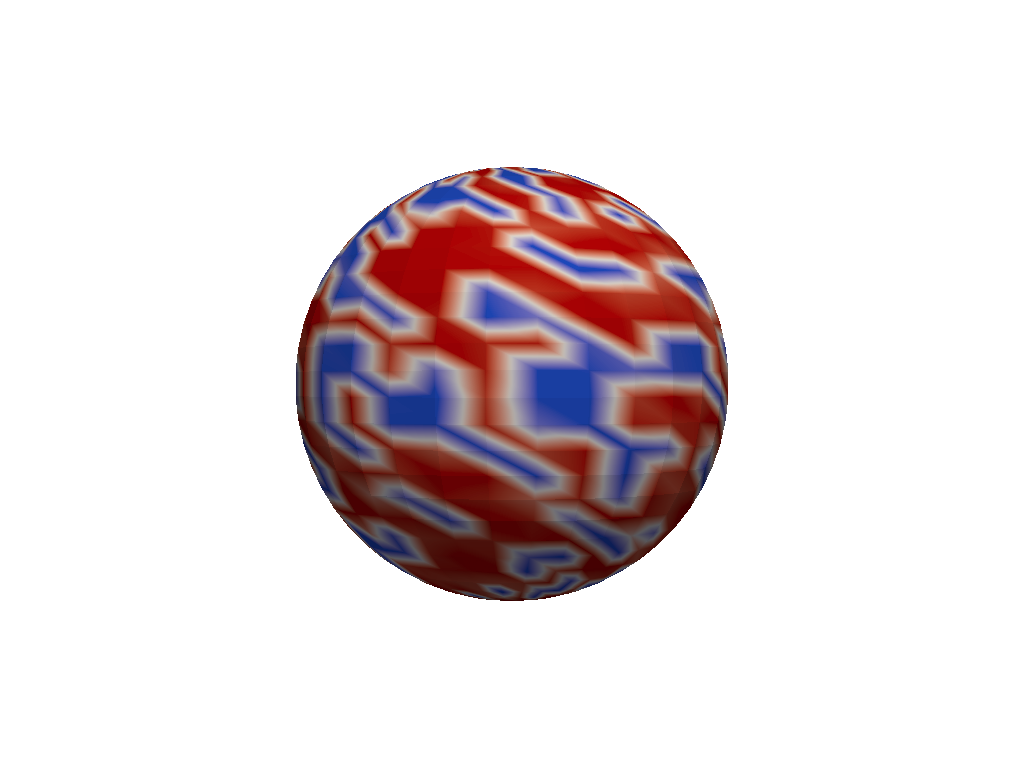} \\[0.9cm]
    Supertoroid & Test & 2{,}940 &\cincludegraphics[width=0.18\textwidth,trim=200 80 200 80, clip]{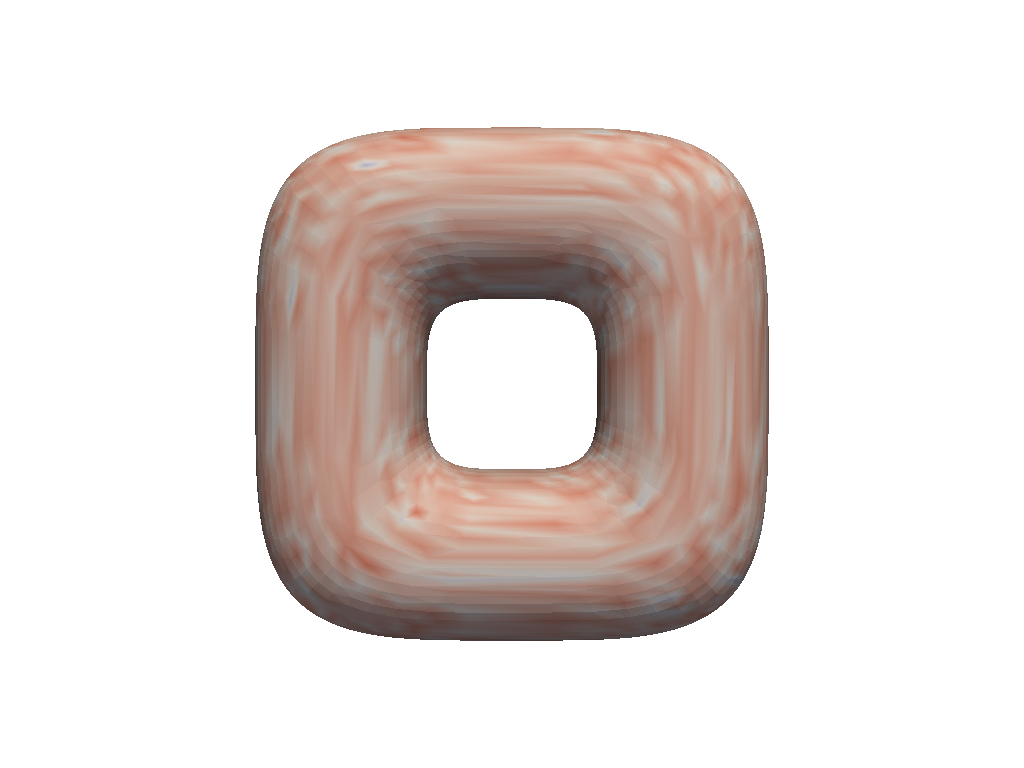} & \cincludegraphics[width=0.18\textwidth, trim=200 80 200 80, clip]{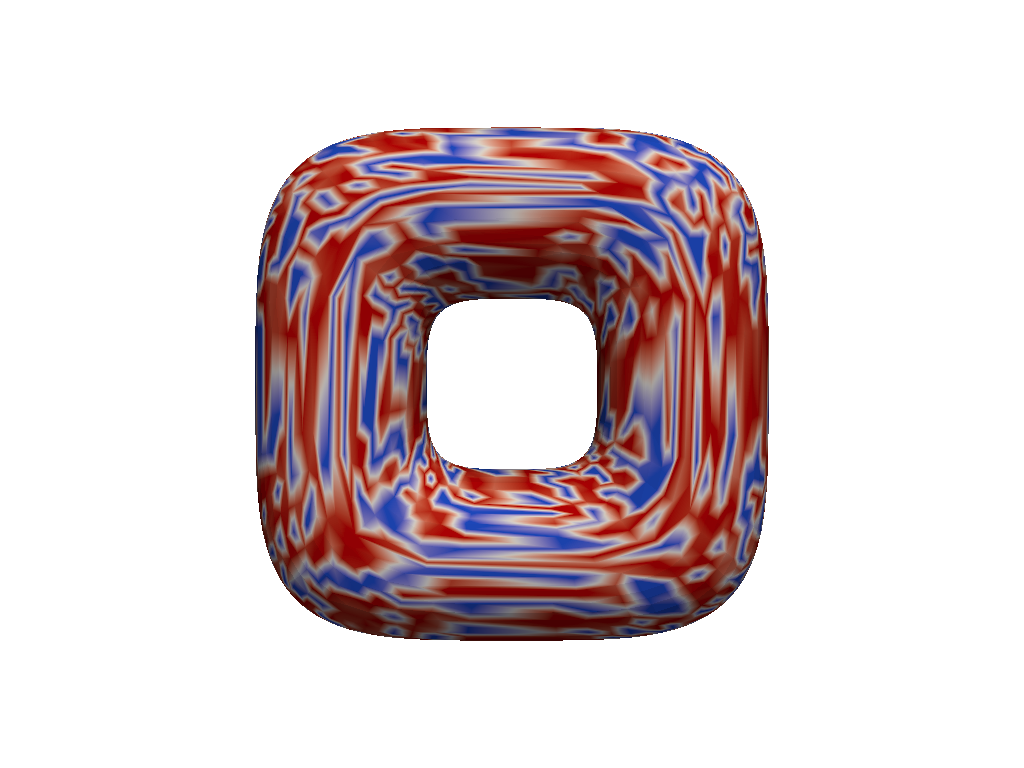} & \cincludegraphics[width=0.18\textwidth, trim=200 80 200 80, clip]{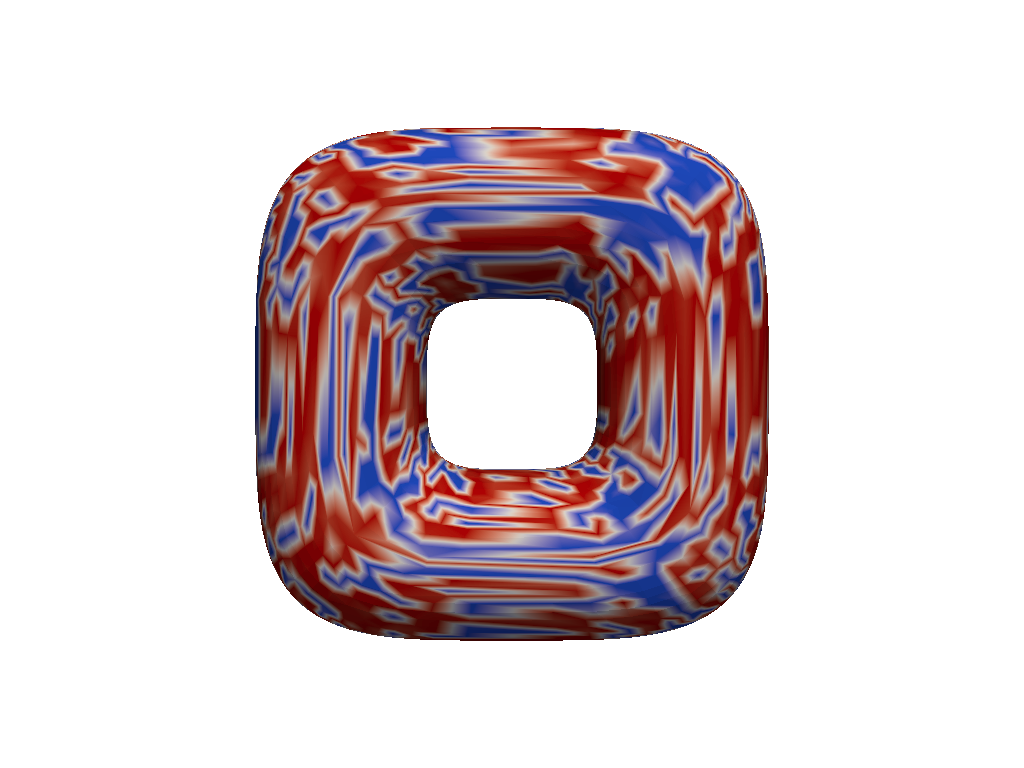} \\[0.8cm]
    \bottomrule
    \end{tabular}
    
\label{tab:ch_dataset}
\end{table}

The Cahn-Hilliard equation \citep{cahn1958free} is often represented by the following two coupled second-order equations:
\begin{align}
    \frac{\partial c}{\partial t} - M L(\mu) = 0, && \mu - \frac{df}{dc} + \lambda L(c) = 0,
\end{align}
where $c$ is the fluid concentration, $M$ is the diffusion coefficient, $\mu$ is the chemical potential, $L$ is the Laplacian, $f$ is the double-well free energy function, and $\lambda$ is a positive constant.

We use $4$ meshes from the PyVista software \citep{sullivan2019pyvista} and use $3$ for training and $1$ for testing (test mesh). As in the heat and wave datasets, we use $N=5$ samples with $T_{\text{max}}=200$ timesteps and split the training and test datasets similarly. Some meshes contained too many nodes to be computationally feasible so we simplify such meshes using the same procedure as in the heat and wave datasets. See Tables~\ref{tab:ch_dataset} for visualizations.

The Cahn-Hilliard equation was simulated using the DOLFINx software \citep{LoggWells2010} with time discretization $dt=5 \times 10^{-6}$, $M=1$, $f=100c^2(1-c^2)$, and we use the $\theta$-method with $\theta=0.5$. We randomize $\lambda$ uniformly from $[0.01, 0.02]$ for each data sample. The initial concentrations are randomly sampled from $\cN(0.6, 0.05)$ for each vertex. The solver formulates the coupled second-order PDEs in variational form, and solves it using the Newton-Krylov method \citep{kelley2003solving}. 

The task is to predict the concentration at each vertex at the next time step, given a short history of values. Both the inputs and outputs are trivial representations.

\subsection{Shape Correspondence}
The FAUST dataset \citep{Bogo:CVPR:2014} consists of 80 train and 20 test samples, where $2$ unseen human subjects are used for testing. Following previous work \cite{haan2021gauge}, the inputs are $xyz$ coordinates with representation $\rho_0$ and the network outputs trivial features, followed by linear post-processing layers. The task is shape correspondence between meshes, where for every mesh, each vertex is registered and denotes the same position on the body.

\subsection{Object interactions}
In this domain, we consider $50$ objects on a mesh with $250$ vertices, see Figure~\ref{fig:objects}. Each mesh is generated parametrically using the random hills generation procedure provided in VTK \citep{schroeder1998visualization}. For training, we generate $20$ meshes using different random seeds but the same random hills parameters, randomly initialize the object positions and orientations, and then simulate for a $100$ timesteps. At each timestep, we select the action with weighted probabilities: $10\%$ turn left, $80\%$ move forward, and $10\%$ turn right. For testing, we generate $100$ meshes with randomized hill parameters and simulate forward $20$ timesteps. We represent the current state of the system using the vertex coordinates, a one-hot embedding of object occupancy, and object orientations. The orientations are 2-dimensional vectors that represent the heading of the object on the tangent place at the current vertex, pointing towards the position of the neighboring node projected onto the tangent plane.

The task is to correctly predict the object occupancy and orientations of the next timestep, given the current state and action $f(s_t, a_t) = \hat{s}_{t+1}$. The inputs to the graph network are $\rho_0$ features, except for the orientations, which are $\rho_1$. The graph network outputs both the logits of the occupancy and the orientation as $\rho_0$ and $\rho_1$ features, respectively. As the occupancy is discrete, we add a linear post-processing layer after the graph network for only the occupancy outputs and compute the negative log likelihood given the ground truth occupancy. For orientations, we use root mean square error (RMSE) from the orientations given by the graph network.

\begin{figure}[!t]
\begin{subfigure}[t]{0.49\textwidth}
\centering
\includegraphics[width=\textwidth, trim=130 40 120 130, clip]{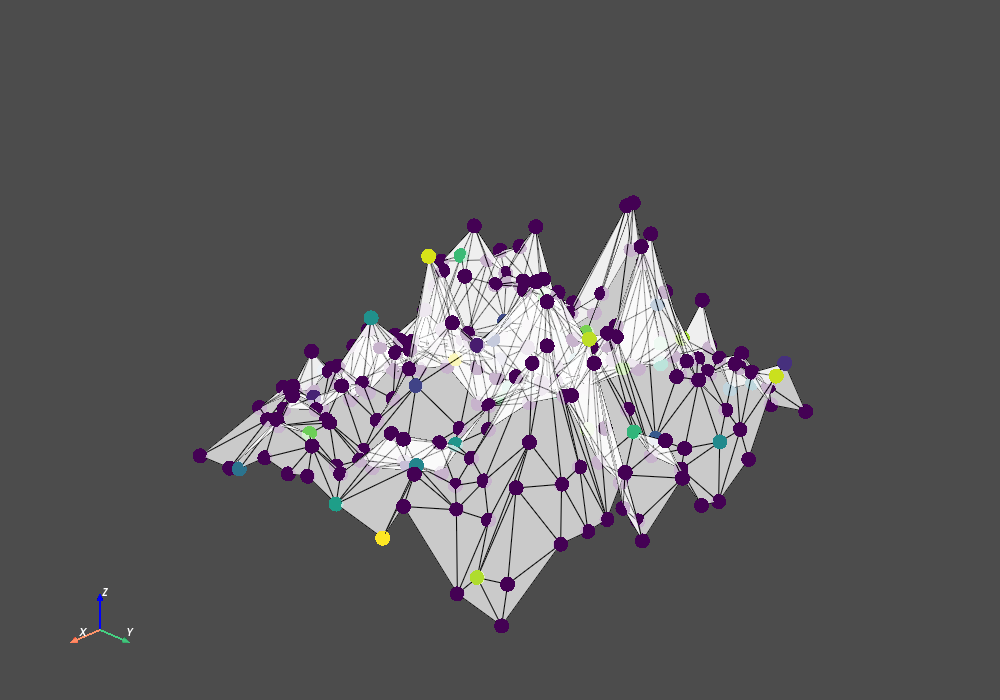}
\end{subfigure}
\begin{subfigure}[t]{0.49\textwidth}
\centering
\includegraphics[width=\textwidth, trim=130 40 120 130, clip]{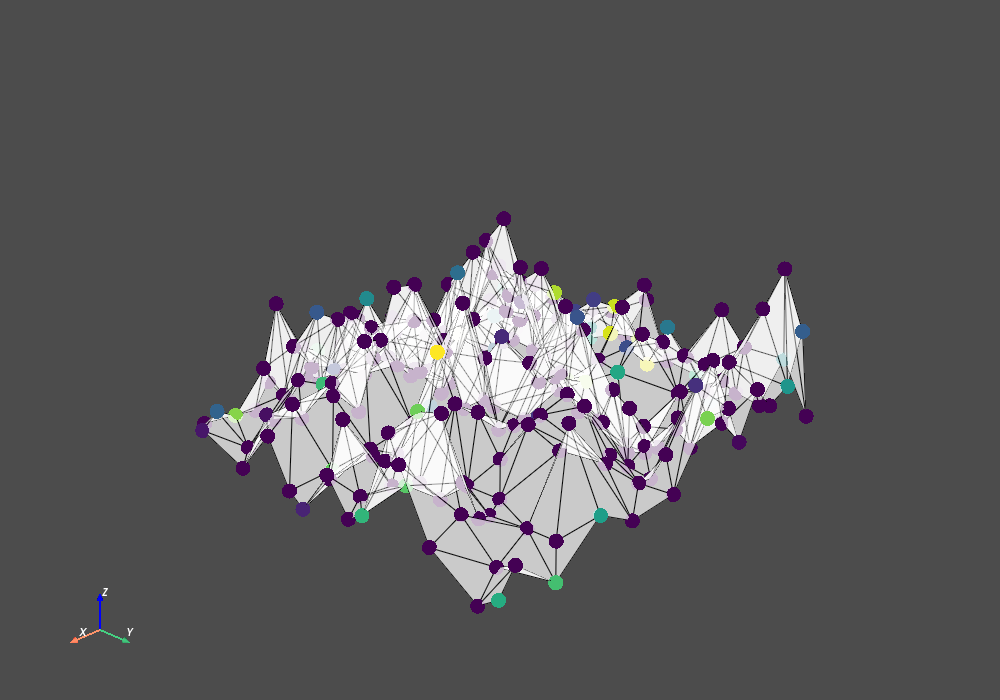}
\end{subfigure}
\caption{Objects interactions on a mesh. Purple denotes an unoccupied vertex and other colors represent different objects. Images are different random initializations of the underlying mesh and object position and orientations.}
\label{fig:objects}
\end{figure}

\section{Training Details and Baselines}
\label{app:training}
For classification tasks such as FAUST or for predicting object occupancy in the Objects dataset, post-processing linear layers are used. For FAUST, we find that training for longer ($400$ epochs vs $100$ from \cite{haan2021gauge}) benefits all models. We use cross entropy loss for FAUST, a combination of binary cross entropy and RMSE for Objects, and RMSE for all the PDE datasets. For the PDE datasets, we use truncated backpropagation through time \citep{williams1995gradient} and roll out the network predictions for $3$ timesteps using a history of $5$ timesteps.

We use the regular nonlinearity \citep{haan2021gauge} to maintain gauge equivariance, with ReLU activations and $101$ samples, for all models. We use a band limit of $4$ for the gauge-equivariant kernels. Throughout, L2 regularization of the weights with a coefficient of $\num{1e-5}$ was used. A cosine learning rate scheduler was used for the Objects and Cahn-Hilliard datasets.

\begin{table}[!t]
\centering
\caption{Details of architecture and hyperparameters}
\begin{tabular}{@{}lrrrrr@{}}
\toprule
\multicolumn{1}{c}{Dataset} & \multicolumn{1}{c}{FAUST} & \multicolumn{1}{c}{Objects} & \multicolumn{1}{c}{Heat} & \multicolumn{1}{c}{Wave} & \multicolumn{1}{c}{Cahn-Hilliard} \\ \midrule
No. blocks & 5 & 1 & 1 & 2 & 1 \\
No. layers in $\phi_e$ per block & 2 & 3 & 4 & 2 & 4 \\
No. layers in $\phi_n$ per block & 0 & 0 & 3 & 1 & 1 \\
Hidden representation & $\oplus_{i=0}^2 10\rho_i$ & $\oplus_{i=0}^2 12\rho_i$ & $\oplus_{i=0}^2 12\rho_i$ & $\oplus_{i=0}^2 12\rho_i$ & $\oplus_{i=0}^2 12\rho_i$ \\
No. post processing layers & 2 & 1 & 0 & 0 & 0 \\
Epochs & 400 & 100 & 100 & 100 & 100 \\
Batch size & 1 & 2 & 1 & 1 & 1 \\
Optimizer & \multicolumn{5}{c}{Adam} \\
Learning rate & $\num{7e-3}$ & $\num{5e-4}$ & $\num{1e-4}$ & $\num{5e-4}$ & $\num{5e-3}$ \\ \bottomrule
\end{tabular}
\label{tab:hyperparams}
\end{table}

\begin{table}[!t]
\vspace{-5mm}
\centering
\caption{Comparison table of relevant graph neural networks and their features.}
\resizebox{\columnwidth}{!}{%
\begin{tabular}{@{}l|ccc|ccc@{}}
\toprule
 & \multicolumn{3}{c|}{Graph Flavors} & \multicolumn{3}{c}{Features} \\ \midrule
Models & Convolutional & Attentional & Message Passing & E(3)-Equivariant & Gauge-Equivariant & Mesh Aware \\ \midrule
GemCNN & \checkmark &  &  & & \checkmark & \checkmark \\
EMAN &  & \checkmark &  & \begin{tabular}{@{}c@{}}\checkmark \\ (w/ RelTan features)\end{tabular} & \checkmark & \checkmark \\
Hermes &  &  & \checkmark &  & \checkmark & \checkmark \\
\midrule
GCN & \checkmark &  &  & & &  \\
MPNN &  &  & \checkmark & & &  \\
MeshGraphNet &  &  & \checkmark & & & \checkmark \\
EGNN &  &  & \checkmark & \checkmark & &  \\
SpiralNet++ &  &  & \checkmark & & & \checkmark  \\
\bottomrule
\end{tabular}
}
\label{tab:model_comparison}
\end{table}

\begin{table}[!t]
\vspace{-5mm}
\centering
\caption{Number of learnable parameters}
\resizebox{0.8\columnwidth}{!}{%
\begin{tabular}{@{}lrrrrrr@{}}
\toprule
 & FAUST & Objects & Heat & Wave & Cahn-Hilliard & Flag \\ \midrule
GemCNN & $1{,}829{,}658$ & $161{,}715$ & $40{,}337$ & $40{,}337$ & $29{,}775$ & \multicolumn{1}{c}{-} \\
EMAN & $1{,}839{,}476$ & $165{,}013$ & $40{,}093$ & $40{,}093$ & $30{,}882$ & \multicolumn{1}{c}{-} \\
Hermes & $1{,}837{,}776$ & $162{,}976$ & $40{,}544$ & $40{,}695$ & $23{,}122$ & $51{,}555$ \\
\midrule
GCN & \multicolumn{1}{c}{-} & \multicolumn{1}{c}{-} & $41{,}401$ & $41{,}401$ & $29{,}401$ & \multicolumn{1}{c}{-}\\ 
MPNN & \multicolumn{1}{c}{-} & \multicolumn{1}{c}{-} & $41{,}449$ & $41{,}449$ & $29{,}673$ & \multicolumn{1}{c}{-}\\ 
MeshGraphNet & \multicolumn{1}{c}{-} & \multicolumn{1}{c}{-} & $44{,}993$ & $44{,}993$ & $35{,}457$ & $45{,}313$\\ 
EGNN & \multicolumn{1}{c}{-} &  \multicolumn{1}{c}{-} & $50{,}433$ & $50{,}433$ & $28{,}609$ & \multicolumn{1}{c}{-}\\ 
SpiralNet++ & \multicolumn{1}{c}{-} & \multicolumn{1}{c}{-} & $63{,}841$ & $63{,}841$ & $27{,}153$ & \multicolumn{1}{c}{-} \\ 
\bottomrule
\end{tabular}
}
\vspace{-8mm}
\label{tab:num_params}
\end{table}

Table~\ref{tab:hyperparams} contains the architecture details and hyperparameters used for each domain. Table~\ref{tab:model_comparison} lists each method's flavor of message passing and its features. Table~\ref{tab:num_params} shows the number of learnable parameters used for each method and domain.

\section{Results}
\label{app:results}

\begin{table}[!b]
\vspace{-8mm}
\centering
\caption{Test results for each method using $5$ runs. Gray numbers are $95\%$ confidence intervals.}
\label{tab:other_datasets}
\begin{tabular}{lllrrrr}
\toprule
 & & \multicolumn{1}{c}{GemCNN} & \multicolumn{1}{c}{EMAN} & \multicolumn{1}{c}{Hermes} & \multicolumn{1}{c}{MeshGraphNet} \\
\midrule
\sc{FAUST} & Accuracy ($\%$) & $\textbf{99.1} {\color{gray}\scriptstyle \pm 0.1}$ & $97.5 {\color{gray}\scriptstyle \pm 0.4}$ & $97.5 {\color{gray}\scriptstyle \pm 0.2}$ & \multicolumn{1}{c}{-} \\
\midrule
\multirow{2}{*}{\sc{Objects}} & Accuracy ($\%$) & $98.8 {\color{gray}\scriptstyle \pm 0.1}$ & $96.9 {\color{gray}\scriptstyle \pm 0.5}$ & $\textbf{99.7} {\color{gray}\scriptstyle \pm 0.0}$ & \multicolumn{1}{c}{-} \\
 &  RMSE ($\times 10^{-2}$) & $6.40 {\color{gray}\scriptstyle \pm 0.1}$ & $6.43 {\color{gray}\scriptstyle \pm 0.0}$ & $\textbf{6.26} {\color{gray}\scriptstyle \pm 0.0}$ & \multicolumn{1}{c}{-} \\
 \midrule
\multirow{1}{*}{\sc{FlagSimple}} & RMSE ($\times 10^{-3}$) & 
\multicolumn{1}{c}{-} & \multicolumn{1}{c}{-} & $\textbf{5.87} {\color{gray}\scriptstyle \pm 0.0}$ &
$9.01 {\color{gray}\scriptstyle \pm 0.1}$ \\
\bottomrule
\end{tabular}
\end{table}

Table~\ref{tab:other_datasets} shows the results on FAUST, Objects, and FlagSimple datasets. On FAUST, GemCNN outperforms both EMAN and Hermes. We obtained slightly different results for EMAN than reported in the paper \citep{basu2022equivariant}. On Objects, Hermes outperforms GemCNN and EMAN, albeit slightly. On FlagSimple, Hermes considerably outperforms MeshGraphNet.

\subsection{Long-horizon prediction rollouts}
\label{app:rollout}

Table~\ref{tab:more_rollouts} shows additional qualitative samples generated autoregressively using only the initial conditions for $T+10, T+30, T+100$ timesteps. Hermes predicts large spatial patterns accurately and outperforms GemCNN and EMAN on all datasets.

\begin{table}[!b]
\vspace{-5mm}
\centering
    \caption{Qualitative predictions rolled out to $10,30,100$ timesteps using only initial conditions. }
    \begin{tabular}{p{2.7cm}lcccc}
    \toprule
     & \multicolumn{1}{c}{Timestep} & GemCNN & EMAN & Hermes & Ground Truth \\
     \midrule
    \multirow{12}{*}{\textsc{Heat}} & T+10 & \includegraphics[width=0.13\textwidth, trim=290 140 290 120, clip, margin=-1pt 0ex -1pt 0ex,valign=m]{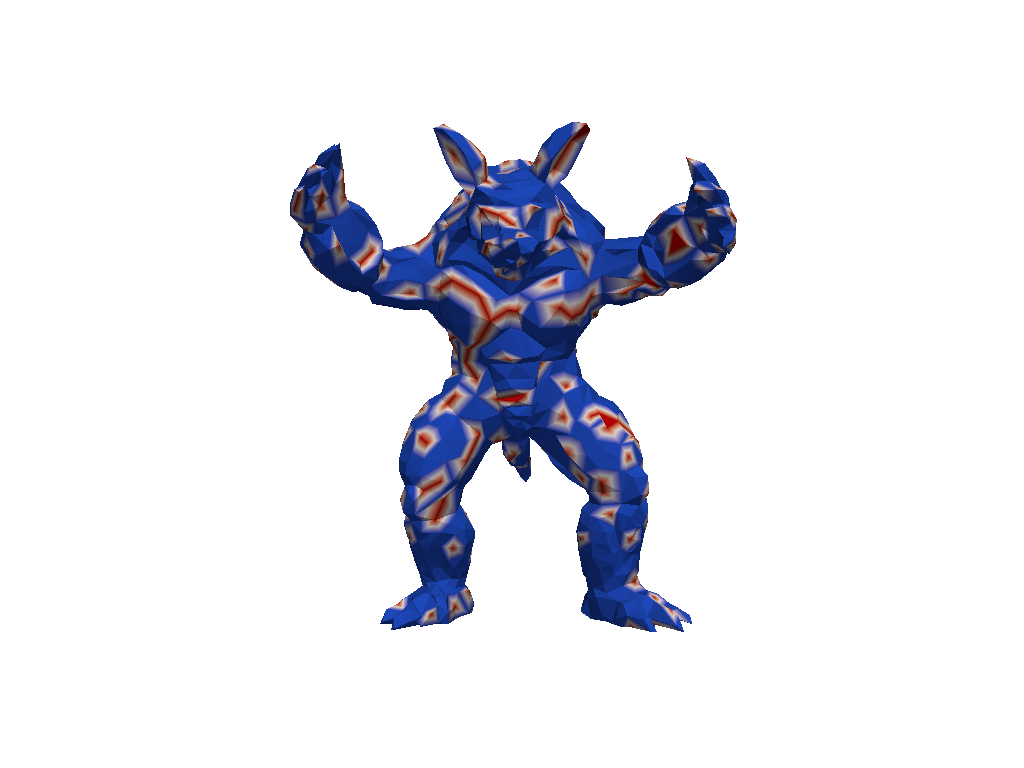} & \includegraphics[width=0.13\textwidth, trim=290 140 290 120, clip, margin=-1pt 0ex -1pt 0ex,valign=m]{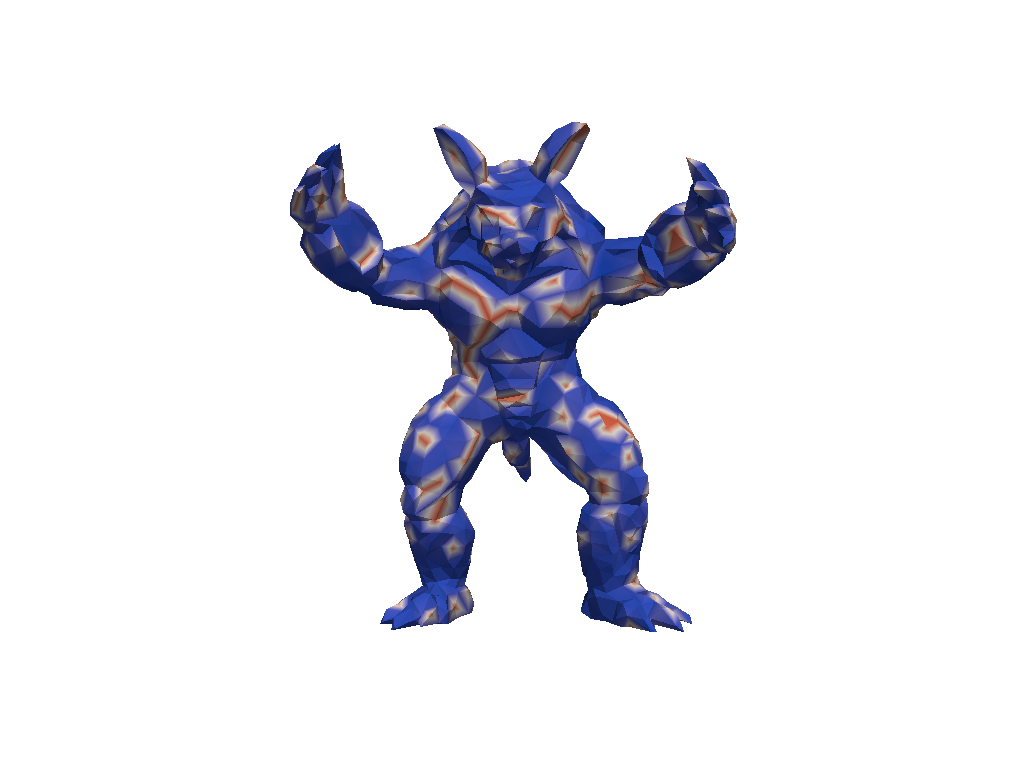} & \includegraphics[width=0.13\textwidth, trim=290 140 290 120, clip, margin=-1pt 0ex -1pt 0ex,valign=m]{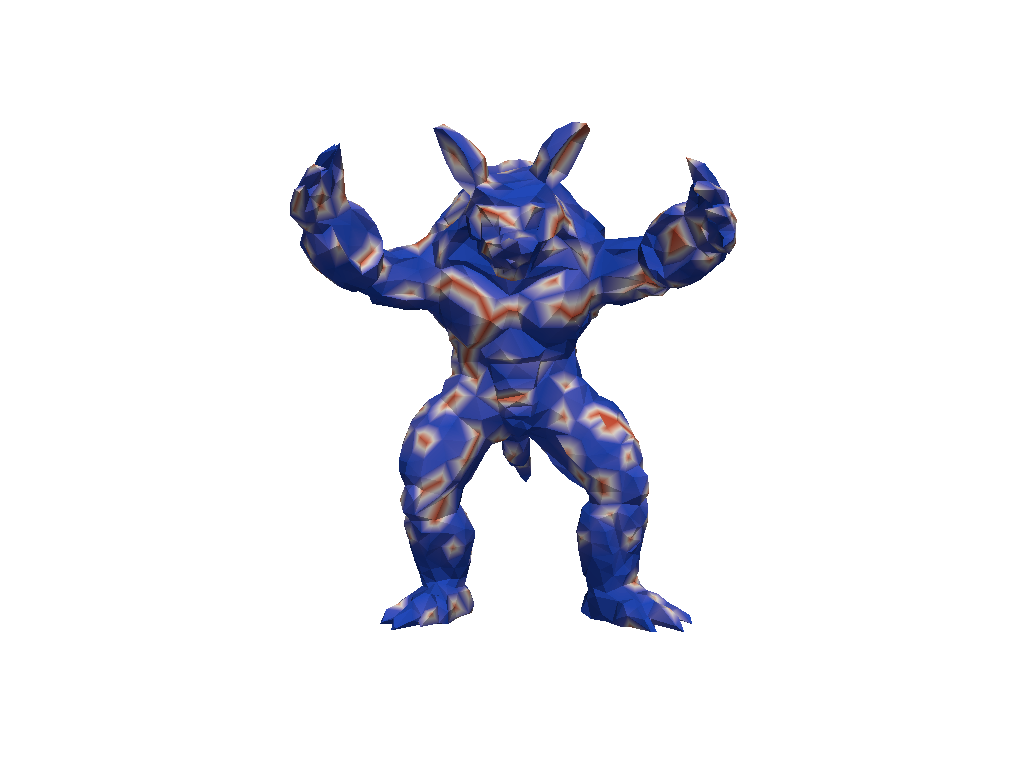} & \includegraphics[width=0.13\textwidth, trim=290 140 290 120, clip, margin=-1pt 0ex -1pt 0ex,valign=m]{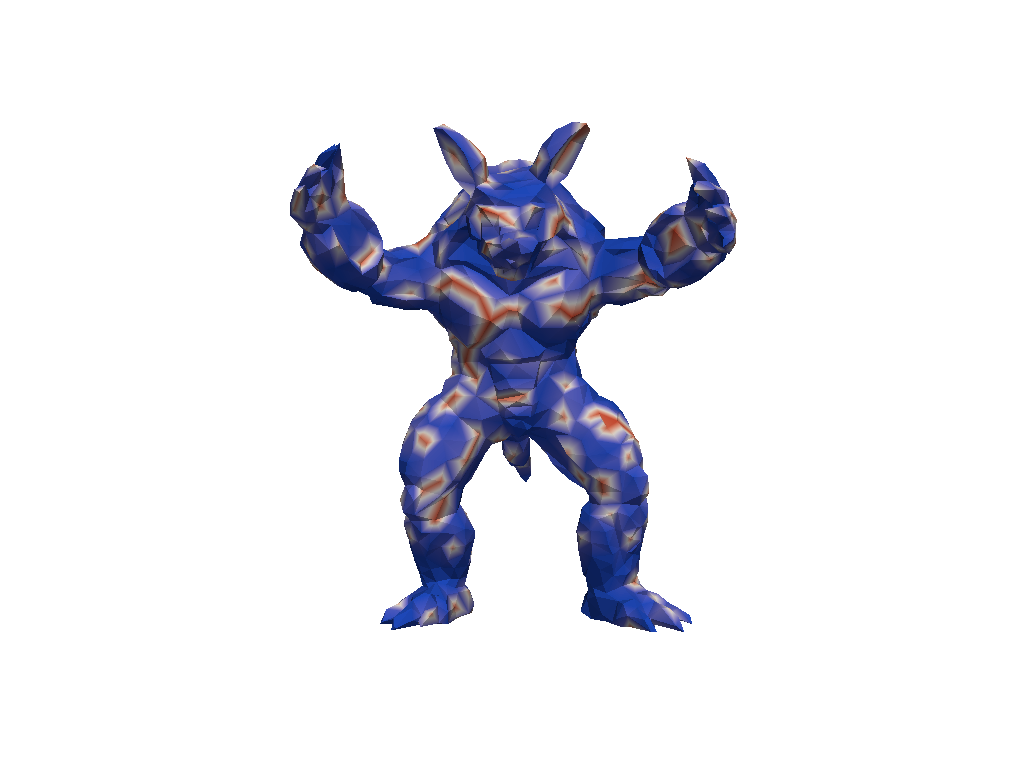} \\
     & T+30 &  \includegraphics[width=0.13\textwidth, trim=290 140 290 120, clip, margin=-1pt 0ex -1pt 0ex,valign=m]{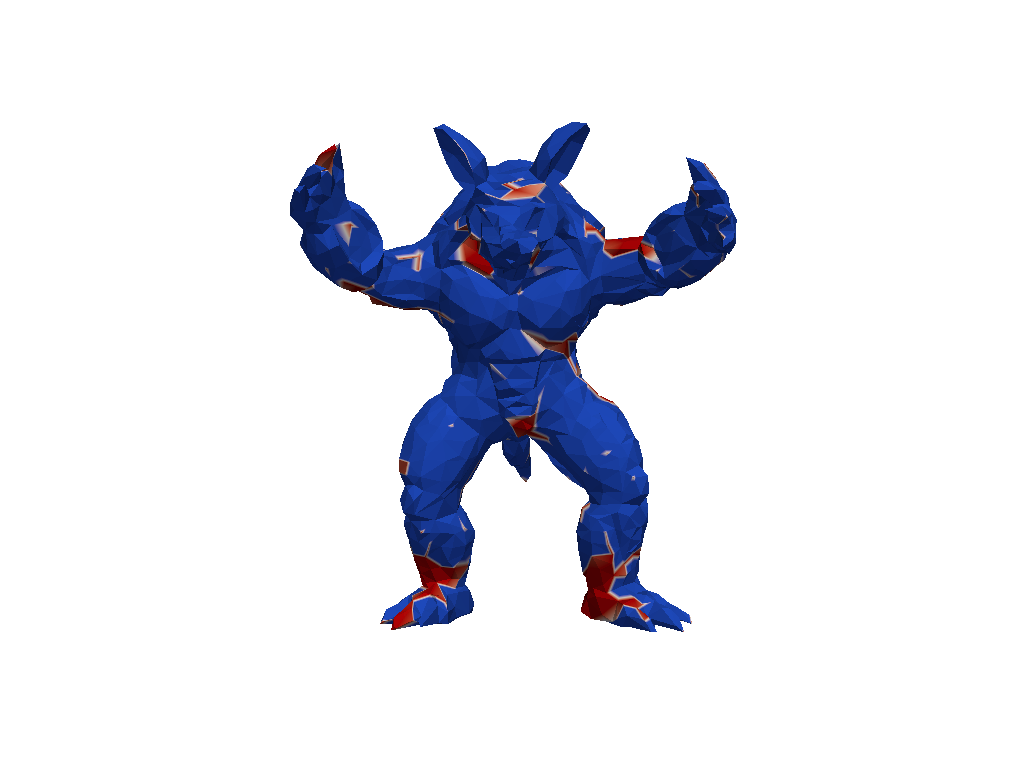} & \includegraphics[width=0.13\textwidth, trim=290 140 290 120, clip, margin=-1pt 0ex -1pt 0ex,valign=m]{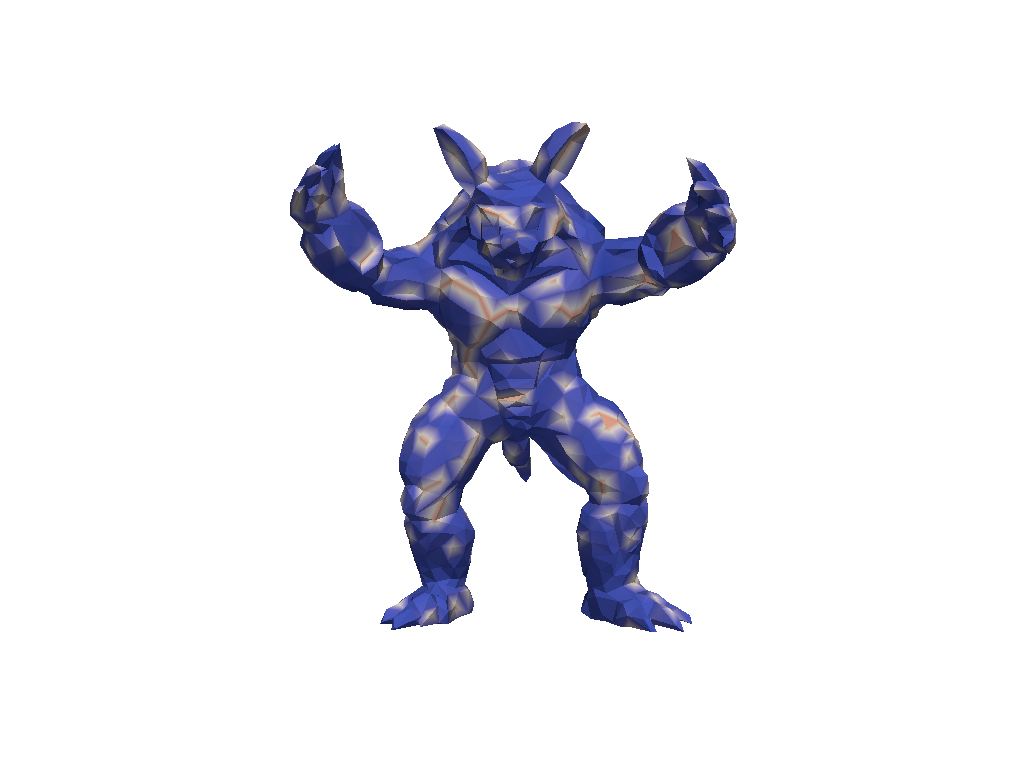} & \includegraphics[width=0.13\textwidth, trim=290 140 290 120, clip, margin=-1pt 0ex -1pt 0ex,valign=m]{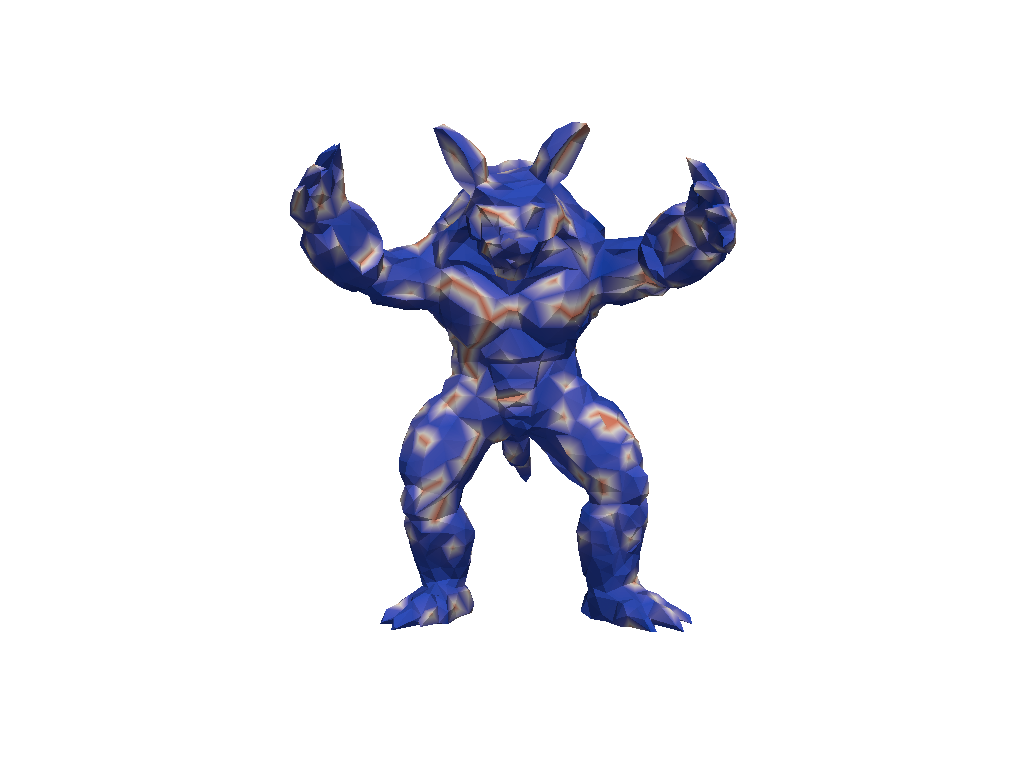} & \includegraphics[width=0.13\textwidth, trim=290 140 290 120, clip, margin=-1pt 0ex -1pt 0ex,valign=m]{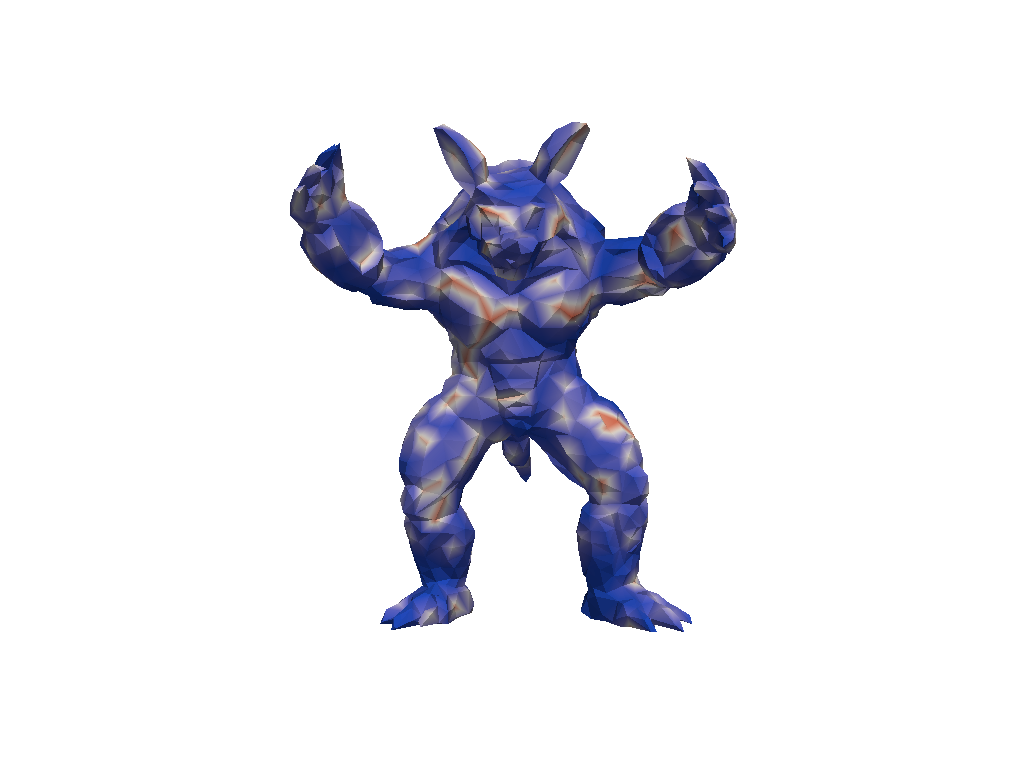} \\
     & T+100 & \includegraphics[width=0.13\textwidth, trim=290 140 290 120, clip, margin=-1pt 0ex -1pt 0ex,valign=m]{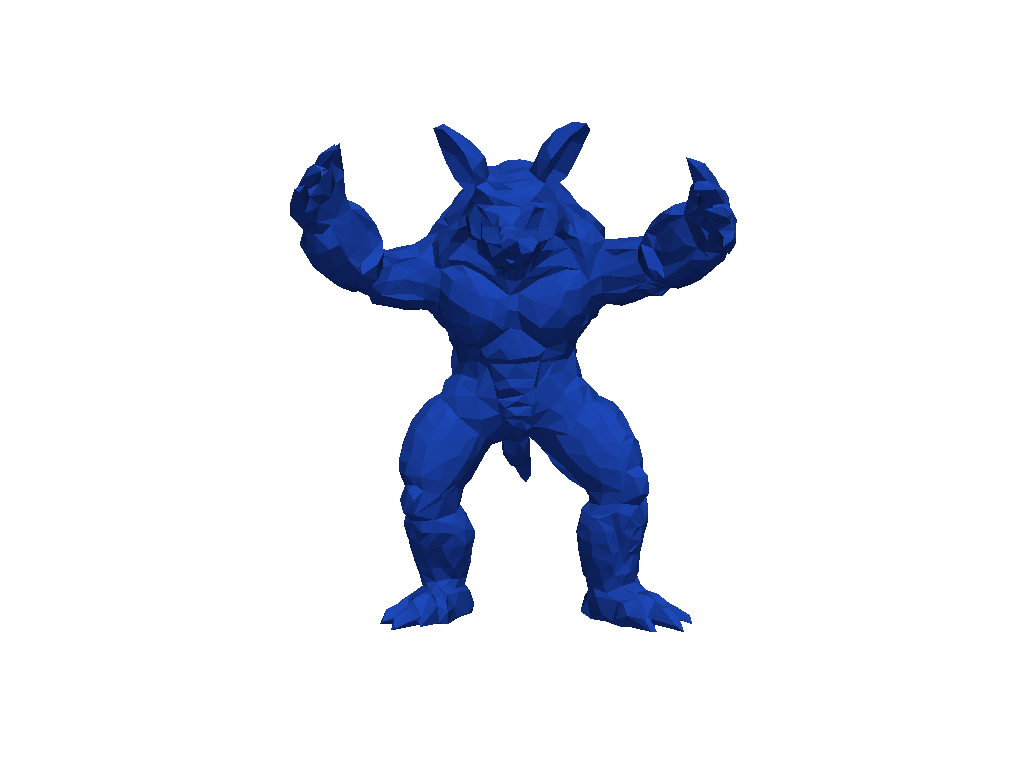} & \includegraphics[width=0.13\textwidth, trim=290 140 290 120, clip, margin=-1pt 0ex -1pt 0ex,valign=m]{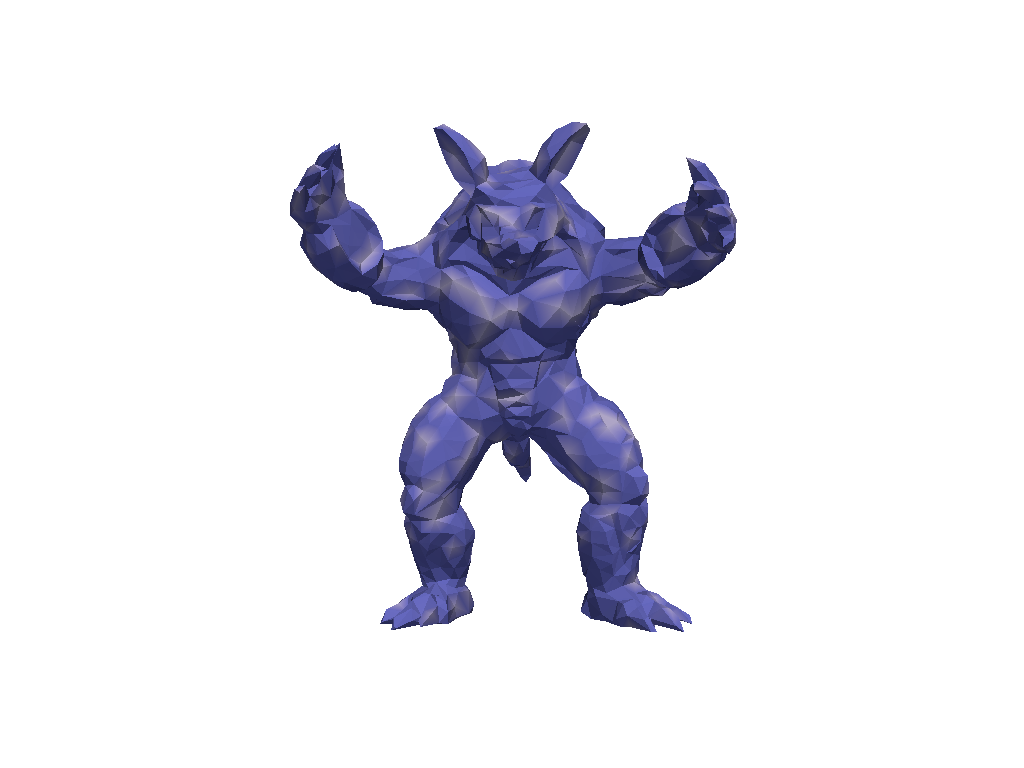} & \includegraphics[width=0.13\textwidth, trim=290 140 290 120, clip, margin=-1pt 0ex -1pt 0ex,valign=m]{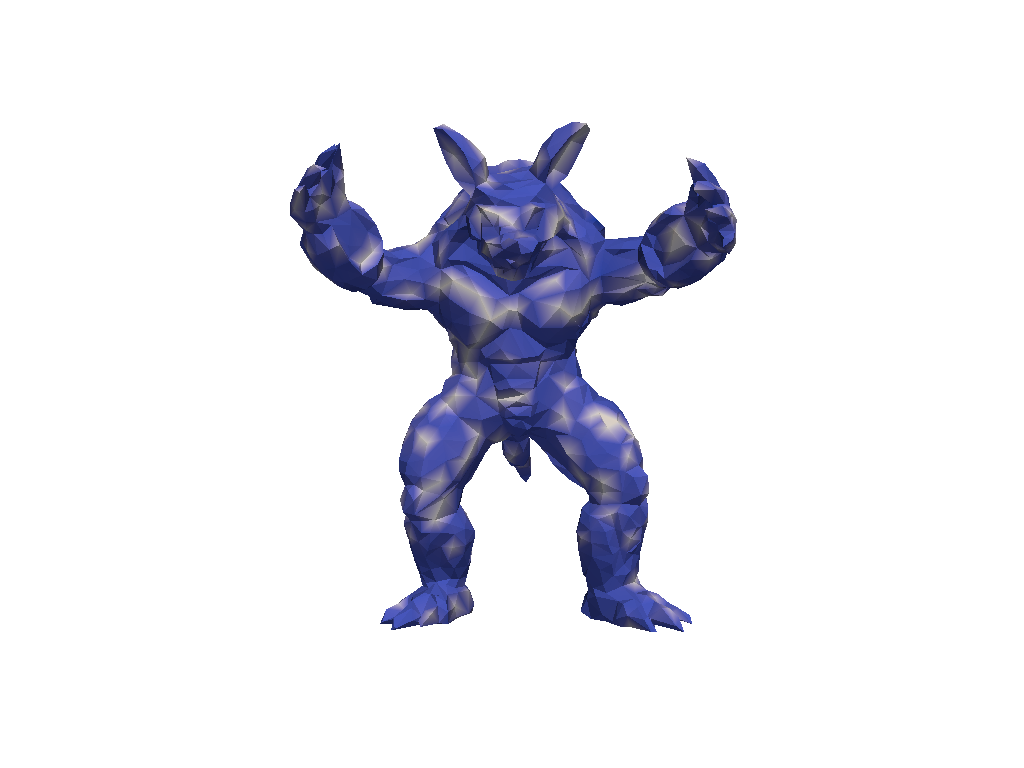} & \includegraphics[width=0.13\textwidth, trim=290 140 290 120, clip, margin=-1pt 0ex -1pt 0ex,valign=m]{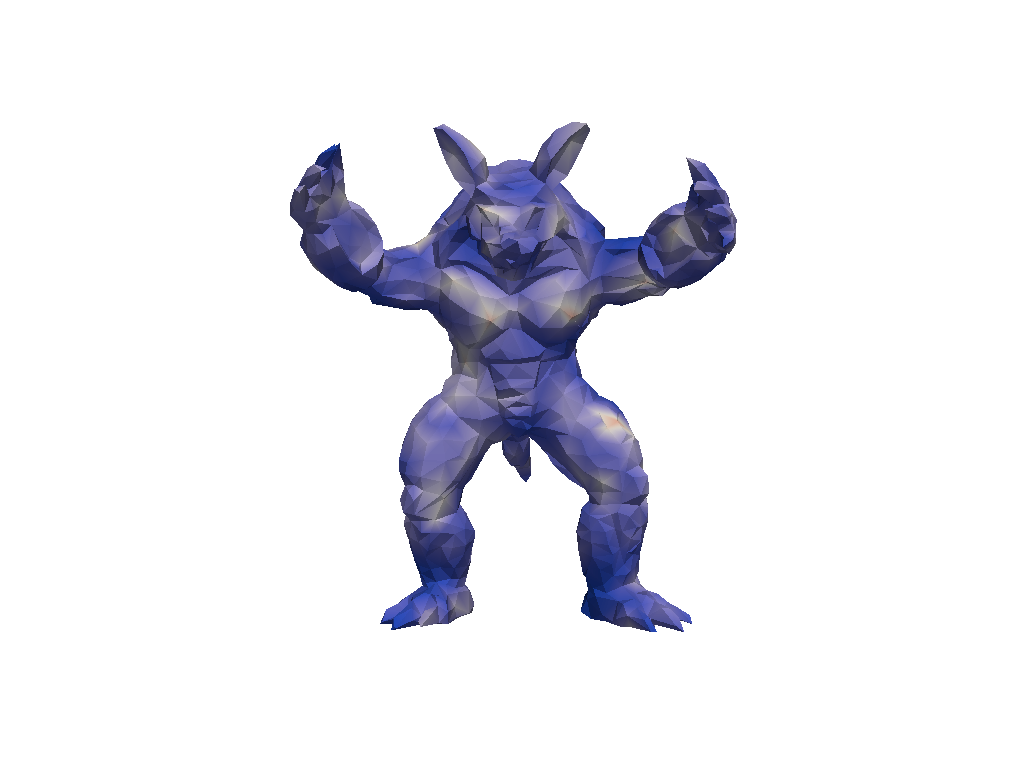} \\
    \midrule
    \multirow{12}{*}{\textsc{Wave}} & T+10 & \includegraphics[width=0.13\textwidth, trim=280 85 230 90, clip, margin=-1pt 0ex -1pt 0ex,valign=m]{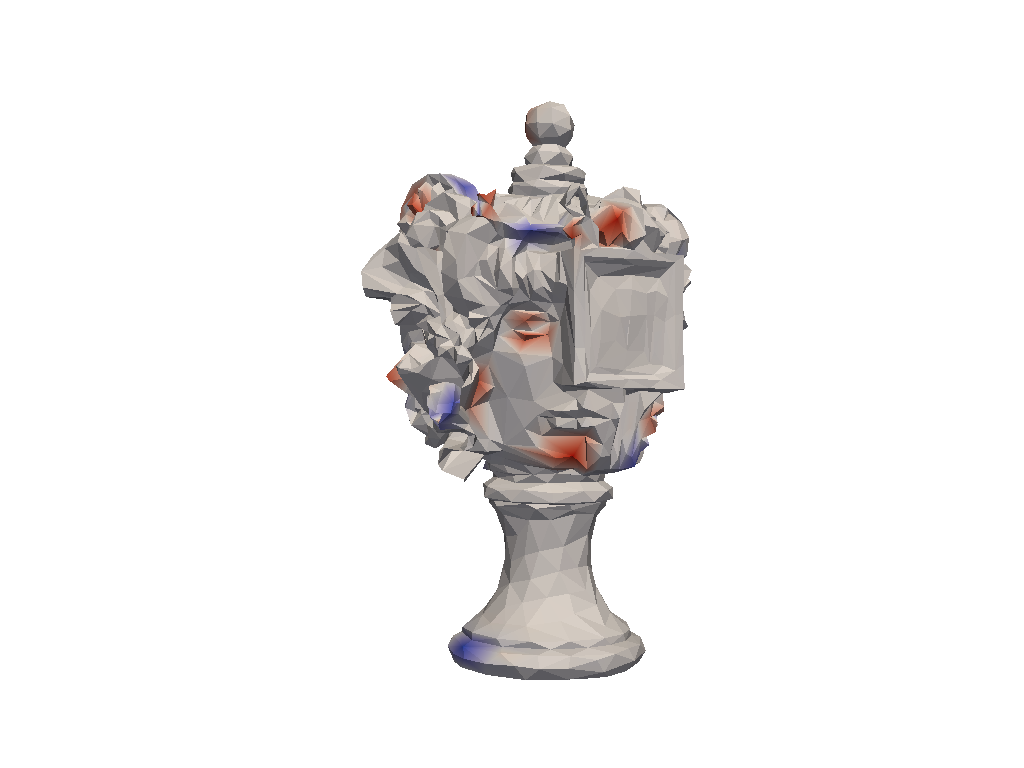} & \includegraphics[width=0.13\textwidth, trim=280 85 230 90, clip, margin=-1pt 0ex -1pt 0ex,valign=m]{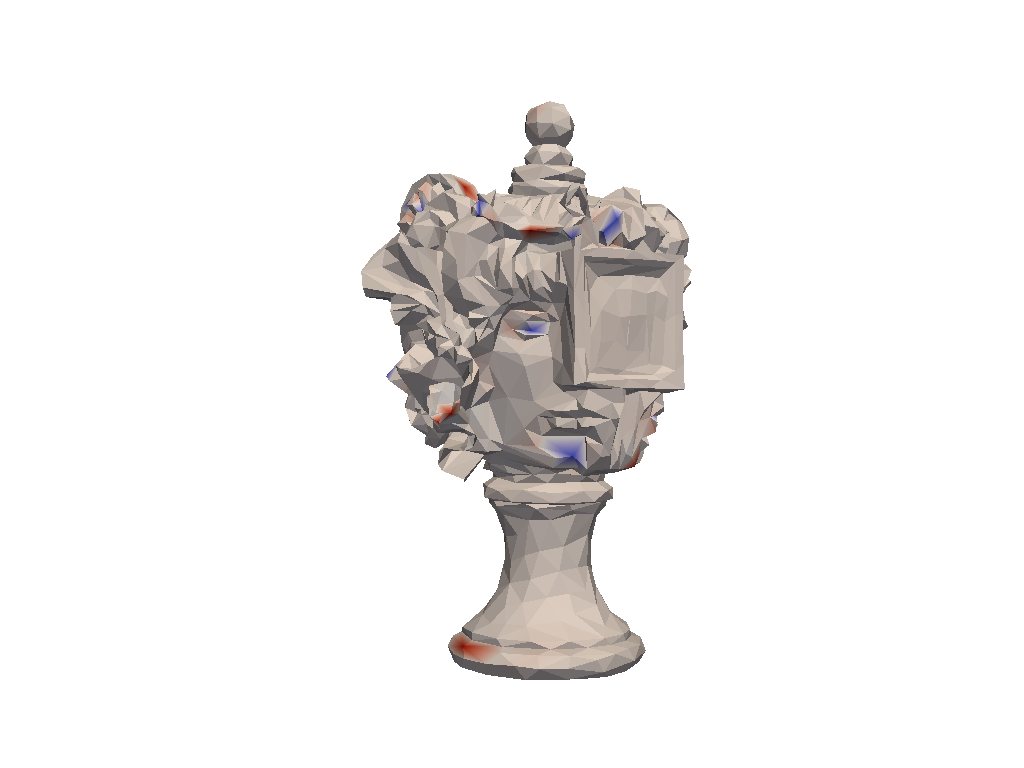} & \includegraphics[width=0.13\textwidth, trim=280 85 230 90, clip, margin=-1pt 0ex -1pt 0ex,valign=m]{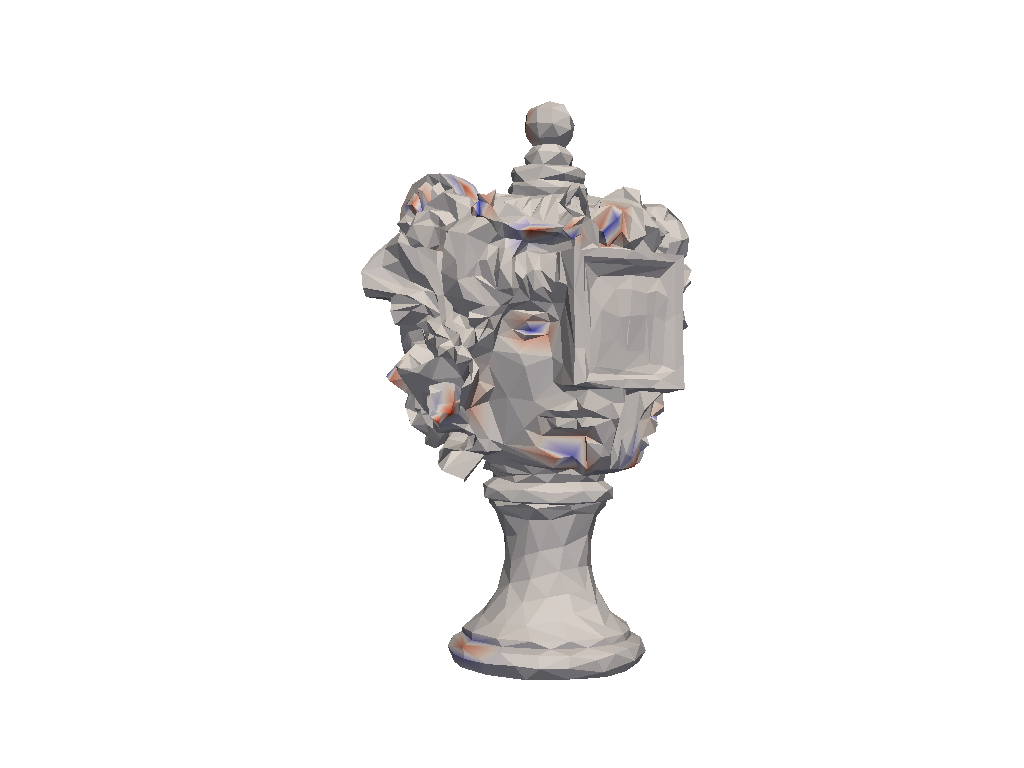} & \includegraphics[width=0.13\textwidth, trim=280 85 230 90, clip, margin=-1pt 0ex -1pt 0ex,valign=m]{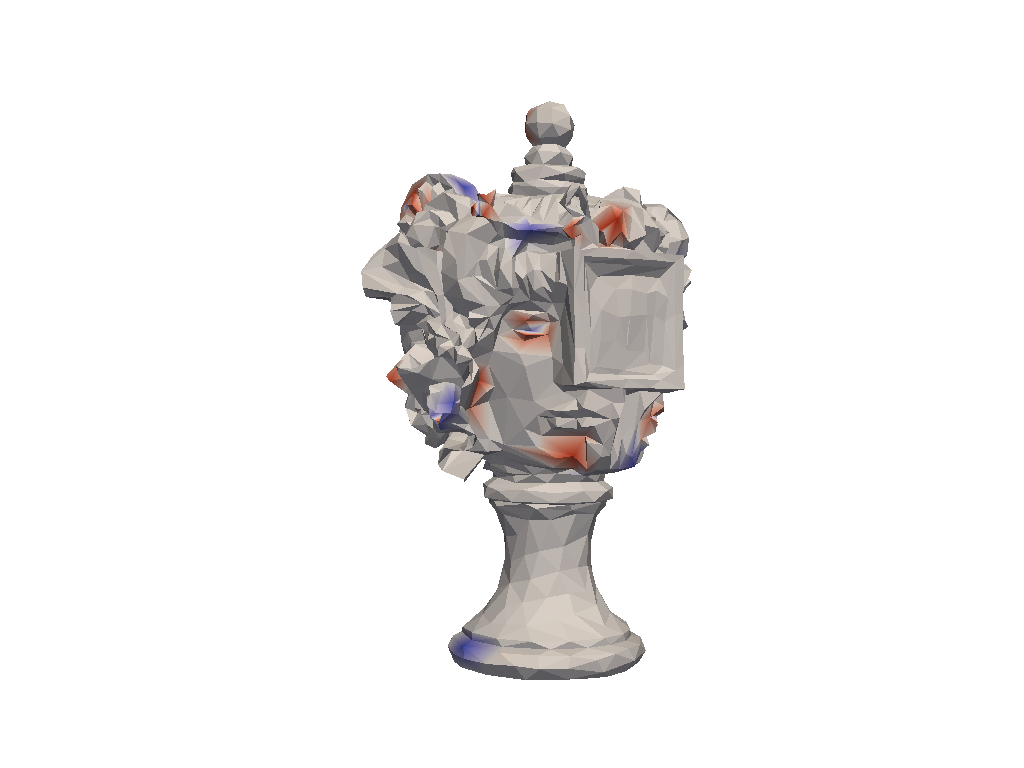} \\
     & T+30 &  \includegraphics[width=0.13\textwidth, trim=280 85 230 90, clip, margin=-1pt 0ex -1pt 0ex,valign=m]{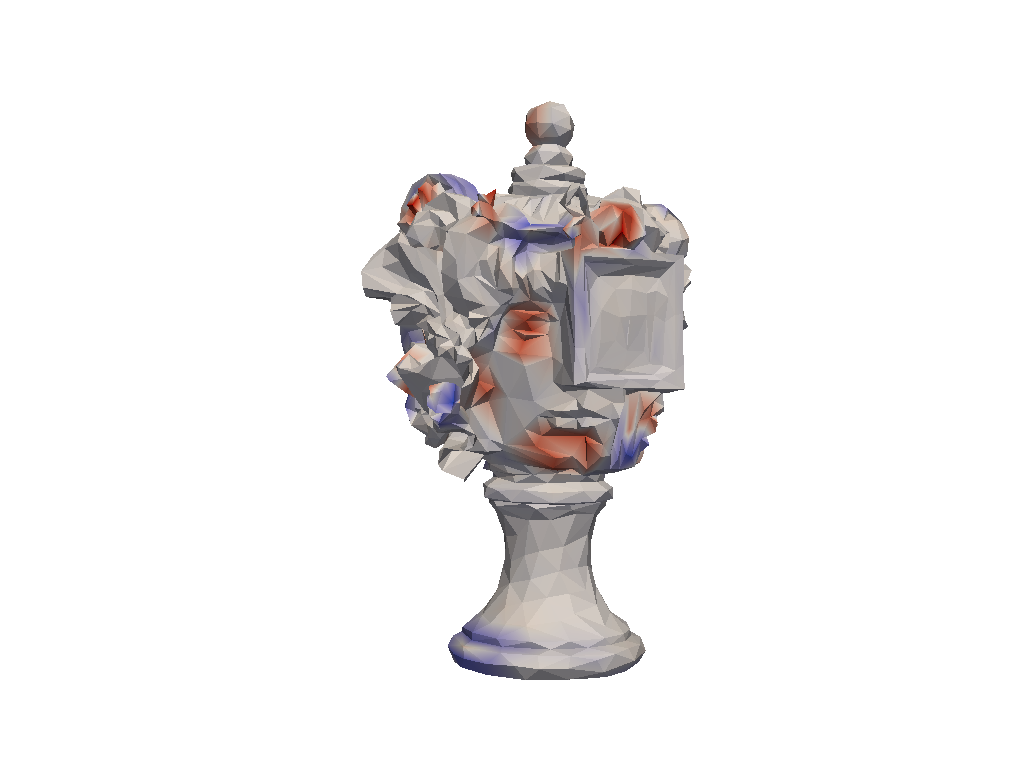} & \includegraphics[width=0.13\textwidth, trim=280 85 230 90, clip, margin=-1pt 0ex -1pt 0ex,valign=m]{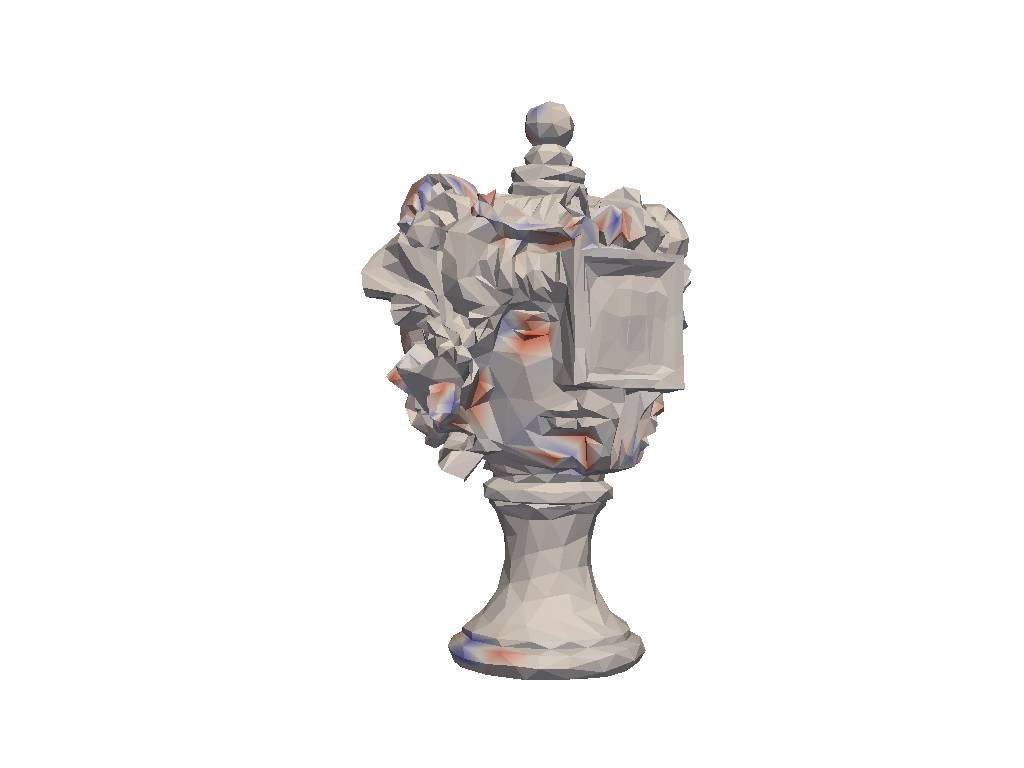} & \includegraphics[width=0.13\textwidth, trim=280 85 230 90, clip, margin=-1pt 0ex -1pt 0ex,valign=m]{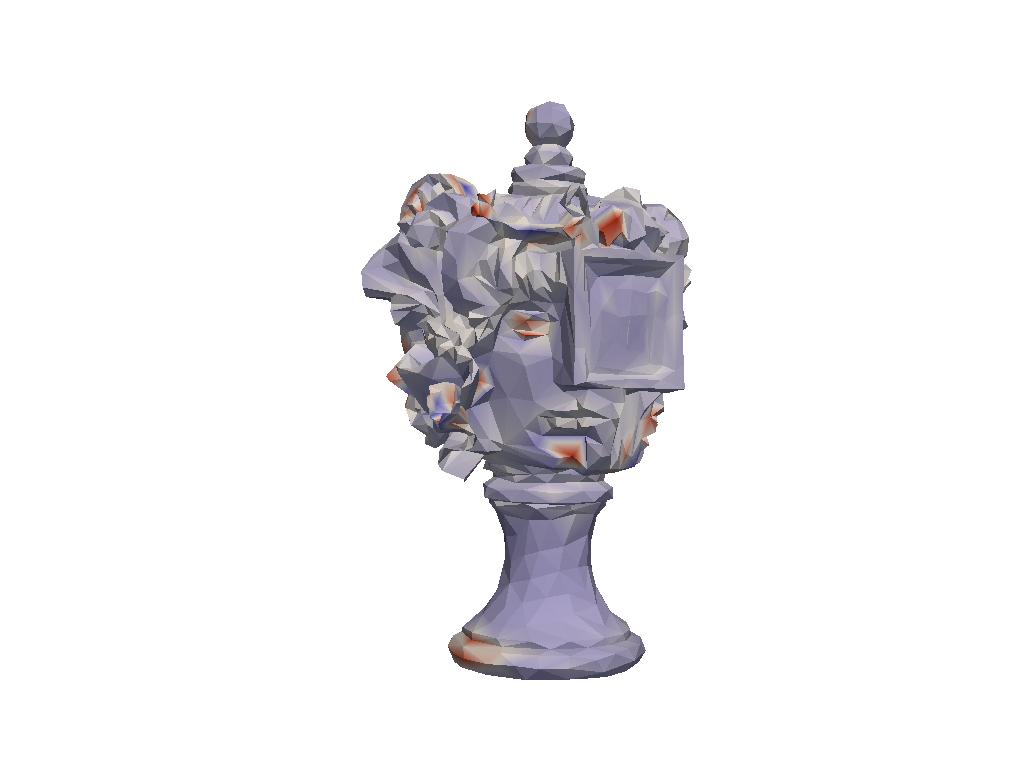} & \includegraphics[width=0.13\textwidth, trim=280 85 230 90, clip, margin=-1pt 0ex -1pt 0ex,valign=m]{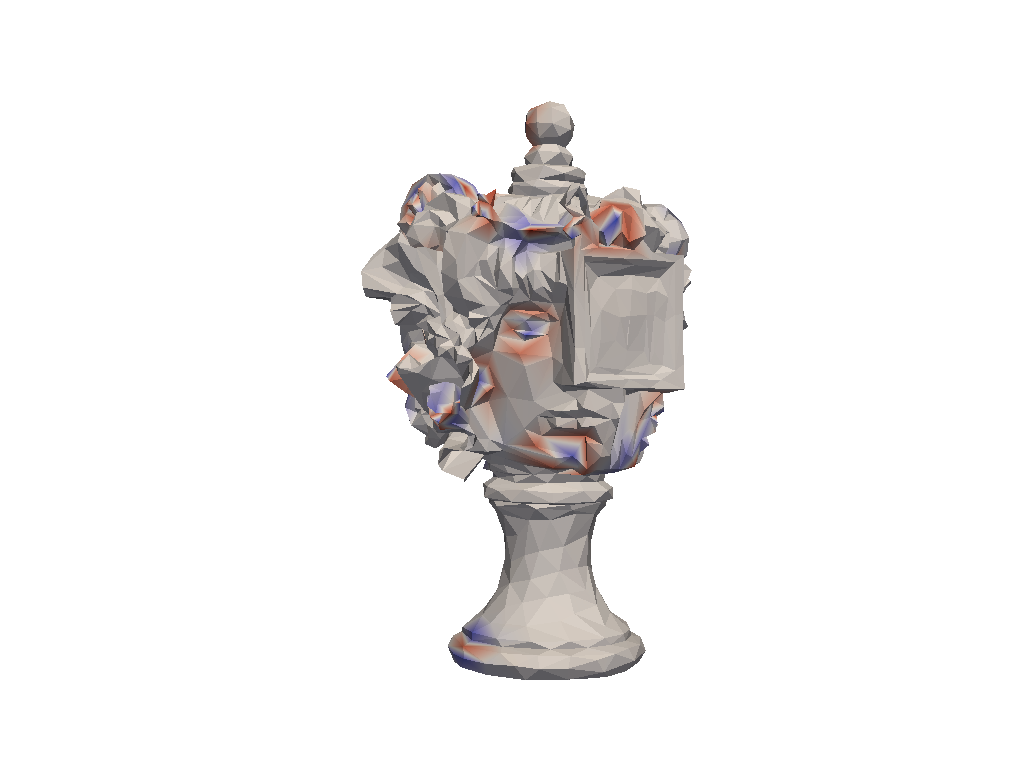} \\
     & T+100 & \includegraphics[width=0.13\textwidth, trim=280 85 230 90, clip, margin=-1pt 0ex -1pt 0ex,valign=m]{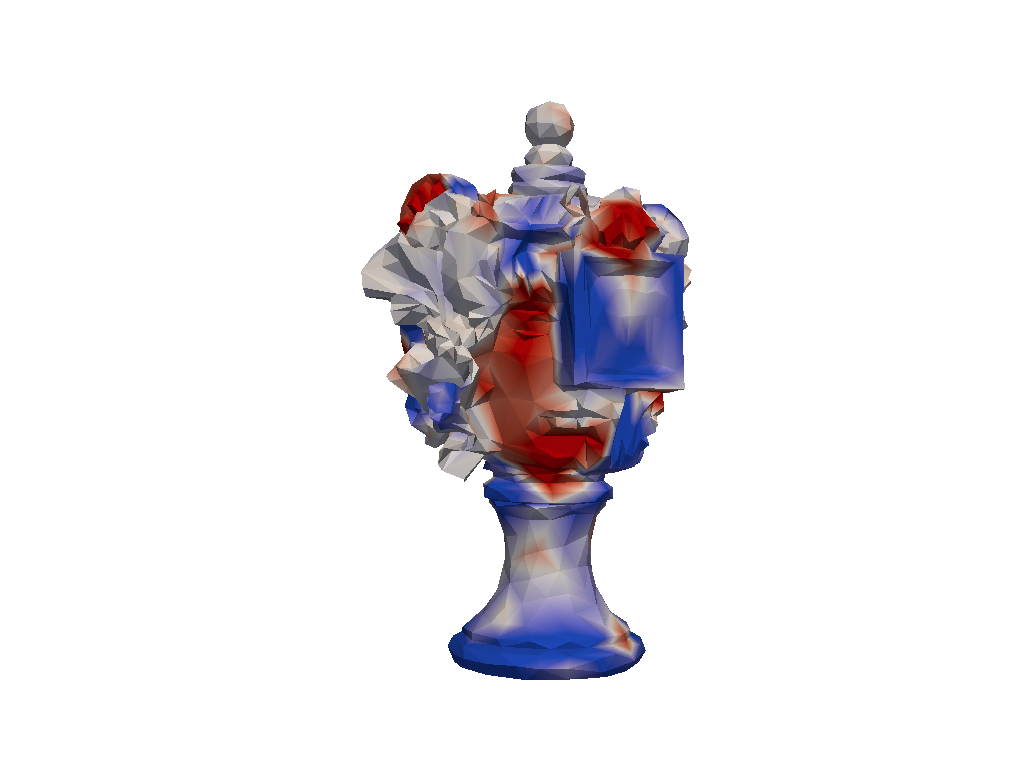} & \includegraphics[width=0.13\textwidth, trim=280 85 230 90, clip, margin=-1pt 0ex -1pt 0ex,valign=m]{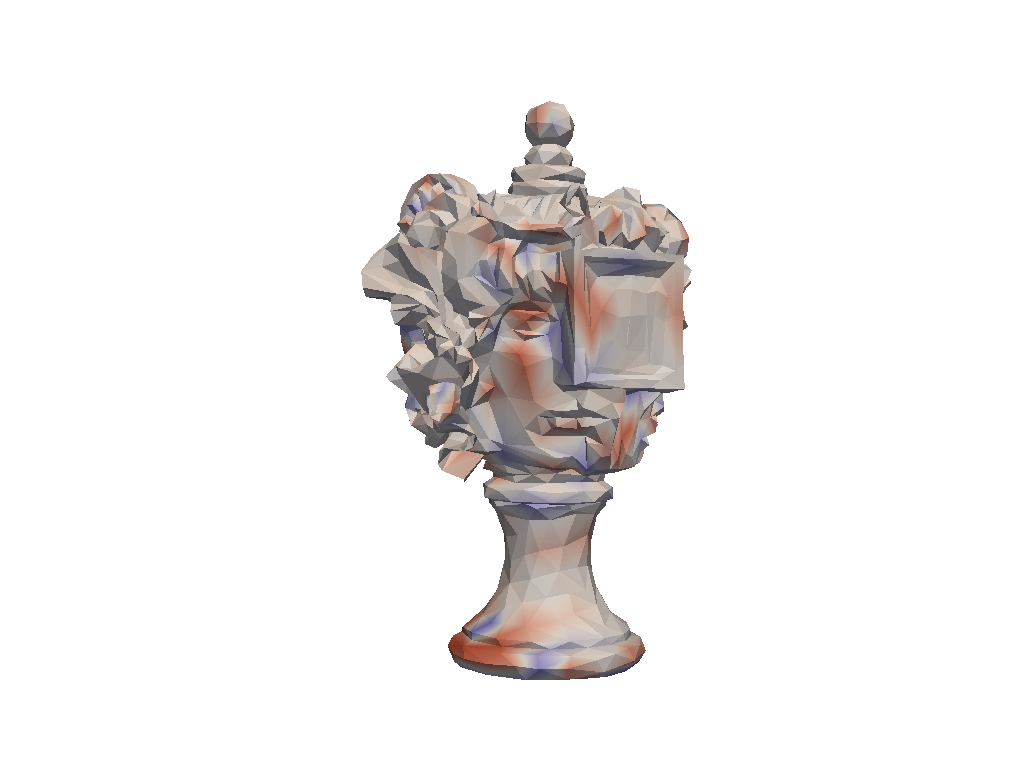} & \includegraphics[width=0.13\textwidth, trim=280 85 230 90, clip, margin=-1pt 0ex -1pt 0ex,valign=m]{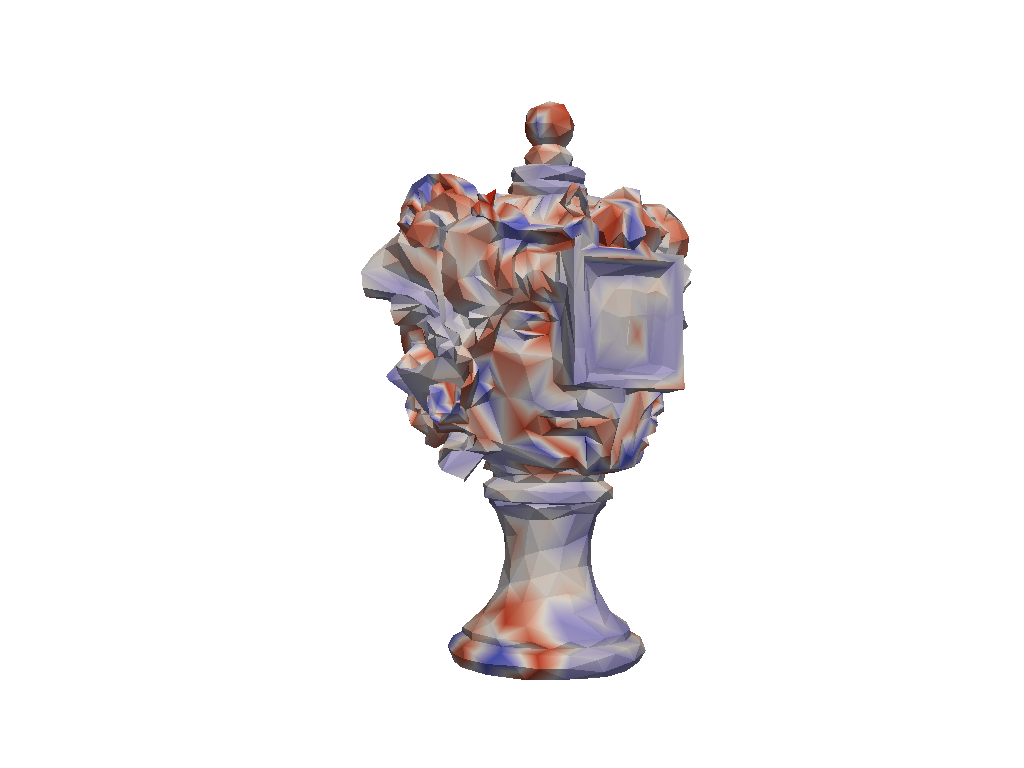} & \includegraphics[width=0.13\textwidth, trim=280 85 230 90, clip, margin=-1pt 0ex -1pt 0ex,valign=m]{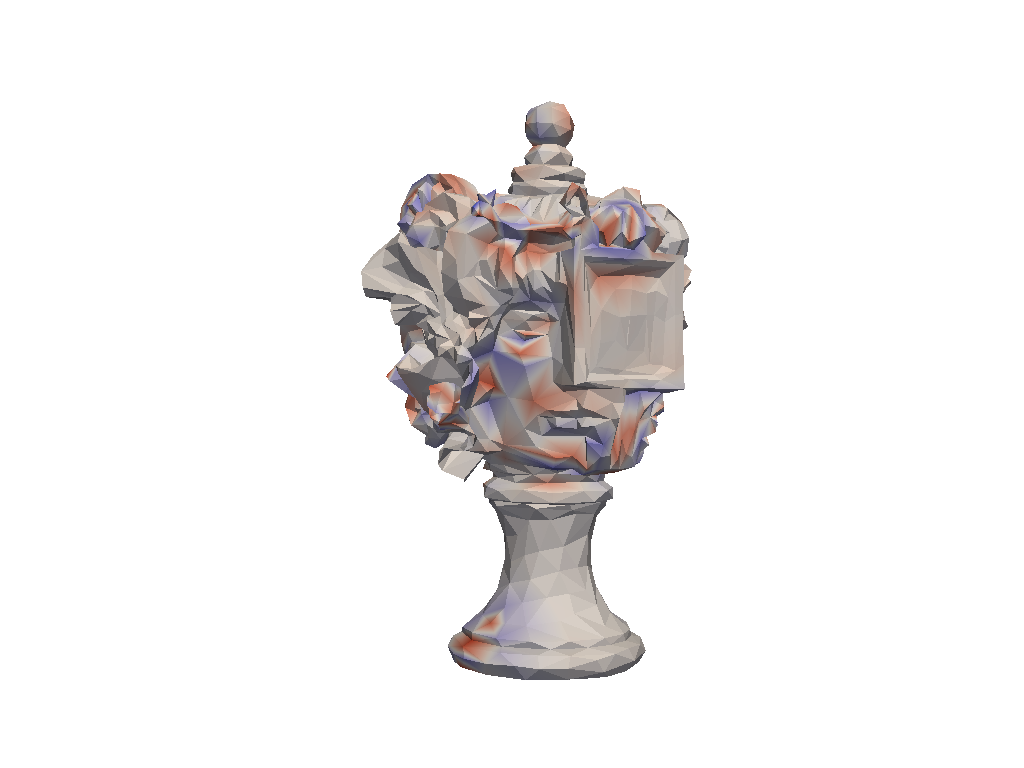} \\
      \midrule
    \multirow{11}{*}{\textsc{Cahn-Hilliard}} & T+10 & \includegraphics[width=0.13\textwidth, trim=250 110 250 110, clip, margin=-1pt 0ex -1pt 0ex,valign=m]{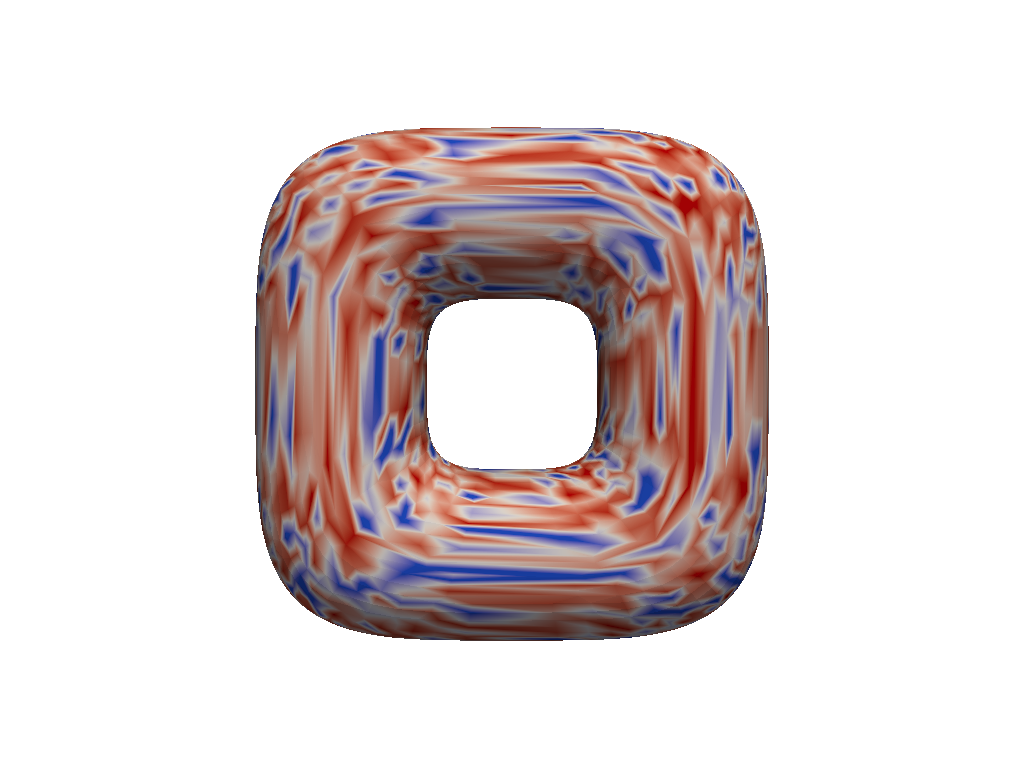} & \includegraphics[width=0.13\textwidth, trim=250 110 250 110, clip, margin=-1pt 0ex -1pt 0ex,valign=m]{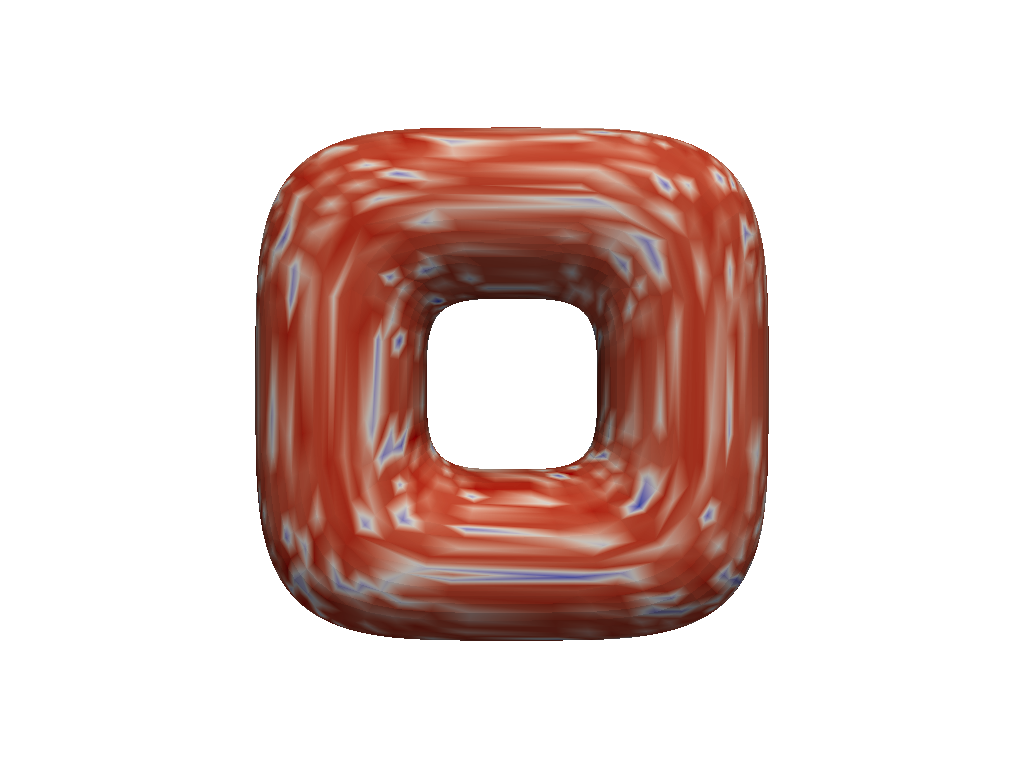} & \includegraphics[width=0.13\textwidth, trim=250 110 250 110, clip, margin=-1pt 0ex -1pt 0ex,valign=m]{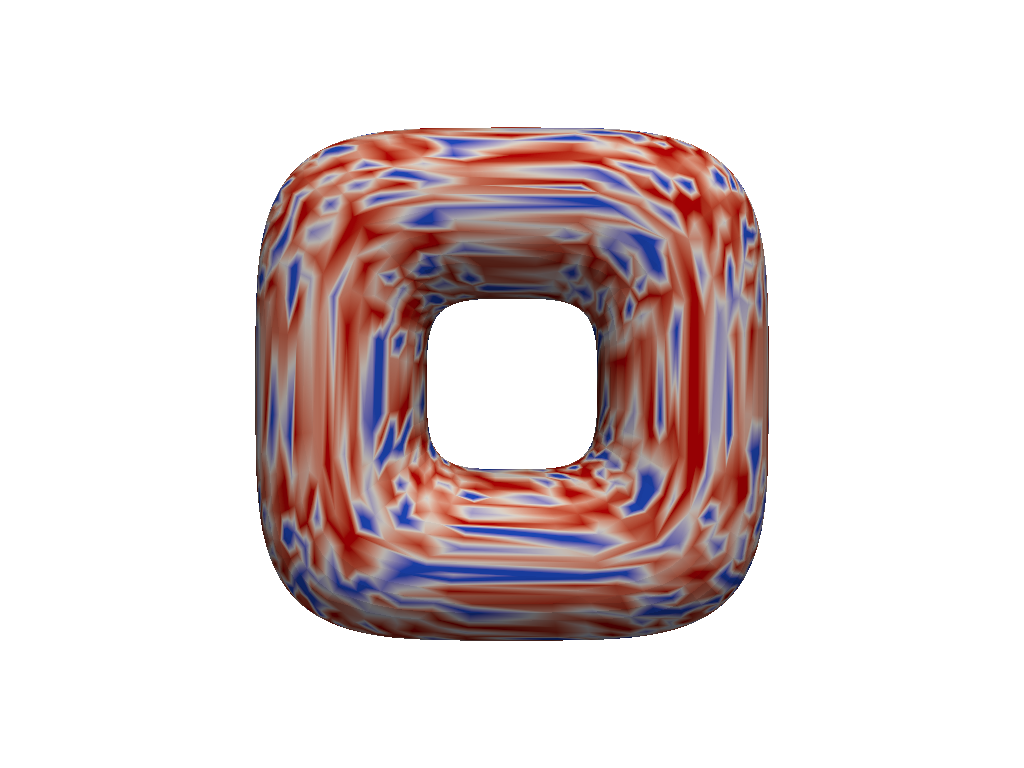} & \includegraphics[width=0.13\textwidth, trim=250 110 250 110, clip, margin=-1pt 0ex -1pt 0ex,valign=m]{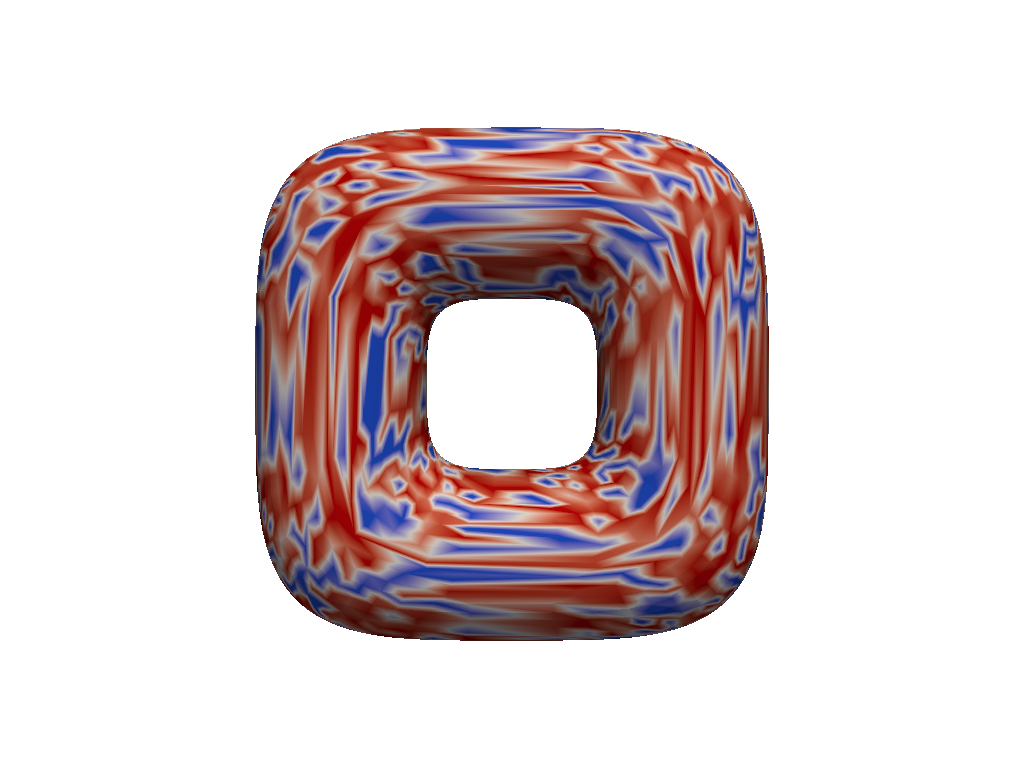} \\
     & T+30 &  \includegraphics[width=0.13\textwidth, trim=250 110 250 110, clip, margin=-1pt 0ex -1pt 0ex,valign=m]{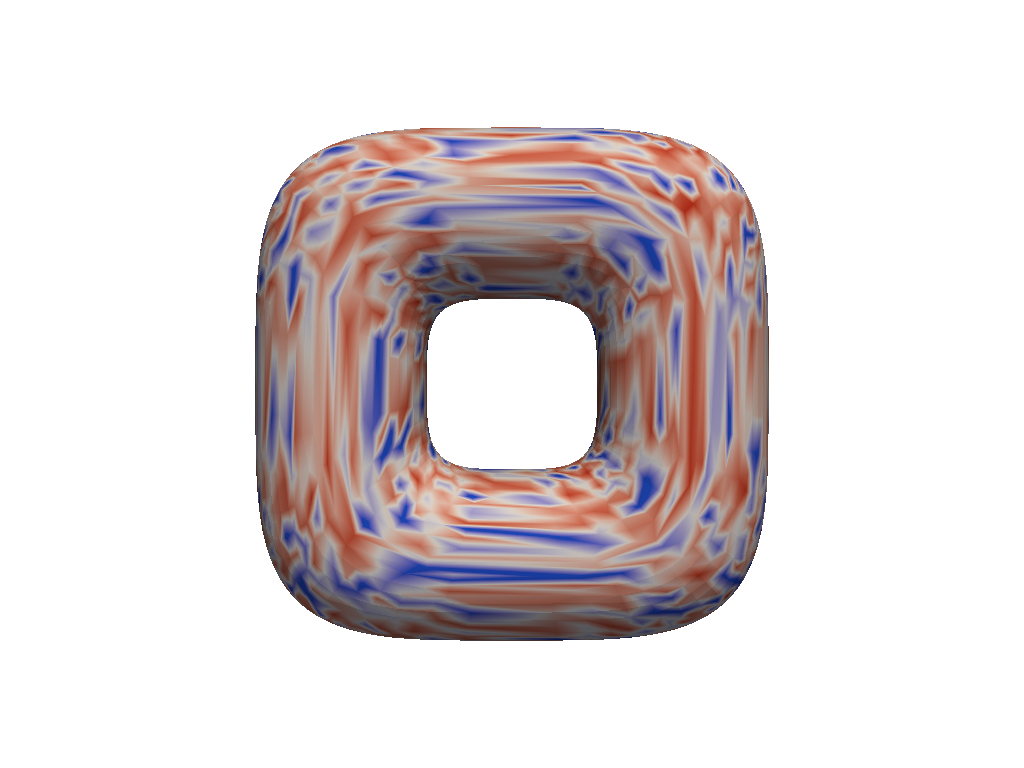} & \includegraphics[width=0.13\textwidth, trim=250 110 250 110, clip, margin=-1pt 0ex -1pt 0ex,valign=m]{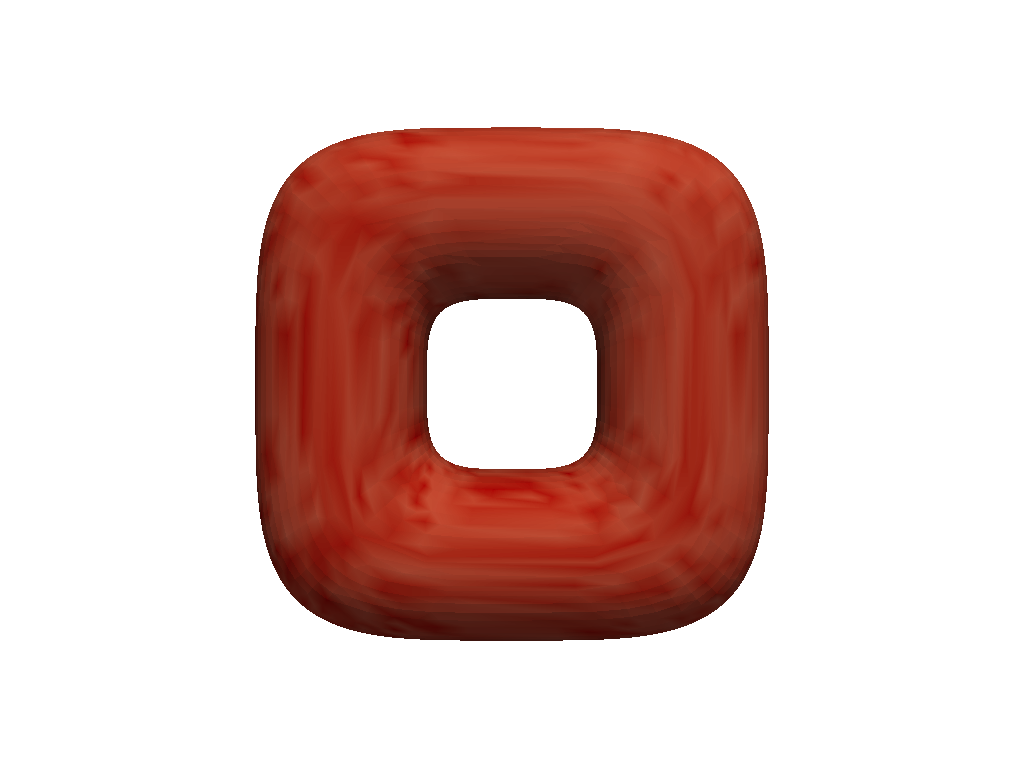} & \includegraphics[width=0.13\textwidth, trim=250 110 250 110, clip, margin=-1pt 0ex -1pt 0ex,valign=m]{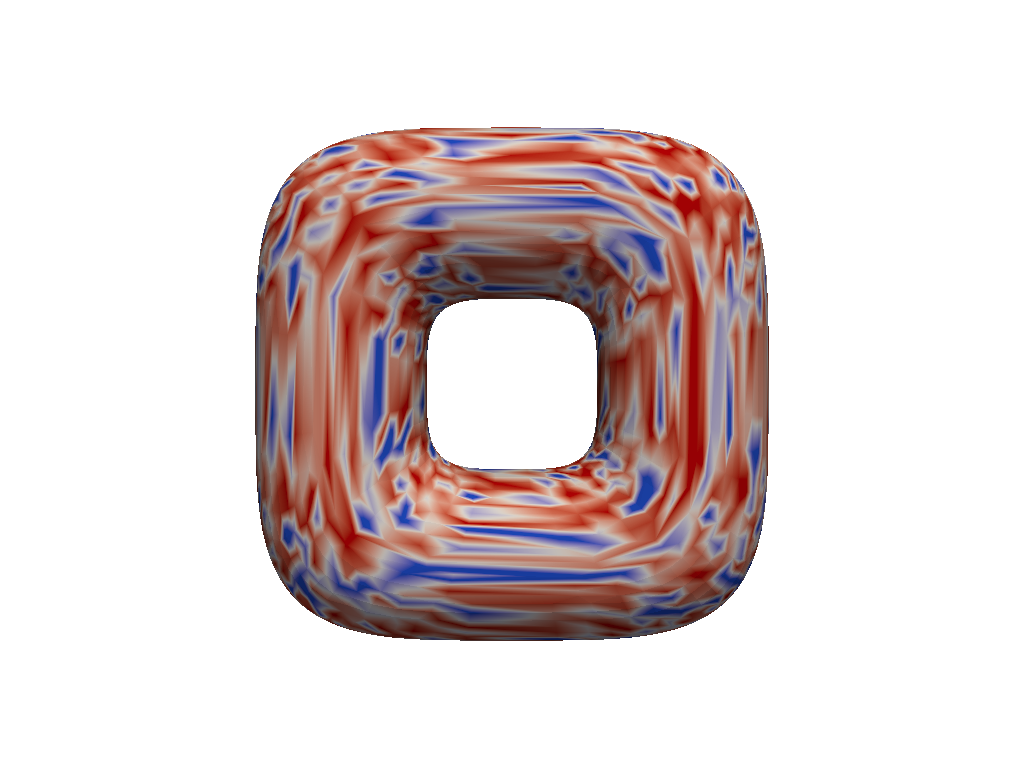} & \includegraphics[width=0.13\textwidth, trim=250 110 250 110, clip, margin=-1pt 0ex -1pt 0ex,valign=m]{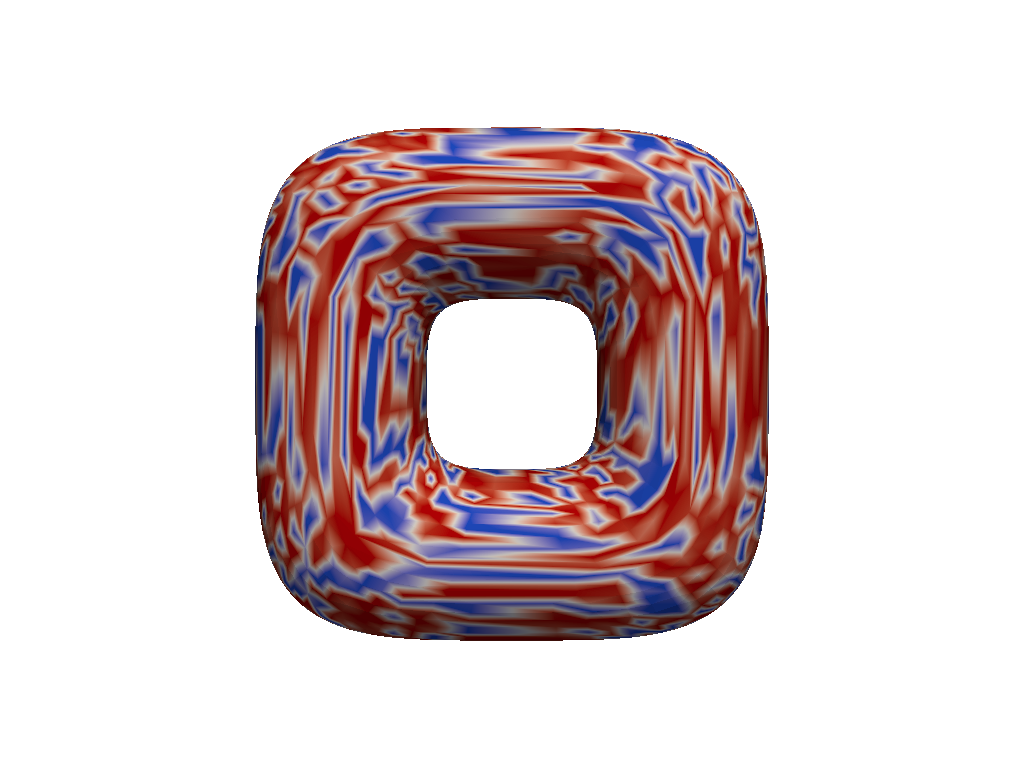} \\
     & T+100 & \includegraphics[width=0.13\textwidth, trim=250 110 250 110, clip, margin=-1pt 0ex -1pt 0ex,valign=m]{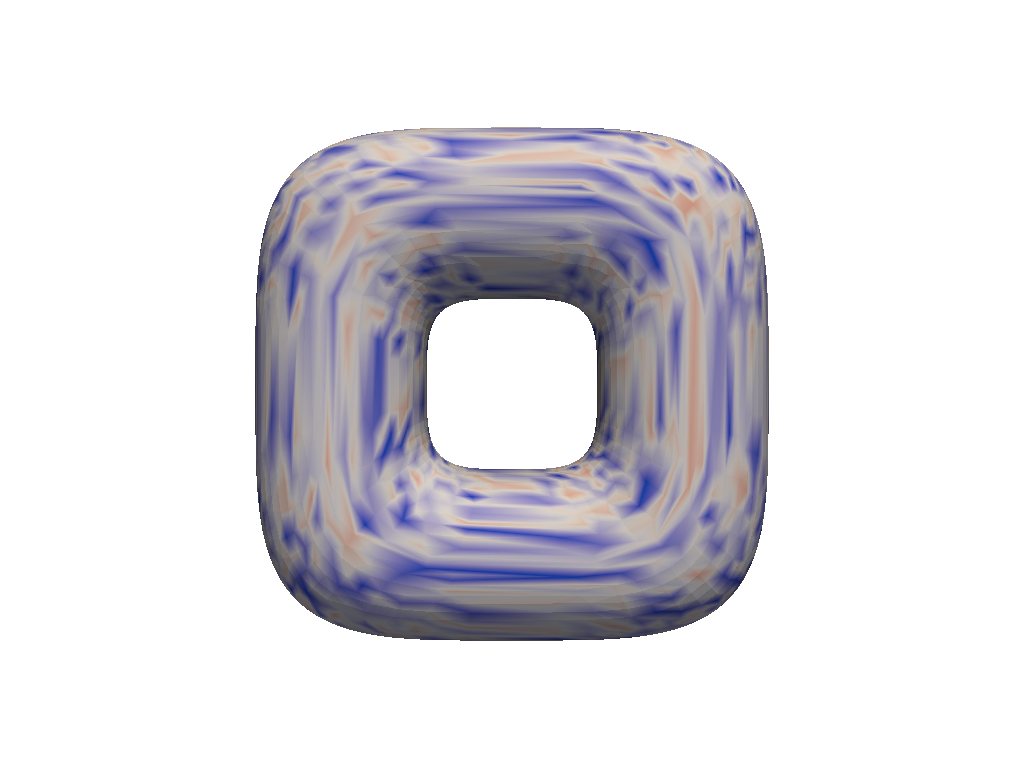} & \includegraphics[width=0.13\textwidth, trim=250 110 250 110, clip, margin=-1pt 0ex -1pt 0ex,valign=m]{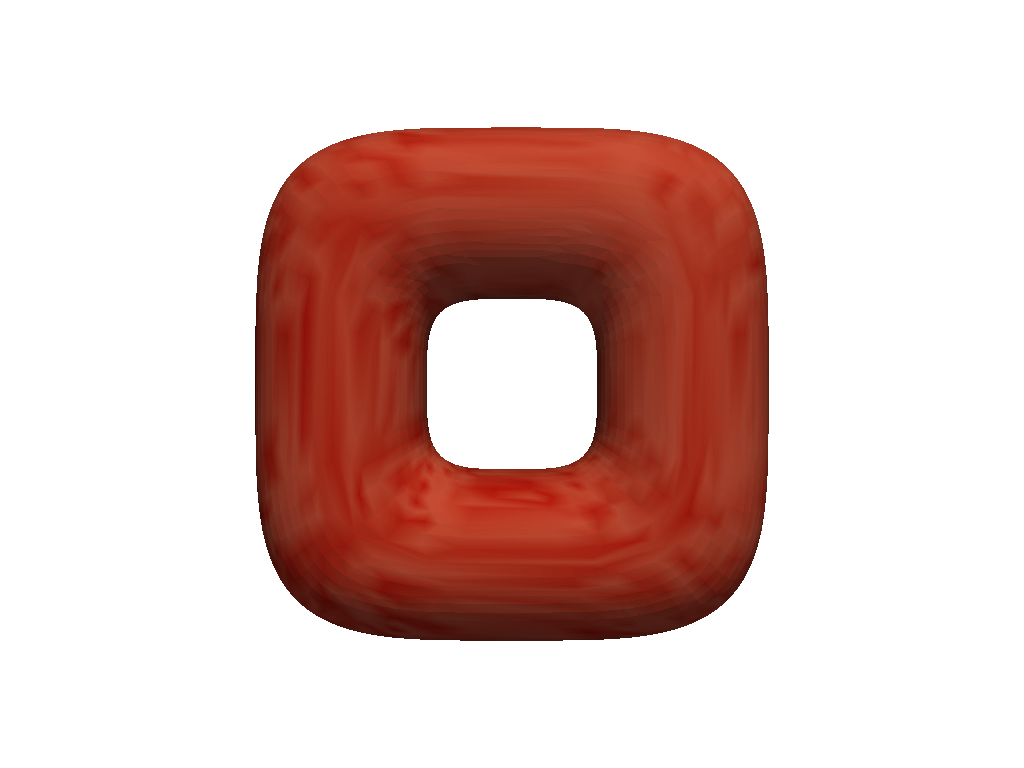} & \includegraphics[width=0.13\textwidth, trim=250 110 250 110, clip, margin=-1pt 0ex -1pt 0ex,valign=m]{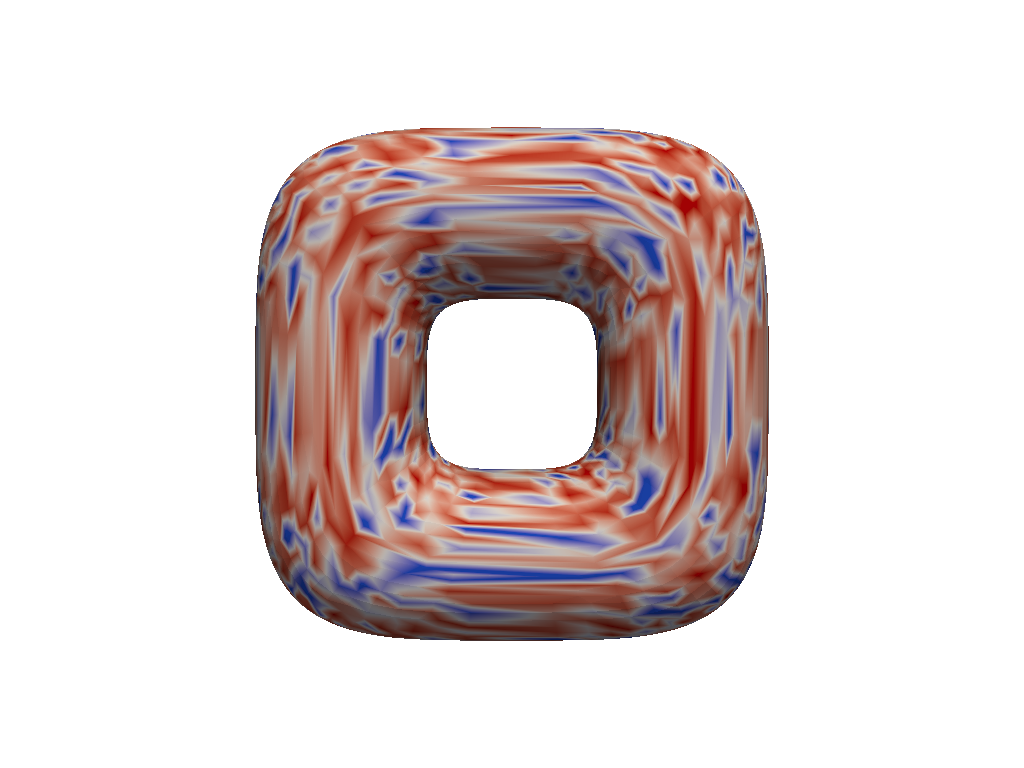} & \includegraphics[width=0.13\textwidth, trim=250 110 250 110, clip, margin=-1pt 0ex -1pt 0ex,valign=m]{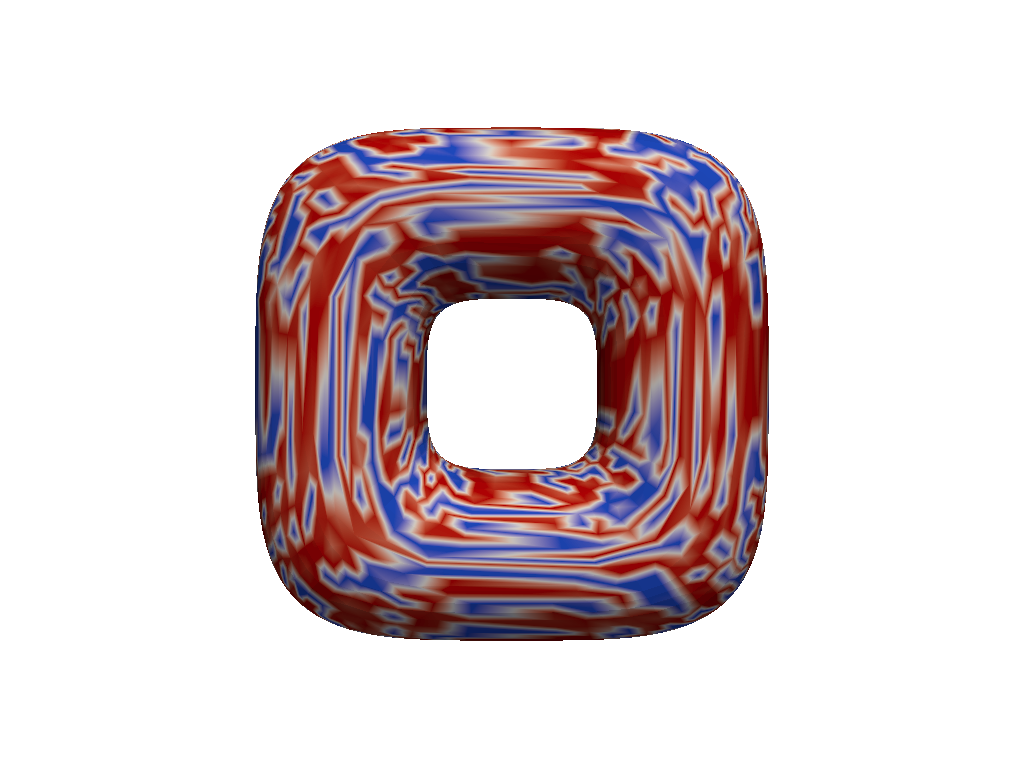} \\
     \bottomrule
    \end{tabular}
\label{tab:more_rollouts}
\end{table}

\subsection{Mesh Fineness}
\label{app:fineness}

Figure~\ref{fig:fineness_data} shows visualizations of the different mesh resolutions used and Figure~\ref{fig:fineness_wave_plot} shows the results on the wave dataset. Hermes still outperforms GemCNN and EMAN at each resolution. Unlike the heat dataset (Figure~\ref{fig:fineness_heat_plot}), there is an unexpected overall decreasing RMSE trend with a coarser mesh.

\begin{figure}[htpb]
\begin{subfigure}[t]{0.19\textwidth}
\centering
\includegraphics[width=\textwidth, trim=270 120 270 110, clip]{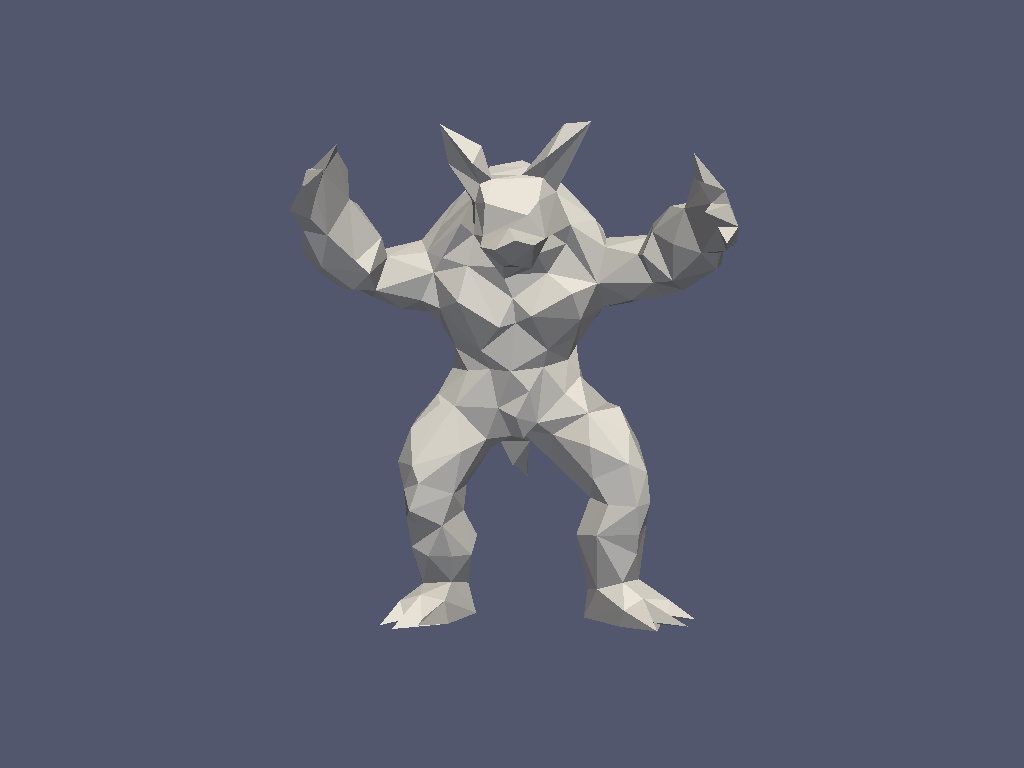} 
\caption{348}
\end{subfigure}
\begin{subfigure}[t]{0.19\textwidth}
\centering
\includegraphics[width=\textwidth, trim=270 120 270 110, clip]{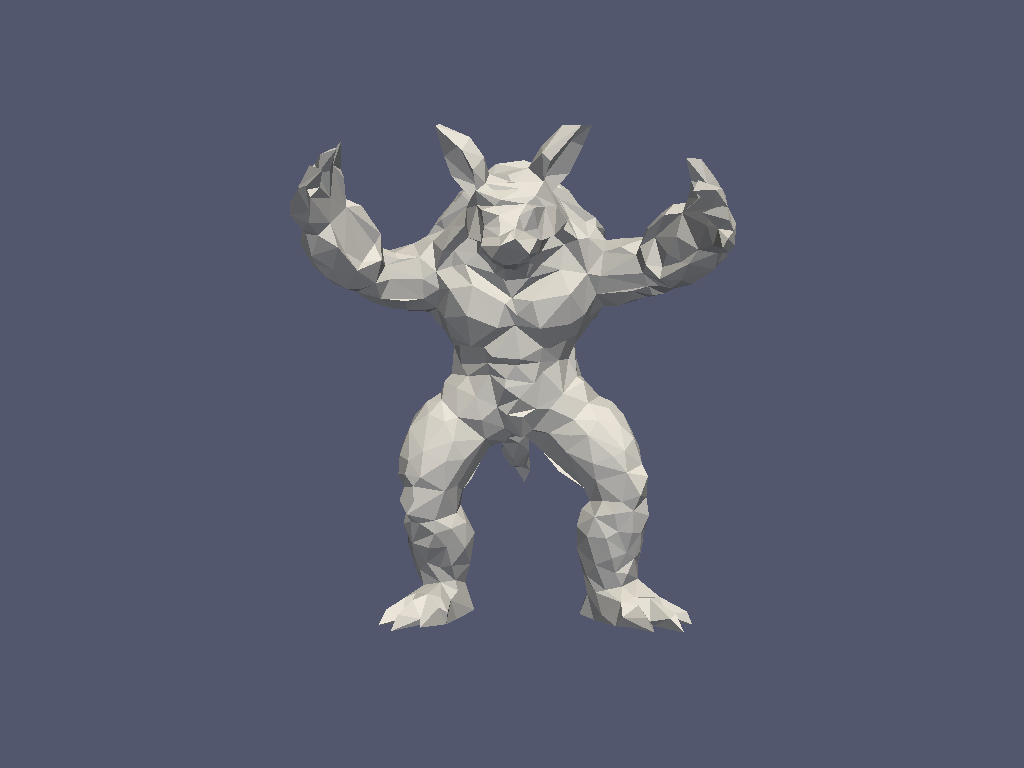}
\caption{867}
\end{subfigure}
\begin{subfigure}[t]{0.19\textwidth}
\centering
\includegraphics[width=\textwidth, trim=270 120 270 110, clip]{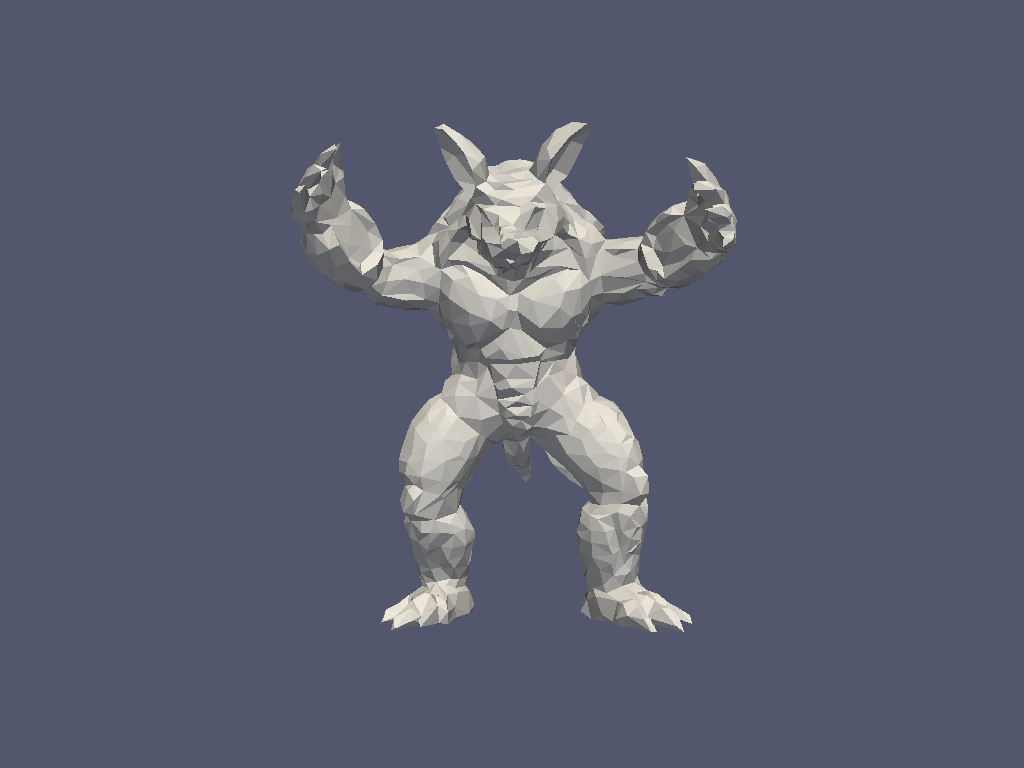} 
\caption{1{,}731}
\end{subfigure}
\begin{subfigure}[t]{0.19\textwidth}
\centering
\includegraphics[width=\textwidth, trim=270 120 270 110, clip]{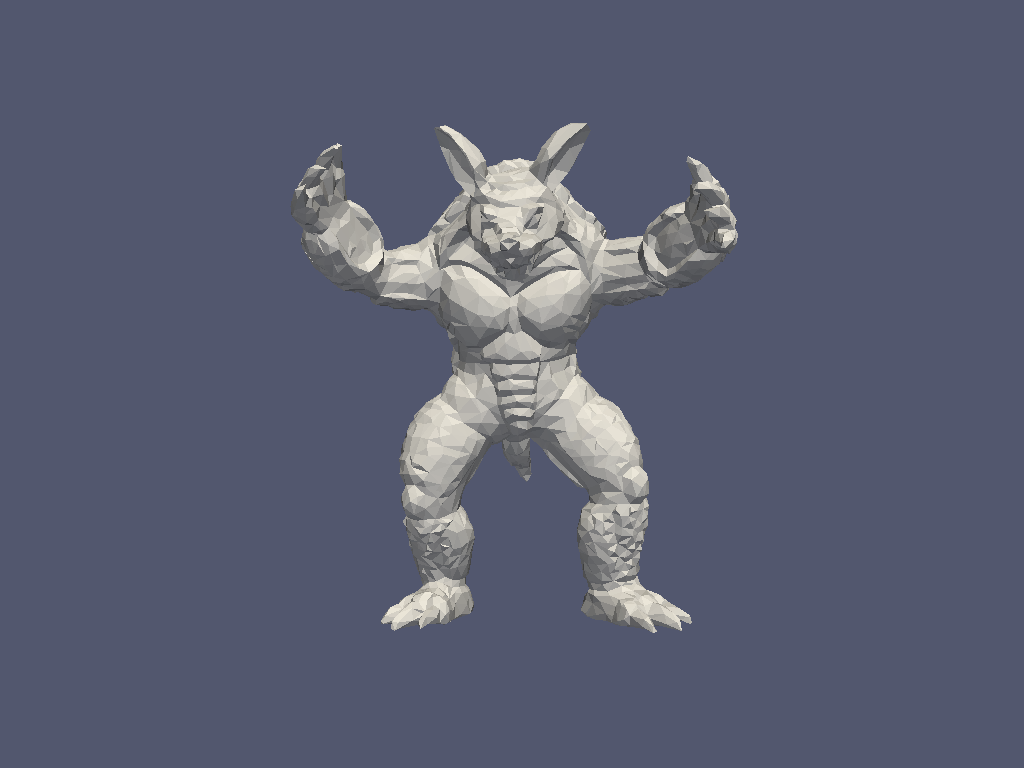}
\caption{3{,}461}
\end{subfigure}
\begin{subfigure}[t]{0.19\textwidth}
\centering
\includegraphics[width=\textwidth, trim=270 120 270 110, clip]{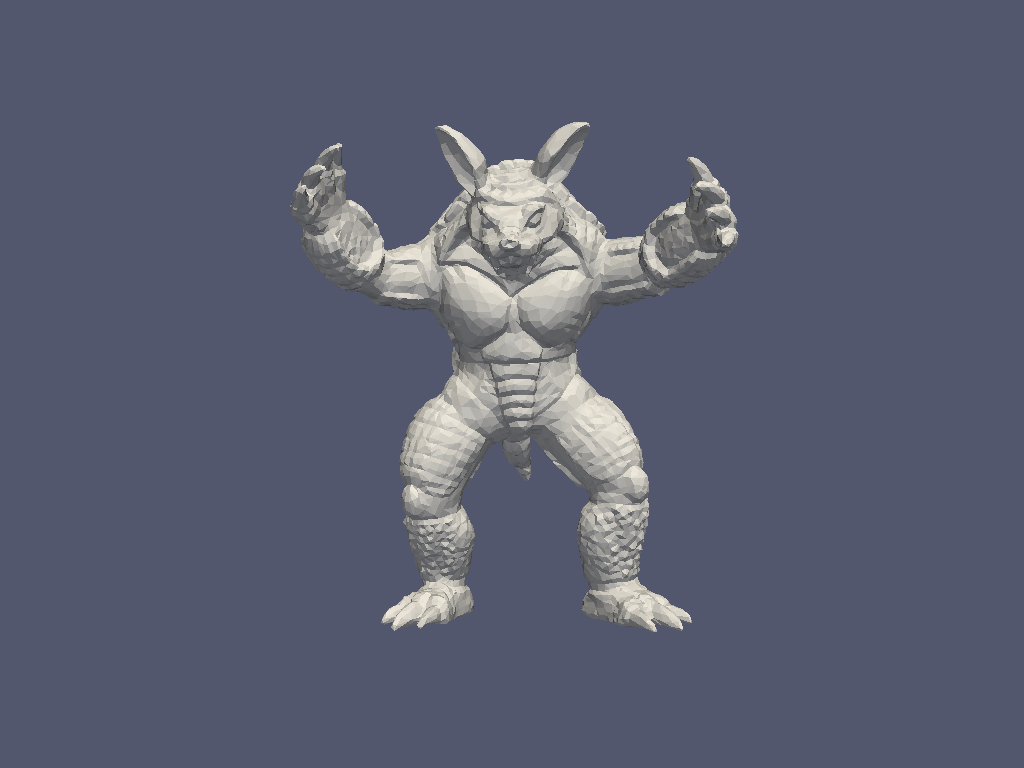}
\caption{8{,}650}
\end{subfigure}
\caption{Visualization of different mesh resolutions. The captions denote the number of vertices.}
\label{fig:fineness_data}
\end{figure}

\begin{figure}[htpb]
\vspace{-5mm}
\centering
\includegraphics[width=\textwidth]{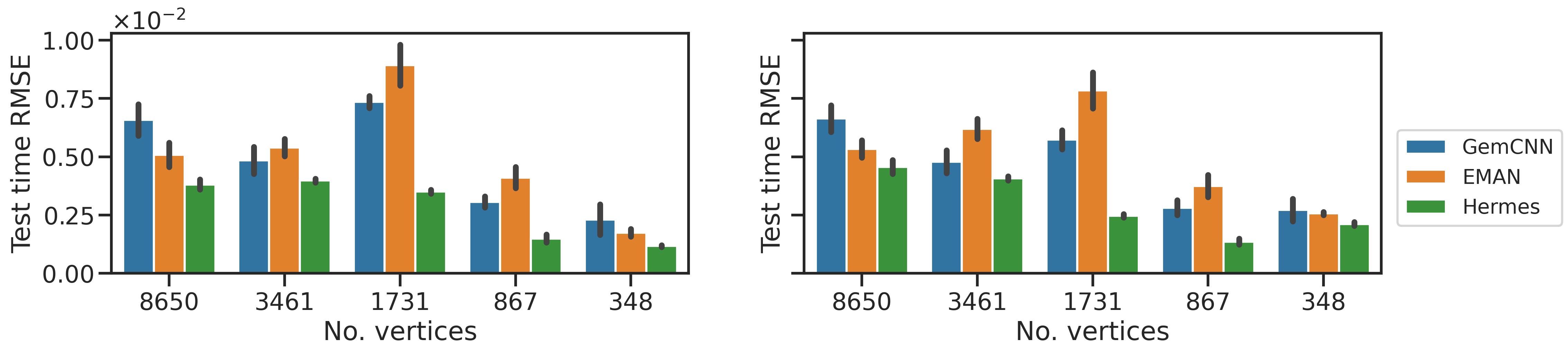} %
\caption{Performance for varying mesh fineness for wave on the test time (left) and test init (right) datasets. Error bars denote standard error over $3$ runs. The number of vertices decreases with an increasing reduction ratio (see Figure~\ref{fig:fineness_data}).}
\label{fig:fineness_wave_plot}
\end{figure}

\subsection{Surface Roughness}
\label{app:roughness}

Figure~\ref{fig:roughness_data} shows visualizations of the different mesh roughness and Figure~\ref{fig:roughness_wave_plot} shows the results on the wave dataset. Hermes outperforms GemCNN and EMAN at most roughness scales.

\begin{figure}[b]
\vspace{-4mm}
\begin{subfigure}[t]{0.19\textwidth}
\centering
\includegraphics[width=\textwidth, trim=270 120 270 110, clip]{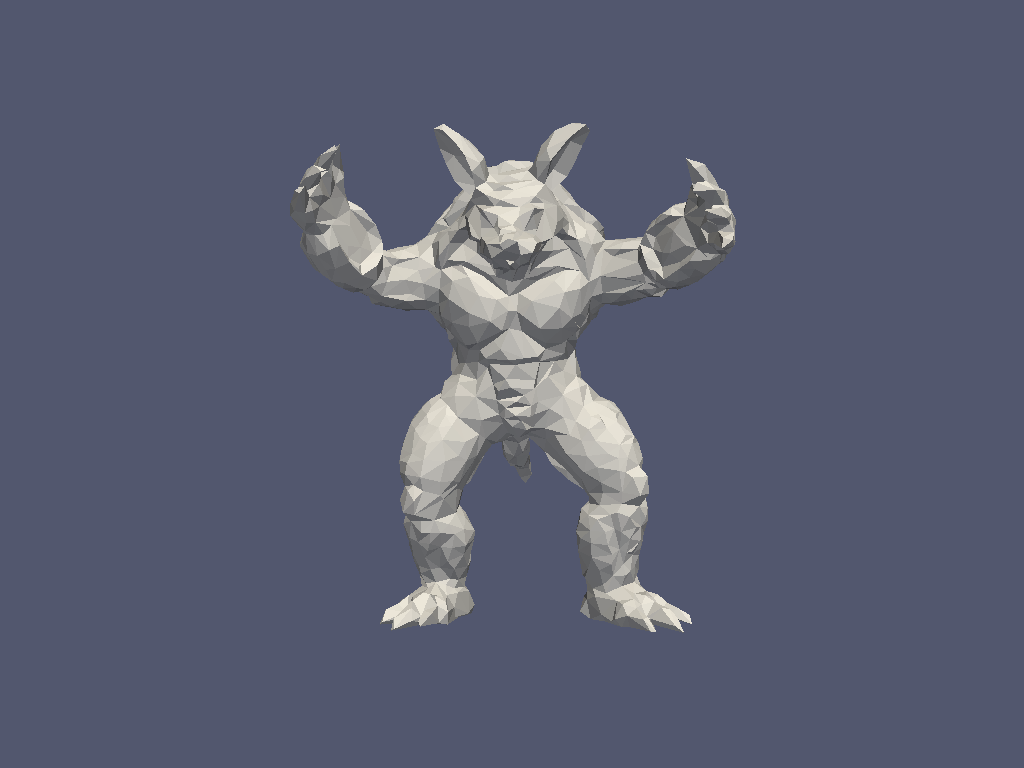} 
\caption{0.1}
\end{subfigure}
\begin{subfigure}[t]{0.19\textwidth}
\centering
\includegraphics[width=\textwidth, trim=270 120 270 110, clip]{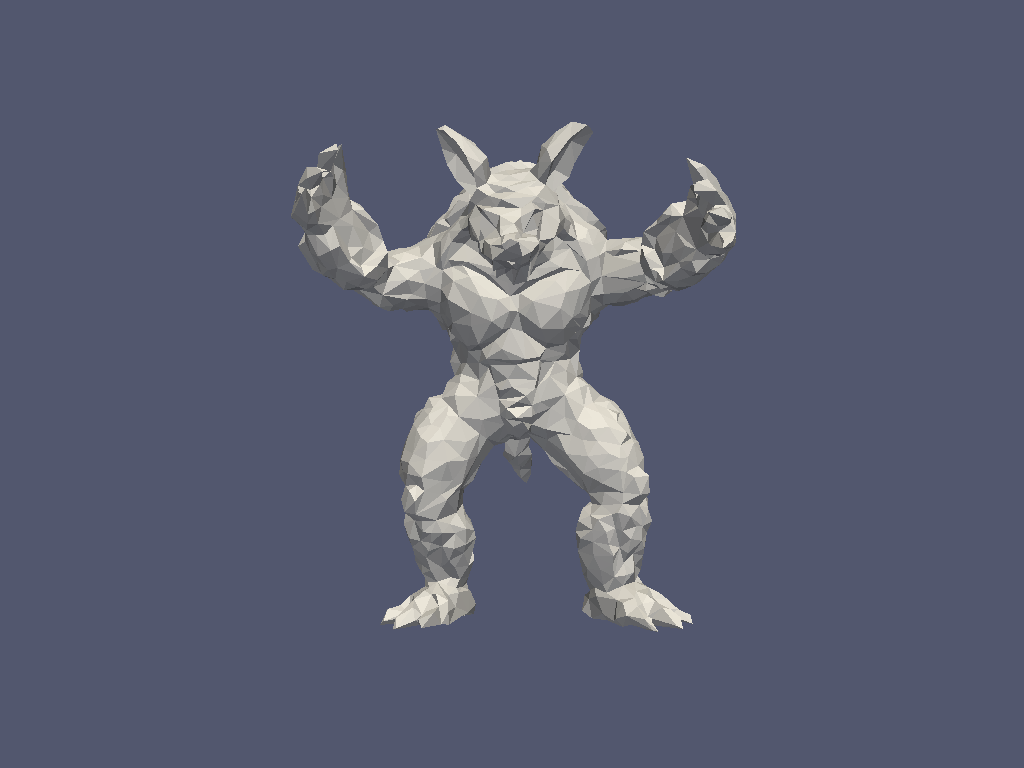}
\caption{0.5}
\end{subfigure}
\begin{subfigure}[t]{0.19\textwidth}
\centering
\includegraphics[width=\textwidth, trim=270 120 270 110, clip]{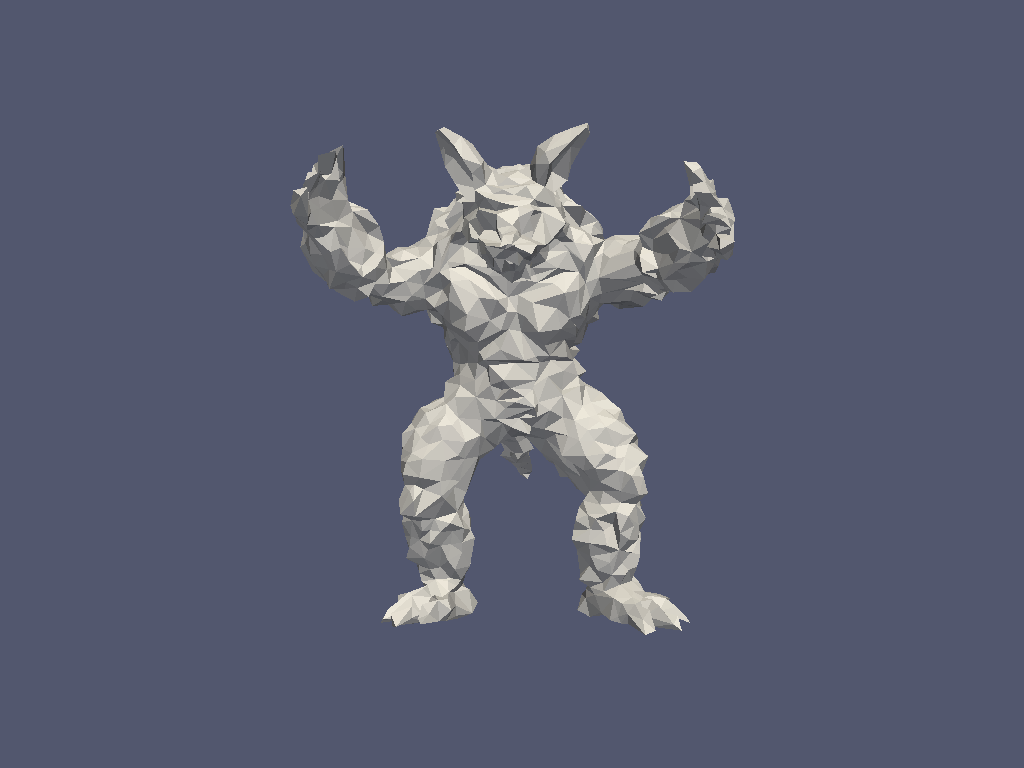} 
\caption{1.0}
\end{subfigure}
\begin{subfigure}[t]{0.19\textwidth}
\centering
\includegraphics[width=\textwidth, trim=270 130 270 100, clip]{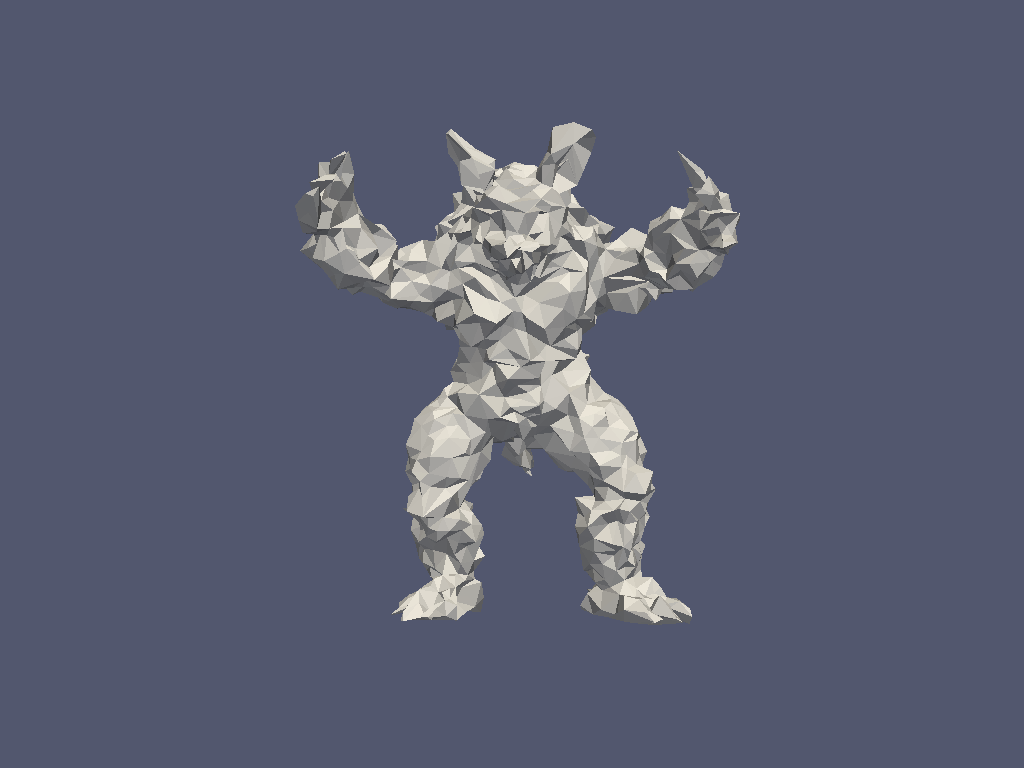}
\caption{1.5}
\end{subfigure}
\begin{subfigure}[t]{0.19\textwidth}
\centering
\includegraphics[width=\textwidth, trim=270 130 270 100, clip]{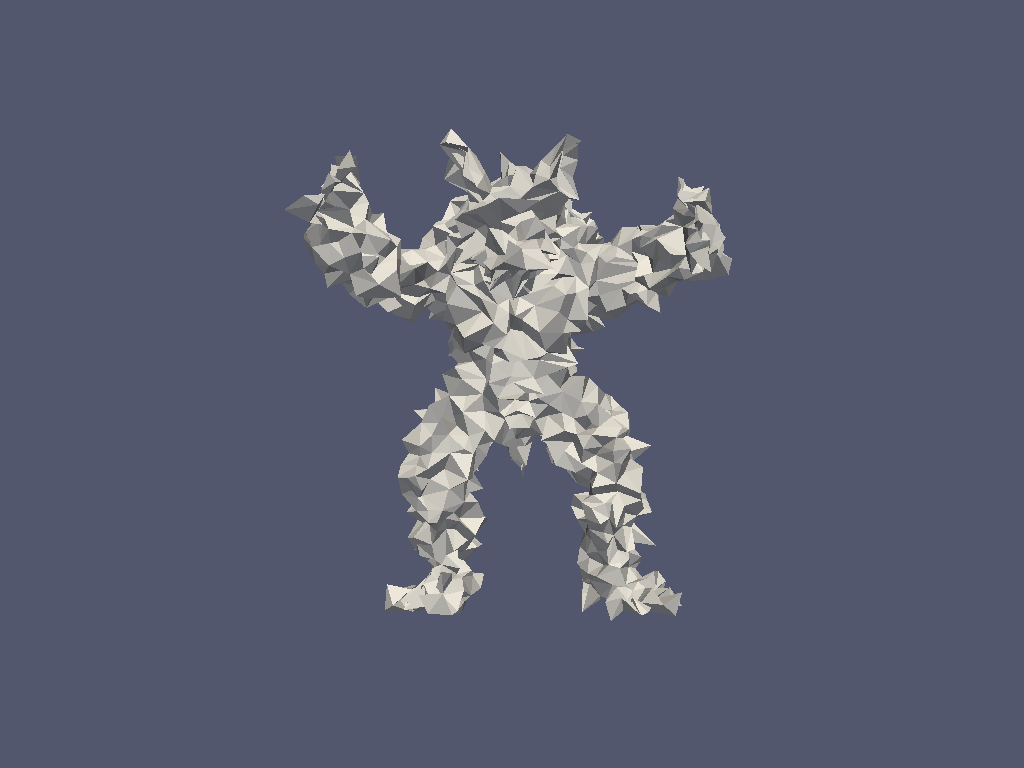}
\caption{3.0}
\end{subfigure}
\caption{Visualization of different mesh roughnesses. The captions denote the roughness scale.}
\label{fig:roughness_data}
\end{figure}

\begin{figure}[!b]
\vspace{-4mm}
\centering
\includegraphics[width=\textwidth]{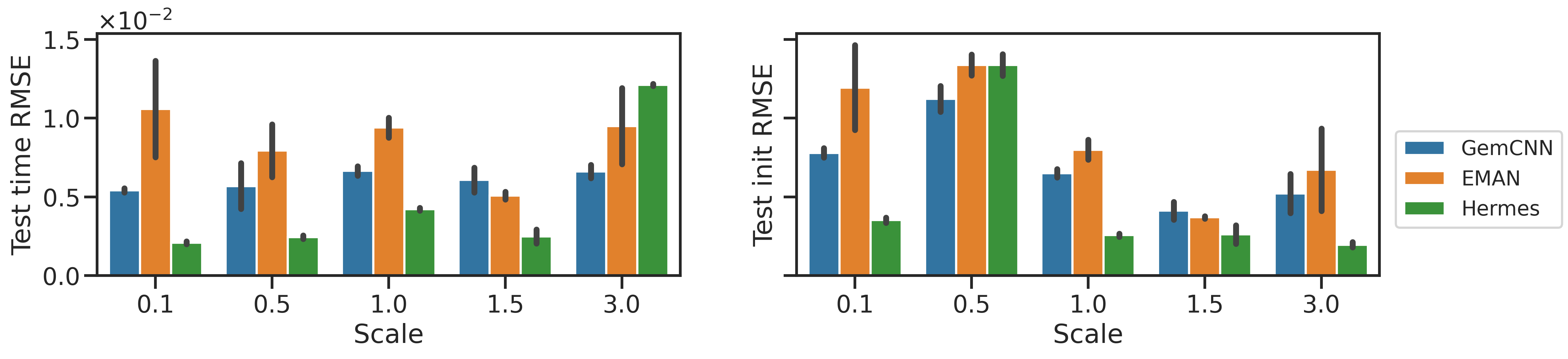}
\caption{Performance for varying surface roughness for wave on the test time (left) and test init (right) datasets. Error bars denote standard error over $3$ runs. Increasing scale increases the surface roughness of the mesh (see Figure~\ref{fig:roughness_data}).}
\label{fig:roughness_wave_plot}
\end{figure}

\subsection{Ablation on Residual Connection}
\label{app:residual}

Table~\ref{tab:ablation_residual} shows the ablation experiment with the residual connection. The results are mixed: having a residual helps on Heat but not on Wave or Cahn-Hilliard. The decision to have a residual connection should thus be considered a task-dependent hyperparameter.

\begin{table}[h]
\centering
\caption{Ablation study on residual connection}
\label{tab:ablation_residual}
\begin{tabular}{llrr}
\toprule
 & \multicolumn{1}{c}{} & \begin{tabular}{@{}c@{}}With \\ residual \end{tabular} & \begin{tabular}{@{}c@{}}Without \\ residual \end{tabular}  \\
\midrule
\multirow{3}{*}{\sc{Heat}} & Test time ($\times 10^{-3}$) & 
$1.18 {\color{gray}\scriptstyle \pm 0.3}$ & 
$1.24 {\color{gray}\scriptstyle \pm 0.5}$ 
\\
& Test init ($\times 10^{-3}$) & 
$1.16 {\color{gray}\scriptstyle \pm 0.3}$ & 
$1.22 {\color{gray}\scriptstyle \pm 0.5}$ 
\\
& Test mesh ($\times 10^{-3}$) & 
$1.01 {\color{gray}\scriptstyle \pm 0.3}$ & 
$1.05 {\color{gray}\scriptstyle \pm 0.4}$ 
\\
\midrule
\multirow{3}{*}{\sc{Wave}} & Test time ($\times 10^{-3}$) & 
$5.43 {\color{gray}\scriptstyle \pm 0.8}$ & 
$5.17 {\color{gray}\scriptstyle \pm 0.6}$
\\
& Test init ($\times 10^{-3}$) & 
$3.72 {\color{gray}\scriptstyle \pm 1.3}$ & 
$2.38 {\color{gray}\scriptstyle \pm 0.3}$ 
\\
& Test mesh ($\times 10^{-3}$) & 
$3.79 {\color{gray}\scriptstyle \pm 1.3}$ & 
$2.52 {\color{gray}\scriptstyle \pm 0.4}$ 
\\
\midrule
\multirow{3}{*}{\sc{Cahn-Hilliard}} & Test time ($\times 10^{-3}$) & 
$4.23 {\color{gray}\scriptstyle \pm 0.9}$ & 
$3.08 {\color{gray}\scriptstyle \pm 0.8}$
\\
& Test init ($\times 10^{-3}$) & 
$5.21 {\color{gray}\scriptstyle \pm 1.2}$ & 
$3.53 {\color{gray}\scriptstyle \pm 1.3}$
\\
& Test mesh ($\times 10^{-3}$) & 
$5.34 {\color{gray}\scriptstyle \pm 0.7}$ & 
$5.73 {\color{gray}\scriptstyle \pm 1.2}$ 
\\
\bottomrule
\end{tabular}
\end{table}

\end{document}